\definecolor{weborange}{rgb}{.8,.3,.3}
\definecolor{webblue}{rgb}{0,0,.8}
\definecolor{internallinkcolor}{rgb}{0,.5,0}
\definecolor{externallinkcolor}{rgb}{0,0,.5}
\providecommand{\remove}[1]{}
\providecommand{\remove}[1]{}
    \newcommand{\authnote}[2]{{\bf [{\color{red} #1's Note:} {\color{blue} #2}]}}
    \newcommand{\authnote}[2]{}
\titleclass{\subsubsubsection}{straight}[\subsection]
\newcounter{subsubsubsection}[subsubsection]
\renewcommand\thesubsubsubsection{\thesubsubsection.\arabic{subsubsubsection}}
\renewcommand\paragraph{\@startsection{paragraph}{5}{\z@}%
	{3.25ex \@plus1ex \@minus.2ex}%
	{-1em}%
	{\normalfont\normalsize\bfseries}}
\renewcommand\subparagraph{\@startsection{subparagraph}{6}{\parindent}%
	{3.25ex \@plus1ex \@minus .2ex}%
	{-1em}%
	{\normalfont\normalsize\bfseries}}
\def\toclevel@subsubsubsection{4}
\def\toclevel@paragraph{5}
\def\toclevel@paragraph{6}
\def\l@subsubsubsection{\@dottedtocline{4}{7em}{4em}}
\def\l@paragraph{\@dottedtocline{5}{10em}{5em}}
\def\l@subparagraph{\@dottedtocline{6}{14em}{6em}}
\newcommand{\sdotfill}{\textcolor[rgb]{0.8,0.8,0.8}{\dotfill}} 
\newenvironment{algorithm}{\begin{mybox} \vspace{-.1in}\begin{algo}}{ \vspace{-.1in} \end{algo}\end{mybox}}
\newenvironment{mybox}{\begin{center}\begin{tabular}{|p{0.97\linewidth}|c|}   \hline} {  \\ \hline \end{tabular} \end{center}}			
\let\originalleft\left
\let\originalright\right
\renewcommand{\left}{\mathopen{}\mathclose\bgroup\originalleft}
\renewcommand{\right}{\aftergroup\egroup\originalright}
\newcommand{\ie}  {i.e.,\xspace}
\newcommand{\wlg} {without loss of generality\xspace}
\newcommand{\set}[1]{\ens{#1}}
\newcommand{\paren}[1]{\left(#1\right)}
\newcommand{\floor}[1]{\left \lfloor#1 \right \rfloor}
\newcommand{\norm}[1]{\left\lVert#1\right\rVert}
\newcommand{\eqdef}{:=}
\newcommand{\N}{{\mathbb{N}}}
\newcommand{\zo}{\set{0,1}}
\newcommand{\condition}{\;\ifnum\currentgrouptype=16 \middle\fi|\;}
\newcommand{\eps}{\varepsilon}
\newcommand{\la}{\gets}
\newcommand{\Supp}{\operatorname{Supp}}
\newcommand{\argmin}{\operatorname*{argmin}}
\renewcommand{\cref}{\Cref}
\newtheorem{theorem}{Theorem}[section]
\newaliascnt{lemma}{theorem}
\newtheorem{lemma}[lemma]{Lemma}
\crefname{lemma}{Lemma}{Lemmas}
\newaliascnt{observation}{theorem}
\crefname{observation}{Observation}{Observation}
\newaliascnt{claim}{theorem}
\newtheorem{claim}[claim]{Claim}
\crefname{claim}{Claim}{Claims}
\newaliascnt{corollary}{theorem}
\crefname{corollary}{Corollary}{Corollaries}
\newaliascnt{construction}{theorem}
\crefname{construction}{Construction}{Constructions}
\newaliascnt{fact}{theorem}
\newtheorem{fact}[fact]{Fact}
\crefname{fact}{Fact}{Facts}
\newaliascnt{proposition}{theorem}
\newtheorem{proposition}[proposition]{Proposition}
\crefname{proposition}{Proposition}{Propositions}
\newaliascnt{conjecture}{theorem}
\crefname{conjecture}{Conjecture}{Conjectures}
\newaliascnt{definition}{theorem}
\newtheorem{definition}[definition]{Definition}
\crefname{definition}{Definition}{Definitions}
\newaliascnt{notation}{theorem}
\crefname{notation}{Notation}{Notation}
\newaliascnt{assertion}{theorem}
\crefname{assertion}{Assertion}{Assertion}
\newaliascnt{assumption}{theorem}
\crefname{assumption}{Assumption}{Assumption}
\newaliascnt{remark}{theorem}
\newtheorem{remark}[remark]{Remark}
\crefname{remark}{Remark}{Remarks}
\newaliascnt{question}{theorem}
\crefname{question}{Question}{Question}
\newaliascnt{example}{theorem}
   \newtheorem{example}[example]{Example}
\crefname{exmaple}{Example}{Examples}
\crefname{equation}{Equation}{Equations}
\newaliascnt{proto}{theorem}
\newtheorem{proto}[proto]{Protocol}
\crefname{proto}{protocol}{protocols}
\newaliascnt{algo}{theorem}
\newtheorem{algo}[algo]{Algorithm}
\crefname{algo}{algorithm}{algorithms}
\newaliascnt{expr}{theorem}
\newtheorem{expr}[expr]{Experiment}
\crefname{experiment}{experiment}{experiments}
\newcommand{\stepref}[1]{Step~\ref{#1}}
\def\FullBox{$\Box$}
\def\qed{\ifmmode\qquad\FullBox\else{\unskip\nobreak\hfil
\penalty50\hskip1em\null\nobreak\hfil\FullBox
\parfillskip=0pt\finalhyphendemerits=0\endgraf}\fi}
\def\qedsketch{\ifmmode\Box\else{\unskip\nobreak\hfil
\penalty50\hskip1em\null\nobreak\hfil$\Box$
\parfillskip=0pt\finalhyphendemerits=0\endgraf}\fi}
\newcommand{\eex}[2]{\Ex_{#1}\left[#2\right]}
\newcommand{\ex}[1]{\Ex\left[#1\right]}
\newcommand{\Ex}{{\mathrm E}}
\renewcommand{\Pr}{{\mathrm {Pr}}}
\newcommand{\pr}[1]{\Pr\left[#1\right]}
\newcommand{\Ac}{\mathsf{A}}
\newcommand{\cC}{{\mathcal{C}}}
\newcommand{\ens}[1]{\{#1\}}
\newcommand{\size}[1]{\left|#1\right|}
\newcommand{\I}{\mathcal{I}}
\def\cA{{\cal A}}
\def\cB{{\cal B}}
\def\cC{{\cal C}}
\def\cD{{\cal D}}
\def\cI{{\cal I}}
\def\cN{{\cal N}}
\def\cP{{\cal P}}
\def\cQ{{\cal Q}}
\def\cS{{\cal S}}
\def\cT{{\cal T}}
\def\cX{{\cal X}}
\def\cY{{\cal Y}}
\def\cZ{{\cal Z}}
\def\bbI{{\mathbb I}}
\def\bbN{{\mathbb N}}
\def\bbR{{\mathbb R}}
\def\bbZ{{\mathbb Z}}
\newcommand{\Tableofcontents}{
\thispagestyle{empty}
\pagenumbering{gobble}
\clearpage
\tableofcontents
\thispagestyle{empty}
\clearpage
\pagenumbering{arabic}
}
\newcommand{\indic}[1]{\mathds{1}_{\set{#1}}}
\DeclareMathOperator{\V}{V}
\DeclareMathOperator{\Bern}{Bern}
\newcommand{\pt}[1]{\boldsymbol{#1}}
\newcommand{\px} {\pt{x}}
\newcommand{\py} {\pt{y}}
\newcommand{\pz} {\pt{z}}
\newcommand{\z} {\pt{z}}
\newcommand{\pa} {\pt{a}}
\newcommand{\pc} {\pt{c}}
\newcommand{\pv} {\pt{v}}
\newcommand{\pp} {\pt{p}}
\newcommand{\tpp} {\tilde{\pt{p}}}
\newcommand{\pV} {\pt{V}}
\newcommand{\tpV} {\tilde{\pt{V}}}
\newcommand{\talpha}{\tilde{\alpha}}
\newcommand{\tV}{\tilde{V}}
\newcommand{\tS}{\tilde{S}}
\newcommand{\hpa} {\hat{\pa}}
\newcommand{\tp} {\tilde{p}}
\newcommand{\hz} {\hat{z}}
\newcommand{\tZ} {\widetilde{Z}}
\newcommand{\hZ} {\widehat{Z}}
\newcommand{\hN} {\widehat{N}}
\newcommand{\hn} {\hat{n}}
\newcommand{\tn} {\tilde{n}}
\newcommand{\hA} {\widehat{A}}
\newcommand{\DP}{{\sf DP}}
\newcommand{\zCDP}{{\sf zCDP}}
\newcommand{\OPT}{{\rm OPT}}
\newcommand{\COST}{{\rm COST}}
\newcommand{\Lap}{{\rm Lap}}
\newcommand{\sA}{\mathsf{A}}
\newcommand{\sF}{\mathsf{F}}
\newcommand{\sB}{\mathsf{B}}
\newcommand{\AlgCheckDist}{\mathsf{CheckDiam}}
\newcommand{\AlgFindDist}{\mathsf{FindDiam}}
\newcommand{\AlgFriendlyAvg}{\mathsf{FriendlyAvg}}
\newcommand{\AlgFriendlyAvgOrdTup}{\mathsf{FriendlyOrdTupAvg}}
\newcommand{\AlgEstAvg}{\mathsf{FC\_Average}}
\newcommand{\FCAvg}{\mathsf{FC\_Avg}}
\newcommand{\AlgAvgKnownDiam}{\mathsf{FC\_Avg\_KnownDiam}}
\newcommand{\AlgAvgOrdTup}{\mathsf{FC\_AvgOrdTup}}
\newcommand{\AlgAvgUnknownDiam}{\mathsf{FC\_Avg\_UnknownDiam}}
\newcommand{\AlgFriendlyCore}{\mathsf{FriendlyCore}}
\newcommand{\FriendlyCore}{\mathsf{FriendlyCore}}
\newcommand{\FriendlyCoreDP}{\mathsf{FriendlyCoreDP}}
\newcommand{\FriendlyReorder}{\mathsf{FriendlyReorder}}
\newcommand{\FCParadigm}{\mathsf{FC\_Paradigm}}
\newcommand{\FCClustering}{\mathsf{FC\_Clustering}}
\newcommand{\BasicFilter}{\mathsf{BasicFilter}}
\newcommand{\zCDPFilter}{\mathsf{zCDPFilter}}
\newcommand{\NoisyLloydStep}{\mathsf{NoisyLloydStep}}
\newcommand{\FCkTuplesClustering}{\mathsf{FC\_kTupleClustering}}
\newcommand{\FriendlyCovariance}{\mathsf{FriendlyCovariance}}
\newcommand{\FCCovariance}{\mathsf{FC\_Covariance}}
\newcommand{\LSHClustering}{\mathsf{LSH\_Clustering}}
\newcommand{\ha}{\hat{a}}
\newcommand{\tY}{\widetilde{Y}}
\newcommand{\tcD}{\widetilde{\cD}}
\newcommand{\tR}{\tilde{R}}
\newcommand{\KV}{\mathsf{KV}}
\newcommand{\CoinPress}{\mathsf{CoinPress}}
\newcommand{\pred}{f}
\newcommand{\Alg}{\Ac}
\newcommand{\Avg}{{\rm Avg}}
\newcommand{\Partition}{{\rm Partition}}
\newcommand{\Points}{{\rm Points}}
\let\xx@thm\@thm
\newcommand{\distt}{{\sf dist}}
\newcommand{\match}{{\sf match}}
\newcommand{\ord}{{\sf ord}}
\newcommand{\id}{{\sf id}}
\newcommand{\matrixDist}{{\sf matrixDist}}
\newcommand{\hSigma}{\widehat{\Sigma}}
\newcommand{\HG}{\mathcal{HG}}
\newcommand{\Enote}[1]{\authnote{Eliad}{#1}}
\newcommand{\Hnote}[1]{\authnote{Haim}{#1}}
\newcommand{\ECnote}[1]{\authnote{Edith:}{#1}}
\newcommand{\ignore}[1]{}
\title{FriendlyCore: Practical Differentially Private Aggregation}
\author{Eliad Tsfadia\thanks{Google Research and Blavatnik School of Computer Science, Tel Aviv University.
	E-mails: \texttt{eliadtsfadia@gmail.com}, \texttt{edith@alumni.stanford.edu}, \texttt{haimk@tau.ac.il},  \texttt{mansour.yishay@gmail.com}, \texttt{u@uri.co.il}.}
\and Edith Cohen$^\ast$
\and Haim Kaplan$^\ast$
\and Yishay Mansour$^\ast$
\and Uri Stemmer$^\ast$}
\begin{document}

\maketitle

\begin{abstract}

Differentially private algorithms for common metric aggregation tasks, such as clustering or averaging, often have limited practicality due to their complexity or to the large number of data points that is required for accurate results.
We propose a simple and practical tool, $\mathsf{FriendlyCore}$, that takes a set of points $\cD$ from an unrestricted (pseudo) metric space as input.  When $\cD$ has effective diameter $r$, $\mathsf{FriendlyCore}$ returns a ``stable'' subset $\cC \subseteq \cD$ that includes all points, except possibly few outliers, and is {\em guaranteed} to have diameter $r$. $\mathsf{FriendlyCore}$ can be used to preprocess the input before privately aggregating it, potentially simplifying the aggregation or boosting its accuracy. Surprisingly, $\mathsf{FriendlyCore}$ is light-weight with no dependence on the dimension. We empirically demonstrate its advantages in boosting the accuracy of mean estimation and clustering tasks such as $k$-means and $k$-GMM, outperforming tailored methods.
\end{abstract}

\Tableofcontents

\section{Introduction}

Metric aggregation tasks are at the heart of data analysis.  Common tasks include averaging, $k$-clustering, and learning a mixture of distributions.  When the data points are sensitive information, corresponding for example to records or activities of particular users, we would like the aggregation to be private.  The most widely accepted solution to individual privacy is differential privacy (DP) \cite{DMNS06} that limits the effect that each data point can have on the outcome of the computation.

\remove{
We use the popular mathematical definition of differential privacy \cite{DMNS06} that limits the effect that each data point can have on the outcome of the computation. Formally,
\begin{definition}
	A randomized algorithm $\cA$ is {\em $(\varepsilon,\delta)$-differentially private} (DP) if for every two datasets $\cD$ and $\cD'$ that differ in one point (such datasets are called {\em neighboring}) and every set of outputs $T$ we have 
	\[
	\Pr[\cA(\cD)\in T] \leq e^{\varepsilon} \Pr[\cA(\cD')\in T] + \delta
	\]
\end{definition}
}

Differentially private algorithms, however,  tend to be less accurate and practical than their non-private counterparts. 
This degradation in accuracy can be attributed, to a large extent, to the fact that the requirement of differential privacy is a {\em worst-case} kind of a requirement. To illustrate this point, consider the task of privately learning mixture of Gaussians. In this task, the learner gets as input a sample $\cD\subseteq\bbR^d$, and, assuming that $\cD$ was correctly sampled from some appropriate underlying distribution, then the learner needs to output a good hypothesis. That is, the learner is only required to perform well on {\em typical} inputs. In contrast, the definition of differential privacy is {\em worst-case} in the sense that the privacy requirement must hold for {\em any} two neighboring datasets, no matter how they were constructed, even if they are not sampled from any distribution. This means that in the privacy analysis one has to account for any {\em potential} input point, including ``unlikely points'' that have significant impact on the aggregation.
The traditional way for coping with this issue is to bound the worst-case effect that a single data point can have on the aggregation (this quantity is often called the {\em sensitivity} of the aggregation), and then to add noise proportional to this worst-case bound. That is, even if all of the given data points are ``friendly'' in the sense that each of them has only a very small effect on the aggregation, then still, the traditional way for ensuring DP often requires adding much larger noise in order to account for a neighboring dataset that contains one additional ``unfriendly'' point whose effect on the aggregation is large. 


In this paper we present a general framework for preprocessing the data (before privately aggregating it), with the goal of producing a \emph{guarantee} that the data is ``friendly'' (or well-behaved). Given that the data is guaranteed to be ``friendly'', the private aggregation step can then be executed without accounting for ``unfriendly'' points that might have a large effect on the aggregation. Hence, our guarantee potentially allows for much less noise to be added in the aggregation step, as it is no longer forced to operate in the original ``worst-case'' setting.


\subsection{Our Framework}\label{sec:intro:framework}

Let us first make the notion of ``friendliness'' more precise. 
\begin{definition}[$\pred$-friendly and $\pred$-complete datasets]\label{def:Npred-friendly}
	Let $\cD$ be a dataset over a domain $\cX$, and let $\pred \colon \cX^2 \rightarrow \zo$ be a reflexive predicate. We say that $\cD$ is \emph{$\pred$-friendly} if for every $x ,y \in \cD$, there exists $z \in \cX$ (not necessarily in $\cD$) such that $\pred(x,z) = \pred(y,z) = 1$. 
	As a special case, we call $\cD$ \emph{$\pred$-complete}, if $\pred(x,y) = 1$ for all $x,y \in \cD$.\footnote{In an $\pred$-friendly dataset, every two elements have a common friend whereas in an $\pred$-complete dataset, all pairs are friends.}
\end{definition}

\begin{example}[Points in a metric space]\label{example:metric space}
	Let $\cD$ be points in a metric space and
	$\pred_r(x,y) \eqdef \indic{d(x,y) \leq r}$. Then if $\cD$ is $\pred_r$-friendly, it is $\pred_{2r}$-complete (by the triangle inequality).
\end{example}


We define a relaxation of differential privacy, where the privacy requirement must only hold for neighboring datasets which are both friendly. Formally,

\begin{definition}[$\pred$-friendly DP algorithm]\label{def:pred-friendly-DP-alg}
	An algorithm $\sA$ is called \emph{$\pred$-friendly $(\eps,\delta)$-$\DP$}, if for every neighboring databases $\cD,\cD'$ such that $\cD \cup \cD'$ is $\pred$-friendly,
	it holds that $\sA(\cD)$ and $\sA(\cD')$ are $(\eps,\delta)$-indistinguishable.
\end{definition}
Note that nothing is guaranteed for neighboring datasets 
that are not $\pred$-friendly.
Intuitively, this allows us to focus the privacy requirement only on well-behaved inputs, potentially requiring significantly less noise to be added.

We present a preprocessing tool, called $\FriendlyCore$, that takes as input a dataset $\cD$ and a predicate $\pred$, and outputs a subset $\cC\subseteq \cD$. If $\cD$ is $\pred$-complete, then $\cC = \cD$ (i.e., no elements are removed from the core). In addition, for any neighboring databases $\cD$ and $\cD' = \cD \cup \set{z}$, we show that $\FriendlyCore$ satisfies the following two key properties with respect to the outputs $\cC = \FriendlyCore(\cD)$ and $\cC' = \FriendlyCore(\cD')$:

\begin{enumerate}
	
	\item Friendliness: $\cC \cup \cC'$ is guaranteed to be $\pred$-friendly.
	
	\item Stability: $\cC$ is distributed ``almost'' as $\cC'\setminus \set{z}$.
	
\end{enumerate}

At the high level, $\FriendlyCore$ on input $\cD$ acts as follows: For every element $x \in \cD$, it counts $c = \sum_{y \in \cD} f(x,y)$ (i.e., the number of $x$'s ``friends''), and puts $x$ inside the core with probability $q(c)$, where $q$ is a low-sensitivity monotonic function with $q(n/2)=0$, $q(n)=1$ and smoothness in the range $[n/2,n]$, i.e.\ $q(c) \approx q(c+1)$. The utility follows since if $\cD$ is $\pred$-complete then all the counts are $n$. The friendliness is guaranteed since for every $x,y \in \cC \cup \cC'$, the set of $x$'s friends and set of $y$'s friends are both larger than $n/2$ and therefore must intersect. The stability follows by the smoothness of $q$. See \cref{sec:FriendlyCore} for more details.

Using this preprocessing tool, we prove the following theorem that converts a friendly $\DP$ algorithm into a standard (end-to-end) $\DP$ one using $\FriendlyCore$.

\begin{theorem}[Paradigm for $\DP$, informal]
	If $\sA$ is $\pred$-friendly $(\eps,\delta)$-$\DP$, then $\sA(\FriendlyCore(\cdot))$ is $\approx (2\eps, 2e^{3\eps}\delta)$-$\DP$.
\end{theorem}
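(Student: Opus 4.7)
The plan is to reduce the end-to-end analysis to a ``hybrid argument'' that chains together the stability property of $\FriendlyCore$ with the $\pred$-friendly DP guarantee of $\sA$. Fix neighboring datasets $\cD$ and $\cD' = \cD \cup \set{z}$, and let $\cC = \FriendlyCore(\cD)$ and $\cC' = \FriendlyCore(\cD')$. The key intermediate object is the random set $\cC' \setminus \set{z}$, which sits between $\cC$ (close to it by stability) and $\cC'$ (close to it via $\sA$'s friendly DP).

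First, I would invoke the stability property: formally I expect this to say that $\cC$ and $\cC' \setminus \set{z}$ are $(\eps, \delta')$-indistinguishable as distributions (for some $\delta'$ derived from $\FriendlyCore$'s analysis). Applying $\sA$ is post-processing, so $\sA(\cC)$ and $\sA(\cC' \setminus \set{z})$ inherit the same $(\eps, \delta')$-indistinguishability. Second, I would apply the friendly DP property of $\sA$ to the pair $(\cC' \setminus \set{z}, \cC')$. These are neighboring datasets, and by the friendliness property $\cC \cup \cC'$ is $\pred$-friendly, so its subset $\cC'$ (and hence $(\cC' \setminus \set{z}) \cup \cC' = \cC'$) is $\pred$-friendly as well. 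Thus $\sA(\cC' \setminus \set{z})$ and $\sA(\cC')$ are $(\eps, \delta)$-indistinguishable. Composing via the triangle inequality for $(\eps, \delta)$-indistinguishability yields, for every event $T$,
\[
\pr{\sA(\cC) \in T} \;\leq\; e^{\eps}\pr{\sA(\cC' \setminus \set{z}) \in T} + \delta' \;\leq\; e^{2\eps}\pr{\sA(\cC') \in T} + e^{\eps}\delta + \delta',
\]
and symmetrically in the other direction. This yields $(2\eps, e^{\eps}\delta + \delta')$-indistinguishability; plugging the bound on $\delta'$ that $\FriendlyCore$'s stability supplies then gives the claimed $(2\eps, 2e^{3\eps}\delta)$ form.

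A subtle point I would need to be careful about is the joint distribution over $\FriendlyCore$'s internal randomness. In Step 1, ``$\cC$ is distributed almost as $\cC' \setminus \set{z}$'' must be read as indistinguishability of the marginal distributions over subsets of $\cX$, not a coupling on a per-coin basis; luckily this is exactly what is preserved under post-processing by $\sA$, so the argument goes through without requiring any coupling beyond what Definition~\ref{def:pred-friendly-DP-alg} provides. A second subtlety is that the friendly DP guarantee is invoked on a \emph{random} dataset $\cC'$, so strictly speaking one should derive the indistinguishability of $\sA(\cC' \setminus \set{z})$ and $\sA(\cC')$ by conditioning on each realization of $\cC'$ (for which friendliness holds with probability one), then integrating; this is routine.

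The main obstacle I anticipate is matching the $\delta$ term in the final bound. The stability of $\FriendlyCore$ is what the inner workings of the tool actually deliver (via smoothness of $q$), and the $e^{3\eps}$ rather than $e^{\eps}$ factor suggests that stability is not bare $(\eps, \delta)$-indistinguishability but involves an extra $e^{2\eps}$ slack—likely because the random choices controlling whether each point enters the core are correlated through the shared counts, and the cleanest stability statement to prove is itself of the form $(\eps, e^{2\eps}\delta)$. Once that precise form is in hand, the composition above plugs it in directly; without it, one could instead establish the theorem with a slightly weaker $\delta$ constant, which is typically all that is needed in practice.
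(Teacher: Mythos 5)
The step that does not go through is your Step 1. You assume the ``stability'' property of $\FriendlyCore$ can be phrased as ``$\cC$ and $\cC'\setminus\set{z}$ are $(\eps,\delta')$-DP-indistinguishable as distributions,'' and then you post-process by $\sA$. But what $\FriendlyCore$ (via $\BasicFilter$) actually delivers is much weaker: it produces per-element inclusion probabilities $\pp,\pp'$ and guarantees only an $\ell_1$ bound $\norm{\pp_{-j}-\pp'}_1\leq 1/(1-2\alpha)$ (\cref{lemma:BasicFilter}). This bounds the \emph{expected number} of coordinates on which the two cores disagree, not the DP distance between the two core distributions. Indeed the product-Bernoulli laws $\Bern(\pp)$ and $\Bern(\pp')$ need not be DP-close for any useful parameters: if, say, $p_i=0$ while $p_i'=2/n$ on $\Omega(n)$ coordinates (which is consistent with the $\ell_1$ bound and can occur when many elements have exactly $n/2$ friends in $\cD$ but $n/2+1$ in $\cD'$), then one law is supported on a single subvector for those coordinates while the other puts $\approx 1-e^{-1}$ mass outside it, so no $(\eps,\delta')$ with $\delta'\ll 1$ can witness indistinguishability of the cores themselves. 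What saves the day is that the differing coordinates only hurt after passing through $\sA$, and there each extra disagreeing element costs a bounded $e^\eps$ group-privacy factor; averaging over the number of disagreements (controlled by the $\ell_1$ bound) gives a finite penalty. Your ``correlated through the shared counts'' intuition for the $e^{2\eps}$ slack is also off: the filter is a product filter, so the coin flips are independent; the $e^{O(\eps)}$ factor comes from group privacy, not correlation.

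The paper does not decompose Step 1 and Step 2 separately. It pads $\pp'$ by inserting a $0$ in the $j$-th coordinate to get $\tilde{\pp}'$ on the same index set, so $\norm{\pp-\tilde{\pp}'}_1\leq 1+1/(1-2\alpha)=\gamma$, and then applies a single lemma (\cref{claim:vec-ind}) which says: when the inclusion-probability vectors are $\gamma$-close in $\ell_1$ \emph{and} $\sA$ is $(\eps,\delta)$-DP on all neighboring pairs in the supports of the two random cores, then $\sA(R)\approx^{\DP}_{\gamma(e^\eps-1),\,\gamma\delta e^{\eps+\gamma(e^\eps-1)}}\sA(R')$. The proof of that lemma is a coupled ``probabilistic group privacy'' argument: couple $V,V'$ via shared uniform thresholds, partition the space into cells where exactly one of $V_i,V_i'$ is pinned, and within each cell bound the moment generating function of the number of disagreements. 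So the fix to your plan is to replace your Step~1 with this kind of statement about $\sA\circ(\text{filter})$ rather than about the filter alone; the hybrid through $\cC'\setminus\set{z}$ is morally fine (it is exactly what padding $\pp'$ with a zero in coordinate $j$ does), but the first hop must be bounded by group-privacy-through-$\sA$, not by DP-indistinguishability of the core distributions.
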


In this work we also present a version of $\FriendlyCore$ for the $\delta$-approximate $\rho$-zero-Concentrated Differential Privacy model of \cite{BS16} (in short, $(\rho,\delta)$-$\zCDP$). This version has similar utility guarantee (i.e., when $\cD$ is $\pred$-complete, then $\cC = \cD$). In addition, this version gets additional privacy parameters $\rho,\delta$, and satisfies the following privacy guarantee.

\begin{theorem}[Paradigm for $\zCDP$, informal]
	If $\sA$ is $\pred$-friendly $(\rho,\delta)$-$\zCDP$, then $\sA(\FriendlyCore_{\rho',\delta'}(\cdot))$ is $(\rho + \rho', \delta + \delta')$-$\zCDP$.
\end{theorem}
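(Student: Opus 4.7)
The plan is to view the full mechanism $\sA \circ \FriendlyCore_{\rho',\delta'}$ as an adaptive composition in the approximate-zCDP model: the first stage $\FriendlyCore_{\rho',\delta'}$ is $(\rho',\delta')$-zCDP on the input dataset, and the second stage $\sA$ is $(\rho,\delta)$-zCDP \emph{on the friendly neighboring pairs} that the first stage produces with high probability. Concretely, I would fix neighboring datasets $\cD$ and $\cD'=\cD\cup\{z\}$, and analyze the pair $(\cC,\cC'):=(\FriendlyCore_{\rho',\delta'}(\cD),\FriendlyCore_{\rho',\delta'}(\cD'))$ under the natural coupling of the per-element retention decisions, where each $x\in\cD\cap\cD'$ uses the same internal coin on both runs and $z$ is handled independently on the $\cD'$ side.

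The argument would rest on two ingredients. First, an intrinsic $(\rho',\delta')$-zCDP bound for $\FriendlyCore_{\rho',\delta'}$, proved by showing that the smoothness of the retention function $q$ keeps $\cC$ and $\cC'$ identical except possibly for $z$ under the coupling, so the Rényi divergence between the marginals is at most $\rho'\alpha$ up to a deletion event of probability $\delta'$. Simultaneously, the very same design of $q$ (retention only when the friend count exceeds $n/2$) guarantees that whenever the coupling succeeds, the union $\cC\cup\cC'$ is $\pred$-friendly, since any two retained elements have friend sets of size more than $n/2$ and thus share a common friend. Second, the $\pred$-friendly $(\rho,\delta)$-zCDP property of $\sA$, applied to the neighboring friendly pair $(\cC,\cC')$ produced by the first stage, gives $D_\alpha(\sA(\cC)\|\sA(\cC'))\leq\rho\alpha$ up to an additional $\delta$ deletion event on each run.

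To assemble the bound I would invoke adaptive composition of approximate zCDP (the Bun–Steinke formulation): writing the Rényi divergence of $(\cC,\sA(\cC))$ versus $(\cC',\sA(\cC'))$ via the chain rule and conditioning on the intersection of the two deletion events, whose complement has measure at most $\delta+\delta'$ by a union bound, the two contributions add to $(\rho+\rho')\alpha$ for every $\alpha>1$, which by post-processing gives the desired $(\rho+\rho',\delta+\delta')$-zCDP bound for $\sA(\FriendlyCore_{\rho',\delta'}(\cdot))$. The main obstacle I anticipate is the careful handling of the two deletion events: the $\delta$ slack for $\sA$ is only meaningful conditional on the first stage having produced a $\pred$-friendly pair, so one must verify that the standard approximate-zCDP composition theorem applies even when the second mechanism's guarantee is itself contingent on a high-probability event of the first. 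I would resolve this by absorbing the $\FriendlyCore$-coupling failure event into the overall deletion event before invoking the chain rule, which reduces the problem to pure-zCDP additivity on the good event and is a routine adaptation of the BS16 composition argument.
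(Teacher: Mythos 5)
Your high-level plan (first-stage privacy for the filter, friendliness of the retained pairs, friendly-zCDP of $\sA$, then compose) tracks the paper's outline, but the execution rests on a coupling claim that is false and on a composition step that is not as routine as you suggest. The claim that coupling the per-element coins ``keeps $\cC$ and $\cC'$ identical except possibly for $z$'' does not hold for any version of the filter: adding or removing one element changes the friend count of \emph{every} other element, so the retention probability (or the noisy count, in the zCDP filter) of each $x\in\cD\cap\cD'$ shifts, and under the natural coupling the two cores differ on a random set of common elements whose expected size is a nonvanishing constant (for the basic filter, $\|\pp_{-j}-\pp'\|_1$ is order $1/(1-2\alpha)$; for the zCDP filter the disagreement is governed by Gaussian tails, not a vanishing deletion event). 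You cannot fold this into a $\delta'$ slack. What the paper's filter actually gives is a Rényi-divergence bound $V_{-j}|_E\approx_\rho V'|_{E'}$ on the indicator vectors restricted to the common indices (conditioned on events that rule out ``bad'' elements being retained), obtained from the Gaussian mechanism applied to the noisy counts, not from output equality.

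The second gap is the ``routine adaptation of the BS16 composition argument.'' Adaptive composition applies to $\sA'(\cD,\sF(\cD))$ with $\sA'$ zCDP in its first argument; here you apply $\sA$ only to the filter's output, and $\sA$ is not zCDP at all --- it is only $\pred$-friendly zCDP, meaning its guarantee holds only on friendly neighboring pairs of its input. Moreover $\cC\in\cP(\cD)$ and $\cC'\in\cP(\cD')$ live over different index sets, and the extra element $x_j$ can appear in one core and not the other; the indistinguishability $V_{-j}\approx_\rho V'$ only controls the common coordinates. Bridging this requires a dedicated lemma: one forms $F'(\py)=\sA(\cC')$ and $F(\py)$ as a mixture over the extra coin $V_j|V_{-j}=\py$ of $\sA(\cC')$ and $\sA(\cC)$ with $\cC=\cC'\cup\{x_j\}$, uses quasi-convexity of R\'enyi divergence to show $F(\py)\approx_{\rho',\delta'}F'(\py)$ for each $\py$ (this is where the friendliness-plus-neighboring hypothesis on $\sA$ is used), and then applies a chain-rule fact for indistinguishable $Y,Y'$ passed through pointwise-close kernels. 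This is the substance of the proof, not a routine application of composition, and it is precisely the piece missing from your sketch.
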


%
%

\remove{
	\paragraph{Stability}
	$\FriendlyCore$ works by assigning probabilities $\pp$ to points in $\cD$ and including each $\px\in\cD$ in the output $\cD_G$ independently according to the respective probability. Our definition of stability is tailored to algorithms of this
	particular form:
	\Hnote{One would not understand: We say that $\FriendlyCore$ computes 
		$\cD_G$ by assigning probabilities $\pp$ to points in $\cD$ and then drawing iid using these coins ? $\cD_G$
		is stable since these probabilities are stable in the sense of definition 1.3 ?}\ECnote{yes, rephrased}
	\begin{definition} [$\alpha$-stability] 
		The algorithm is 
		{\em $\alpha$-stable} if for any two neighboring $\cD$ and $\cD'$ with probability vectors $\pp$ and $\pp'$ on their common elements, 
		$\|\pp-\pp'\|_1 \leq \alpha$.
	\end{definition}
	We establish that $\alpha$-stability provides the benefits of $\alpha$-group privacy: \Hnote{group privacy is an "occupied" term are you redefining it ?} An $(\varepsilon,\delta)$-DP algorithm with an $\alpha$-stable input is (roughly) $((1+\alpha)\varepsilon,(1+\alpha)\delta e^{\varepsilon(1+\alpha)})$-DP.
	
	\paragraph{Test or Search Modes}
	The output of $\FriendlyCore$ is more effective with smaller $\alpha$. We apply it in the regime \Hnote{This gives the impression that we know how to apply it in other regimes but we choose this one, do we ?}\ECnote{yes.  We get small constant $\alpha$ unconditionally }
	where $\alpha$ is fractional and $\cD_G$ contains almost all the points in $\cD$. For this we consider a {\em test} mode, which allows $\FriendlyCore$ to fail when the input is far from $\pred$-completeness, and a {\em search} mode for a suitable $\pred$:
	
	$\FriendlyCore$-Test is specified by $\alpha$ and privacy parameters $(\varepsilon,\delta)$.  It either returns $\cD_G$ (success) or $\bot$ (failure).  Success is $(\varepsilon,\delta)$-DP and $\cD_G$ is $\alpha$-stable. \Hnote{I am not sure if one can understand from this what is the output and what is DP..}
	
	$\FriendlyCore$-Search is provided with a sequence 
	$(\pred_i)_{i\in[L]}$ of {\em monotone} predicates, that is,  for all $i\geq j$,  $\pred_i(x,y) \implies \pred_j(x,y)$.  The output is $\cD_G$ and $\pred_i$ with the property that $\cD$ is (nearly) $\pred_i$-complete but not $\pred_{i-1}$-complete.
}

\subsection{Example Applications}

\subsubsection{Private Averaging}\label{sec:intro:avg}


Computing the average (center of mass) of points in $\bbR^d$ is perhaps the most fundamental metric aggregation task. The traditional way for computing averages with DP is to first bound the diameter $\Lambda$ of the input space, say using the ball $B(0, \Lambda/2)$ with radius $\Lambda/2$ around the origin, clip all points to be inside this ball, and then add Gaussian noise per-dimension that scales with $\Lambda$. Now consider a case where the input dataset $\cD$ contains $n$ points from some small set with diameter $r\ll\Lambda$, that is located {\em somewhere} inside our input domain $B(0, \Lambda/2)$. Suppose even that we know the diameter $r$ of that small set, but we do not know {\em where} it is located inside $B(0, \Lambda/2)$.
Ideally, we would like to average this dataset while adding noise proportional to the effective diameter $r$ instead of to the worst-case bound on the diameter $\Lambda$. This is easily achieved using our framework. Indeed, such a dataset is $\distt_{r}$-complete for the predicate $\distt_{r}(\px,\py) \eqdef \indic{\norm{\px - \py}_2 \leq r}$, that is, two points are friends if their distance is at most $r$. Therefore, using our framework, it suffices to design an $\distt_r$-friendly DP algorithm for averaging. Now, the bottom line is that when designing a $\distt_r$-friendly DP algorithm for this task, we {\em do not} need to add noise proportionally to $\Lambda$, and a noise proportionally to $r$ suffices. The reason is that we only need to account for neighboring datasets that are $\distt_r$-friendly, and the difference between the averages of any two such neighboring datasets (i.e., the sensitivity) is proportional to $r$. See \cref{example_ave} (Left) for an illustration.

We note that existing tailored methods for this averaging problem, for example \cite{NSV16} and \cite{KV18} (applied coordinated wise after a random rotation), also provide sample complexity that is {\em asymptotically} optimal in that it matches that of $\distt_r$-friendly DP averaging. These methods, however, have large constant factors in the sample complexity.  The advantage of $\FriendlyCore$ is in its simplicity and dimension-independent sample complexity that allows for small overhead over what is necessary for friendly DP averaging.

In Section~\ref{sec:experiments:averaging} we report empirical results of the averaging application.  We observe that the $\zCDP$ version of our $\FriendlyCore$ framework provides significant practical benefits, outperforming the practice-oriented $\CoinPress$~\cite{BDKU20} for high $d$ or $\Lambda$. This application is described  in \cref{sec:averaging}.

\subsubsection{Private Clustering of Well-Separated Instances}\label{sec:intro:clustering}


Consider the problem of $k$-clustering of a set of points that is easily clusterable. For example, when the clusters are well separated or sampled from $k$ well separated Gaussians.  If data is not this nice, we should still be private, but we do not need the clusters. A classic approach \cite{NRS07} is to split the data randomly into pieces, run some non-private off-the-shelf clustering algorithm on each piece, obtaining a set of $k$ centers (which we call a $k$-tuple) from each piece, and privately aggregating the result.  If the clusters are well separated, then the centers that we compute for different pieces should be similar.\footnote{Starting from the work of \cite{OstrovskyRSS12}, such separation conditions have been the subject of many interesting papers. See, e.g.,~\cite{MosheS21} for a survey of such separation conditions in the context of differential privacy.}
Recently, \citet{CKMST:ICML2021} formulated the {\em private $k$-tuple clustering problem} as the aggregation step.  
That is, for an input set of such $k$-tuples (which are similar to each other),  the task is to privately compute a new $k$-tuple that is similar to them. 
The $k$-tuple clustering problem is an easier private clustering task  where all clusters are of the same size and utility is desired only when the clusters are separated.   The application of $\FriendlyCore$ provides a simple solution:  A tuple $X=(\px_1,\ldots,\px_k)$ is a ``friend'' of a tuple $Y=(\py_1,...,\py_k)$, if for every $\px_i$ there is a unique $\py_j$ that is substantially closer to $\px_i$ than to any other $\px_{\ell}$, $\ell \neq i$. Formally, given a parameter $\gamma \leq 1$, we define the predicate $\match_{\gamma}(X,Y)$ to be $1$, if there exists a permutation $\pi$ over $[k]$ such that for every $i$ it holds that $\norm{\px_i - \py_{\pi(i)}} < \gamma \cdot \min_{j\neq i}\norm{\px_i - \py_{\pi(j)}}$. Now given a database $\cD$ of $k$-tuples as input, we can compute $\cC=\FriendlyCore(\cD)$ with respect to the predicate $\match_{\gamma}$ for guaranteeing the friendliness of the core $\cC$. In particular, if there are a few tuples that are not similar to the others (i.e., ``outliers''), then they will be removed by $\FriendlyCore$ (see \cref{example_ave} (Right) for an illustration).
It follows that for small enough constant $\gamma$ (as shown in \cref{sec:clustering}, $\gamma = 1/7$ suffices), the tuples are guaranteed to be separated enough for making the clustering problem almost trivial: We can use any tuple $Z=(\pz_1,\ldots,\pz_k)$  in $\cC$ to partition the tuple points to $k$ parts (the partition is guaranteed to be the same no matter what tuple $Z$ we choose).  We can then take a private average of each part (with an appropriate noise) to get a tuple of DP centers.  In this application, the use of $\FriendlyCore$ both simplifies the solution and lowers
the sample complexity of  private $k$-tuple clustering.  This translates to using fewer parts in the clustering application and allowing for private clustering of much smaller datasets. We remark that the private averaging of each part can be done again by applying $\FriendlyCore$ on each part (as described in \cref{sec:intro:avg}). It even turns out that the flexibility of $\FriendlyCore$ allows to do all $k$ averaging using a single call to $\FriendlyCore$ with an appropriate predicate for \emph{ordered} tuples (see details in \cref{sec:applications:ordered-tuples}).

In \cref{sec:experiments:clustering} we report empirical results of the clustering application, implemented in the $\zCDP$ model. We observe that in several different clustering tasks, it outperforms a recent practice-oriented implementation of \citet{LSH2021} that is based on locality-sensitive hashing (LSH). The clustering algorithm is described in \cref{sec:clustering}.

\begin{figure}[t]
	\centerline{\includegraphics[scale=.3]{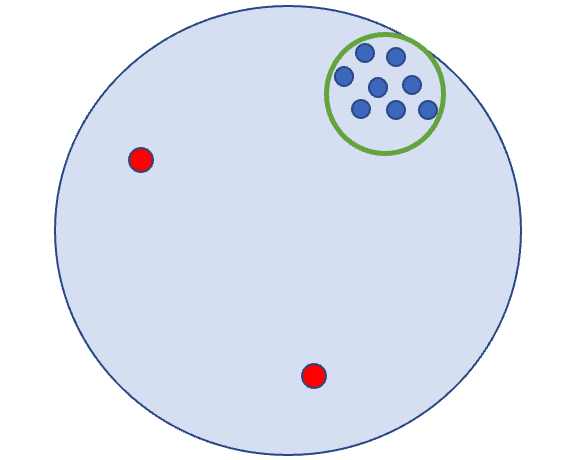}
	\vline
		\includegraphics[scale=.22]{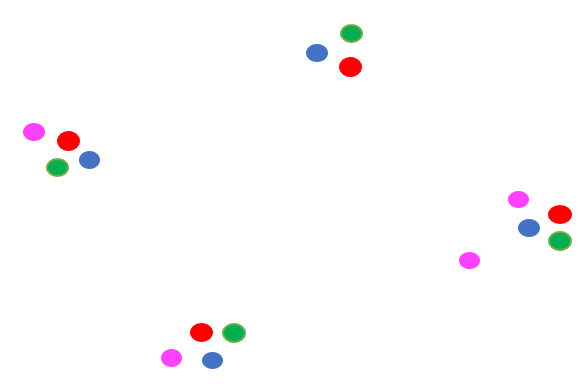}
	}
	\caption{Left: Private averaging example. When we apply $\FriendlyCore$ with $\distt_r$, the output is guaranteed to be $\distt_r$-friendly (and $\distt_{2r}$ complete). When $r$ is the diameter of the blue points then $\cC$ includes all blue points and no red points. Right:  $k$-tuple clustering. The predicate $\match_\gamma(X,Y)$ holds for $\gamma=1/7$ for any pair of the red, blue, and green $4$-tuples but does not hold for pairs that include the pink tuple.}
	\label{example_ave}
	\label{example_tuple}
\end{figure}

\subsubsection{Private Learning a Covariance Matrix}
Recently, three independent and concurrent works of \citet{KSSU21,AL21,KMV21} gave a polynomial-time algorithm for privately learning the parameters of unrestricted Gaussians (all the three works were published after the first version of our work that did not include the covariance matrix application). The core of \citet{AL21}'s construction consists of a framework in the $\DP$ model for privately learning average-based aggregation tasks, that has the same flavor of $\FriendlyCore$. Their framework is then applied on private averaging and private learning an unrestricted covariance matrix.

For emphasizing the flexibility of $\FriendlyCore$, in \cref{sec:covariance} we show how to apply $\FriendlyCore$ (based on the tools of \cite{AL21}) for learning an unrestricted covariance matrix.

\remove{
In \cref{sec:covariance} we briefly explain their method, and show h

In \cref{sec:covariance} we briefly explain their method and compare it to $\FriendlyCore$.
For emphasizing the flexibility of $\FriendlyCore$, in \cref{sec:covariance} we also present how to use it along with the tools that \citet{AL21} developed as part of their work, in order to construct a private algorithm for learning an unrestricted covariance matrix.
}

\remove{

Using their tools that they developed, we show in \cref{sec:covariance}


Consider the task of privately learn a covariance matrix $\Sigma \in \bbR^{d\times d}$ given a database $\cD = (\px_1,\ldots,\px_n)$ that contains $n$ independent samples from $\cN(\pt{0},\Sigma)$. Without privacy, it can just be estimated by the empirical covariance of the samples: $\frac1{n} \sum_{i=1}^n \px_i \cdot \px_i^T$. In a very recent independent work, \citet{AL21} described a polynomial-time algorithm for privately learning the parameters of unrestricted Gaussian. At the core of their construction, they present a general framework in the $\DP$ model which is somewhat similar to $\FriendlyCore$, and apply it on the problem of learning an unrestricted covariance matrix.
Their tool do not outputs a subset $\cC \subseteq \cD$ as $\FriendlyCore$. Rather, it outputs a weighted average of the elements, where the weights are chosen in a way that makes the task of privately estimating it to be simpler than its unrestricted counterpart.\footnote{In particular, elements with less than $0.6n$ ``friends'' receive weight $0$ and elements with more than $0.7n$ ``friends'' recieve weight $1$.} When applying it on the learning covariance matrix task, 
they use a special ``friendliness'' predicate between covariance matrices, and apply their tool on the empirical covariance matrices, each is computed (non-privately) from a different part of the data points.
 
While the framework of \citet{AL21} has similar ideas to $\FriendlyCore$, it is wrapped by a much more complicated abstraction, and it is not clear how to apply it for tasks like $k$-tuple clustering, in which a weighted average is not meaningful. We therefore believe that $\FriendlyCore$, apart of being practical, is also simpler, more intuitive and more general. We also note that $\FriendlyCore$ has versions for both $\DP$ and $\zCDP$ models, while the framework of \cite{AL21} is only for the $\DP$ model. For completeness, in \cref{sec:covariance} we present how we can easily apply $\FriendlyCore$ with the predicate and tools of \cite{AL21} in order to learn a covariance matrix. 

}

\remove{
	Similar to $\FriendlyCore$ search in that it uses iterations in order to zoom on the right scale. The differences are that CoinPress iterations use averaging and trimming, essentially computing smaller bounding balls. Therefore the sample complexity of each iteration increases with the dimension. The number of iterations dependence on the ambient diameter is logarithmic with CoinPress and double logarithmic with search $\FriendlyCore$, which means that
	that more iterations are needed when the ambient diameter $\Lambda$ is large. CoinPress benefits from being highly optimized for the application and from the tight composition of zCDP which allows most of the privacy budget to be allocated to the final averaging step.
}
\ignore{
	Steinke and Bun proposed a trimmed mean approach through smooth sensitivity in concentrated DP
	\url{https://arxiv.org/abs/1906.02830}, which is not comparable.  Most relevant is
	CoinPress \url{https://arxiv.org/pdf/2006.06618.pdf}
	based on
	\url{https://arxiv.org/abs/2001.02285} which
	considers multi-variate data and proposes an iterative method of computing an enclosing ball, trimming the points, and private averaging using the smaller radius.  CoinPress is practical and outperforms prior methods.  The privacy model is different.   On the high level, both FriendlyCore (with search) and CoinPress are similar in that they use iterations to zoom on the "right" diameter of the input and then apply private averaging with noise that depends on this diameter.  The difference is that with CoinPress the search itslef is also based on averaging.  Therefore, the search cost increases with the dimension whereas the search of FriendlyCore is one dimensional.  Another difference is that the CoinPress search has iterations that depend logarithmically in the ration of the ambient to effective diameter.  The search we use has double logarithmic dependence (and theoretically can be reduced to $\log^*$ with large constants).    
}

\subsection{Related Work} \label{related:sec}

Our framework  has similar goals to the smooth-sensitivity framework~\cite{NRS07} and to the propose-test-release framework~\cite{DworkL09}. Like our framework, these two frameworks aim to avoid worst-case restrictions and to perform well on well-behaved inputs. More formally, for a function $f$ mapping datasets to the reals, and a dataset $\cD$, define the {\em local sensitivity} of $f$ on $\cD$ as follows:
$
{\rm LS}_{f}(\cD) = \max_{\cD'\sim\cD}\|f(\cD)-f(\cD')\|,
$
where $\cD'\sim\cD$ denotes that $\cD'$ and $\cD$ are neighboring datasets. That is, unlike the standard definition of (global) sensitivity which is the maximum difference in the value of $f$ over {\em every pair} of  neighboring datasets, with local sensitivity we consider only neighboring datasets w.r.t.\ the given input dataset. As a result, there are many cases where the local sensitivity can be significantly lower than the global sensitivity. One such classical example is the median, where on a dataset for which the median is very stable it might be that the local sensitivity is zero even though the global sensitivity can be arbitrarily large. 
Both the smooth-sensitivity framework and the propose-test-release framework aim to privately release the value of a function while only adding noise proportionally to its local sensitivity rather than its global sensitivity (when possible).

Our framework is very different in that it does not aim for local sensitivity, and is not limited by it. Specifically, in the application of private averaging, the local sensitivity is still very large even when the dataset is friendly. This is because even if all of the input points reside in a tiny ball, to bound the local sensitivity we still need to account for a neighboring dataset in which one point moves ``to the end of the world'' and hence causes a large change to the average of the points.

\Enote{New:}Other frameworks which have similar goals in the context of \emph{node} differential privacy are projections and Lipschitz extensions \cite{KNRS13}. The main idea of these frameworks is to design ``friendly'' node-DP algorithms that, for example, only handle neighboring databases that contain nodes with low degree. Then, in order to reduce the original $\DP$ problem to the more accurate ``friendly''  $\DP$ problem, they either truncate nodes with high degree in a preprocessing step, or replace the target function with one that has low global sensitivity. Our approach is also very different from these approaches in the way we define and remove the outliers. We focus on metric spaces where an outlier in our context is an element that is far from many other elements, which is data-dependent. The aforementioned frameworks do not seem to provide tools for handling such cases, and therefore are completely complementary.


\section{Preliminaries}

\subsection{Notation}

Throughout this work, a database $\cD$ is an (ordered) vector over a domain $\cX$. Given $\cD = (x_1,\ldots,x_n) \in \cX^n$, for $\cI \subseteq [n]$ let $\cD_\I \eqdef (x_i)_{i\in \I}$, let  $\cD_{-\I} \eqdef \cD_{[n] \setminus \I}$, and for $i \in [n]$ let $\cD_i \eqdef x_i$ and $\cD_{- i} \eqdef \cD_{-\set{i}}$ (\ie $(x_1,\ldots,x_{i-1},x_{i+1},\ldots,x_n)$). For $\cD = (x_1,\ldots,x_n)$ and $\cD' = (x_1',\ldots,x_m')$ let $\cD \cup \cD' = (x_1,\ldots,x_n,x_1',\ldots,x_m')$. For $n \in \N$ we denote by $0^n$ the $n$-size all-zeros vector.

For $p \in [0,1]$ let $\Bern(p)$ be the Bernoulli distribution that outputs $1$ w.p. $p$ and $0$ otherwise. For $\pp = (p_1,\ldots,p_n) \in [0,1]^n$, we let $\Bern(\pp)$ be the distribution that outputs $(V_1,\ldots,V_n)$, where $V_i \la \Bern(p_i)$, and the $V_i$'s are independent.

For $\px = (x_1,\ldots,x_d) \in \bbR^d$, we let $\norm{\px} \eqdef \sqrt{\sum_{i=1}^d x_i^2}$ (i.e., the $\ell_2$ norm of $\px$) and let $\norm{\px}_1 \eqdef \sum_{i=1}^n \size{x_i}$ (the $\ell_1$ norm of $\px$).
For $\pc \in \bbR^d$ and $r \geq 0$, we denote $B(\pc,r) \eqdef \set{\px \in \bbR^d \colon \norm{\px - \pc} \leq r}$.
For a database $\cD \in (\bbR^d)^*$ we denote by $\Avg(\cD) \eqdef \frac1{\size{\cD}}\cdot \sum_{\px \in \cD} \px$ \: the average of all points in $\cD$.
For $r \geq 0$ and $\px,\py \in \bbR^d$ we denote $\distt_{r}(\px,\py) \eqdef \indic{\norm{\px-\py} \leq r}$ (i.e., $1$ if $\norm{\px-\py} \leq r$ and $0$ otherwise).

The support of a discrete random variable $X$ over $\cX$, denoted $\Supp(X)$, is defined as $\set{x\in \cX: P(x)>0}$, where $P(\cdot)$ is the probability mass/density function of $X$'s distribution.

Throughout this paper, we define neighboring databases with respect to the insertion/deletion model, where one database is obtain by adding or removing an element from the other database. Formally, 

\begin{definition}[Neighboring databases]\label{def:neighboring}
	Let $\cD$ and $\cD'$ be two databases over a domain $\cX$. We say that $\cD$ and $\cD'$ are \emph{neighboring}, if
	either there exists $j \in [\size{\cD}]$ such that $\cD_{-j} = \cD'$, or there exists $j \in [\size{\cD'}]$ such that $\cD = \cD'_{-j}$.
\end{definition}

\subsection{Zero-Concentrated Differential Privacy (zCDP)}

\begin{definition}[R\'{e}nyi Divergence (\cite{Renyi61})]
	Let $X$ and $X'$ be random variables over $\cX$. For $\alpha \in (1,\infty)$, the \emph{R\'{e}nyi divergence} of order $\alpha$ between $X$ and $X'$ is defined by 
	\begin{align*}
		D_{\alpha}(X || X') = \frac1{\alpha-1} \cdot \ln \paren{\eex{x \la X}{\paren{\frac{P(x)}{P'(x)}}^{\alpha-1}}},
	\end{align*}
	where $P(\cdot)$ and $P'(\cdot)$ are the probability mass/density functions of $X$ and $X'$, respectively.  
\end{definition}

\begin{definition}[$\zCDP$ Indistinguishability]\label{def:indis}
	We say that two random variable $X,X'$ over a domain $\cX$ are \emph{$\rho$-indistinguishable} (denote by $X \approx_{\rho} X'$), iff for every $\alpha \in (1,\infty)$ it holds that
	\begin{align*}
		D_{\alpha}(X || X'), D_{\alpha}(X' || X) \leq \rho \alpha.
	\end{align*}
	We say that $X,X'$ are \emph{$(\rho,\delta)$-indistinguishable} (denote by $X \approx_{\rho,\delta} X'$), iff there exist events $E,E' \subseteq \cX$ with $\pr{X \in E}, \pr{X' \in E'} \geq 1-\delta$ such that $X|_{E} \approx_{\rho} X|_{E'}$.
\end{definition}

\begin{definition}[$(\rho,\delta)$-$\zCDP$ \cite{BS16}]\label{def:zCDP}
	An algorithm $\Alg$ is \emph{$\delta$-approximate $\rho$-\zCDP} (in short, $(\rho,\delta)$-\zCDP), if for any neighboring databases $\cD,\cD'$ it holds that $\Alg(\cD) \approx_{\rho,\delta} \Alg(\cD')$.\footnote{We remark that our two parameters $(\rho,\delta)$-$\zCDP$ has a different meaning than the two parameters definition $(\xi,\rho)$-$\zCDP$ of \cite{BS16}. Throughout this work, we only consider the case $\xi = 0$ and therefore omit it from notation.} If the above holds for $\delta = 0$, we say that $\Alg$ is $\rho$-\zCDP.
\end{definition}

%
%
%

\subsection{$(\eps,\delta)$-Differential Privacy (DP)}

\begin{definition}[$(\eps,\delta)$-\DP-indistinguishable]\label{def:DP-indis}
	Two random variable $X,X'$ over a domain $\cX$ are called $(\eps,\delta)$-\DP-indistinguishable (in short, $X \approx_{\eps,\delta}^\DP X'$), iff for any event $T \subseteq \cX$, it holds that
	$\pr{X \in T} \leq e^{\eps} \cdot \pr{X' \in T} + \delta$. If $\delta = 0$, we write $X \approx_{\eps}^{\DP} X'$.
\end{definition}

\begin{definition}[$(\eps,\delta)$-$\DP$ \cite{DMNS06}]\label{def:DP}
	Algorithm $\Alg$ is $(\eps,\delta)$-\DP, if for any two neighboring databases $\cD,\cD'$ it holds that $\Alg(\cD)\approx_{\eps,\delta}^{\DP} \Alg(\cD')$. If $\delta = 0$ (i.e., pure privacy), we say that $\Alg$ is $\eps$-\DP.
\end{definition}

\remove{
\begin{lemma}[\cite{BS16}]\label{lem:indis}
	$X \approx_{\eps,\delta}^\DP X'$ iff there exist events $E, E'$ with $\pr{X \in E},\pr{X' \in E'} \geq 1-\delta$ such that $X|_{E} \approx_{\eps}^\DP X'|_{E'}$.
\end{lemma}
}

\subsection{Properties of DP and zCDP}

\begin{fact}[From DP to zCDP and vice versa (\cite{BS16})]
	Any $(\eps,\delta)$-$\DP$ mechanism is $(\frac12 \eps^2, \delta)$-$\zCDP$. Any $(\rho,\delta)$-$\zCDP$ mechanism is $(\rho + 2\sqrt{\rho \ln(1/\delta')},\:\delta + \delta')$-$\DP$ for every $\delta' > 0$.
\end{fact}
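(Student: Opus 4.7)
The plan is to prove the two directions separately, reducing each to the known pure/exact case via the equivalent ``conditioning on a high-probability event'' characterization of approximate indistinguishability built into \cref{def:indis,def:DP-indis}.

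\textbf{From $(\eps,\delta)$-DP to $(\tfrac12\eps^2,\delta)$-zCDP.} First I would handle the pure case $\delta=0$. If $X\approx_\eps^{\DP} X'$, then the privacy loss random variable $Z=\ln(P(x)/P'(x))$ (where $x\sim X$) is bounded in absolute value by $\eps$. A short computation using $\mathbb{E}_{x\sim X'}[P(x)/P'(x)]=1$ shows $\mathbb{E}[Z]\leq \tfrac12\eps(e^\eps-1)$, and combined with $|Z|\leq \eps$ one obtains Hoeffding-type control of the moment generating function of $Z-\mathbb{E}[Z]$, giving $\mathbb{E}_{x\sim X}[\exp((\alpha-1)Z)]\leq \exp\bigl(\tfrac12\eps^2\alpha(\alpha-1)\bigr)$ for all $\alpha>1$. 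This is exactly the statement $D_\alpha(X\|X')\leq \tfrac12\eps^2\cdot\alpha$, and by symmetry the same holds in the other direction, yielding $\tfrac12\eps^2$-zCDP. For the approximate case, invoke the standard equivalence: $(\eps,\delta)$-DP is equivalent to the existence, for every pair of neighboring $\cD,\cD'$, of events $E,E'$ of probability $\geq 1-\delta$ such that the conditional distributions of $\Alg(\cD),\Alg(\cD')$ are $(\eps,0)$-DP-indistinguishable. Apply the pure-case argument to these conditionals to get $\tfrac12\eps^2$-indistinguishability after conditioning, which by \cref{def:indis} is precisely $(\tfrac12\eps^2,\delta)$-zCDP.

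\textbf{From $(\rho,\delta)$-zCDP to $(\rho+2\sqrt{\rho\ln(1/\delta')},\delta+\delta')$-DP.} Again I first handle $\delta=0$. Assume $D_\alpha(X\|X')\leq \rho\alpha$ for all $\alpha>1$. Fix an event $T$ and let $W=\ln(P(x)/P'(x))$ under $x\sim X$. A standard Markov argument on $e^{(\alpha-1)W}$ gives
\[
\Pr[X\in T]\leq e^\eps\Pr[X'\in T]+\Pr[W>\eps],
\]
and the zCDP hypothesis bounds $\Pr_{x\sim X}[W>\eps]\leq \exp\bigl((\alpha-1)(\rho\alpha-\eps)\bigr)$. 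Choosing $\eps=\rho\alpha+\tfrac{\ln(1/\delta')}{\alpha-1}$ makes this tail at most $\delta'$. Optimizing $\alpha$ (the minimum is essentially at $\alpha-1=\sqrt{\ln(1/\delta')/\rho}$) yields $\eps\leq \rho+2\sqrt{\rho\ln(1/\delta')}$, giving $(\eps,\delta')$-DP in the pure-zCDP case. To upgrade to $(\rho,\delta)$-zCDP, apply the above within the events $E,E'$ from \cref{def:indis} on which the conditional distributions are $\rho$-indistinguishable: outside those events (total probability $\leq \delta$) we simply pay the $\delta$ in the DP inequality, while inside we pay the $\delta'$ from the Markov step. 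A union bound produces total slack $\delta+\delta'$, as claimed.

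\textbf{Main obstacle.} The only genuinely delicate step is the pure-DP-to-zCDP Hoeffding-style moment bound: one must pay attention that $|Z|\le\eps$ alone is not enough and that the constraint $\mathbb{E}_{X'}[e^Z]=1$ is what forces the correct constant $\tfrac12\eps^2$ (otherwise one loses a factor of $2$). Everything else (Markov inequality, optimization of $\alpha$, and bookkeeping of the $\delta$ via the conditioning lemma) is routine once the pure cases are in place.
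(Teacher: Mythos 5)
The paper does not reprove this fact; it cites \cite{BS16} directly, and your outline follows that reference closely: pure DP implies zCDP via a moment-generating-function bound on the privacy-loss random variable $Z$, zCDP implies approximate DP via Markov on the MGF followed by an optimization over $\alpha$, and the approximate versions are handled by passing to the high-probability events supplied by \cref{def:indis} and \cref{def:DP-indis}. The second direction checks out: $\Pr_{x\sim X}[Z>\eps]\le e^{(\alpha-1)(\rho\alpha-\eps)}$ gives $\delta'$ at $\eps=\rho\alpha+\ln(1/\delta')/(\alpha-1)$, minimized at $\alpha-1=\sqrt{\ln(1/\delta')/\rho}$, which yields exactly $\rho+2\sqrt{\rho\ln(1/\delta')}$, and the conditioning/union-bound bookkeeping for the $\delta+\delta'$ slack is the same device used by the paper's \cref{fact:indist_under_event}.

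There is, however, a concrete error in the pure-DP-to-zCDP step. Hoeffding gives $\mathbb{E}[e^{(\alpha-1)Z}]\le\exp\bigl((\alpha-1)\mathbb{E}[Z]+\tfrac12(\alpha-1)^2\eps^2\bigr)$, and matching this against the target $\exp\bigl(\tfrac12\alpha(\alpha-1)\eps^2\bigr)$ reduces, after dividing by $\alpha-1>0$, to exactly the requirement $\mathbb{E}[Z]\le\tfrac12\eps^2$. But the intermediate bound you state, $\mathbb{E}[Z]\le\tfrac12\eps(e^\eps-1)$, is strictly larger than $\tfrac12\eps^2$ for every $\eps>0$ (since $e^\eps-1>\eps$), so your chain as written does not close. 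The bound that is actually needed, and does hold, is $\mathbb{E}[Z]=D_{\mathrm{KL}}(X\|X')\le\eps\tanh(\eps/2)\le\tfrac12\eps^2$: writing $W=P(x)/P'(x)$, one maximizes $\mathbb{E}_{X'}[W\ln W]$ over $W\in[e^{-\eps},e^\eps]$ with $\mathbb{E}_{X'}[W]=1$; by convexity of $w\mapsto w\ln w$ the extremizer is the two-point distribution on the endpoints, giving $\eps(e^\eps-1)/(e^\eps+1)=\eps\tanh(\eps/2)$. Alternatively, you can avoid estimating $\mathbb{E}[Z]$ altogether and bound the MGF directly as in \cite{BS16}: by convexity of $w\mapsto w^\alpha$, $\mathbb{E}_{X'}[W^\alpha]$ is maximized by the same two-point distribution, and that quantity is then dominated by $e^{\frac12\alpha(\alpha-1)\eps^2}$ via a $\cosh$-type inequality. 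Either fix makes your argument go through.
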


\begin{fact}[Group Privacy (\cite{BS16})]\label{fact:group-priv}
	Let $\cD$ and $\cD'$ be a pair of databases that differ by $k$ points (i.e., $\cD$ is obtained by $k$ operations of addition or deletion of points on $\cD'$). 
	If $\Alg$ is $\rho$-\zCDP, then $\Alg(\cD) \approx_{k^2 \rho} \Alg(\cD')$. If $\Alg$ is $(\eps,\delta)$-\DP, then $\Alg(\cD) \approx_{k \eps,\: e^{k \eps} k \delta}^{\DP} \Alg(\cD')$.
\end{fact}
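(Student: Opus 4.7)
The plan is to prove both parts by induction on $k$ along a chain $\cD = \cD_0, \cD_1, \ldots, \cD_k = \cD'$ of databases, where each consecutive pair $(\cD_i, \cD_{i+1})$ is neighboring in the sense of \cref{def:neighboring}. Such a chain exists because $\cD$ is obtained from $\cD'$ by a sequence of $k$ single-point additions or deletions; each intermediate prefix of this sequence of operations applied to $\cD'$ yields a valid $\cD_i$. With this chain fixed, both statements reduce to iterating the one-step indistinguishability guarantee.

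For the $(\eps,\delta)$-DP statement, fix any event $T \subseteq \cX$. The assumption that $\Alg$ is $(\eps,\delta)$-DP gives $\pr{\Alg(\cD_i) \in T} \leq e^\eps \pr{\Alg(\cD_{i+1}) \in T} + \delta$ for each $i \in \{0,\dots,k-1\}$. Unrolling this recursion yields
\begin{align*}
\pr{\Alg(\cD_0) \in T} \;\leq\; e^{k\eps}\,\pr{\Alg(\cD_k) \in T} + \delta \sum_{i=0}^{k-1} e^{i\eps}.
\end{align*}
Since $\sum_{i=0}^{k-1} e^{i\eps} \leq k \cdot e^{(k-1)\eps} \leq k\, e^{k\eps}$, we get the claimed bound in one direction; the other direction follows by swapping the roles of $\cD$ and $\cD'$. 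No step here is technically hard — the only care needed is to not lose a factor when summing the geometric series.

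For the $\rho$-zCDP statement, the target $k^2\rho$ blowup is the main obstacle: Rényi divergence does not satisfy a clean triangle inequality, so one cannot just add $k$ single-step bounds. Instead, I would proceed by induction on $k$ using the \emph{weak} triangle inequality for Rényi divergences (as in Bun–Steinke), which allows one to bound $D_\alpha(\Alg(\cD_0)\,\|\,\Alg(\cD_k))$ in terms of $D_{\alpha'}(\Alg(\cD_0)\,\|\,\Alg(\cD_{k-1}))$ and $D_{\alpha''}(\Alg(\cD_{k-1})\,\|\,\Alg(\cD_k))$ at inflated orders $\alpha', \alpha''$. Plugging in the zCDP bound $D_\beta \leq \rho \beta$ at each inflated order and optimizing the split of $\alpha$ across the two terms gives, after a clean quadratic calculation, the inductive step $D_\alpha(\Alg(\cD_0)\,\|\,\Alg(\cD_k)) \leq k^2 \rho \alpha$ for every $\alpha > 1$, which is exactly $k^2\rho$-indistinguishability. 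The symmetric divergence bound follows by reversing the roles of the endpoints along the same chain. Since this quadratic-in-$k$ bound is the standard group privacy statement for zCDP, I would invoke the result of Bun–Steinke rather than re-derive the weak triangle inequality from scratch.
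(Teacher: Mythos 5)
The paper states this as a Fact cited directly from \cite{BS16} without reproducing a proof, so there is no in-paper argument to compare against; the intended justification is precisely the Bun--Steinke treatment you describe. Your DP half is correct and self-contained: telescoping along a chain $\cD = \cD_0, \ldots, \cD_k = \cD'$ of neighboring databases gives $\pr{\Alg(\cD_0)\in T} \leq e^{k\eps}\pr{\Alg(\cD_k)\in T} + \delta\sum_{i=0}^{k-1}e^{i\eps}$, and bounding the geometric sum by $ke^{(k-1)\eps} \leq ke^{k\eps}$ matches the stated constant; the reverse direction follows by reversing the chain. Your zCDP half correctly identifies the obstacle (R\'{e}nyi divergence lacks a clean triangle inequality, so summing $k$ one-step bounds would only give $k\rho$, not $k^2\rho$) and the right tool (the weak triangle inequality used in an induction on $k$), and deferring to \cite{BS16} for the details is exactly what the paper itself does.

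One small imprecision in your zCDP sketch: you write ``plugging in the zCDP bound $D_\beta \leq \rho\beta$ at each inflated order,'' but in the inductive step only the second term, $D_{\alpha''}(\Alg(\cD_{k-1})\,\|\,\Alg(\cD_k))$, gets the single-step $\rho$-zCDP bound; the first term must be bounded by the induction hypothesis $D_{\alpha'}(\Alg(\cD_0)\,\|\,\Alg(\cD_{k-1})) \leq (k-1)^2\rho\alpha'$. Using the one-step bound on both terms would not produce the quadratic-in-$k$ growth. Since you also write ``the inductive step'' and explicitly invoke Bun--Steinke, the intent is clear, but the phrasing should be tightened to avoid suggesting the one-step bound is applied twice.
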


\begin{fact}[Post-processing]\label{fact:post-processing}
	Let $F$ be a (randomized) function.
	If $\Alg$ is $(\rho,\delta)$-\zCDP, then $F \circ \Alg$ is $(\rho,\delta)$-\zCDP.
	If $\Alg$ is $(\eps,\delta)$-\DP, then $F \circ \Alg$ is $(\eps,\delta)$-\DP.
\end{fact}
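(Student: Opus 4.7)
The plan is to handle the two parts separately, reducing each to a data-processing inequality for the underlying divergence notion. Throughout, I will assume without loss of generality that $F$ is a randomized function whose internal randomness is represented by an independent random tape $R$; conditioned on $R=r$, $F_r$ is deterministic, and the post-processed output is $F_R(\Alg(\cdot))$.

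For the $(\eps,\delta)$-\DP case, fix neighboring databases $\cD,\cD'$ and an event $T$ in the output space of $F$. For each fixing $R=r$, the set $T_r := \{y : F_r(y) \in T\}$ is a measurable event in $\Alg$'s output space, so by the \DP guarantee
\[
\pr{F_r(\Alg(\cD)) \in T} = \pr{\Alg(\cD) \in T_r} \leq e^\eps \pr{\Alg(\cD') \in T_r} + \delta.
\]
Integrating over $R$ (which is independent of $\Alg$) preserves the inequality, yielding $\pr{F(\Alg(\cD)) \in T} \leq e^\eps \pr{F(\Alg(\cD')) \in T} + \delta$, as required by \cref{def:DP-indis}.

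For the $(\rho,\delta)$-\zCDP case, the key ingredient is the data-processing inequality for R\'enyi divergence: for any $\alpha \in (1,\infty)$ and any (possibly randomized) function $F$, $D_\alpha(F(X) \,\|\, F(X')) \leq D_\alpha(X \,\|\, X')$. The pure $\rho$-\zCDP case follows immediately: if $\Alg(\cD) \approx_\rho \Alg(\cD')$, then $D_\alpha(F(\Alg(\cD)) \,\|\, F(\Alg(\cD'))) \leq D_\alpha(\Alg(\cD) \,\|\, \Alg(\cD')) \leq \rho\alpha$, and the symmetric direction is analogous. To handle the $\delta$ slack, invoke \cref{def:indis}: there exist events $E, E'$ in $\Alg$'s output space with $\pr{\Alg(\cD)\in E}, \pr{\Alg(\cD')\in E'} \geq 1-\delta$ such that $\Alg(\cD)|_E \approx_\rho \Alg(\cD')|_{E'}$. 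Define the post-processed events $\widetilde E := \{F(y) : y \in E\}$ in the natural measurable sense (formally, the event that the pre-image of the realized output under $F$ intersects $E$ appropriately — it is cleaner to simply take $\widetilde E$ to be the image event that $\Alg$'s internal draw lies in $E$, lifted through $F$ by independence of $R$). Then $\pr{F(\Alg(\cD)) \in \widetilde E} \geq \pr{\Alg(\cD) \in E} \geq 1-\delta$, and by the pure case applied to the conditional distributions, $F(\Alg(\cD))|_{\widetilde E} \approx_\rho F(\Alg(\cD'))|_{\widetilde E'}$, certifying $(\rho,\delta)$-indistinguishability.

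The main obstacle is the last step: making the choice of the ``witness events'' $\widetilde E, \widetilde E'$ in the approximate \zCDP case precise and checking that R\'enyi divergence truly respects the conditioning after post-processing. The cleanest route is to work on the joint space of $(\Alg(\cD), R)$, where $E \times \Omega_R$ and $E' \times \Omega_R$ serve as the witness events (of the same mass $1-\delta$ each), invoke the pure data-processing inequality for the conditional joint distributions, and then project down via $F$. This reduces everything to one clean application of R\'enyi data processing plus the characterization of $(\rho,\delta)$-indistinguishability in \cref{def:indis}.
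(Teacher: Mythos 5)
Your \DP argument and pure-\zCDP argument are both fine: conditioning on the post-processor's randomness $R$ and using that $\{y : F_r(y)\in T\}$ is a legitimate event in $\Alg$'s output space handles the first, and data processing for R\'enyi divergence handles the second. The place where the proposal does not actually close the gap is the one you yourself flag, the approximate-\zCDP case, and the joint-space patch does not finish the job. What it produces is the push-forward of the conditional, $F\bigl(\Alg(\cD)|_{E}\bigr)$; this is a mixture component of $F(\Alg(\cD))$ (with weight $\pr{\Alg(\cD)\in E}\geq 1-\delta$), but it is in general \emph{not} equal to $(F\circ\Alg)(\cD)|_{\widetilde E}$ for any event $\widetilde E\subseteq\cX$ in $F$'s codomain, which is what \cref{def:indis} literally requires. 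Concretely: if $\Alg(\cD)$ is uniform on $\{0,1,2\}$, $E=\{0,1\}$, and $F(0)=F(2)=a$, $F(1)=b$, then $F\bigl(\Alg(\cD)|_E\bigr)$ is uniform on $\{a,b\}$, whereas $(F\circ\Alg)(\cD)|_{\widetilde E}$ has mass ratio $2:1$, $1:0$, or $0:1$ depending on $\widetilde E\subseteq\{a,b\}$ — never $1:1$. So ``project down via $F$'' hands you a mixture component, not a conditional on an output-space event, and the certifying pair $(\widetilde E,\widetilde E')$ you need may simply not exist as written.

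The standard fix, which your sketch is circling but does not state, is to either (i) read the ``events'' in \cref{def:indis} as events in the underlying probability space of the mechanism (the Bun--Steinke convention), so that $\{\Alg(\cD)\in E\}$ is a legitimate certifying event for $F(\Alg(\cD))$ and your data-processing step applies verbatim, or (ii) replace the events characterization by the equivalent mixture characterization: $X\approx_{\rho,\delta}X'$ iff $X=(1-\delta_1)X_1+\delta_1 X_2$ and $X'=(1-\delta_2)X'_1+\delta_2 X'_2$ with $\delta_1,\delta_2\le\delta$ and $X_1\approx_\rho X'_1$. Mixtures commute with post-processing for free, and then one application of R\'enyi data processing to the first components gives the result. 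You should make one of these moves explicit; without it the last step is a real gap, since Fact~\ref{fact:post-processing} is actually false under the strictly literal reading of \cref{def:indis} with $E,E'\subseteq\cX$.
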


\subsubsection{The Laplace Mechanism}

\begin{definition}[Laplace distribution]
	For $\sigma \geq 0$,  let $\Lap(\sigma)$ be the Laplace distribution over $\bbR$ with probability density function $p(z) = \frac1{2 \sigma} \exp\paren{-\frac{\size{z}}{\sigma}}$.
\end{definition}



\begin{theorem}[The Laplace Mechanism \cite{DMNS06}]\label{fact:laplace}
	Let $x,x' \in \bbR$ with $\size{x-x'} \leq \lambda$. Then for every $\eps > 0$ it holds that  $x + \Lap(\lambda/\eps) \approx_{\eps}^{\DP} x' + \Lap(\lambda/\eps)$.
\end{theorem}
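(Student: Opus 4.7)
The plan is to compare the probability densities of the two shifted Laplace distributions pointwise and then integrate. Let $Y = x + Z$ and $Y' = x' + Z$ where $Z \sim \Lap(\lambda/\eps)$. By the definition of the Laplace distribution, the density of $Y$ at a point $z \in \bbR$ equals
\begin{align*}
p_Y(z) = \frac{\eps}{2\lambda} \exp\paren{-\frac{\eps \size{z-x}}{\lambda}},
\end{align*}
and similarly for $p_{Y'}(z)$ with $x'$ in place of $x$.

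Next I would compute the ratio of these densities: the normalizing constants cancel and we are left with
\begin{align*}
\frac{p_Y(z)}{p_{Y'}(z)} = \exp\paren{\frac{\eps}{\lambda}\paren{\size{z-x'} - \size{z-x}}}.
\end{align*}
The central step is the reverse triangle inequality, which gives $\size{\size{z-x'} - \size{z-x}} \leq \size{x - x'} \leq \lambda$. Substituting this bound shows that the ratio lies in $[e^{-\eps}, e^{\eps}]$ for every $z \in \bbR$, uniformly.

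Finally, I would lift the pointwise density bound to events. For any measurable $T \subseteq \bbR$,
\begin{align*}
\pr{Y \in T} = \int_T p_Y(z)\, dz \leq e^{\eps} \int_T p_{Y'}(z)\, dz = e^{\eps} \cdot \pr{Y' \in T},
\end{align*}
which is exactly the condition $Y \approx_{\eps}^{\DP} Y'$ (with $\delta = 0$) from \cref{def:DP-indis}. The symmetric direction follows identically from the lower bound $e^{-\eps}$ on the ratio. There is no real obstacle here; the only nontrivial ingredient is the reverse triangle inequality, and the rest is a routine density calculation and integration.
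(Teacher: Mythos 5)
Your proof is correct and is the standard argument for the Laplace mechanism: the pointwise density-ratio bound via the reverse triangle inequality, followed by integration over an arbitrary event. The paper itself does not reprove this fact (it cites \cite{DMNS06}), so there is no in-paper argument to compare against, but what you wrote is exactly the canonical proof from the literature.
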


\subsubsection{The Gaussian Mechanism}

\begin{definition}[Gaussian distributions]
	For $\mu \in \bbR$ and $\sigma \geq 0$,  let $\cN(\mu,\sigma^2)$ be the Gaussian distribution over $\bbR$ with probability density function $p(z) = \frac1{\sqrt{2\pi}} \exp\paren{-\frac{(z-\mu)^2}{2 \sigma^2}}$.
	For higher dimension $d \in \bbN$, let $\cN(\pt{0},\sigma^2\cdot \bbI_{d \times d})$ be the spherical multivariate Gaussian distribution with variance $\sigma^2$ in each axis. That is, if $Z \sim \cN(\pt{0},\sigma^2\cdot \bbI_{d \times d})$ then $Z = (Z_1,\ldots,Z_d)$ where $Z_1,\ldots,Z_d$ are i.i.d. according to $N(0,\sigma^2)$.
\end{definition}

\begin{fact}[Concentration of One-Dimensional Gaussian]\label{fact:one-gaus-concent}
	If $X$ is distributed according to $\cN(0,\sigma^2)$, then for all $\beta > 0$ it holds that $$\pr{X \geq  \sigma \sqrt{2 \ln(1/\beta)}} \leq \beta.$$
\end{fact}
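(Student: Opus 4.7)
The plan is to prove this by a standard Chernoff-style argument applied to the moment generating function of the Gaussian. Since the statement is a classical tail bound, no real obstacle is expected; the entire task is essentially one line of optimization once the MGF is invoked.

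First I would recall (or verify by direct computation with the Gaussian density) that for $X \sim \cN(0,\sigma^2)$ the moment generating function is
$$\Exp\!\left[e^{tX}\right] \;=\; e^{t^{2}\sigma^{2}/2} \qquad \text{for every } t \in \bbR.$$
Next I would apply Markov's inequality to the nonnegative random variable $e^{tX}$. For any $t > 0$ and any threshold $a > 0$,
$$\pr{X \geq a} \;=\; \pr{e^{tX} \geq e^{ta}} \;\leq\; e^{-ta}\cdot \Exp\!\left[e^{tX}\right] \;=\; \exp\!\left(-ta + \tfrac{1}{2}t^{2}\sigma^{2}\right).$$

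Then I would minimize the right-hand side over $t > 0$. The exponent $-ta + t^2\sigma^2/2$ is minimized at $t = a/\sigma^{2}$, which yields
$$\pr{X \geq a} \;\leq\; \exp\!\left(-\tfrac{a^{2}}{2\sigma^{2}}\right).$$
Finally, I would plug in the particular threshold of interest, $a = \sigma \sqrt{2 \ln(1/\beta)}$. With this choice $a^{2}/(2\sigma^{2}) = \ln(1/\beta)$, so the bound becomes $\exp(-\ln(1/\beta)) = \beta$, which is exactly the claimed inequality. The hypothesis $\beta > 0$ ensures the logarithm is defined, and when $\beta \geq 1$ the bound is trivial so one may assume $\beta \in (0,1)$ to guarantee $a \geq 0$ and the choice $t > 0$ is valid.
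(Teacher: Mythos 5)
Your proof is correct; the paper states this as a known fact without proof, and your Chernoff/MGF argument is the standard derivation that the paper implicitly relies on. The optimization at $t = a/\sigma^2$ and the final substitution are both carried out correctly, and your closing remark that the inequality is vacuous for $\beta \geq 1$ appropriately handles the edge case.
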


\begin{theorem}[The Gaussian Mechanism \cite{DKMMN06,BS16}]\label{fact:Gaus}
	Let $\px, \px' \in \bbR^d$ be vectors with $\norm{\px - \px'}_2 \leq \lambda$. For $\rho > 0$, $\sigma = \frac{\lambda}{\sqrt{2\rho}}$ and $Z \sim \cN(\pt{0},\sigma^2 \cdot \bbI_{d \times d})$ it holds that $\px + Z \approx_{\rho} \px' + Z$. For $\eps,\delta > 0$, $\sigma = \frac{\lambda \sqrt{2\ln(1.5/\delta)}}{\eps}$ and $Z \sim \cN(\pt{0},\sigma^2 \cdot \bbI_{d \times d})$ it holds that $\px + Z \approx_{\eps,\delta}^{\DP} \px' + Z$.
\end{theorem}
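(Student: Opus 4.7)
The plan is to handle the two parts separately, starting with the $\zCDP$ bound since it is cleaner and in fact implies a version of the $\DP$ bound (up to constants).

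For the $\zCDP$ part, I would first reduce to the one-dimensional case by using independence of coordinates in the spherical Gaussian. Specifically, writing $Z = (Z_1,\ldots,Z_d)$ with $Z_i \sim \cN(0,\sigma^2)$ independent, the joint density of $\px + Z$ factorizes, so the Rényi divergence decomposes as
\begin{equation*}
D_{\alpha}(\px + Z \,\|\, \px' + Z) \;=\; \sum_{i=1}^{d} D_{\alpha}\bigl(x_i + Z_i \,\|\, x_i' + Z_i\bigr).
\end{equation*}
For the one-dimensional term, I would substitute the Gaussian density into the definition of $D_\alpha$ and carry out the standard Gaussian integral (completing the square in the exponent) to obtain the well-known identity $D_{\alpha}(\cN(\mu,\sigma^2) \,\|\, \cN(\mu',\sigma^2)) = \frac{\alpha (\mu-\mu')^2}{2\sigma^2}$. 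Summing over coordinates gives $D_\alpha(\px + Z \,\|\, \px' + Z) = \frac{\alpha \,\norm{\px - \px'}_2^2}{2\sigma^2}$. Plugging in $\norm{\px-\px'}_2 \leq \lambda$ and $\sigma = \lambda/\sqrt{2\rho}$ yields $D_\alpha \leq \rho\alpha$, and by the symmetric calculation the same bound holds with the roles of $\px,\px'$ swapped, establishing $\px + Z \approx_{\rho} \px' + Z$ per \cref{def:indis}.

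For the $(\eps,\delta)$-$\DP$ part, my approach is a direct analysis of the privacy loss. By the rotational invariance of the spherical Gaussian, I may assume $\px - \px' = (\lambda', 0, \ldots, 0)$ with $\lambda' \eqdef \norm{\px - \px'}_2 \leq \lambda$, which reduces the comparison of $\px + Z$ and $\px' + Z$ to comparing $\cN(0,\sigma^2)$ and $\cN(\lambda',\sigma^2)$ in a single coordinate (all other coordinates have identical marginals and cancel). For this one-dimensional problem, the privacy loss random variable is
\begin{equation*}
L(y) \;=\; \ln \frac{p_{\cN(0,\sigma^2)}(y)}{p_{\cN(\lambda',\sigma^2)}(y)} \;=\; \frac{2 y \lambda' - (\lambda')^2}{2\sigma^2},
\end{equation*}
so when $y \sim \cN(0,\sigma^2)$, the event $L(y) > \eps$ is equivalent to a Gaussian tail event on $y$. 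Setting the threshold $t = \sigma\sqrt{2\ln(1.5/\delta)}$ and applying \cref{fact:one-gaus-concent}, I would verify that the choice $\sigma = \lambda\sqrt{2\ln(1.5/\delta)}/\eps$ makes $\Pr[L > \eps] \leq \delta$. Then standard arguments give $(\eps,\delta)$-DP indistinguishability: condition on the good event $E = \{L \leq \eps\}$ (which has probability $\geq 1 - \delta$) and show that for every measurable $T$, $\Pr[\px + Z \in T] \leq e^{\eps} \Pr[\px' + Z \in T] + \delta$.

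The main obstacle is the precise tail calculation in the $\DP$ part — in particular, the constant $1.5$ inside the logarithm arises from carefully balancing the contribution of the $(\lambda')^2/(2\sigma^2)$ term against the Gaussian tail of $y\lambda'/\sigma^2$, and requires a slightly sharper Gaussian tail bound than $\Pr[Z \geq t] \leq e^{-t^2/(2\sigma^2)}$ in certain parameter regimes. I would likely use an auxiliary bound of the form $\Pr[Z \geq t] \leq \tfrac{\sigma}{t\sqrt{2\pi}} e^{-t^2/(2\sigma^2)}$ together with the assumption that $\eps \leq 1$ (or handle the general $\eps$ case by observing that increasing $\eps$ only makes the bound easier). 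The $\zCDP$ part, by contrast, is essentially algebraic once the Rényi-divergence formula for Gaussians is established.
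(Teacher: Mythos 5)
The paper does not prove this theorem; \cref{fact:Gaus} is imported from the literature (\cite{DKMMN06,BS16}) and used as a black box, so there is no in-paper proof to compare against. Evaluating your sketch on its own merits:

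Your $\zCDP$ part is essentially a complete proof and is the standard one: independence gives tensorization of Rényi divergence, the one-dimensional formula $D_{\alpha}(\cN(\mu,\sigma^2)\,\|\,\cN(\mu',\sigma^2)) = \frac{\alpha(\mu-\mu')^2}{2\sigma^2}$ is a routine Gaussian integral, summing over coordinates gives $\frac{\alpha\norm{\px-\px'}_2^2}{2\sigma^2} \le \frac{\alpha\lambda^2}{2\sigma^2} = \rho\alpha$, and the bound is symmetric in $\px,\px'$. This matches the argument in \cite{BS16}.

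Your $(\eps,\delta)$-DP part follows the standard privacy-loss argument (reduce to one dimension by rotational invariance; the loss $L$ is itself a Gaussian in the sampled coordinate; bound $\Pr[|L|>\eps]$ by a Gaussian tail and invoke the usual ``good event'' conditioning). The sign of $L$ in your expression is flipped relative to what you would get from $\ln\frac{p_{\cN(0,\sigma^2)}(y)}{p_{\cN(\lambda',\sigma^2)}(y)}$, but this is immaterial because the two directions are symmetric. The real issue, which you correctly flag as ``the main obstacle,'' is the tail calculation: with $\sigma = \lambda\sqrt{2\ln(1.5/\delta)}/\eps$, the normalized threshold is $t_0 = \sqrt{2\ln(1.5/\delta)} - \frac{\eps}{2\sqrt{2\ln(1.5/\delta)}}$, and bounding $\Pr[\cN(0,1) > t_0] \le \delta$ from here does require $\eps \le 1$ (the $e^{\eps/2}$ factor that emerges is not absorbed by the constant $1.5$ alone when $\eps$ is large). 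The paper's statement omits this hypothesis, as does the classical Theorem A.1 of Dwork--Roth when read carelessly; you should state the $\eps \le 1$ restriction explicitly if you intend to carry out the tail bound the way you describe, or else use a different derivation (e.g., the analytic Gaussian mechanism of Balle--Wang) that handles all $\eps$.
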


We remark that $\zCDP$ is tailored for this mechanism, i.e.\ it achieves pure $\zCDP$ with relatively small noise (compared to the $\DP$ case).

\subsubsection{Composition}

\begin{fact}[Composition of $\DP$ and $\zCDP$ mechanisms \cite{DRV10,BS16}]\label{fact:composition}
	If $\Alg \colon \cX^* \rightarrow \cY$ is $(\rho,\delta)$-$\zCDP$ and $\Alg' \colon \cX^*\times \cY \rightarrow \cZ$ is $(\rho',\delta')$-$\zCDP$ (as a function of its first argument), then the algorithm $\Alg''(\cD) \eqdef \Alg'(\cD, \Alg(\cD))$ is $(\rho + \rho',\: \delta + \delta')$-$\zCDP$. If $\Alg$ is $(\eps,\delta)$-$\DP$ and $\Alg'$ is $(\eps',\delta')$-$\DP$ then $\Alg''$ is $(\eps+\eps',\delta + \delta')$-$\DP$.
\end{fact}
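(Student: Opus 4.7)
My plan is to treat the $\zCDP$ and $\DP$ statements by the same structural pattern: first handle the pure (zero-$\delta$) case via a chain-rule style argument, then lift to the approximate case by restricting attention to high-probability ``good'' events supplied by each mechanism's hypothesis.

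For pure $\zCDP$ ($\delta = \delta' = 0$), the key tool is the chain rule for R\'enyi divergence. Fix neighboring $\cD, \cD'$ and consider the joint laws $(Y, Z) \eqdef (\Alg(\cD), \Alg'(\cD, \Alg(\cD)))$ and $(Y', Z') \eqdef (\Alg(\cD'), \Alg'(\cD', \Alg(\cD')))$. The chain rule yields
\[
D_\alpha\bigl((Y,Z)\,\big\|\,(Y',Z')\bigr) \;\leq\; D_\alpha(Y \,\|\, Y') \;+\; \sup_y D_\alpha\bigl(\Alg'(\cD, y)\,\big\|\,\Alg'(\cD', y)\bigr).
\]
The hypothesis on $\Alg$ bounds the first term by $\rho\alpha$, and the hypothesis on $\Alg'$ (as a function of its first argument, for each fixed second argument $y$) bounds the supremum by $\rho'\alpha$. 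Since $\Alg''(\cD)$ is the projection of $(Y,Z)$ onto its second coordinate, post-processing (\cref{fact:post-processing}) yields $\Alg''(\cD) \approx_{\rho+\rho'} \Alg''(\cD')$.

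For approximate $\zCDP$, I would use the events-based formulation of \cref{def:indis}. By the hypothesis on $\Alg$ there exist events $E, E'$ with probability at least $1-\delta$ on which $Y,Y'$ are $\rho$-indistinguishable, and by the hypothesis on $\Alg'$, for each $y$ there exist events $F_y, F'_y$ of probability at least $1-\delta'$ on which $\Alg'(\cD, y)$ and $\Alg'(\cD', y)$ are $\rho'$-indistinguishable. Defining the composite good events $\set{(y,z) : y \in E,\; z \in F_y}$ and symmetrically for $\cD'$, a union bound shows they have probability at least $1-(\delta+\delta')$, and on them the chain-rule computation above runs unchanged. Post-processing then gives $(\rho+\rho', \delta+\delta')$-indistinguishability for $\Alg''$. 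The $\DP$ case follows the same template: in place of R\'enyi divergence one writes, for any event $T$,
\[
\pr{\Alg''(\cD) \in T} \;=\; \eex{y \la \Alg(\cD)}{\pr{\Alg'(\cD, y) \in T}},
\]
restricts to the event-based good sets, and chains the two pure $\eps,\eps'$ inequalities to obtain an $e^{\eps+\eps'}$ multiplicative factor, while the union bound over the two failure events contributes an additive $\delta + \delta'$.

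The step I expect to be the most delicate is the last additive bookkeeping in the $\DP$ case: applied naively from the raw probability definition, the outer expectation multiplies the inner $\delta'$ by $e^{\eps}$ and one picks up $\delta + e^{\eps}\delta'$ rather than the clean $\delta + \delta'$. Avoiding this requires passing through the equivalent events-based characterization of $(\eps,\delta)$-$\DP$-indistinguishability (the analog of \cref{def:indis} for $\DP$), using it to factor out the $\delta, \delta'$ contributions before applying the multiplicative bounds, and only then recombining. Everything else is routine.
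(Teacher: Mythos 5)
The paper does not prove this fact --- it is cited to \cite{DRV10,BS16} --- so I am judging your sketch on its own terms. Your pure-$\zCDP$ argument via the R\'enyi chain rule is the standard one and is correct. The gap is in the approximate case, specifically in the claim that ``on [the composite good events] the chain-rule computation above runs unchanged.'' It does not. Writing $p,p'$ for the densities of $Y,Y'$ and $q_y,q'_y$ for those of $\Alg'(\cD,y),\Alg'(\cD',y)$, conditioning the joint $(Y,Z)$ on $G \eqdef \set{(y,z): y\in E,\ z\in F_y}$ produces a distribution whose $Y$-marginal is \emph{not} $Y|_E$: its density at $y$ is proportional to $p(y)\,\indic{y\in E}\cdot \pr{\Alg'(\cD,y)\in F_y}$, i.e.\ the $E$-conditioned density further reweighted by the $y$-dependent factor $\pr{\Alg'(\cD,y)\in F_y}\in[1-\delta',1]$ and renormalized by $\pr{G}$ rather than $\pr{E}$. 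The first chain-rule term therefore compares two \emph{reweighted} conditionals, and the hypothesis on $\Alg$ only bounds $D_\alpha\bigl(Y|_E\,\big\|\,Y'|_{E'}\bigr)$, not this reweighted divergence; extra error of order $\delta'$ leaks in. (The $Z\mid Y=y$ conditional of the $G$-conditioned joint \emph{is} $\Alg'(\cD,y)|_{F_y}$, so the second term is fine.)

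You rightly flag the $\delta+e^{\eps}\delta'$ slop in the $\DP$ case, but the events-based fix you propose inherits exactly the same reweighting issue: tracking it through leaves a multiplicative correction of order $\frac{1}{(1-\delta')(1-\delta-\delta')}$ on top of $e^{\eps+\eps'}$, not the clean bound. What actually delivers the additive $\delta+\delta'$ is a different decomposition that avoids conditioning altogether. For $\DP$, bound the pointwise density excess by
\begin{align*}
\bigl(p(y)\,q_y(z) - e^{\eps+\eps'}p'(y)\,q'_y(z)\bigr)_+ \ \le\ \bigl(p(y)-e^{\eps}p'(y)\bigr)_+\,q_y(z)\ +\ p(y)\,\bigl(q_y(z)-e^{\eps'}q'_y(z)\bigr)_+,
\end{align*}
where the essential move is that the second term carries the factor $p(y)$ rather than $e^{\eps}p'(y)$ (use $\min\set{p(y),e^{\eps}p'(y)}\le p(y)$); integrating over $(y,z)$ then gives $\delta+\delta'$ exactly. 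The approximate-$\zCDP$ case similarly requires choosing and normalizing the good events jointly and carrying the normalizers through the divergence computation explicitly, as is done carefully in \cite{BS16}, rather than conditioning and appealing to the unchanged chain rule.
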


We remark that \cref{fact:composition} is optimal for the $\zCDP$ model, but not optimal for the $\DP$ model.

\subsubsection{Other Facts}

\begin{fact}\label{fact:indist_under_event}
	Let $X,X'$ be random variables over a domain $\cX$, and let $E, E\ \subseteq \cX$ be events such that $X|_E \approx_{\rho,\delta} X'|_{E'}$ and $\pr{X \in E}, \pr{X' \in E'} \geq 1-\delta'$. Then $X \approx_{\rho,\: \delta + \delta'} X'|_{E'}$.
\end{fact}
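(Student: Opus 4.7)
The plan is to simply unpack the definition of $(\rho,\delta)$-indistinguishability (\cref{def:indis}) and build the two witnessing events for the conclusion out of the witnessing events provided by the hypothesis.

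First I would invoke the hypothesis $X|_E \approx_{\rho,\delta} X'|_{E'}$ to obtain events $F, F' \subseteq \cX$ such that $\pr{X|_E \in F}, \pr{X'|_{E'} \in F'} \geq 1-\delta$ and such that the further conditioning satisfies $(X|_E)|_F \approx_\rho (X'|_{E'})|_{F'}$. The key identity I would use is the ``nested conditioning'' equality $(X|_E)|_F = X|_{E \cap F}$, which is immediate from the definition of conditional distributions.

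Next, to establish $X \approx_{\rho,\delta+\delta'} X'|_{E'}$, I would exhibit the pair of events $G := E \cap F$ (for $X$) and $G' := F'$ (for $X'|_{E'}$). The $\rho$-indistinguishability $X|_G \approx_\rho (X'|_{E'})|_{G'}$ is then just a restatement of what the hypothesis already gave us, so the only real work is bounding the probabilities. For $X$, I would chain
\[
\pr{X \in G} = \pr{X \in E}\cdot\pr{X \in F \mid X \in E} \geq (1-\delta')(1-\delta) \geq 1 - (\delta + \delta'),
\]
using $\pr{X \in E}\geq 1-\delta'$ from the hypothesis and $\pr{X \in F \mid X\in E} = \pr{X|_E \in F} \geq 1-\delta$. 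For $X'|_{E'}$, the event $G' = F'$ already has mass at least $1-\delta \geq 1 - (\delta+\delta')$ by assumption.

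There is no real obstacle here; the only thing to be careful about is the order in which the two ``bad event'' losses are absorbed: one loses $\delta'$ from moving from $X|_E$ to $X$ (the probability that $X$ falls outside $E$), and $\delta$ from the pre-existing sub-event $F$ inside the conditional, giving the additive bound $\delta + \delta'$ as claimed. With $G$ and $G'$ as above, the three bullets in \cref{def:indis} are all verified, completing the proof.
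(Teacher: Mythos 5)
Your proof is correct and takes essentially the same approach as the paper's: both unpack the definition of $(\rho,\delta)$-indistinguishability to extract the inner witnessing events $F,F'$, combine them with $E$ (the paper takes $F\subseteq E$ WLOG where you take $E\cap F$), and bound $\pr{X\in E\cap F}\geq(1-\delta')(1-\delta)\geq 1-(\delta+\delta')$ via the multiplication rule. Your observation that the $X'|_{E'}$ side needs no such chain (since $\pr{X'|_{E'}\in F'}\geq 1-\delta$ directly) is a nice tidying of what the paper dispatches with ``similarly,'' but it is not a different argument.
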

\begin{proof}
	By definition there exists $F \subseteq E$ and $F' \subseteq E'$ with $\pr{X \in F | X \in E}, \pr{X \in F' | X \in E'} \geq 1-\delta$ such that $X|_F \approx_{\rho} X'|_{F'}$. The proof now follows since
	\begin{align*}
	\pr{X \in F} = \pr{X \in E}\cdot \pr{X \in F | X \in E} \geq (1-\delta')\cdot (1-\delta) \geq 1 - (\delta + \delta'),
	\end{align*}
	and similarly $\pr{X' \in F'} \geq 1 - (\delta + \delta')$.
\end{proof}

The following fact is proven in \cref{sec:zCDP-under-conditioning}.

\begin{fact}\label{fact:zCDP-under-conditioning}
	Let $X,X'$ be $\rho$-indistinguishable random variables over $\cX$, and let $E \subseteq \cX$ be an event with $\pr{X \in E}, \pr{X' \in E} \geq q$. Then $X|_{E} \approx_{\rho/q} X'|_{E}$. 
\end{fact}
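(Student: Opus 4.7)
The plan is to prove the fact by direct manipulation of the defining integral of the conditional Rényi divergence, combined with an application of Hölder's inequality at a carefully chosen Rényi order. Fix $\alpha > 1$, and let $P,P'$ denote the mass/density functions of $X,X'$. Writing out the conditional densities $P|_E = P\cdot \1_E/P(E)$ and $P'|_E = P'\cdot \1_E/P'(E)$ and unpacking the definition of $D_\alpha$ gives
$$\exp\!\paren{(\alpha-1)\,D_\alpha(X|_E \,\|\, X'|_E)} \;=\; \frac{P'(E)^{\alpha-1}}{P(E)^\alpha}\cdot E_X\!\bracketss{\1_E\cdot (P/P')^{\alpha-1}}.$$
The goal is to show this quantity is at most $\exp((\alpha-1)\,(\rho/q)\,\alpha)$.

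The key step is to invoke the $\rho$-indistinguishability hypothesis at the amplified order $\beta = \alpha/q \geq \alpha$, where it reads $E_X[(P/P')^{\beta-1}] \leq \exp((\beta-1)\rho\beta)$. I will then apply Hölder's inequality with exponents $1/(1-s)$ and $1/s$, where $s = (\alpha-1)/(\beta-1) \in (0,1]$, to the factorization $\1_E\cdot\bigl((P/P')^{\beta-1}\bigr)^{s}$. This yields
$$E_X\!\bracketss{\1_E\,(P/P')^{\alpha-1}} \;\leq\; P(E)^{1-s}\cdot \paren{E_X\!\bracketss{(P/P')^{\beta-1}}}^{s} \;\leq\; P(E)^{1-s}\cdot\exp\!\paren{(\alpha-1)\rho\alpha/q}.$$
Plugging this back into the first display, bounding the leftover pre-factor $P'(E)^{\alpha-1}P(E)^{1-s-\alpha}$ via $P(E),P'(E)\geq q$ and $P'(E)\leq 1$, taking logarithms, and dividing by $\alpha-1$ should produce the bound $D_\alpha(X|_E \,\|\, X'|_E) \leq (\rho/q)\alpha$. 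The symmetric bound $D_\alpha(X'|_E\,\|\,X|_E) \leq (\rho/q)\alpha$ then follows by exchanging the roles of $X$ and $X'$, and together they establish $X|_E \approx_{\rho/q} X'|_E$ by Definition~\ref{def:indis}.

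The main obstacle will be to arrange the algebra so that the Hölder exponent $s = (\alpha-1)/(\beta-1)$ combines with the normalizer bounds $P(E)\geq q$ and $P'(E)\in[q,1]$ precisely into a clean multiplicative amplification of the Rényi parameter from $\rho$ to $\rho/q$, with no leftover additive $\log(1/q)$ correction. A naive Markov-type bound on $\1_E$ (i.e.\ simply $E_X[\1_E\,(P/P')^{\alpha-1}] \leq E_X[(P/P')^{\alpha-1}]$) yields only $D_\alpha(X|_E\,\|\,X'|_E) \leq \rho\alpha + \tfrac{\alpha}{\alpha-1}\log(1/q)$, which is not of the required $(\rho/q)\alpha$ form. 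The careful choice $\beta = \alpha/q$ is what is needed to make the amplification integrate cleanly into the target bound.
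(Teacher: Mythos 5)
The key step of your plan does not close: choosing $\beta=\alpha/q$ does not in fact cancel the additive $\log(1/q)$ correction. Carrying through the H\"older bound with $s=(\alpha-1)/(\beta-1)$ and then substituting the normalizer estimates $P'(E)\leq 1$ and $P(E)\geq q$ gives
\begin{align*}
D_\alpha(X|_E\,\|\,X'|_E) \;\leq\; \ln P'(E) \;-\; \tfrac{\alpha-1+s}{\alpha-1}\,\ln P(E) \;+\; \rho\beta \;\leq\; \tfrac{\alpha-1+s}{\alpha-1}\,\ln\tfrac1q \;+\; \rho\beta,
\end{align*}
and since $\frac{\alpha-1+s}{\alpha-1}=\frac{\beta}{\beta-1}$, at $\beta=\alpha/q$ the coefficient of $\ln(1/q)$ equals $\frac{\alpha}{\alpha-q}>0$, not $0$. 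So the leftover is a strictly positive additive term whenever $q<1$. Moreover, no choice of $\beta$ (even one depending on $\rho$) can repair this: as $\rho\to0$ the two laws coincide and hence $D_\alpha(X|_E\,\|\,X'|_E)\to0$, but every bound of the form $\frac{\beta}{\beta-1}\ln(1/q)+\rho\beta$ with $\beta>1$ stays bounded away from $0$. The H\"older route intrinsically produces an additive $\log(1/q)$ penalty, not the purely multiplicative blow-up $\rho\mapsto\rho/q$ that the statement asserts.

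The paper's proof avoids treating $P(E),P'(E)$ as free normalizers to be bounded at the end. It splits $D_\alpha(P\,\|\,Q)=\frac{1}{\alpha-1}\ln\bigl(\sum_{x\in E}\frac{P(x)^\alpha}{Q(x)^{\alpha-1}}+\sum_{x\notin E}\frac{P(x)^\alpha}{Q(x)^{\alpha-1}}\bigr)$ and applies the weighted Jensen inequality for the concave function $\ln$ with weights $(P[E],P[\neg E])$. After unfolding, this yields
\begin{align*}
D_\alpha(P\,\|\,Q) \;\geq\; P[E]\,D_\alpha(P|_E\,\|\,Q|_E) + P[\neg E]\,D_\alpha(P|_{\neg E}\,\|\,Q|_{\neg E}) + D_{\KL}\bigl(\Bern(P[E])\,\|\,\Bern(Q[E])\bigr).
\end{align*}
The normalizer terms reassemble into a KL divergence of Bernoullis, which is nonnegative, and the conditional divergence on $\neg E$ is also nonnegative; both are dropped, giving $D_\alpha(P|_E\,\|\,Q|_E)\leq D_\alpha(P\,\|\,Q)/P[E]\leq\rho\alpha/q$ with no remainder, and the symmetric direction follows by swapping $P$ and $Q$. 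The structural difference from your attempt is that the Jensen split packages the normalizer correction into a \emph{nonnegative} quantity that can be discarded, whereas your H\"older split leaves it as a genuine positive remainder that cannot be absorbed.
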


\section{Friendly Differential Privacy}\label{sec:FDP}

In this section we define a ``friendly'' relaxation of $\zCDP$ and $\DP$, and give an example of such an algorithm.
We start by defining an $\pred$-friendly database for a predicate $\pred$.

\begin{definition}[$\pred$-friendly]\label{def:f-friendly}
	Let $\cD$ be a database over a domain $\cX$, and let $\pred \colon \cX^2 \rightarrow \zo$ be a predicate. We say that $\cD$ is \emph{$\pred$-friendly} if for every $x ,y \in \cD$, there exists $z \in \cX$ (not necessarily in $\cD$) such that $\pred(x,z) = \pred(y,z) = 1$. 
\end{definition}

We next define the relaxation of $\zCDP$ and $\DP$, where the privacy requirement must only hold for neighboring datasets that their union is $\pred$-friendly. Formally,

\begin{definition}[Friendly $\zCDP$ and $\DP$]\label{def:friendly-zCDP}
	An algorithm $\Alg$ is called \emph{$\pred$-friendly $(\rho,\delta)$-\zCDP}, if for every neighboring databases $\cD,\cD'$ such that $\cD \cup \cD'$ is $\pred$-friendly, it holds that $\Alg(\cD) \approx_{\rho,\delta} \Alg(\cD')$. If for every such $\cD, \cD'$ it holds that $\Alg(\cD) \approx_{\eps,\delta}^{\DP} \Alg(\cD')$, we say that $\Alg$ is \emph{$\pred$-friendly $(\eps,\delta)$-\DP}.
\end{definition}

Note that nothing is guaranteed when $\cD \cup \cD'$ is not $\pred$-friendly. Intuitively, this allows us to focus the privacy requirement only on well-behaved inputs, potentially requiring significantly less noise to be added.

We next describe a concrete example of a friendly $\zCDP$ algorithm for estimating the average of points in $\bbR^d$, where the friendliness is with respect to the predicate $\distt_{r}(\px,\py) \eqdef \indic{\norm{\px-\py} \leq r}$ for a given parameter $r \geq 0$.


\begin{algorithm}[$\AlgFriendlyAvg$]\label{alg:friendly-avg}
	
	\item Input: A database $\cD \in (\bbR^d)^*$, privacy parameters $\rho, \delta > 0$, and $r \geq 0$.
	
	\item Operation:~
	\begin{enumerate}
		
		\item Let $n = \size{\cD}$, $\rho_1 = 0.1(1-\delta)\rho$ and $\rho_2 = 0.9 \rho$.\label{step:friendlyAvg:split_rho}
		
		\item Compute $\hat{n} = n - \sqrt{\frac{\ln(1/\delta)}{\rho_1}}  - 1 + \cN(0,\frac1{2 \rho_1})$.
		
		\item If $n=0$ or $\hat{n} \leq 0$, output $\perp$ and abort.\label{step:friendlyAvg:not_empty}
		
		\item Otherwise, output $\Avg(\cD) + \cN(0,\sigma^2 \cdot \bbI_{d \times d})$  \: for $\sigma = \frac{2r}{\hat{n}} \cdot \frac1{\sqrt{2 \rho_2}}$.\label{step:Gaussian}
		
	\end{enumerate}
	
\end{algorithm}

We remark that Step~\ref{step:Gaussian} of $\AlgFriendlyAvg$ is the standard $\zCDP$ Gaussian Mechanism (\cref{fact:Gaus}) that guarantees $\rho_2$-indistinguishably for two databases $\cD$ and $\cD'$ with $\norm{\Avg(\cD) - \Avg(\cD')} \leq 2r/\hn$. Steps \ref{step:friendlyAvg:split_rho} to \ref{step:friendlyAvg:not_empty} are for making the value of $\hn$ indistinguishable between executions over neighboring databases (recall that 
we handle the insertion/deletion model).

We also remark that $\AlgFriendlyAvg$ can be easily modified for the $\DP$ model: Given $\eps > 0$ (instead of $\rho$), split it into $\eps_1,\eps_2$, compute $\hat{n} = n - \frac{\ln(1/\delta)}{\eps_1} + \Lap(1/\eps_1)$ (i.e., add Laplace noise instead of Gaussian noise), and at the last step, use the Gaussian mechanism for $\DP$ with $\sigma' = \frac{2r}{\hat{n}} \cdot \frac{\sqrt{2 \ln(2.5/\delta)}}{\eps_2}$.

\Enote{New:}For presentation clarity, we chose to split $\rho$ into $\rho_1 \approx 0.1 \rho$ and $\rho_2 = 0.9 \rho$ to emphasize that we would like to set most of the privacy budget for $\rho_2$. This does not have any affect on the asymptotical guarantees.  However, in practice, we can use a better optimization by chosing $\rho_1$ as a function $n,\rho,\delta$.\footnote{In the experiments, we choose $\rho_1 = (\sqrt{\ln(1/\delta)}\cdot \rho/n)^{2/3}$ which is the minimum of the function $f(x) = (n-\sqrt{\ln(1/\delta)/x})\cdot \sqrt{\rho-x}$ that captures (up to a constant factor) the expected value of the denominator $\hn \cdot \sqrt{2\rho_2}$.}

We next prove the properties of $\AlgFriendlyAvg$ (in the $\zCDP$ model).

\begin{claim}[Privacy of $\AlgFriendlyAvg$]\label{claim:FriendlyAvg:privacy}
	Algorithm $\AlgFriendlyAvg(\cdot,\rho,\delta,r)$ is $\distt_{r}$-friendly $(\rho,\delta)$-$\zCDP$.
\end{claim}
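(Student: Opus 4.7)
Fix a pair of friendly neighbors $\cD, \cD'$; WLOG $\cD' = \cD \cup \{\pz\}$ with $|\cD| = n$ and $|\cD'| = n + 1$. Since $\cD \cup \cD'$ is $\distt_r$-friendly, the triangle inequality gives $\|\px - \pz\| \leq 2r$ for every $\px \in \cD$, and hence $\|\Avg(\cD) - \Avg(\cD')\| = \|\Avg(\cD) - \pz\|/(n+1) \leq 2r/(n+1)$ whenever $n \geq 1$.

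My plan is to view $\AlgFriendlyAvg$ as a composition of two stages and invoke \cref{fact:composition}. Stage~1 computes $\hat{n}$: since $|\cD|$ has sensitivity $1$ in the add/remove model, applying the Gaussian Mechanism (\cref{fact:Gaus}) with noise variance $1/(2\rho_1)$, after the deterministic shift by $-c_1$ where $c_1 = \sqrt{\ln(1/\delta)/\rho_1}+1$, gives $\rho_1$-$\zCDP$. Stage~2 takes $\hat{n}$ as input and outputs either $\perp$ (if $\hat{n} \leq 0$ or $|\cD|=0$) or $\Avg(\cD) + \cN(\pt{0}, \sigma(\hat{n})^2\cdot \bbI_{d\times d})$ with $\sigma(\hat{n}) = 2r/(\hat{n}\sqrt{2\rho_2})$. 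For $\hat{n} \leq 0$ with $n \geq 1$ both executions output $\perp$; for $0 < \hat{n} \leq n+1$ with $n \geq 1$, \cref{fact:Gaus} yields $\rho_2$-$\zCDP$ because $\sigma(\hat{n}) \geq (2r/(n+1))/\sqrt{2\rho_2}$ matches the sensitivity bound above.

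The only $\hat{n}$'s for which Stage~2 can fail to be $\rho_2$-$\zCDP$ are (i) $\hat{n} > n+1$ with $n \geq 1$, and (ii) $\hat{n} > 0$ with $n = 0$ (where $\cD$ is forced to output $\perp$ while $\cD'$ outputs a noisy mean). The design of $c_1$ absorbs exactly these: by \cref{fact:one-gaus-concent}, the Stage~1 noise $G$ satisfies $G \leq c_1 - 1 = \sqrt{\ln(1/\delta)/\rho_1}$ with probability $\geq 1-\delta$ in both executions, and on this event one has $\hat{n} \leq n-1$ on $\cD$ and $\hat{n}' \leq n$ on $\cD'$. Both thus lie in the good range $(-\infty, n+1]$; in the corner case $n=0$ one in fact gets $\hat{n} \leq -1 < 0$ on $\cD$ and $\hat{n}' \leq 0$ on $\cD'$, so both outputs are $\perp$ and agree.

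Combining Stage~1's $\rho_1$-$\zCDP$ bound with Stage~2's $\rho_2$-$\zCDP$ bound on the good event, via \cref{fact:composition} together with \cref{fact:indist_under_event} to absorb the $\leq \delta$ bad event, yields $(\rho_1+\rho_2,\delta)$-$\zCDP$; since $\rho_1+\rho_2 = \rho - 0.1\delta\rho \leq \rho$, the claim follows. The main obstacle I anticipate is the formal setup of this conditional composition, because \cref{fact:composition} as stated requires Stage~2 to be $(\rho_2,\cdot)$-$\zCDP$ uniformly over all values of $\hat{n}$, which fails for $\hat{n}$ outside the good range. A clean resolution is to replace Stage~2 by a modified stage that outputs $\perp$ whenever $G > c_1 - 1$ in Stage~1; the modified composition is then pure $(\rho_1+\rho_2)$-$\zCDP$, and it differs from the original algorithm only on an event of probability $\leq \delta$ in both executions, so \cref{fact:indist_under_event} transfers the bound to $\AlgFriendlyAvg$ itself.
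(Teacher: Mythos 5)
Your sensitivity calculation and the two-stage decomposition both match the paper's proof (your derivation of $\|\Avg(\cD)-\Avg(\cD')\| \leq 2r/(n+1)$ via $\|\Avg(\cD)-\pz\|\leq 2r$ is a slicker route to the same bound), and you correctly pinpoint the real obstacle: \cref{fact:composition} cannot be applied directly because Stage~2 is only $\rho_2$-zCDP for $\hat n$ in a data-dependent good range.

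However, your proposed resolution does not work. If Stage~2 is to output $\perp$ whenever $G>c_1-1$, it must decide this from the information it is handed, namely $\hat n$ and the database $\cD$; in those terms the test reads $\hat n > |\cD|-1$, which is \emph{data-dependent}. For neighboring friendly inputs of sizes $n$ and $n+1$ and any fixed $\hat n\in(n-1,\,n]$, the modified Stage~2 on the smaller database deterministically outputs $\perp$ while on the larger one it outputs a noisy average; these have infinite R\'enyi divergence, so the modified Stage~2 is \emph{not} $\rho_2$-zCDP and the ``modified composition'' is not pure $(\rho_1+\rho_2)$-zCDP as you claim. The same support mismatch reappears under any variant that truncates $G$ or $\hat n$ at a threshold tied to $|\cD|$, so this fix cannot be rescued cosmetically, and \cref{fact:indist_under_event} then has no pure-zCDP modified algorithm to transfer from.

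The paper handles this differently. It first applies \cref{fact:zCDP-under-conditioning}: since $\hat N\approx_{\rho_1}\hat N'$ and $\Pr[\hat N\leq n],\Pr[\hat N'\leq n]\geq 1-\delta$, conditioning on $\{\hat N\leq n\}$ yields $\hat N|_{\leq n}\approx_{\rho_1/(1-\delta)}\hat N'|_{\leq n}$. It then composes via \cref{fact:composition-aux-Y} (proved in the appendix), whose hypothesis only asks that Stage~2 be $\rho_2$-indistinguishable on the \emph{conditional} support $\{\hat n\leq n\}$ — exactly the range where your analysis succeeds — rather than uniformly over all $\hat n$. Finally it unconditions using the defining structure of $(\rho,\delta)$-indistinguishability. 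This is also where the otherwise mysterious factor $(1-\delta)$ in $\rho_1 = 0.1(1-\delta)\rho$ earns its keep: it makes $\rho_1/(1-\delta)+\rho_2 = \rho$ exactly, whereas you read it merely as harmless slack in $\rho_1+\rho_2\leq\rho$. The conceptual piece missing from your proof is this conditioning-then-restricted-composition argument, and specifically \cref{fact:zCDP-under-conditioning}.
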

\begin{proof}
	Let $\cD=(\px_1,\ldots,\px_n)$ and $\cD' = \cD_{-j}$ be two $\pred_r$-friendly neighboring databases, and let $n' = n-1$.
	Consider two independent random executions of $\AlgFriendlyAvg(\cD)$ and $\AlgFriendlyAvg(\cD')$ (both with the same input parameters $\rho,\delta,r$). Let $\widehat{N}$ and $O$ be the (r.v.'s of the) values of $\hat{n}$ and the output in the execution $\AlgFriendlyAvg(\cD)$, let $\widehat{N}'$ and $O'$ be these r.v.'s w.r.t. the execution $\AlgFriendlyAvg(\cD')$, and let $\rho_1, \rho_2$ be as in Step~\ref{step:friendlyAvg:split_rho}.

	If $n'=0$ then $\pr{O' = \perp} = 1$ and $n = 1$, and by a concentration bound of the normal distribution (\cref{fact:one-gaus-concent}) it holds that $\pr{O = \perp} \geq 1-\delta$. Therefore, we conclude that $O \approx_{0,\delta} O'$ in this case.
	
	It is left to handle the case $n' \geq 1$.
	By \cref{fact:one-gaus-concent} (concentration of one-dimensional Gaussian) it holds that $\pr{\widehat{N} \leq n}, \pr{\widehat{N}' \leq n} \geq 1-\delta$. It is left to prove that $O|_{\hat{N} \leq n} \approx_{\rho} O'|_{\hat{N'} \leq n}$.
	
	Since $n-n' = 1$, then by the properties of the Gaussian Mechanism (\cref{fact:Gaus}) it holds that $\widehat{N} \approx_{\rho_1} \widehat{N}'$. By \cref{fact:zCDP-under-conditioning} we deduce that $\widehat{N}|_{\widehat{N} \leq n} \approx_{\rho_1/(1-\delta)} \widehat{N}'|_{\widehat{N}' \leq n}$. Hence by composition (\cref{fact:composition}) it is left to prove that for every fixing of $\hn \leq n$ it holds that $O|_{\widehat{N} = \hn} \approx_{\rho_2} O'|_{\widehat{N}' = \hn}$. For $\hn \leq 0$ it is clear (both outputs are $\perp$). Therefore, we show it for $\hn \in (0,n]$.
	
    By the $\distt_r$-friendly assumption, for every $i \in [n]\setminus \set{j}$ there exists a point $\py_i \in \bbR^d$ such that $\norm{\px_i-\py_i} \leq r$ and $\norm{\px_j-\py_i} \leq r$. Now, observe that
    \begin{align*}
    	\norm{\Avg(\cD) - \Avg(\cD')}
    	&=\norm{\frac{(n-1)\cdot \px_{j} - \sum_{i\in [n]\setminus \set{j}} \px_i}{n(n-1)}}\\
    	&\leq \frac{\sum_{i\in [n]\setminus \set{j}} \norm{\px_i - \px_{j}}}{n(n-1)}\\
    	&\leq \frac{\sum_{i\in [n]\setminus \set{j}} \paren{\norm{\px_i - \py_i} + \norm{\px_{j} - \py_i}}}{n(n-1)}\\
    	&\leq 2r/n
    \end{align*}
	Namely, the $\ell_2$-sensitivity of the  function $\Avg$ is at most $2r/n \leq 2r/\hn$ for neighboring and  $\distt_r$-friendly databases. The proof now follows by the guarantee of the Gaussian Mechanism (\cref{fact:Gaus}).
\end{proof}

\section{From Friendly to Standard Differential Privacy}\label{sec:FriendlyCore}

In this section we describe a paradigm for transforming any $\pred$-friendly $\zCDP$ (or $\DP$) algorithm $\sA$, for some $\pred \colon \cX^2 \rightarrow \zo$, into a standard $\zCDP$ (or $\DP$) one. The main component is an algorithm $\sF$ (called a ``filter'') that decides which elements to take into the core. Namely,
given a database $\cD =(x_1,\ldots,x_n)$, $\sF(\cD)$ returns a vector $\pv \in \zo^n$ such that the sub-database $\cC = (x_i)_{\pv_i = 1}$ (the ``core'') satisfies properties that are described below. 
We only focus on \emph{product}-filters:

\begin{definition}[product-filter]
	We say that $\sF \colon \cX^* \rightarrow \zo^*$ is a \emph{product-filter} if for every $n$ and every $\cD \in \cX^n$, there exists $\pp = (p_1,\ldots,p_n) \in [0,1]^n$ such that $V=\sF(\cD)$ is distributed according to $\Bern(\pp)$.
\end{definition}

In this work we present two product-filters: $\BasicFilter$ (\cref{sec:basic_filter}) and $\zCDPFilter$ (\cref{sec:zCDP_filter}). 
The filters are slightly different, but follow the same paradigm: For every $i \in [n]$, compute $\sum_{j=1}^n f(x_i,x_j)$ (i.e., the number of $x_i$'s friends). If this number is no more than $n/2$, then set $p_i = 0$ (or almost zero). If this number is high (i.e., close to $n$), then set $p_i = 1$ (or almost one). Between $n/2$ and $n$, use smooth low-sensitivity $p_i$'s (i.e., probabilities that do not change by much if the number of friends is changed by one). As a result, we obtain in particular that all the elements in the core are guaranteed to have more than $n/2$ friends. It follows that if we look at executions on neighboring databases, then the resulting cores $\cC$ and $\cC'$ satisfy that $\cC \cup \cC'$ is $\pred$-friendly because for every $x_i, x_j \in \cC \cup \cC'$, the set of $x_i$'s friends must intersect the set of $x_j$'s friends. 

The utility property (i.e., taking elements with many friends), is captured by the following definition.


\begin{definition}[$(\pred,\alpha,\beta,n)$-complete filter]\label{def:complete-gen}
	We say that a filter $\sF \colon \cX^* \rightarrow \zo^*$ is \emph{$(\pred,n,\alpha,\beta)$-complete}, if given a database $\cD = (x_1,\ldots,x_n) \in \cX^n$, $\sF(\cD)$ outputs w.p. $1-\beta$ a vector $\pv = (v_1,\ldots,v_n) \in \zo^n$ s.t. $v_i=1$ for every $i \in [n]$ with $\sum_{j=1}^n \pred(x_i,x_j) \geq (1-\alpha)n$. We omit the $n$ if the above holds for every $n \in \N$, and omit the $\beta$ if the above also holds for $\beta=0$.
\end{definition}

Namely, with probability $1-\beta$, such a filter gives us a ``core'' $\cC$ which contains all elements $x_i \in \cD$ that are friends of at least $1-\alpha$ fraction of the elements in $\cD$. 
For $\alpha=0$ we obtain a filter that preserves a ``complete'' database $\cD$: if for every $x_i,x_j \in \cD$ it holds that $f(x_i,x_j) = 1$ (i.e., all the elements are friends of each other), then w.p. $1-\beta$ it holds that $\cC = \cD$ (i.e., no element is removed from the core).



\subsection{Basic Filter}\label{sec:basic_filter}

In the following we describe $\BasicFilter$ and prove its properties.

\begin{algorithm}[$\BasicFilter$]\label{alg:BasicFilter}
	
	\item Input: A database $\cD = (x_1,\ldots,x_n) \in \cX^*$, a predicate $f \colon \cX^2 \mapsto \zo$, and $0 \leq \alpha < 1/2$.
	
	\item Operation:~ 
	\begin{enumerate}[i.] 

		\item For $i \in [n]$:
		\begin{enumerate}
			\item Let $z_i = \sum_{j=1}^n f(x_i,x_j) - n/2$.
			
			\item Sample $v_i \la \Bern(p_i)$ for $p_i = \begin{cases} 0 & z_i \leq 0 \\ 1 & z_i \geq (1/2-\alpha)n  \\ \frac{z_i}{(1/2-\alpha)n} & \text{o.w.}\end{cases}$.
		\end{enumerate}
		\item Output $\pv = (v_1,\ldots,v_n)$.
		
	\end{enumerate}

\end{algorithm}

Note that for every $i$, if $x_i$ has at most $n/2$ friends, then $p_i = 0$, and if $x_i$ has at least $(1-\alpha)n$ friends, then $p_i=1$. We next state and prove the properties of $\BasicFilter$.

\begin{lemma}\label{lemma:BasicFilter}
	For any predicate $f \colon \cX^2 \rightarrow \zo$ and $0 \leq \alpha<1/2$, $\sF = \BasicFilter(\cdot, \pred, \alpha)$ is an $(\pred,\alpha)$-complete product-filter. Furthermore, for every $n \in \N$ and every neighboring databases $\cD \in \cX^n$ and $\cD' =\cD_{-j}$, the following holds w.r.t. the random variables $V = \sF(\cD)$ and $V' = \sF(\cD')$:
	\begin{enumerate}
		\item Friendliness: For every $\pv \in \Supp(V)$ and $\pv' \in \Supp(V')$, the database $\cC \cup \cC'$, for $\cC= \cD_{\set{i \in [n] \colon v_i = 1}}$ and $\cC' = \cD'_{\set{i \in [n-1]\colon v_i' = 1}}$, is $\pred$-friendly, and
		
		\item Stability: Let $\pp = (p_1,\ldots,p_n)$ and $\pp' = (p_1',\ldots,p_{n-1}')$ for $p_i = \pr{V_i = 1}$ and $p_i' = \pr{V_i' = 1}$. Then $\norm{\pp_{-j} - \pp'}_1 \leq 1/(1-2\alpha)$.
	\end{enumerate}
\end{lemma}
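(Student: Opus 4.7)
The plan is to verify the four assertions in sequence: that $\BasicFilter$ is a product-filter, that it is $(\pred,\alpha)$-complete, the friendliness property, and the stability bound.

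First I would observe that the output $V = \BasicFilter(\cD,\pred,\alpha)$ is by construction a vector of independent Bernoullis $V_i \sim \Bern(p_i)$, where each $p_i$ depends only on $\cD$ (through $z_i$). This immediately gives that $\sF$ is a product-filter. For completeness, I would note that if $\sum_{j=1}^n \pred(x_i,x_j) \geq (1-\alpha)n$, then $z_i \geq (1-\alpha)n - n/2 = (1/2-\alpha)n$, so $p_i = 1$ and hence $v_i=1$ deterministically. This establishes $(\pred,\alpha)$-completeness with $\beta=0$.

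For friendliness, the key observation is that every $x_i$ put into the core (with positive probability) must satisfy $z_i > 0$, i.e., it has strictly more than $n/2$ friends in $\cD$ when processed from $\cD$, or strictly more than $(n-1)/2$ friends in $\cD' = \cD_{-j}$ when processed from $\cD'$. Given $x,y \in \cC \cup \cC'$, I would split into three cases. If both are in $\cC$, their friend-sets inside $\cD$ (each of size $> n/2$) must intersect by pigeonhole, yielding a common friend $z$. If both are in $\cC'$, the same pigeonhole argument inside $\cD'$ (each friend-set of size $> (n-1)/2$, ambient size $n-1$) gives intersection. The mixed case is the most delicate: $x$'s friends in $\cD$ form a set of integer size $\geq \lfloor n/2 \rfloor + 1$, so restricted to $\cD'$ the size is $\geq \lfloor n/2 \rfloor$; $y$'s friends in $\cD'$ have size $\geq \lfloor (n-1)/2 \rfloor + 1$. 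A short integer-arithmetic check (splitting on the parity of $n$) shows these two subsets of $\cD'$ have sizes summing to more than $n-1 = |\cD'|$, so they intersect. The common friend witnesses $\pred$-friendliness of $\cC \cup \cC'$.

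For stability, I would fix $k \neq j$ and compare $p_k$ (from $\cD$) with the corresponding probability $p_k'$ (from $\cD'$). Writing $c_k = \sum_{\ell} f(x_k,x_\ell)$ in $\cD$ and $c_k' \in \{c_k, c_k-1\}$ in $\cD'$, we have $z_k - z_k' = (c_k - c_k') - 1/2 \in \{\pm 1/2\}$. Since $g(z) := \max\bigl(0, \min(1, z/((1/2-\alpha)n))\bigr)$ is $\frac{1}{(1/2-\alpha)n}$-Lipschitz, we get $|p_k - p_k'| \leq \frac{1/2}{(1/2-\alpha)n} = \frac{1}{(1-2\alpha)n}$. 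Summing over the $n-1$ matched indices and accounting for the index shift between $\pp_{-j}$ and $\pp'$ gives $\norm{\pp_{-j}-\pp'}_1 \leq \frac{n-1}{(1-2\alpha)n} \leq \frac{1}{1-2\alpha}$.

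The main obstacle I expect is the mixed case of friendliness, where the two relevant ambient databases have different sizes and the integer rounding becomes tight; handling both parities of $n$ carefully is what makes the pigeonhole argument go through. Everything else is either immediate from the construction or a routine Lipschitz calculation.
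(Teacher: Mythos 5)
Your treatment of the product-filter and $(\pred,\alpha)$-complete assertions is the same as the paper's, and your friendliness argument is the paper's pigeonhole argument written out more carefully: you spell out all three cases (both in $\cC$, both in $\cC'$, one of each), whereas the paper's proof only treats the mixed case $x_i\in\cC$, $x_k\in\cC'$; the within-core cases follow identically, so your version is, if anything, more complete.

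The stability step has a genuine gap. You treat $p_k$ and $p_k'$ as values of a single function $g(z)=\max\bigl(0,\min\bigl(1,z/((1/2-\alpha)n)\bigr)\bigr)$ and invoke its Lipschitz constant against $|z_k-z_k'|=1/2$. But the $n$ in the denominator of $\BasicFilter$ is the size of the database it runs on, which is $n$ for $\cD$ and $n-1$ for $\cD'$. Hence $p_k'=\max\bigl(0,\min\bigl(1,z_k'/((1/2-\alpha)(n-1))\bigr)\bigr)$, not $g(z_k')$, and your one-function Lipschitz bound silently drops the mismatch between the two scalings, which contributes on the order of $1/n$ per coordinate and so $\Theta(1)$ in $\ell_1$. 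A concrete counterexample to your per-coordinate claim $|p_k-p_k'|\le 1/((1-2\alpha)n)$: take $n=4$, $\alpha=0$, $j=4$, and let $f$ equal $1$ on every pair from $\{x_1,x_2,x_3\}$ and on $(x_4,x_4)$ and $0$ otherwise; then for $i\in\{1,2,3\}$ one gets $z_i=1$, $z_i'=3/2$, $p_i=1/2$, $p_i'=1$, so $|p_i-p_i'|=1/2>1/((1-2\alpha)n)=1/4$. The paper's proof sidesteps this by differencing the normalized quantities $z_i/n-z_i'/(n-1)$ (equivalently the friend fractions $c_i/n$ and $c_i'/(n-1)$), so that both denominators are folded into the quantity being bounded.

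You should also be aware that even with that fix, the constant claimed in the lemma appears to be off by a factor of two: $p_i$ is a $\tfrac{1}{1/2-\alpha}=\tfrac{2}{1-2\alpha}$-Lipschitz function of $z_i/n$, so $|z_i/n-z_i'/(n-1)|\le 1/n$ yields $|p_i-p_i'|\le\tfrac{2}{(1-2\alpha)n}$ and $\norm{\pp_{-j}-\pp'}_1<\tfrac{2}{1-2\alpha}$. The example above gives $\norm{\pp_{-j}-\pp'}_1=3/2>1=\tfrac{1}{1-2\alpha}$, showing the stated bound is not achievable and $\tfrac{2}{1-2\alpha}$ is the correct constant.
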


Namely, apart of being a complete filter, $\BasicFilter$ preserves small $\ell_1$ norm of the probabilities of the vectors up to the index $j$ of the additional element. In addition, for any neighboring databases $\cD$ and $\cD'$, it guarantees that $\cC \cup \cC'$, for the resulting cores $\cC$ and $\cC'$, is $\pred$-friendly.
%

\begin{proof}[Proof of \cref{lemma:BasicFilter}]
	It is clear by construction that $\sF= \BasicFilter(\cdot, \pred, \alpha)$ is a product-filter. Also, the $(\pred,\alpha)$-complete property immediately holds by construction since for every database $\cD =  (x_1,\ldots,x_n)$, each element $x_i$ with $\sum_{j=1}^n f(x_i,x_j) \geq (1-\alpha)n$ has $z_i \geq (1/2-\alpha)n$ and therefore $p_i = 1$ (i.e., $v_i=1$ w.p. $1$). We next prove the friendliness and stability properties.
	
	Fix neighboring databases $\cD =  (x_1,\ldots,x_n)$ and $\cD' = \cD_{-j}$,  let $V = \sF(\cD)$ and $V' = \sF(\cD')$, let $z_i,p_i$ be the values in the execution $\sF(\cD)$ and let $z_i',p_i'$ be these values in the execution $\sF(\cD')$. For proving the friendliness property, we fix $i \in [n]$ with $p_i > 0$ and $k \in [n-1]$ with $p'_k > 0$, and show that there exists $y$ with $f(x_i,y) = f(x_k,y) = 1$. Since $p_i > 0$ it holds that $\sum_{\ell \in [n]} f(x_i,x_\ell) \geq \floor{n/2}+1$ and therefore $\sum_{\ell \in [n]\setminus\set{j}} f(x_i,x_\ell) \geq \floor{n/2}$. In addition, since $p'_k > 0$ it holds that $\sum_{\ell \in [n]\setminus\set{j}} f(x_k,x_\ell) \geq \floor{(n-1)/2} + 1$. Since $\floor{n/2} + (\floor{(n-1)/2} + 1) = n > n-1$, there must exists $\ell \in [n]\setminus\set{j}$ with $f(x_i,x_\ell) = f(x_k,x_\ell) = 1$, as required.
	
	For proving the stability property, note that for every $i \in [n]\setminus \set{j}$ it holds that 
	\begin{align*}
	\frac{z_i}{n} - \frac{z_i'}{n-1} = \frac{\sum_{\ell \in [n]} f(x_i,x_\ell)}{n} -  \frac{\sum_{\ell \in [n]\setminus \set{j}} f(x_i,x_\ell)}{n-1} = \frac{f(x_i,x_j)}{n} - \frac{\sum_{\ell \in [n]\setminus \set{j}} f(x_i,x_\ell)}{n(n-1)}.
	\end{align*}
	Since the above belongs to $[-1/n,1/n]$, we deduce that $\size{p_i - p_i'} \leq \frac1{(1-2\alpha)n}$ and conclude that $\norm{\pp_{-j} - \pp'}_1 \leq \frac{n-1}{(1-2\alpha)n} < \frac{1}{1-2\alpha}$.
\end{proof}

\subsection{zCDP Filter}\label{sec:zCDP_filter}

We next describe our filter $\zCDPFilter$ that is tailored for the $\zCDP$ model and is better in practice.

\begin{algorithm}[$\zCDPFilter$]\label{alg:zCDPFilter}
	
	\item Input: A database $\cD = (x_1,\ldots,x_n) \in \cX^*$, a predicate $f \colon \cX^2 \mapsto \zo$, and $\rho, \delta > 0$.
	
	\item Operation:~ 
	\begin{enumerate}[i.] 
		
		\item Let $\rho_1 = 0.1 \rho$ and $\rho_2 = 0.9 \rho$.\label{step:rho12}
		
		\item Compute $\hat{n} = n + \sqrt{\frac{\ln(2/\delta)}{\rho_1}} + \cN(0,\frac1{2 \rho_1})$.\label{step:FriendlyCore:hatn}
		
		\item For $i \in [n]$:
		\begin{enumerate}
			\item Let $z_i = \sum_{j=1}^n f(x_i,x_j) - n/2$, and let $\hz_i = z_i  + \cN(0,\frac{\hn}{8\rho_2})$.
			
			\item If $\hz_i < \sqrt{\frac{\hn \cdot \ln(2 \hn/\delta)}{4\rho_2}} + \frac12$, \: set $v_i = 0$. Otherwise, set $v_i = 1$.
		\end{enumerate}
		\item Output $\pv = (v_1,\ldots,v_n)$.
		
	\end{enumerate}

\end{algorithm}

Note that $\zCDPFilter$ differs from $\BasicFilter$ in the way it uses the $\set{z_i}$'s. $\BasicFilter$ uses them directly to compute low-sensitivity probabilities $\set{p_i}$'s such that each $v_i$ is sampled from $\Bern(p_i)$. $\zCDPFilter$, on the other hand, does not compute the $\set{p_i}$'s explicitly. Rather, it creates noisy versions $\set{\hat{z}_i}$ of the $\set{z_i}$'s that preserve indistinguishability between neighboring executions, and therefore guarantees that the $\set{v_i}$'s are also indistinguishable by post-processing. This and all other properties of $\zCDPFilter$ are stated in the following theorem.

\def\lemZCDPFilter{
	Let $f \colon \cX^2 \rightarrow \zo$  and $\rho, \delta> 0$. $\sF = \zCDPFilter(\cdot, \pred, \rho, \delta)$ is a product-filter that is $(\pred,\alpha,\beta,n)$-complete for every $0 \leq \alpha < 1/2$, $\beta>0$, and $n \geq \frac{-4\cdot \ln\paren{(1/2-\alpha)\rho\cdot \min\set{\beta,\delta}}}{(1/2-\alpha)^2 \rho}$. Furthermore, for every $n \in \N$ and every neighboring databases $\cD = (x_1,\ldots,x_n)$ and $\cD' =\cD_{-j}$, there exist events $E \subseteq \zo^n$ and $E' \subseteq \zo^{n-1}$ with $\pr{ \sF(\cD) \in E}, \pr{\sF(\cD') \in E'} \geq 1-\delta$, such that the following holds w.r.t. the random variables $V = \sF(\cD)$ and $V' = \sF(\cD')$:
}

\begin{lemma}\label{lemma:zCDPFilter}
	\lemZCDPFilter
	\begin{enumerate}
		\item Friendliness: For every $\pv \in E$ and $\pv' \in E'$, the database $\cC \cup \cC'$, for $\cC= \cD_{\set{i \in [n] \colon v_i = 1}}$ and $\cC' = \cD'_{\set{i \in [n-1]\colon v_i' = 1}}$, is $\pred$-friendly, and\label{item:friendly}
		\item Privacy: $(V_{-j})|_E \approx_{\rho} V'|_{E'}$.\label{item:indist}
	\end{enumerate}
\end{lemma}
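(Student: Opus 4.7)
The plan is to establish the four sub-claims---product-filter, completeness, friendliness, and privacy---in that order, with the privacy claim being the main obstacle. The product-filter property is essentially immediate: conditional on $\hn$, the noises $\hz_i - z_i$ are i.i.d.\ $\cN(0,\hn/(8\rho_2))$ and each $v_i$ is a deterministic threshold of $\hz_i$, so $V$ is a mixture over $\hn$ of product Bernoullis whose marginals give the required $p_i$. For completeness, an index $i$ with $\sum_j \pred(x_i,x_j) \geq (1-\alpha)n$ has $z_i \geq (1/2-\alpha)n$; combining one Gaussian tail bound (\cref{fact:one-gaus-concent}) controlling $\hn \leq n + O(\sqrt{\ln(1/\delta)/\rho_1})$ with another controlling $z_i - \hz_i \leq O(\sqrt{\hn\ln(n/\beta)/\rho_2})$, and then a union bound over $i$, yields $v_i = 1$ for every such index except with probability $\beta$---the hypothesized lower bound on $n$ is precisely what is needed to close the two inequalities.

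For friendliness, I would take $E = \{v \in \zo^n : v_i = 0 \text{ whenever } z_i \leq 0\}$ and define $E' \subseteq \zo^{n-1}$ analogously. Applying \cref{fact:one-gaus-concent} to the noise defining $\hn$ gives $\Pr[\hn \geq n] \geq 1 - \delta/2$; conditioning on that event, for each $i$ with $z_i \leq 0$ the threshold-crossing probability $\Pr[\hz_i \geq \sqrt{\hn\ln(2\hn/\delta)/(4\rho_2)} + 1/2]$ is at most $\delta/(2\hn) \leq \delta/(2n)$, so a union bound gives $\Pr[V \in E] \geq 1-\delta$. On $E \cap E'$ every $x_i \in \cC$ has strictly more than $n/2$ friends in $\cD$ and every $x_k \in \cC'$ has strictly more than $(n-1)/2$ friends in $\cD'$, so a straightforward integer inclusion--exclusion in the $n$-point universe $\cD$ produces at least one common friend for every pair in $\cC \cup \cC'$, witnessing $\pred$-friendliness.

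The main obstacle is privacy. I would view $\zCDPFilter$ as the adaptive composition (\cref{fact:composition}) of two Gaussian mechanisms: the first outputs $\hn$ with sensitivity $1$ and variance $1/(2\rho_1)$, yielding $\hn \approx_{\rho_1} \hn'$; the second, conditional on any value $\nu$ of $\hn$ with $\nu \geq n-1$, produces $(\hz_i)_{i\neq j}$ with $\ell_2$-sensitivity $\sqrt{(n-1)/4} \leq \sqrt{\nu}/2$ and per-coordinate variance $\nu/(8\rho_2)$, giving $\rho_2$-indistinguishability by \cref{fact:Gaus}. Post-processing through the thresholds then delivers $V_{-j} \approx_\rho V'$ on the event that both $\hn,\hn' \geq n-1$. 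The delicate bookkeeping is that the second-stage noise scale is itself the random first-stage output, so the sensitivity bound holds only on a random event, and the friendliness events $E,E'$ live in different sized output spaces yet must simultaneously support the privacy argument. I would reconcile this using \cref{fact:zCDP-under-conditioning} to transfer unconditional Rényi bounds into conditional ones on the high-probability event $\{\hn \geq n-1\}$, and \cref{fact:indist_under_event} to absorb the small complementary probability into the $\delta$ slack, so that the final indistinguishability $(V_{-j})|_E \approx_\rho V'|_{E'}$ matches the statement of the lemma exactly.
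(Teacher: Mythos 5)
Your friendliness events are not the ones the paper uses, and the difference is fatal to the privacy claim. You take $E = \{v \in \zo^n : v_i = 0 \text{ whenever } z_i \leq 0\}$ where $z_i = \sum_{\ell=1}^n f(x_i,x_\ell) - n/2$ is the score from the $\sF(\cD)$ execution, and define $E'$ ``analogously'' from the scores $z'_i = \sum_{\ell\neq j} f(x_i,x_\ell) - (n-1)/2$ of the $\sF(\cD')$ execution. But $z_i - z'_i = f(x_i,x_j) - \tfrac12 \in \{-\tfrac12, +\tfrac12\}$ for every $i\neq j$, so the two ``bad'' index sets $\{i: z_i\leq 0\}$ and $\{i: z'_i\leq 0\}$ need not agree on $[n]\setminus\{j\}$: for $n$ even take $z'_i=\tfrac12$ and $f(x_i,x_j)=0$, giving $z_i=0$; for $n$ odd take $z'_i=0$ and $f(x_i,x_j)=1$, giving $z_i=\tfrac12$. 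On any such index, one of $(V_i)|_E$ and $V'_i|_{E'}$ is deterministically $0$ while the other places positive mass on $1$, so the conditional laws have different supports and the R\'enyi divergence is infinite; the claimed $(V_{-j})|_E \approx_\rho V'|_{E'}$ is then false. The paper sidesteps this by defining a \emph{single} bad set $\cI = \{i : \sum_{\ell\neq j} f(x_i,x_\ell) \leq (n-1)/2\}$ from the friend counts in the smaller database $\cD'$ and using it for both events (with $E'$ restricting $V'$ on $\cI\cap([n]\setminus\{j\})$), so that the two conditionings zero out the same coordinates. Your tail bounds $\pr{V\in E}, \pr{V'\in E'}\geq 1-\delta$ still go through with this symmetric choice, because $z_i \leq \tfrac12$ and $z'_i\leq 0$ for every $i\in\cI$, which is exactly what the Gaussian threshold bound needs.

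Two subsidiary remarks. Your plan to absorb $\{\hn<n-1\}$ ``into the $\delta$ slack'' via \cref{fact:zCDP-under-conditioning} and \cref{fact:indist_under_event} cannot yield the lemma's item~2 verbatim: it demands pure $\rho$-indistinguishability of the already-conditioned laws, and \cref{fact:zCDP-under-conditioning} inflates $\rho$ by $1/q$ while \cref{fact:indist_under_event} trades probability for an extra additive $\delta$, so at best you get $(\rho',\delta')$-indistinguishability for worse parameters. And ``$V$ is a mixture over $\hn$ of product Bernoullis whose marginals give the required $p_i$'' does not by itself establish the product-filter property, since a mixture of product laws is in general not a product law (the coordinates of $V$ are correlated through the shared $\hN$); matching marginals is necessary, not sufficient.
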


The proof of \cref{lemma:zCDPFilter} appears at \cref{sec:zCDPFilter} and sketched below.

\begin{proof}[Proof Sketch]
	Fix two neighboring databases $\cD = (x_1,\ldots,x_n)$ and $\cD' = (x_1,\ldots,x_{n-1})$, and consider two independent executions of $\sF(\cD)$ and $\sF(\cD')$ for $\sF=\zCDPFilter(\cdot,\pred,\rho,\delta)$. For simplicity, we assume that both executions use the same value $\hat{n}$ at \stepref{step:FriendlyCore:hatn}. For utility, we use the fact $\hat{n} \leq n + \sqrt{\frac{\ln(2/\delta)}{\rho_1}} + \sqrt{\frac{\ln(2/\beta)}{\rho_1}}$ with confidence $1-\beta/2$. By the lower bound on $n$, it follows that $(1/2-\alpha)n \geq \sqrt{\frac{\hn \cdot \ln(2 \hn/\delta)}{4\rho_2}} + \sqrt{\frac{\hn \cdot \ln(2 \hn/\beta)}{4\rho_2}} + \frac12$, yielding that all elements with $(1-\alpha)n$ friends are added to the core with confidence $1-\beta/2$. 
	
	For proving friendliness and privacy, we define $E\subseteq \zo^n$ to be the subset of all vectors $\pv \in \zo^n$ that does not include ``bad'' coordinates $i \in [n]$. Namely, 
	$\pv_i = 0$ for $i \in [n]$ with $\sum_{j=1}^{n-1} f(x_i,x_j) \leq (n-1)/2$. Event $E' \subseteq \zo^{n-1}$ is defined by $\set{\pv_{-n} \colon \pv \in  E}$ (i.e., the vectors in $E$ without the $n$'th coordinate).
	
	Note that $\hat{n} \geq n$ with confidence $1-\delta/2$. In that case it follows that in both executions $\sF(\cD)$ and $\sF(\cD')$, all the bad elements are removed with confidence $1-\delta/2$, yielding that outputs are in $E$ and $E'$ (respectively).
	
	The friendliness property now follows since for every $\pv \in E$ and $\pv' \in E'$ and for every $i,j \in [n-1]$ such that $\pv_i = 1$ and $\pv_j' = 1$, there exists $\ell \in [n-1]$ such that $f(x_i,x_{\ell}) = f(x_j,x_{\ell}) = 1$. 
	
	For proving the privacy guarantee, note that for every $i \in [n-1]$ it holds that $\size{z_i-z_i'} = \size{1/2-f(x_i,x_n)} = 1/2$, yielding that $\hZ_i \approx_{\rho/n} \hZ_i'$. Therefore, by composition and post-processing, we deduce that $V_{-n} \approx_{\rho} V'$. Now note that when conditioning $V$ on the event $E$, the ``bad'' coordinates become $0$, and the distribution of the other coordinates remain the same (this is because the $V_i$'s are independent, and $E$ only fixes the bad $i$'s to zero). Similarly, the same holds when conditioning $V'$ on the event $E'$, and therefore we conclude that $(V_{-n})|_{E} \approx_{\rho} V'|_{E'}$.
\end{proof}

Note that unlike $\BasicFilter$, $\zCDPFilter$ has restrictions on $n$ and $\beta$ in the utility guarantee (i.e., $\beta$ cannot be $0$, and there is also a lower bound on $n$). Also, the friendliness and privacy properties only hold together with high probability, and not with probability $1$ as in $\BasicFilter$. Still, $\zCDPFilter$ is preferable in the $\zCDP$ model since its privacy guarantee is stronger than the stability guarantee (i.e., bound on the $\ell_1$ norm) of $\BasicFilter$. Another advantage of $\zCDPFilter$ is that it does not need to get the utility parameter $\alpha$ as input. Rather, it guarantees utility for any value $\alpha$ that preserves the lower bound on $n$.

\subsection{Paradigm for zCDP}

We next define $\FriendlyCore$ and state the general paradigm for obtaining (standard) end-to-end $\zCDP$.

\begin{definition}[FriendlyCore]
	Define $\FriendlyCore(\cD,\pred,\rho,\delta) \eqdef \cD_{\set{i \colon v_i=1}}$ for $\pv = \zCDPFilter(\cD,\pred,\rho,\delta)$.
\end{definition}

\begin{theorem}[Paradigm for $\zCDP$]\label{thm:FriendlyCore}
	For every $\rho,\delta > 0$ and $\pred$-friendly $(\rho',\delta')$-$\zCDP$ algorithm $\sA$, algorithm $\sB(\cD) \eqdef \sA(\FriendlyCore(\cD,\pred,\rho,\delta))$ is $(\rho + \rho', \: \delta + \delta')$-$\zCDP$. Furthermore, for every $0 \leq \alpha < 1/2$,  $\beta > 0$,  $n \geq \frac{-4\cdot \ln\paren{(1/2-\alpha)\rho\min\set{\beta,\delta}}}{(1/2-\alpha)^2 \rho}$ and $\cD \in \cX^n$, with probability $1-\beta$ over the execution $\FriendlyCore(\cD,\pred,\rho,\delta)$, the output includes all elements $x \in \cD$ with $\sum_{y \in \cD} f(x,y) \geq (1-\alpha)n$.
\end{theorem}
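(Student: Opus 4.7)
My plan is to prove the two parts of the theorem separately. The utility statement is immediate from the $(\pred,\alpha,\beta,n)$-completeness of $\zCDPFilter$ established in \cref{lemma:zCDPFilter}: under the stated lower bound on $n$, with probability at least $1-\beta$ the filter output $V$ satisfies $V_i=1$ for every $x_i\in\cD$ with $\sum_{y\in\cD}f(x_i,y)\ge(1-\alpha)n$, and hence every such $x_i$ appears in $\FriendlyCore(\cD,\pred,\rho,\delta)=\cD[V]$.

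For the privacy claim, fix neighboring databases and WLOG write $\cD=(x_1,\ldots,x_n)$ and $\cD'=\cD_{-j}$. Let $V=\zCDPFilter(\cD,\pred,\rho,\delta)$ and $V'=\zCDPFilter(\cD',\pred,\rho,\delta)$, with cores $\cC=\cD[V]$ and $\cC'=\cD'[V']$. \cref{lemma:zCDPFilter} supplies events $E\subseteq\zo^n$ and $E'\subseteq\zo^{n-1}$, each of probability at least $1-\delta$, such that (a) $\cC(\pv)\cup\cC'(\pv')$ is $\pred$-friendly for every $\pv\in E,\pv'\in E'$, and (b) $(V_{-j})|_E\approx_\rho V'|_{E'}$. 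The crux of my plan is to introduce the bridge database $\tilde\cC\eqdef\cD'[V_{-j}]$, namely the core one would obtain by applying the $\cD$-side filter vector (minus its $j$-th coordinate) to $\cD'$. Observe that $\tilde\cC$ equals $\cC$ when $V_j=0$ and equals $\cC$ with $x_j$ removed when $V_j=1$, so $(\cC,\tilde\cC)$ is always neighboring and their element-union equals $\cC$.

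With the bridge in hand, I would establish two approximate indistinguishabilities and compose them. First, conditional on $\{V\in E\}$, I want $\sA(\cC)\approx_{\rho',\delta'}\sA(\tilde\cC)$. This follows because for each fixed $\pv\in E$ the core $\cC(\pv)$ is itself $\pred$-friendly (being a subset of $\cC(\pv)\cup\cC'(\pv')$ for any $\pv'\in E'$, which is nonempty since $\delta<1$); hence $\cC(\pv)\cup\tilde\cC(\pv)=\cC(\pv)$ is $\pred$-friendly and the $\pred$-friendly $(\rho',\delta')$-$\zCDP$ property of $\sA$ applied to the neighboring pair $(\cC(\pv),\tilde\cC(\pv))$ yields $\sA(\cC(\pv))\approx_{\rho',\delta'}\sA(\tilde\cC(\pv))$. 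I would lift this per-$\pv$ guarantee to the mixture over $V|_{\{V\in E\}}$ via the standard fact that the R\'enyi divergence between two joint distributions that share a common $V$-marginal is controlled by the worst-case conditional R\'enyi divergence. Second, post-processing (b) with the randomized map $\pv\mapsto\sA(\cD'[\pv])$ yields $\sA(\tilde\cC)|_{\{V\in E\}}\approx_\rho\sA(\cC')|_{\{V'\in E'\}}$.

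The final step chains these two bounds into $\sA(\cC)|_{\{V\in E\}}\approx_{\rho+\rho',\delta'}\sA(\cC')|_{\{V'\in E'\}}$ and then absorbs the at-most-$\delta$ failure probability of each conditioning event via \cref{fact:indist_under_event} (applied symmetrically on both sides) to conclude $\sA(\cC)\approx_{\rho+\rho',\delta+\delta'}\sA(\cC')$. I expect the chaining to be the main obstacle, since approximate $\zCDP$ indistinguishability does not satisfy a direct triangle inequality; to compose $\approx_{\rho',\delta'}$ with $\approx_\rho$ while preserving the exact target $(\rho+\rho',\delta+\delta')$ one must intersect the two ``good events'' produced in the first step (whose aggregate failure probability remains at most $\delta'$) and invoke \cref{fact:zCDP-under-conditioning} to re-condition the filter-side $\rho$-R\'enyi bound onto that intersection, with the resulting order-$\delta'$ inflation of the R\'enyi parameter handled carefully in the accounting.
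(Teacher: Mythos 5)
Your utility argument is correct, and your identification of the bridge database $\tilde\cC = \cD'[V_{-j}]$ is exactly the right object---it is, in effect, what the paper's \cref{lemma:composition-random-DB} manipulates (there written as $F'(\py) = \Alg(\cD'[\py])$). The per-$\pv$ analysis in your ``first step'' (that for fixed $\pv\in E$ the pair $(\cC(\pv),\tilde\cC(\pv))$ is neighboring and $\pred$-friendly, so $\sA$ applied to them is $(\rho',\delta')$-close, and that this lifts to the mixture over $V_j$ by quasi-convexity) is also sound and matches the conditional argument inside \cref{lemma:composition-random-DB}.

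The gap is in how you combine the two steps. You propose to chain
$\sA(\cC)|_{\{V\in E\}} \approx_{\rho',\delta'} \sA(\tilde\cC)|_{\{V\in E\}} \approx_{\rho} \sA(\cC')|_{\{V'\in E'\}}$
into $\approx_{\rho+\rho',\delta'}$. This is a triangle inequality for $\zCDP$ indistinguishability, and, as you yourself flag, no such inequality exists: the R\'enyi-divergence triangle inequality is only ``weak'' (it shifts the order $\alpha$ and incurs a multiplicative constant), so it cannot deliver the exact parameters $(\rho+\rho',\delta+\delta')$ that the theorem claims. The proposed repair---intersecting the good events from step~1 and re-conditioning the filter-side $\rho$-bound via \cref{fact:zCDP-under-conditioning}---does not circumvent this: re-conditioning is about absorbing low-probability bad events, not about chaining two independent divergence bounds, and even where it applies it inflates $\rho$ to $\rho/(1-\delta')$, which again breaks the stated constants.

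The fix is to notice that your two steps, taken together, are precisely the hypotheses of the $\zCDP$ \emph{composition} theorem rather than two sides of a triangle. Writing $Y = V_{-j}|_E$ and $Y'=V'|_{E'}$, and defining $F(\py)$ to be ``sample $v_j\sim V_j\mid V_{-j}=\py$, output $\sA(\cD[(\py,v_j)])$'' and $F'(\py)=\sA(\cD'[\py])$, your step~1 shows $F(\py)\approx_{\rho',\delta'}F'(\py)$ pointwise (via the one-element neighbor argument and quasi-convexity over the Bernoulli mixture on $v_j$), and the filter's guarantee gives $Y\approx_\rho Y'$. The composition fact (the paper's \cref{fact:composition-aux-Y}) then yields $F(Y)\approx_{\rho+\rho',\delta'}F'(Y')$, i.e.\ $\sA(\cC)|_E\approx_{\rho+\rho',\delta'}\sA(\cC')|_{E'}$, directly and without any chaining. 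Absorbing the $\delta$-probability failure of $E,E'$ via \cref{fact:indist_under_event} then completes the claim. This is exactly the structure of the paper's \cref{lemma:composition-random-DB}; your bridge $\tilde\cC$ appears there implicitly as $F'$, but the argument must run as a single composition step, not as a two-hop comparison.
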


For proving the privacy guarantee, we use the following lemma (proven in \cref{sec:proving-comp-rand-DB}) that bounds the $\zCDP$-indistinguishability loss between two executions of a $\zCDP$ mechanism over random databases $R$, $R'$ that are ``almost indistinguishable'' from being neighboring (i.e., expect of a single element $x_j$, the other elements' distributions is $(\rho,\delta)$-indistinguishable).
\remove{
	\begin{proposition}\label{prop:composition-random-DB-old}
		Let $R,\tR$ be two $(\rho,\delta)$-indistinguishable random databases in $\cX^n$, let $x \in \cX$, let $i\in \set{0}\cup [n]$, and let $R' = \tR \cup_i (x)$. Let $\Alg$ be an algorithm such that for any neighboring $\cD \in \Supp(R)$ and $\cD' \in \Supp(R')$ it holds that $\Alg(\cD)$ and $\Alg(\cD')$ are $(\rho',\delta')$-indistinguishable. Then $\Alg(R)$ and $\Alg(R')$ are $(\rho + \rho',\: \delta + \delta' - \delta \delta')$-indistinguishable.
	\end{proposition}
}

\def\lemCompRanDB{
	Let $\cD = (x_1,\ldots,x_n)$ and $\cD'=\cD_{-j}$ be neighboring databases, let $V,V'$ be random variables over $\zo^n$ and $\zo^{n-1}$ (respectively) such that $V_{-j} \approx_{\rho,\delta} V'$, and define the random variables $R = \cD_{\set{i \in [n] \colon V_i=1}}$ and $R' = \cD'_{\set{i \in [n-1] \colon V_i'=1}}$. 
	Let $\Alg$ be an algorithm such that for any neighboring $\cC \in \Supp(R)$ and $\cC' \in \Supp(R')$ 
	satisfy $\Alg(\cC) \approx_{\rho',\delta'} \Alg(\cC')$. Then $\Alg(R)  \approx_{\rho + \rho',\: \delta + \delta'} \Alg(R')$.
}

\begin{lemma}\label{lemma:composition-random-DB}
	\lemCompRanDB
\end{lemma}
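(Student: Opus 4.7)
The plan is to invoke the adaptive composition theorem (Fact~\ref{fact:composition}) on the pair of joint random variables $(V_{-j}, \Alg(R))$ and $(V', \Alg(R'))$, and then read off the desired bound by post-processing (projecting onto the second coordinate). Observe that $V_{-j}$ and $V'$ both live in $\zo^{n-1}$, and by hypothesis $V_{-j} \approx_{\rho,\delta} V'$; this will play the role of the ``first stage'' indistinguishability.

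The main work is establishing the ``second stage'' ingredient: for every $w \in \zo^{n-1}$ I want to show that the conditional distributions $\Alg(R) \mid V_{-j} = w$ and $\Alg(R') \mid V' = w$ are $(\rho',\delta')$-indistinguishable. Conditional on $V' = w$, $R' = \cD'_{\{i\,:\,w_i=1\}} =: g'(w)$ is deterministic. Conditional on $V_{-j} = w$, the subdatabase $R$ equals $g'(w)$ if $V_j = 0$ and equals $g'(w) \cup_j \{x_j\}$ if $V_j = 1$ (where $\cup_j$ denotes insertion of $x_j$ at position $j$); in either case $R \in \Supp(R)$ and $g'(w) \in \Supp(R')$ are neighboring databases. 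The friendliness hypothesis on $\Alg$ then gives $\Alg(R) \approx_{\rho',\delta'} \Alg(g'(w))$ for each fixed value of $V_j$, and mixing over the conditional distribution of $V_j$ given $V_{-j} = w$ preserves $(\rho',\delta')$-indistinguishability, via the joint quasi-convexity of R\'enyi divergence (equivalently, via the characterization of the approximate notion as total-variation closeness to a pure $\rho'$-indistinguishable pair, together with convexity of TV under mixtures).

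The main technical obstacle will be that Fact~\ref{fact:composition} is phrased for mechanisms uniform over all neighboring inputs, whereas the implicit ``first-stage mechanism'' that produces $V_{-j}$ depends on the particular index $j$ distinguishing $\cD$ and $\cD'$, and is not a proper mechanism on $\cD$ alone. I would handle this by unwinding the composition at the purely distributional level: the R\'enyi-divergence calculation that underlies Fact~\ref{fact:composition} applies verbatim to any pair of jointly distributed random variables satisfying the two indistinguishability ingredients identified above, yielding $(V_{-j}, \Alg(R)) \approx_{\rho+\rho',\,\delta+\delta'} (V', \Alg(R'))$. Post-processing then drops the first coordinate and concludes the lemma.
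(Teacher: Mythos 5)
Your plan matches the paper's proof essentially step for step: you condition on $V_{-j}=w$ vs.\ $V'=w$, observe that the conditional distribution of $\Alg(R)$ is a two-point mixture over $V_j\in\{0,1\}$ of outputs $(\rho',\delta')$-indistinguishable from the deterministic $\Alg(g'(w))$, invoke quasi-convexity to collapse the mixture, and then recognize that the composition argument must be run at the level of jointly distributed random variables rather than uniform mechanisms. The paper formalizes exactly these last two moves as \cref{fact:convex-indist} and \cref{fact:composition-aux-Y}, so your proposal is correct and not a different route.
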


Note that the requirement from algorithm $\Alg$ in \cref{lemma:composition-random-DB} is weaker than being (fully) $(\rho',\delta')$-$\zCDP$ since it only guarantees indistinguishability for pairs of neighboring databases $(\cC,\cC') \in \Supp(R) \times  \Supp(R')$, and not necessarily for all neighboring pairs in $\cX^* \times \cX^*$. This weaker requirement takes a crucial part in proving the privacy guarantee in \cref{thm:FriendlyCore}, since we apply the lemma with the algorithm $\sA$ which is only $\pred$-friendly $\zCDP$, and use the fact that we are certified that every $\cC$ and $\cC$ in the support satisfy that $\cC \cup \cC'$ is $\pred$-friendly. The proof of \cref{lemma:composition-random-DB} basically follows by composition, but is slightly subtle. See the proof at \cref{sec:proving-comp-rand-DB}. We now prove \cref{thm:FriendlyCore} using \cref{lemma:composition-random-DB}.

\begin{proof}[Proof of \cref{thm:FriendlyCore}]
	The utility guarantee immediately follows since $\zCDPFilter(\cdot,\pred,\rho,\delta,\beta)$ is an $(\pred,n,\alpha,\beta)$-complete database for such values of $n$ (\cref{lemma:zCDPFilter}). In the following we prove the privacy guarantee of $\sB$.
	
	Let $\cD=(x_1,\ldots,x_n)$ and $\cD'=\cD_{-j}$ be two neighboring databases.
	Consider two independent executions $\sB(\cD)$ and $\sB(\cD')$. Let $V$ be the (r.v. of the) value of $\pv$ in the execution $\sB(\cD)$ (the output of $\zCDPFilter$ that is computed internally in $\FriendlyCore$), and let $V'$ this r.v.\ w.r.t.\ the execution $\sB(\cD')$.
	By \cref{lemma:zCDPFilter}, there exist events $E \subseteq \zo^n$ and $E' \subseteq \zo^{n-1}$ with $\pr{V \in E}, \pr{V' \in E'} \geq 1-\delta$
	that satisfy \cref{item:friendly} (friendliness) and \cref{item:indist} (privacy). The friendliness property implies that for every $\pv \in E$ and $\pv' \in E'$, the database $\cC \cup \cC'$, for $\cC = \cD_{\set{i \in [n] \colon v_i = 1}}$ and $\cC' = \cD'_{\set{i \in [n-1]\colon v_i' = 1}}$,  is $\pred$-friendly. Therefore, in case $\cC$ and $\cC'$ are neighboring, we deduce that $\sA(\cC) \approx_{\rho',\delta'} \sA(\cC')$ since $\sA$ is $\pred$-friendly $(\rho',\delta')$-$\zCDP$.
	The privacy guarantee of $\zCDPFilter$ implies that $V_{-j}|_E \approx_{\rho} V'|_E'$. Hence, by \cref{lemma:composition-random-DB} we deduce that $\sA(R)|_{V \in E} \approx_{\rho + \rho', \: \delta'} \sA(R')|_{V' \in E'}$ for the random variables $R = \cD_{\set{i\in [n] \colon V_i=1}}$ and $R' = \cD_{\set{i\in [n-1] \colon V_i'=1}}'$. We now conclude by \cref{fact:indist_under_event} that $\sA(R) \approx_{\rho + \rho', \: \delta + \delta'} \sA(R')$, as required since $\sA(R) \equiv \sB(\cD)$ and $\sA(R') \equiv \sB(\cD')$.
\end{proof}

\subsection{Paradigm for DP}

We next define $\FriendlyCoreDP$ and state the general paradigm for obtaining (standard) end-to-end $\DP$.

\begin{definition}[FriendlyCoreDP]
	Define $\FriendlyCoreDP(\cD,\pred,\alpha) \eqdef \cD_{\set{i \colon v_i=1}}$ for $\pv = \BasicFilter(\cD,\pred,\alpha)$.
\end{definition}

\begin{theorem}[Paradigm for $\DP$]\label{thm:FriendlyCoreDP}
	For every $0 \leq \alpha < 1/2$ and every $\pred$-friendly $(\eps,\delta)$-$\DP$ algorithm $\sA$, algorithm $\sB(\cD)\eqdef \sA(\FriendlyCoreDP(\cD,\pred,\alpha))$ is $(\gamma (e^\eps-1),\: \gamma \delta e^{\eps + \gamma (e^\eps-1)})$-$\DP$ for $\gamma = \frac{1}{(1-2\alpha)} + 1$. Furthermore, the output of $\FriendlyCoreDP(\cD,\pred,\rho,\delta)$ includes all elements $x \in \cD$ with $\sum_{y \in \cD} f(x,y) \geq (1-\alpha)n$.
\end{theorem}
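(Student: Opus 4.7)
The utility claim is immediate from Lemma~\ref{lemma:BasicFilter}: $\BasicFilter$ assigns probability $p_i = 1$ to every $x_i \in \cD$ with $\sum_{y \in \cD} \pred(x_i, y) \geq (1-\alpha)n$, so every such $x_i$ is kept by $\FriendlyCoreDP$ with probability $1$.

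For the privacy claim, fix neighboring $\cD = (x_1, \ldots, x_n)$ and $\cD' = \cD_{-j}$, and let $V = \BasicFilter(\cD, \pred, \alpha)$, $V' = \BasicFilter(\cD', \pred, \alpha)$ with Bernoulli parameter vectors $\pp$ and $\pp'$. Lemma~\ref{lemma:BasicFilter} gives me two ingredients I will use crucially: \emph{stability}, $\|\pp_{-j} - \pp'\|_1 \leq 1/(1-2\alpha)$, and \emph{friendliness}, namely that $\cD_v \cup \cD'_{v'}$ is $\pred$-friendly for every $v \in \Supp(V)$, $v' \in \Supp(V')$. The friendliness ensures that whenever a chain of single-element insertions/deletions stays inside $\cD_v \cup \cD'_{v'}$, every adjacent pair along it is $\pred$-friendly, so $\sA$'s $\pred$-friendly $(\eps,\delta)$-DP guarantee applies at every step.

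The plan is a hybrid argument over the $n$ coordinates. Interpolate between the Bernoulli-product distribution with parameter $\pp$ on $[n]$ and the one with parameter $\pp'$ on $[n]\setminus\{j\}$ by a sequence of single-coordinate rate changes: first decrease the rate at coordinate $j$ from $p_j$ to $0$ (effectively removing $x_j$ from the sampling pool), then, for each $i \neq j$, replace $p_i$ by $p_i'$. The key lemma is a single-coordinate \emph{rate-change amplification}: if $\Phi_1 \approx_{\eps,\delta}^{\DP} \Phi_0$ (which here holds by the friendly DP of $\sA$ applied to the two cores differing only by whether $x_i$ is included, both of which live in the $\pred$-friendly set $\cD \cup \cD'$), then the mixtures $\nu_p \eqdef p\Phi_1+(1-p)\Phi_0$ and $\nu_{p'} \eqdef p'\Phi_1+(1-p')\Phi_0$ are $\bigl(\log(1+|p-p'|(e^\eps-1)),\,|p-p'|\delta\bigr)$-DP-indistinguishable.

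The main obstacle is proving this amplification lemma cleanly, since naive bounds such as $\Phi_0 \leq \nu_{p'}/(1-p')$ introduce a spurious $1/(1-p')$ blow-up. Instead I will compute $\nu_p(S) - e^\theta \nu_{p'}(S)$ with $e^\theta = 1+|p-p'|(e^\eps-1)$ in closed form, factor out $|p-p'|$, and split into the regime $p'(e^\eps-1)\leq 1$ (where substituting $\Phi_1 \leq e^\eps \Phi_0 + \delta$ reduces the coefficient of $\Phi_0$ to $-p'(e^\eps-1)^2 \leq 0$ and leaves only a $\leq |p-p'|\delta$ residual) and the regime $p'(e^\eps-1)>1$ (where the coefficient of $\Phi_1$ is already nonpositive and the bound is immediate). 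Chaining the $n$ single-step inequalities along the hybrid yields a multiplicative factor $\prod_i(1+r_i(e^\eps-1)) \leq e^{(e^\eps-1)\sum_i r_i}$, with $r_j = p_j$ and $r_i = |p_i-p_i'|$ for $i \neq j$; since $\sum_i r_i \leq p_j + \|\pp_{-j}-\pp'\|_1 \leq 1 + 1/(1-2\alpha) = \gamma$, this gives $\eps' = \gamma(e^\eps-1)$. Telescoping the $\delta$-terms picks up the partial product of the later amplification factors, uniformly bounded by $e^{\gamma(e^\eps-1)}$, and absorbing an additional $e^\eps$ from the first step (whose rate $p_j$ can be as large as $1$) yields $\delta' = \gamma \delta\, e^{\eps + \gamma(e^\eps-1)}$, as claimed.
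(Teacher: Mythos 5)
Your proof is correct and takes a genuinely different route from the paper's. The paper proves the privacy half via \cref{claim:vec-ind}, whose proof couples $V$ and $V'$ through a shared threshold variable $T_i$ per coordinate, partitions the joint space into cells with a ``center'' vector $V^*$, applies group privacy inside each cell, and bounds the resulting sums $\sum_k \Pr[H_k]e^{k\eps}$ and $\sum_k \Pr[H_k]ke^{k\eps}$ with the moment-generating-function estimates of \cref{claim:basic_exp_1,claim:basic_exp_2}. You instead run a hybrid over coordinates, reducing everything to a clean single-coordinate \emph{rate-change amplification} lemma for two-point mixtures $p\Phi_1+(1-p)\Phi_0$, whose proof is a direct algebraic case split. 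The algebra you sketch checks out: with $\Delta=|p-p'|$ and $e^\theta=1+\Delta(e^\eps-1)$, the coefficient of $\Phi_0$ after substituting $\Phi_1\le e^\eps\Phi_0+\delta$ collapses to $-\Delta\,p'(e^\eps-1)^2\le 0$, leaving a residual $\le\Delta\delta$, and the backward direction is symmetric. Chaining $n$ such steps with $\sum_k r_k = p_j+\|\pp_{-j}-\pp'\|_1\le\gamma$ recovers $\eps'=\gamma(e^\eps-1)$ via $\prod_k(1+r_k(e^\eps-1))\le e^{(e^\eps-1)\sum_k r_k}$, and the telescoped $\delta$ terms are bounded by $e^{\eps'}\sum_k r_k\delta$. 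Your lemma is per-coordinate and composes by a one-line estimate, so it sidesteps the Bernoulli-tail machinery entirely; the paper's argument is more compact but less modular.

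Two small imprecisions, neither fatal. First, your friendliness justification (``whenever the chain stays inside $\cD_v\cup\cD'_{v'}$'') needs to name the \emph{maximal} realizations: the intermediate cores along the hybrid are all sub-databases of $\cC_{\max}\eqdef\cD_{\{i:p_i>0\}}\cup\cD'_{\{i:p'_i>0\}}$, which is $\pred$-friendly by the friendliness property of \cref{lemma:BasicFilter} applied to the maximal supported vectors, and $\pred$-friendliness is preserved under taking sub-databases; hence every adjacent pair in the chain has a $\pred$-friendly union and $\sA$'s friendly DP applies. You implicitly rely on this but should say it. Second, the extra $e^\eps$ you ``absorb from the first step'' is not actually needed: the telescoped $\delta$-contribution is $\sum_k e^{\theta_1+\dots+\theta_{k-1}}r_k\delta\le e^{\gamma(e^\eps-1)}\gamma\delta$, already without that factor. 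Your argument therefore yields the marginally stronger bound $\delta'=\gamma\delta\,e^{\gamma(e^\eps-1)}$, which trivially implies the stated $\gamma\delta\,e^{\eps+\gamma(e^\eps-1)}$.
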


We remark that for small values of $\eps$ and $\alpha = 0$, \cref{thm:FriendlyCoreDP} yields that if $\sA$ is $\pred$-friendly $(\eps,\delta)$-$\DP$, then $\sB$ is $\approx (2\eps,\: 2 e^{3\eps} \delta)$-$\DP$, and in general for $\eps = O(1)$ and $1/2-\alpha = \Omega(1)$ we obtain $(O(\eps),O(\delta))$-$\DP$. Namely, the paradigm is optimal (up to constant factors) for transforming an $\pred$-friendly $(\eps,\delta)$-$\DP$, for $\eps = O(1)$, into a standard $\DP$ one. 

For proving the privacy guarantee of $\FriendlyCoreDP$, we use the following lemma (proven in \cref{sec:miss-proofs:vec-ind}) that bounds the $\DP$-indistinguishability loss between two executions over ``$\ell_1$-close'' random databases.

\def\claimVecInd{
	Let $\cD \in \cX^n$ and let $\pp, \pp' \in [0,1]^n$ with $\norm{\pp-\pp'}_1 \leq \gamma$. Let $V$ and $V'$ be two random variables, distributed according to $\Bern(\pp)$ and $\Bern(\pp')$, respectively, and define the random variables $R = \cD_{\set{i \colon V_i = 1}}$ and $R' = \cD_{\set{i \colon V_i' = 1}}$. Let $\sA$ be an algorithm that for every neighboring databases $\cC \in \Supp(R)$ and $\cC' \in \Supp(R')$  satisfy $\sA(\cC) \approx_{\eps,\delta}^{\DP} \sA(\cC')$. Then $\sA(R) \approx_{\gamma (e^\eps-1),\text{ }\gamma \delta e^{\eps + \xi (e^\eps-1)}}^{\DP} \sA(R')$.
}
\begin{lemma}\label{claim:vec-ind}
	\claimVecInd
\end{lemma}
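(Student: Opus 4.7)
The plan is to proceed by induction on the number of coordinates on which $\pp$ and $\pp'$ disagree. When they agree everywhere, $V$ and $V'$ are identically distributed, hence so are $\sA(R)$ and $\sA(R')$, establishing the base case.

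For the inductive step, pick any coordinate $j$ with $p_j \neq p'_j$, set $\eta = \size{p_j - p'_j}$, and introduce an intermediate vector $\pp^\ast$ that agrees with $\pp$ everywhere except in coordinate $j$, where it equals $p'_j$; let $R^\ast$ denote the corresponding random sub-database. Since $\pp^\ast$ and $\pp'$ now disagree on strictly fewer coordinates and satisfy $\norm{\pp^\ast - \pp'}_1 = \gamma - \eta$, the induction hypothesis applied to the pair $(\pp^\ast, \pp')$ yields
\begin{align*}
\sA(R^\ast) \approx_{(\gamma-\eta)(e^\eps-1),\: (\gamma-\eta)\delta\, e^{\eps+(\gamma-\eta)(e^\eps-1)}}^{\DP} \sA(R').
\end{align*}

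To bound $\sA(R)$ against $\sA(R^\ast)$, I would condition on $V_{-j} = V^\ast_{-j} = v_{-j}$ (which share the same marginal distribution, since $\pp$ and $\pp^\ast$ agree outside coordinate $j$). For any event $T$, letting $S = \set{i \neq j : v_i = 1}$ and writing $a = \Pr[\sA(\cD_{S \cup \set{j}}) \in T]$ and $b = \Pr[\sA(\cD_S) \in T]$, the conditional probabilities become the convex combinations $p_j a + (1-p_j)b$ and $p'_j a + (1-p'_j)b$, whose difference equals $(p_j - p'_j)(a-b)$. Since $\cD_{S \cup \set{j}}$ and $\cD_S$ are neighboring, the hypothesis on $\sA$ gives $\size{a-b} \leq (e^\eps-1)\min(a,b) + \delta$, and noting that $\min(a,b) \leq \Pr[\sA(R^\ast) \in T \mid v_{-j}]$, one obtains
\begin{align*}
\Pr[\sA(R) \in T \mid v_{-j}] \leq e^{\eta(e^\eps-1)}\Pr[\sA(R^\ast) \in T \mid v_{-j}] + \eta\delta.
\end{align*}
Integrating over $v_{-j}$ produces $\sA(R) \approx_{\eta(e^\eps-1),\: \eta\delta}^{\DP} \sA(R^\ast)$.

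Composing the two bounds via the standard chain rule $X \approx_{\eps_1,\delta_1}^{\DP} Y \approx_{\eps_2,\delta_2}^{\DP} Z \;\Rightarrow\; X \approx_{\eps_1+\eps_2,\: e^{\eps_1}\delta_2 + \delta_1}^{\DP} Z$ gives total $\eps = \gamma(e^\eps-1)$ and total $\delta = (\gamma-\eta)\delta\, e^{\eps + \gamma(e^\eps-1)} + \eta\delta \leq \gamma \delta\, e^{\eps + \gamma(e^\eps-1)}$, matching the claimed bound (so the $\xi$ appearing in the statement should read $\gamma$). The main subtlety I anticipate is ensuring that the DP hypothesis on $\sA$ applies to the neighboring pair $(\cD_S, \cD_{S \cup \set{j}})$ used at each coordinate swap: the hypothesis is only stated for pairs in $\Supp(R) \times \Supp(R')$, so one must verify that these intermediate sub-databases lie in the relevant supports. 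This is automatic whenever $p_i, p'_i \in (0,1)$ for every $i$, and more generally is handled by restricting the induction to such coordinates (since coordinates with $p_i = p'_i \in \set{0,1}$ are deterministic and can be absorbed into $\cD$ from the start).
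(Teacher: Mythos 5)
Your induction is a genuinely different route from the paper's proof, and the calculation you carry out is correct. The paper couples $V$ and $V'$ via shared uniforms $T_i$, partitions the joint space into $2^n$ parts $F_{\pz}$ with thresholds $\tau_i = \min\{p_i,p'_i\}/(1-|p_i-p'_i|)$, defines a ``center'' vector $V^*$ in each part, and chains group privacy $R \to R^* \to R'$ within each part, controlling the averaging over the number $k$ of flipped coordinates via moment-generating-function bounds on sums of Bernoullis (\cref{claim:basic_exp_1}, \cref{claim:basic_exp_2}). You instead replace one coordinate of $\pp$ by the corresponding coordinate of $\pp'$ at a time, prove a tight per-coordinate bound $\sA(R)\approx^{\DP}_{\eta(e^\eps-1),\,\eta\delta}\sA(R^*)$ by a direct conditional-probability argument, and compose via the elementary chain rule. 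Your per-step bound is actually tighter than what the group-privacy machinery would give, and composing it directly would even yield $\gamma\delta e^{\gamma(e^\eps-1)}$ without the extra $e^\eps$; you only match the stated bound because you plug the lemma's own (weaker) statement in as the induction hypothesis, which is fine. The avoidance of the two moment claims and of the $2^n$-part partition is a real simplification.

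The one concern you flag --- that the hypothesis on $\sA$ is stated only for pairs in $\Supp(R)\times\Supp(R')$, and the intermediate sub-databases in your argument (and the intermediate probability vector $\pp^*$) may fall outside these supports when some $p_i$ or $p'_i$ sits at $0$ or $1$ --- is a genuine subtlety, and your proposed fix does not fully resolve it: absorbing coordinates with $p_i=p'_i\in\{0,1\}$ handles only the case where they agree, not cases like $p_i=0,\ p'_i\in(0,1)$. But this is not a defect specific to your proof. The paper's group-privacy chain between $R$ and $R^*$ has the same issue (e.g., with $\pp=(0.5,0.5,0)$ and $\pp'=(0,0,0.5)$, the intermediate $\{x_1\}$ lies in $\Supp(R)$ but not $\Supp(R')$, so the stated hypothesis does not literally cover that chain step). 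In the application to \cref{thm:FriendlyCoreDP} the gap is benign because the $\pred$-friendly DP guarantee in fact holds for every neighboring pair whose union is friendly, which is a strictly broader class than $\Supp(R)\times\Supp(R')$ and covers all intermediates in either argument; so both your proof and the paper's implicitly rely on a slightly stronger hypothesis than the lemma literally records.
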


We now prove \cref{thm:FriendlyCoreDP} using \cref{claim:vec-ind}.

\begin{proof}[Proof of \cref{thm:FriendlyCoreDP}]
	The utility guarantee immediately holds since $\BasicFilter(\cdot,\pred,\alpha)$ is $(\pred,\alpha)$-complete (\cref{lemma:BasicFilter}). We next focus on proving the privacy guarantee.
	
	Fix two neighboring databases $\cD \in \cX^n$ and $\cD'= \cD_{-j}$. Consider two independent executions of $\sB(\cD)$ and $\sB(\cD')$. Let $V$ be the (r.v. of the) value of $\pv$ in the execution $\sB(\cD)$ (the output of $\BasicFilter$ that is computed internally in $\FriendlyCoreDP$), and let $V'$ this r.v.\ w.r.t.\ the execution $\sB(\cD')$.
	By the stability property (\cref{lemma:BasicFilter}), there exist $\pp,\pp' \in [0,1]^{n}$ such that $V \la \Bern(\pp)$ and $V' \la \Bern(\pp')$ and it holds that $\norm{\pp_{-j} - \pp'}_1 \leq 1/(1-2\alpha)$. In order to apply \cref{claim:vec-ind}, we need to extend $V'$ to be an $n$-size vector. Let $\tV'$ be the $n$-size vector that is obtained by adding $0$ to the $j$'th location in $V'$ (i.e., $\tV'_j = 0$ and $\tV'_{-j} = V'_{-j}$), and let $\tilde{\pp}' \in \zo^n$ be the vector such that $\tV' \la \Bern(\tilde{\pp}')$ (obtained by adding $0$ to the $j$'th location in $\pp'$). So it holds that $\norm{\pp - \tilde{\pp}'}_1 \leq 1 + 1/(1-2\alpha)$. Let $R = \cD_{\set{i \colon V_i = 1}}$ and $R' = \cD_{\set{i \colon \tV_i' = 1}}$. By the friendliness property (\cref{lemma:BasicFilter}), for every $\cC \in \Supp(R)$ and $\cC' \in \Supp(R')$ it holds that $\cC \cup \cC'$ is $\pred$-friendly. We now conclude the proof by \cref{claim:vec-ind} since  $\sA$ is $\pred$-friendly $(\eps,\delta)$-$\DP$ and it holds that $\sA(R)\equiv \sB(\cD)$ and $\sA(R') \equiv \sB(\cD')$.
\end{proof}

\subsection{Comparison Between the Paradigms}

Up to constant factors, the paradigm for $\DP$ is optimal, since we transform an $\pred$-friendly $(\eps,\delta)$-$\DP$ algorithm into a $\approx (2\eps,\: 2 e^{3\eps} \delta)$-$\DP$ one. However, in the $\zCDP$ model, when $n$ is sufficiently large, we can use most of the privacy budget (say, $0.9$ of it) for the friendly algorithm $\sA$, and use the rest for $\FriendlyCore$ (i.e., we do not have to lose significant constant factors). The $\zCDP$ model has also advantage of tight composition, and whenever the friendly algorithm $\sA$ relies on the Gaussian Mechanism (i.e., for averaging and clustering problems), which is tailored for $\zCDP$, we gain in accuracy compared to the $\DP$ model.

\subsection{Computation Efficiency}\label{sec:comp_eff}
Our filters $\BasicFilter$ and $\zCDPFilter$ computes $\pred(x,y)$  for all pairs, that is, doing $O(n^2)$ applications of the predicate. However, using standard concentration bounds, it is possible to use a random sample of $O(\log (n/\delta))$ elements $y$ for estimating with high accuracy the number of friends of each $x$. This provides very similar privacy guarantees, but is computationally more efficient for large $n$. See \cref{sec:appendix:comp-eff} for more details.\Enote{\cref{sec:appendix:comp-eff} is new}

\section{Applications}
In this section we present two applications of $\FriendlyCore$: Averaging (\cref{sec:averaging}) and Clustering (\cref{sec:clustering}). These applications are described in the $\zCDP$ model, but can easily be adapted to the $\DP$ model as well. In \cref{sec:covariance} we present a third application of learning an unrestricted covariance matrix in the $\DP$ model, which relies on the tools that have been recently developed by  \cite{AL21}.
In this section we only describe the algorithms and prove their privacy guarantees, where we refer to \cref{sec:missing-proofs} for the missing statements and proofs of the utility guarantees.

\subsection{Averaging}\label{sec:averaging}

In this section we use $\FriendlyCore$ to compute a private average of points $\cD = (\px_1,\ldots,\px_n) \in (\bbR^d)^*$. In \cref{sec:averaging:known_diameter} we present a $\zCDP$ algorithm that given an (utility) advise of the effective diameter $r$ of the points, estimates $\Avg(\cD)$ up to an additive $\ell_2$ error of $O\paren{\frac{r}{n} \cdot \sqrt{\frac{d}{\rho }}}$. In \cref{sec:averaging:unknown_diameter} we present the case where the effective diameter $r$ is unknown, but only a segment $[r_{\min},r_{\max}]$ that contains $r$ is given. Throughout this section, we remind the reader that we denote $\distt_r(\px,\py) \eqdef \indic{\norm{\px-\py} \leq r}$.

\subsubsection{Known Diameter}\label{sec:averaging:known_diameter}
In the following we describe the algorithm for the known diameter case. 

\begin{algorithm}[$\FCAvg$]\label{alg:avg-known-diam}
	
	\item Input: A database $\cD = (\px_1,\ldots,\px_n) \in (\bbR^d)^*$, privacy parameters $\rho,\delta > 0$ and a diameter $r \geq 0$.
	
	\item Operation:~
	\begin{enumerate}
		
		\item Let $\rho_1 = 0.1 \rho$ and $\rho_2 = 0.9 \rho$.
		
		\item Compute $\cC = \FriendlyCore(\cD,\distt_r,\rho_1,\delta/2)$.
		
		\item Output $\AlgFriendlyAvg(\cC,\rho_2,\delta/2,r)$ (\cref{alg:friendly-avg}).\label{step:calling-PA}

	\end{enumerate}
\end{algorithm}

\begin{theorem}[Privacy of $\FCAvg$]\label{claim:avg-known-diam:privacy}
	Algorithm $\FCAvg(\cdot,\rho,\delta,r)$ is $(\rho,\delta)$-$\zCDP$.
\end{theorem}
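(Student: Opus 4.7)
The plan is to obtain this theorem as an immediate corollary of the two main tools already established in the paper: the $\distt_r$-friendly $\zCDP$ guarantee of $\AlgFriendlyAvg$ (Claim~\ref{claim:FriendlyAvg:privacy}) and the general paradigm that converts a friendly $\zCDP$ algorithm into a standard $\zCDP$ one via $\FriendlyCore$ (Theorem~\ref{thm:FriendlyCore}).

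First, I would observe that by Claim~\ref{claim:FriendlyAvg:privacy} applied with privacy parameters $\rho_2 = 0.9\rho$ and $\delta/2$, the algorithm $\sA \eqdef \AlgFriendlyAvg(\cdot,\rho_2,\delta/2,r)$ is $\distt_r$-friendly $(\rho_2,\delta/2)$-$\zCDP$. Next, I would note that $\FCAvg(\cD,\rho,\delta,r)$ can be written exactly as $\sA(\FriendlyCore(\cD,\distt_r,\rho_1,\delta/2))$ with $\rho_1 = 0.1\rho$. Invoking Theorem~\ref{thm:FriendlyCore} with the predicate $\pred = \distt_r$, outer parameters $(\rho_1,\delta/2)$, and friendly algorithm $\sA$ (with its parameters $(\rho_2,\delta/2)$) immediately yields that $\FCAvg(\cdot,\rho,\delta,r)$ is $(\rho_1+\rho_2,\: \delta/2+\delta/2)$-$\zCDP$, i.e.\ $(\rho,\delta)$-$\zCDP$, as required.

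There is essentially no obstacle in this proof: the budget has been split precisely so that the two parameters add up to $(\rho,\delta)$, and the predicate $\distt_r$ matches between the filter call and the inner mechanism. The only thing worth emphasizing in the write-up is the matching of predicates and parameters between the two invoked results, which is why the split $\rho = \rho_1 + \rho_2$ and $\delta = \delta/2 + \delta/2$ appears explicitly in the algorithm. All the conceptual work has been done in the proofs of Claim~\ref{claim:FriendlyAvg:privacy} and Theorem~\ref{thm:FriendlyCore}; this theorem is simply the ``plug-and-play'' composition they were designed to support.
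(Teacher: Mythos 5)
Your proposal is correct and follows exactly the paper's own proof: invoke Claim~\ref{claim:FriendlyAvg:privacy} to get that $\AlgFriendlyAvg(\cdot,\rho_2,\delta/2,r)$ is $\distt_r$-friendly $(\rho_2,\delta/2)$-$\zCDP$, then apply the $\FriendlyCore$ paradigm (Theorem~\ref{thm:FriendlyCore}) with outer budget $(\rho_1,\delta/2)$ to conclude $(\rho,\delta)$-$\zCDP$. Nothing to add.
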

\begin{proof}
	\cref{claim:FriendlyAvg:privacy} implies that $\AlgFriendlyAvg(\cdot,\rho_2,\delta/2,r)$ is $\distt_r$-friendly $(\rho_2,\delta/2)$-$\zCDP$. Therefore, we conclude by the privacy guarantee of the $\FriendlyCore$ paradigm (\cref{thm:FriendlyCore}) that $\FCAvg(\cdot, \rho,\delta,\beta,r)$ is $(\rho=\rho_1+\rho_2, \delta)$-$\zCDP$.
\end{proof}

\subsubsection{Unknown Diameter}\label{sec:averaging:unknown_diameter}

In the following we describe the algorithm $\AlgAvgUnknownDiam$ for the unknown diameter case, where we are only given a lower and upper bound $r_{\min},r_{\max}$ (respectively) on the effective diameter $r$. This is done by first searching for the diameter $r$ using a private binary search $\AlgFindDist$, and then applying our known diameter algorithm $\FCAvg$, which results in an additive $\ell_2$ error of $O\paren{\frac{r}{n} \sqrt{\frac{\paren{d + \log\log(r_{\max}/r_{\min})}}{\rho}}}$ (proven in \cref{sec:missing-proofs}). The following algorithm is the basic component of our binary search which checks (privately) whether a parameter $r$ is a good diameter.

\begin{algorithm}[$\AlgCheckDist$]\label{alg:check-dist}
	
	\item Input: A database $\cD = (\px_1,\ldots,\px_n) \in (\bbR^d)^*$, a privacy parameter $\rho > 0$, a confidence parameter $\beta>0$, and a diameter $r \geq 0$.
	
	\item Operation:~
	\begin{enumerate}[i.] 	
		
		\item For $i \in [n]$: Compute $s_i = \size{\set{j \in [n] \colon \norm{\px_i - \px_j} \leq r}}$.
		
		\item Let $a = (\sum_{i=1}^n s_i)/n$ and let $\hat{a} = a + \cN(0,2/\rho)$.\label{step:a}
		
		\item Output $\begin{cases} 1 & \hat{a} \geq n- \sqrt{\frac{4 \ln(1/\beta)}{\rho}} \\ 0 & \text{o.w.}  \end{cases}$.
		
	\end{enumerate}
	
\end{algorithm}

\begin{claim}[Privacy of $\AlgCheckDist$]\label{claim:AlgCheckDist:privacy}
	Algorithm $\AlgCheckDist(\cdot,\rho,\beta,r)$ is $\rho$-$\zCDP$.
\end{claim}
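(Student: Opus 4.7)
The plan is to reduce the analysis to an application of the Gaussian Mechanism (\cref{fact:Gaus}) followed by post-processing (\cref{fact:post-processing}). The key step is to establish that the $\ell_2$-sensitivity of the statistic $a(\cD) \eqdef \frac{1}{|\cD|} \sum_{i} s_i$ under insertion/deletion is at most $2$, which matches the noise level $\sigma^2 = 2/\rho$ used in Step~\ref{step:a}.

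Fix neighboring $\cD = (\px_1,\ldots,\px_n)$ and $\cD' = \cD \cup (\pz)$ (the deletion direction is symmetric). First I would work out how the counts $\set{s_i}$ change: for every $i \in [n]$, $s_i^{\cD'} = s_i^{\cD} + \indic{\norm{\px_i - \pz} \leq r}$, and the new element contributes $s_{n+1}^{\cD'} = m+1$ where $m = \size{\set{i \in [n] \colon \norm{\px_i - \pz} \leq r}}$ (the $+1$ comes from the self-count, since $\distt_r$ is reflexive). Writing $S = \sum_{i=1}^n s_i^{\cD}$, this gives $a(\cD) = S/n$ and $a(\cD') = (S + 2m + 1)/(n+1)$.

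Next I would bound the sensitivity by direct algebra:
\begin{align*}
\size{a(\cD) - a(\cD')}
= \left| \frac{S}{n} - \frac{S + 2m + 1}{n+1} \right|
= \frac{\size{S - 2mn - n}}{n(n+1)}.
\end{align*}
Using $0 \leq S \leq n^2$ and $0 \leq 2mn + n \leq 2n^2 + n$, the numerator is at most $2n^2 + n$, so
$\size{a(\cD) - a(\cD')} \leq (2n+1)/(n+1) \leq 2$.

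Given this sensitivity bound of $2$, the Gaussian Mechanism with $\sigma = 2/\sqrt{2\rho} = \sqrt{2/\rho}$ (\cref{fact:Gaus}) gives that $\hat{a}(\cD)$ and $\hat{a}(\cD')$ are $\rho$-indistinguishable. Since the output of $\AlgCheckDist$ is a deterministic thresholding of $\hat{a}$, post-processing (\cref{fact:post-processing}) yields that $\AlgCheckDist(\cdot,\rho,\beta,r)$ is $\rho$-$\zCDP$. There is no real obstacle here; the only subtle point is remembering to include the self-term in $s_i$ when computing $s_{n+1}^{\cD'}$, which is what produces the ``$2m+1$'' in the numerator and ultimately forces the sensitivity to be $2$ rather than $1$.
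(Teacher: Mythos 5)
Your proof is correct and takes the same approach as the paper's: bound the sensitivity of $a$ by $2$, then apply the Gaussian mechanism ($\sigma^2 = 2/\rho$) and post-processing; the only difference is cosmetic, in that you work the addition direction via the explicit new total $S + 2m + 1$, while the paper works the deletion direction via $s_i' \leq s_i \leq s_i' + 1$ and derives $a' - 1 \leq a \leq a' + 2$. One small clarification on your closing remark: it is the $2m$ term (the added point $\pz$ contributes to its own count and also increments each matched $s_i$), not the self-count ``$+1$'', that drives the sensitivity to $2$ rather than $1$; even without the self-term you would get $|a - a'| \leq 2n/(n+1)$, which is still close to $2$.
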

\begin{proof}
	Fix two neighboring databases $\cD = (\px_1,\ldots,\px_n)$ and $\cD' = \cD_{-j}$, where we assume w.l.o.g. that $j = n$. i.e., $\cD' = (x_1,\ldots,x_{n-1})$. Let $a, \set{s_i}_{i=1}^n$ and $a', \set{s_i'}_{i=1}^{n-1}$ be the values from Step~\ref{step:a} in the executions $\AlgCheckDist(\cD)$ and $\AlgCheckDist(\cD')$, respectively, and note that for every $i \in [n-1]$ is holds that $s_i' \leq s_i \leq s_i' + 1$. Therefore, it holds that
	\begin{align*}
		a  \geq \frac{\sum_{i=1}^{n-1} s_i'}{n}  = a' -  \frac{\sum_{i=1}^{n-1} s_i'}{n(n-1)} \geq a' - 1,
	\end{align*}
	and 
	\begin{align*}
		a \leq \frac{\sum_{i=1}^{n-1} (s_i'+1) + s_n}{n} \leq a' + \frac{n-1}{n} + \frac{s_n}{n} \leq a' + 2.
	\end{align*}
	The privacy guarantee now follows by the Gaussian mechanism (\cref{fact:Gaus}) and post-processing (\cref{fact:post-processing}).
\end{proof}

We next describe our private binary search for the diameter $r$.

\begin{algorithm}[$\AlgFindDist$]\label{alg:find-dist}
	
	\item Input: A database $\cD = (\px_1,\ldots,\px_n) \in (\bbR^d)^n$, a privacy parameter $\rho > 0$, a confidence parameter $\beta>0$, lower and upper bounds $r_{\min}, r_{\max} \geq 0$ on the diameter (respectively), and a base $b > 1$.
	
	\item Operation:~
	\begin{enumerate}[i.] 	
	
	    \item Let $t = \log_{b}(r_{\max}/r_{\min})$.
	
		\item Perform a binary search over $x \in \set{b^0, b^1, \ldots, b^{t}}$, each step of the search is done by calling to $\AlgCheckDist(\cD,\frac{\rho}{\log_2(t)},\frac{\beta}{\log_2(t)},r = x\cdot r_{\min})$. 
		
		\item Output $r = x \cdot r_{\min}$ where $x$ is the outcome of the above binary search.
		
	\end{enumerate}
	
\end{algorithm}

\begin{claim}[Privacy of $\AlgFindDist$]\label{claim:AlgFindDist:privacy}
	Algorithm $\AlgFindDist(\cdot,\rho,\beta,r_{\min},r_{\max},b)$ is $\rho$-$\zCDP$.
\end{claim}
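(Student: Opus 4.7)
The plan is to reduce the claim to composition and post-processing of $\zCDP$ mechanisms. First I would observe that the binary search over the set $\set{b^0, b^1, \ldots, b^t}$ of size $t+1$ terminates after at most $\ceil{\log_2(t+1)} \leq \log_2(t)$ (modulo edge cases; one may absorb the $\log_2(t+1)$ vs.\ $\log_2(t)$ discrepancy by noting that the statement is stated with $\log_2(t)$ and that the number of binary-search steps is bounded by this quantity) adaptive calls to the subroutine $\AlgCheckDist$, and that the only data-dependent operations inside $\AlgFindDist$ are exactly these calls (the choice of which $x$ to query next is a function of the \emph{outcomes} of prior calls, not of $\cD$ directly).

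Second, by \cref{claim:AlgCheckDist:privacy} each invocation $\AlgCheckDist(\cD,\rho/\log_2(t),\beta/\log_2(t),r=x\cdot r_{\min})$ is $(\rho/\log_2(t))$-$\zCDP$ as a function of $\cD$. I would then invoke adaptive composition for $\zCDP$ (\cref{fact:composition}, applied inductively over the $\log_2(t)$ search steps) to conclude that the joint transcript of all $\AlgCheckDist$ outcomes, viewed as a function of $\cD$, is $(\log_2(t) \cdot \rho/\log_2(t)) = \rho$-$\zCDP$. Here it is important that $\cref{fact:composition}$ permits the second mechanism to depend on the output of the first one, so the adaptivity of the binary search (its choice of next midpoint) is handled automatically.

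Finally, the output $r = x \cdot r_{\min}$ is a deterministic function of the transcript of outcomes alone (the chosen $x$ is the value the binary search converges to), so by post-processing (\cref{fact:post-processing}) the overall algorithm is $\rho$-$\zCDP$.

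I do not anticipate any serious obstacle: the only mild care needed is to verify that the number of search queries is bounded by $\log_2(t)$ (so that the allocation $\rho/\log_2(t)$ per step composes to exactly $\rho$), and that the composition used is the adaptive one. Both are standard.
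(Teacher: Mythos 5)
Your proposal is correct and matches the paper's proof in essence: the paper's one-line argument ("basic composition of $\log_2(t)$ iterations of the binary search" via the privacy of $\AlgCheckDist$) is precisely the adaptive-composition-plus-post-processing argument you spell out. The only additional care you take (noting the $\ceil{\log_2(t+1)}$ vs.\ $\log_2(t)$ discrepancy and explicitly flagging that \cref{fact:composition} handles adaptivity) is a reasonable tightening of a detail the paper leaves implicit.
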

\begin{proof}
    Immediately holds by the privacy guarantee of $\AlgCheckDist$ (\cref{claim:AlgCheckDist:privacy}) and basic composition of $\log_2(\ell)$ iterations of the binary search.
\end{proof}

We now ready to fully describe our algorithm for estimating the average of points where the effective diameter is unknown.

\begin{algorithm}[$\AlgAvgUnknownDiam$]\label{alg:avg-unknown-diam}
	
	\item Input: A database $\cD = (\px_1,\ldots,\px_n) \in (\bbR^d)^n$, privacy parameters $\rho,\delta > 0$, a confidence parameter $\beta>0$, and  lower and upper bounds $r_{\min}, r_{\max} > 0$ on the diameter (respectively).
	
	\item Operation:~
	\begin{enumerate}
		
		\item Let $\rho_1 = 0.1 \rho$ and $\rho_2 = 0.9 \rho$.
		
		\item Compute $r = \AlgFindDist(\cD, \rho_1, \beta/2, r_{\min}, r_{\max}, b = 1.5)$.\label{step:FindDiam}
		
		\item Output $\FCAvg(\cD,\rho_2,\delta,\beta/2,r)$.
		
	\end{enumerate}
	
\end{algorithm}

\begin{theorem}[Privacy of $\AlgAvgUnknownDiam$]
	Algorithm $\AlgAvgUnknownDiam(\cdot, \rho,\delta,\beta,r_{\min},r_{\max})$ is $(\rho,\delta)$-$\zCDP$.
\end{theorem}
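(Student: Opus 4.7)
The plan is to prove the theorem by a direct application of the composition property of $\zCDP$ (\cref{fact:composition}), using the privacy guarantees already established for the two sub-routines invoked by $\AlgAvgUnknownDiam$.

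First, I would observe that $\AlgAvgUnknownDiam$ makes exactly two calls that touch the dataset $\cD$: (i) a call to $\AlgFindDist(\cD, \rho_1, \beta/2, r_{\min}, r_{\max}, 1.5)$ at \stepref{step:FindDiam}, which by \cref{claim:AlgFindDist:privacy} is $\rho_1$-$\zCDP$ (and hence $(\rho_1, 0)$-$\zCDP$); and (ii) a call to $\FCAvg(\cD, \rho_2, \delta, \beta/2, r)$, which by \cref{claim:avg-known-diam:privacy} is $(\rho_2, \delta)$-$\zCDP$ as a function of its database argument, for \emph{every} fixed value of its diameter parameter $r$.

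Next, I would frame the composition precisely. Let $\Alg_1(\cD) \eqdef \AlgFindDist(\cD, \rho_1, \beta/2, r_{\min}, r_{\max}, 1.5)$, and let $\Alg_2(\cD, r) \eqdef \FCAvg(\cD, \rho_2, \delta, \beta/2, r)$. Then $\AlgAvgUnknownDiam(\cD) = \Alg_2(\cD, \Alg_1(\cD))$. Since $\Alg_1$ is $(\rho_1, 0)$-$\zCDP$ and $\Alg_2$ is $(\rho_2, \delta)$-$\zCDP$ as a function of its first argument (uniformly over the second argument $r$), \cref{fact:composition} immediately gives that $\AlgAvgUnknownDiam$ is $(\rho_1 + \rho_2, \: 0 + \delta)$-$\zCDP = (\rho, \delta)$-$\zCDP$, as $\rho_1 + \rho_2 = 0.1\rho + 0.9\rho = \rho$.

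There is essentially no obstacle here; the work has all been done in proving the privacy of the two components, and the only care needed is to note that $\FCAvg$'s privacy guarantee holds for every fixed $r$ (so the output of $\Alg_1$ can be safely post-processed together with $\cD$ inside $\Alg_2$, as required by the adaptive composition statement in \cref{fact:composition}). The split $\rho_1 = 0.1\rho, \rho_2 = 0.9\rho$ ensures the total privacy budget adds exactly to $\rho$, and the $\delta/2$ parameters internally used by $\FCAvg$ are absorbed in its own analysis; the only $\delta$-cost visible at the top level comes from the approximate $\zCDP$ guarantee of $\FCAvg$.
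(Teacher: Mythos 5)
Your proof is correct and matches the paper's own argument exactly: both decompose $\AlgAvgUnknownDiam$ into the $\rho_1$-$\zCDP$ call to $\AlgFindDist$ (\cref{claim:AlgFindDist:privacy}) followed adaptively by the $(\rho_2,\delta)$-$\zCDP$ call to $\FCAvg$ (\cref{claim:avg-known-diam:privacy}), and conclude by composition (\cref{fact:composition}). Your write-up just spells out in more detail what the paper compresses into a single sentence.
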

\begin{proof}
	Immediately follows by composition (\cref{fact:composition}) of the $\rho_1$-$\zCDP$ mechanism $\AlgFindDist$ (\cref{claim:AlgFindDist:privacy}) and the $(\rho_2,\delta)$-$\zCDP$ mechanism $\FCAvg$ (\cref{claim:avg-known-diam:privacy}).
\end{proof}

\subsubsection{Comparison with Previous Results}

\Enote{To Haim: Please read}
There are many previous results about private averaging in various settings that, like our averaging algorithms, attempt to add additive noise that is proportional to the effective diameter of the points  (e.g., see \cite{NSV16,KV18,KLSU19,KSU20,BDKU20,HuangLY21,LevySAKKMS21}). All these results (including ours) have a preprocessing step in which ``outliers'' are clipped or trimmed, and then it becomes ``privacy safe'' to add a small noise. Our $\FriendlyCore$-based preprocessing step has two main advantages compared to the other methods: (1) It is dimension-independent, and (2) It is independent of the $\ell_2$-norm of the points. These two advantages are illustrated in the experiments of \cref{sec:experiments:averaging}. As far as we know, all previous results do not satisfy (1), and most of them do not satisfy (2) either (the histogram-based construction of \cite{KV18} is the only result which is also independent of the $\ell_2$-norm of the points, but is very dependent in the dimension). In addition, we remark that the additive error of our algorithms match the $\tilde{O}\paren{\frac{r}{n}\cdot \sqrt{\frac{d}{\rho}}}$ optimal upper bound of  \cite{HuangLY21}. Actually, we even provide an asymptotical improvement compared to \cite{HuangLY21}, because their approach requires an assumed bound $\Lambda$ on the $\ell_2$ norm of all the data points, even when the effective diameter $r$ is known (a logarithmic dependency on $\Lambda$ is hidden inside the $\tilde{O}$). In contrast, our approach does not need such a bound in the known $r$ case (\cref{claim:avg-known-diam:utility}), and in the unknown $r$ case we only require rough bounds $r_{\min}, r_{\max}$ on it (\cref{claim:AvgUnknownDiam:utility}).

\subsection{Clustering}\label{sec:clustering}

\Enote{To Haim: Pleas scan the structure of the entire section, and read up to \cref{sec:applications:unordered-tuples} to see if it looks ok}
In this section we use $\FriendlyCore$ for constructing our private clustering algorithm $\FCClustering$. Recently, \cite{CKMST:ICML2021} identified a very simple clustering problem, called \emph{unordered $k$-tuple clustering},  and reduced standard clustering tasks like $k$-means and $k$-GMM (under common separation assumptions) to this simple problem via the sample and aggregate framework of \cite{NRS07}. The idea is to split the database into random parts, and execute a non-private clustering algorithm on each part for obtaining an \emph{unordered} $k$-tuples from each execution. Then the goal is to privately aggregate all the $k$-tuples for obtaining a new $k$-tuple that is close to them. 
See \cref{Clustering_explanation_figures} for a graphical illustration.

\begin{figure*}
	\centerline{
		\includegraphics[scale=.15]{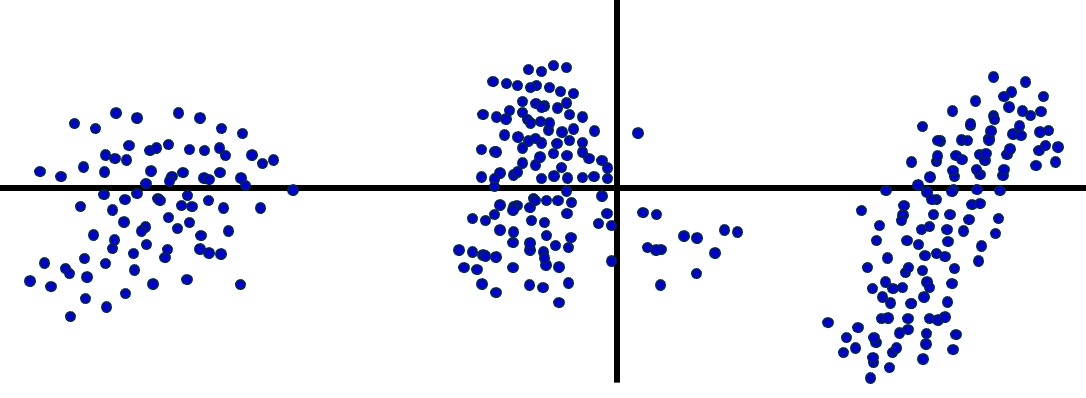}
		\hspace{2cm}
		\includegraphics[scale=.15]{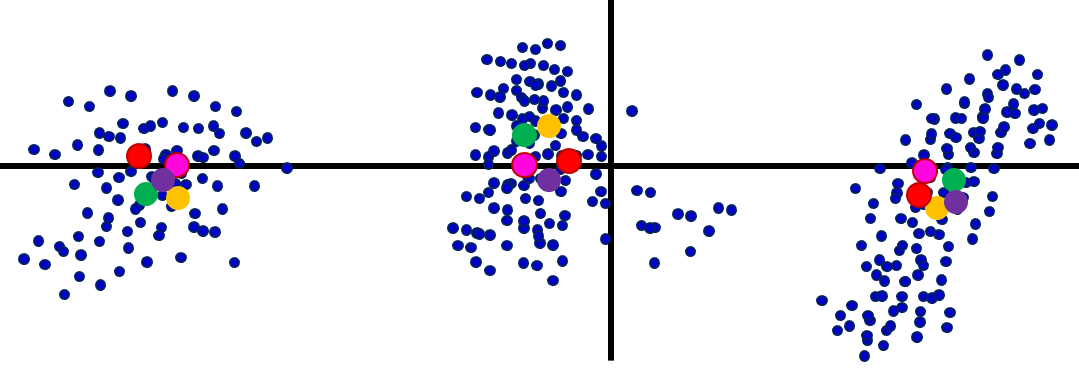}	
	}
	\vspace{0.5cm}
	\centerline{
		\includegraphics[scale=.15]{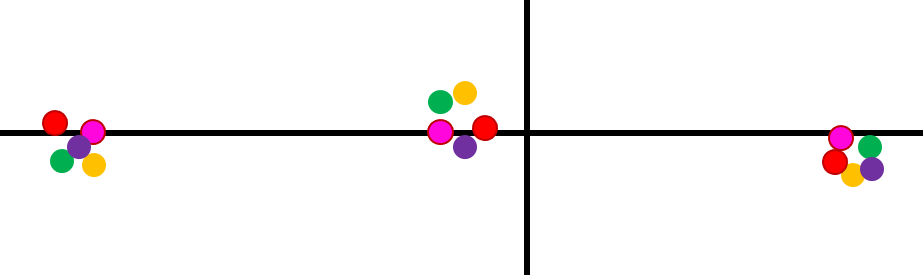}
		\hspace{2cm}
		\includegraphics[scale=.15]{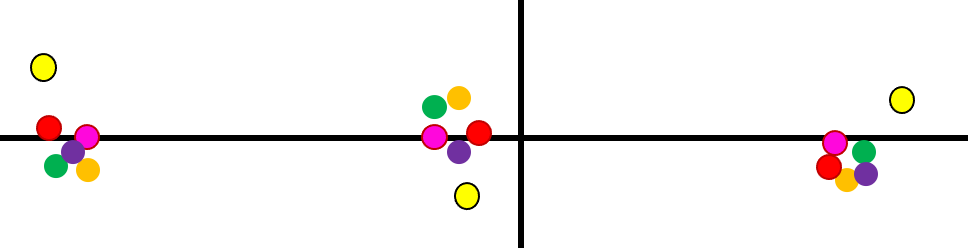}			
	}
	\caption{Top left: Database of points. Top right: Executing a non-private clustering algorithm over random parts of the data. Each execution returns an \emph{unordered} $k$-tuple (e.g., the red points are the first tuple, the green points are the second tuple, etc.). 
	Bottom left: The original points are ignored, and the focus is on a new database, where each element there is an unordered $k$-tuple (e.g., the tuple of red points is the first element in the new database). Bottom right: When the tuples are close to each other (as in the picture), the goal is to output a new $k$-tuple that is close to them (e.g., the yellow points). The challenge is to do it while preserving differential privacy (with respect to the new database of tuples).}
	\label{Clustering_explanation_figures}
\end{figure*}

\cite{CKMST:ICML2021} formalized the $k$-tuple clustering problem, described simple algorithms that privately solve this problem, and then provided proven utility guarantees for $k$-means and $k$-GMM using the above reduction. 
However, their algorithms do not perform well in practice (i.e., requires either too many tuples or an extremely large separation). 
In this section we show how to solve the unordered $k$-tuple clustering problem using $\FriendlyCore$ in a much more efficient way, yielding the first algorithm of this type that is also practical in many interesting cases (see \cref{sec:experiments:clustering}). 
In this section we only describe our algorithms and prove their privacy guarantees. We refer to \cref{sec:missing-proofs:clustering} for proven utility guarantees that use the tools and formalization of \cite{CKMST:ICML2021}.

In \cref{sec:applications:unordered-tuples} we define a predicate $\match_{\gamma}$ for unordered $k$-tuples (where $\gamma$ is a matching quality parameter), and prove properties of this predicate, where the main property is \cref{claim:match-friendly-to-complete} which states that a $\match_{\gamma}$-friendly database is $\match_{2\gamma/(1-\gamma)}$-complete. In \cref{sec:applications:unordered-to-ordered} we present a reduction $\FriendlyReorder$ from unordered to ordered tuples, that is privacy safe for databases that are $\match_{1/7}$-friendly. In \cref{sec:applications:ordered-tuples} we present the ordered tuples problem, and solve it again using a special specification of $\FriendlyCore$. In \cref{sec:clustering:putting_together} we combine the reduction from unordered to ordered tuples, along with the algorithm for ordered one, and present our end-to-end $\zCDP$ algorithm $\FCkTuplesClustering$ for unordered $k$-tuple clustering. Finally, in \cref{sec:FCClustering} go back to the original clustering problems that we are interested in (e.g., $k$-means and $k$-GMM) and present our main clustering algorithm $\FCClustering$ that combines between our algorithm $\FCkTuplesClustering$ for unordered $k$-tuple clustering to the reduction of \cite{CKMST:ICML2021} from standard clustering problems into the unordered tuples problem.

While $\FCClustering$ consists of several components, the algorithm itself is not very complicated.
For making the presentation more accessible, in \cref{alg:FCClustering:short} we give an informal description of $\FCClustering$, and in \cref{FC_Clustering_figures} we present a graphical illustration of the steps on synthetic data.

\begin{algorithm}[$\FCClustering$, informal]\label{alg:FCClustering:short}
	
	\item Input: A database $\cD  \in (\bbR^d)^*$, parameters $\rho,\delta > 0$, a bound $\Lambda > 0$ on the $\ell_2$ norm of the points, and a parameter $t \in \bbN$ (number of tuples). 
	
	\item Oracle: Non private clustering algorithm $\sA$.
	
	\item Operation:~
	\begin{enumerate}
		
		\item Shuffle the order of the points in $\cD$. Let $\cD = (\px_1,\ldots,\px_n)$ be the database after the shuffle.
		
		\item For $i \in [t]$: Compute the $k$-tuple $X^i = \Alg(\cD^i)$ where $\cD^i = (\px_{(i-1)\cdot m + 1},\ldots,\px_{i\cdot m})$ for $m = \floor{n/t}$.\label{step:FCClustering:non-private}
		
		\item Let $\cT = (X^1,\ldots,X^t)$ (a database of unordered tuples).
		
		\item Compute $\cC = \FriendlyCore(\cT,\match_{1/7},\rho/3,\delta/3)$ ($\match_{1/7}$ is defined in \cref{def:match}).\label{step:FCClustering:core}
		
		\item Pick a tuple $X=(\px_1,\ldots,\px_k) \in \cT$ and split the set of all points of all the tuples in $\cT$ into $k$ parts $\cQ^1,\ldots,\cQ^k$ according to it (i.e., each point $\py$ is chosen to be in $\cQ^i$ for $i = \argmin_{j \in [k]}\norm{\px_i-\py}$).\label{step:FCClustering:splitting} 
		
		\item For $i\in[k]$: Compute $(\rho/3,\delta/3)$-$\zCDP$ averages $Y=(\py_1,\ldots,\py_k)$ for $\cQ^1,\ldots,\cQ^k$ (respectively).\label{step:FCClustering:private-avg}
		
		\item Perform a private Lloyd step over the entire database $\cD$ with the centers $Y$ (using privacy budget $\rho/3,\delta/3$ and radius $\Lambda$), and output the resulting centers. (Namely, we split the points in $\cD$ into $k$ sets according to the centers $(\py_1,\ldots,\py_k)$ and then we privately average each set using the standard Gaussian mechanism with $\ell_2$ sensitivity of $\Lambda$. See \cref{alg:NoisyLloydStep} for the formal description.)\label{step:FCClustering:Lloyd}
		
	\end{enumerate}
	
\end{algorithm}

\begin{theorem}[Privacy of $\FCClustering$]\label{thm:FCClustering:privacy}
	Algorithm $\FCClustering^{\sA}(\cdot,\rho,\delta,\Lambda,t)$ is $(\rho,\delta)$-$\zCDP$ (for any $\sA$).
\end{theorem}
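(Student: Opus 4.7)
The strategy is to decompose $\FCClustering$ into three privacy-consuming stages and charge each a budget of $(\rho/3,\delta/3)$. The key structural observation is that Steps~1--3 are a deterministic post-processing map from $\cD$ to $\cT$ that sends neighboring databases to neighboring databases: the shuffle is data-oblivious, and because Step~\ref{step:FCClustering:non-private} runs $\sA$ on disjoint chunks of $\cD$, inserting or deleting a single point $\px\in\cD$ changes at most one chunk $\cD^i$ and thus modifies at most one tuple $X^i$; modeling the shuffle so that an added element appears at the end of its chunk makes this a clean insertion/deletion on $\cT$. Hence any mechanism that is $(\rho',\delta')$-$\zCDP$ as a function of $\cT$ is also $(\rho',\delta')$-$\zCDP$ as a function of $\cD$ by post-processing (\cref{fact:post-processing}).

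Next, I would isolate Steps~\ref{step:FCClustering:core}--\ref{step:FCClustering:private-avg} and show that they together form a standard $(2\rho/3,2\delta/3)$-$\zCDP$ mechanism on $\cT$. Define $\sA^*$ to be the algorithm that on input a tuple-database $\cC$ performs Steps~\ref{step:FCClustering:splitting}--\ref{step:FCClustering:private-avg} (pick a representative tuple, partition, and privately average each part using $(\rho/3,\delta/3)$-$\zCDP$ Gaussian averaging as in \cref{sec:averaging}). I claim $\sA^*$ is $\match_{1/7}$-friendly $(\rho/3,\delta/3)$-$\zCDP$: if $\cC\cup\cC'$ is $\match_{1/7}$-friendly then by the properties of $\match_{\gamma}$ established in \cref{sec:applications:unordered-tuples} the induced partition into $k$ parts is canonical (independent of the representative tuple $X$ chosen in Step~\ref{step:FCClustering:splitting}), so adding/removing a single tuple corresponds to adding/removing exactly one point from each of the $k$ parts, and the averaging of each part inherits friendliness and can be carried out by a friendly-DP averaging routine. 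Composing $\sA^*$ with $\FriendlyCore(\cdot,\match_{1/7},\rho/3,\delta/3)$ via the paradigm of \cref{thm:FriendlyCore} yields $(\rho/3+\rho/3,\,\delta/3+\delta/3)=(2\rho/3,2\delta/3)$-$\zCDP$ on $\cT$, and hence on $\cD$ by the post-processing step above.

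Finally, Step~\ref{step:FCClustering:Lloyd} is a single noisy Lloyd step on the original database $\cD$, using the centers $Y$ as auxiliary input. Since $Y$ was already produced in a way that is private w.r.t.\ $\cD$, and the Lloyd step partitions $\cD$ by the fixed centers $Y$ and averages each part via the Gaussian mechanism with $\ell_2$-sensitivity $\Lambda$ and total budget $(\rho/3,\delta/3)$ (the formal algorithm appearing as $\NoisyLloydStep$), it is $(\rho/3,\delta/3)$-$\zCDP$ in $\cD$ conditioned on $Y$. Combining the two $\zCDP$ mechanisms via sequential composition (\cref{fact:composition}) yields the stated $(2\rho/3+\rho/3,\,2\delta/3+\delta/3)=(\rho,\delta)$-$\zCDP$ bound.

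The main obstacle I anticipate is verifying the friendly-DP property of $\sA^*$: one must argue that for any two neighboring $\cC,\cC'$ with $\cC\cup\cC'$ being $\match_{1/7}$-friendly, the deterministic partition induced in Step~\ref{step:FCClustering:splitting} is the \emph{same} no matter which tuple $X\in\cC$ or $X\in\cC'$ is used, so that the symmetric-difference structure of $\cC$ and $\cC'$ translates into a genuine neighboring relation on each of the $k$ parts; this relies on the quantitative separation properties of $\match_{1/7}$-friendly families proved in \cref{sec:applications:unordered-tuples}, and is the technical crux of the privacy analysis.
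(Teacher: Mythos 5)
The overall structure you propose matches the paper's proof: decompose $\FCClustering$ into (a) the reduction from $\cD$ to the tuple-database $\cT$, (b) a friendly-$\zCDP$ aggregation over $\cT$ wrapped by $\FriendlyCore$ via \cref{thm:FriendlyCore}, and (c) a final $\zCDP$ Lloyd step on $\cD$, combined by composition. The difference is one of modularization rather than strategy: the paper never re-derives stage (b) inside the proof of \cref{thm:FCClustering:privacy} --- it has already packaged Steps \ref{step:FCClustering:core}--\ref{step:FCClustering:private-avg} as $\FCkTuplesClustering$, shown to be $(\rho/2,\delta/2)$-$\zCDP$ in \cref{claim:FCkTuplesClustering:privacy}, so the proof of \cref{thm:FCClustering:privacy} collapses to one composition step with $\NoisyLloydStep$. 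Your $\sA^*$ is essentially $\FriendlyReorder \circ \AlgAvgOrdTup$ re-derived from scratch, and your $\rho/3$ split follows the informal \cref{alg:FCClustering:short} rather than the $\rho/2$ split of the formal \cref{alg:FCClustering}; neither choice affects correctness.

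One point in your sketch deserves a flag because as stated it would not quite go through. You claim that for $\match_{1/7}$-friendly $\cC\cup\cC'$ the induced partition in Step~\ref{step:FCClustering:splitting} is ``canonical (independent of the representative tuple $X$)'' and conclude that neighboring cores yield genuine neighboring relations on each of the $k$ parts. But \cref{claim:same-ordering} only gives agreement \emph{up to a fixed permutation} $\pi$ of the $k$ indices: the partitions obtained from two different representative tuples are the same \emph{as unordered families of sets}, but the output of $\sA^*$ is an \emph{ordered} $k$-tuple of noisy averages, so a fixed index permutation breaks indistinguishability outright. This is precisely why the formal $\FriendlyReorder$ (\cref{alg:FriendlyReorder}) samples a \emph{uniformly random} permutation in \stepref{step:perm} and applies it to the ordered tuples: the proof of \cref{claim:FriendlyReorder:privacy} matches each permutation $\pi$ in the run on $\cD$ with the corresponding $\pi'=\pi\circ\sigma^{-1}$ in the run on $\cD'$, and uniformity of $\pi$ makes the two output distributions line up. Without that random permutation, the ``pick a representative tuple and partition'' step is not $\match_{1/7}$-friendly $\zCDP$, and the $\FriendlyCore$ paradigm cannot be invoked. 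Your remark that this is ``the technical crux'' is apt; the missing ingredient is exactly this randomization of the labeling, which you should fold into your definition of $\sA^*$ before the argument closes.
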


The proof of \cref{thm:FCClustering:privacy}, along with the formal construction, appears at \cref{sec:FCClustering}.

\begin{remark}
	Steps \ref{step:FCClustering:core} to \ref{step:FCClustering:private-avg} of \cref{alg:FCClustering:short} are actually an informal description of our algorithm $\FCkTuplesClustering$, which is formally described in \cref{sec:clustering:putting_together}.
	\stepref{step:FCClustering:splitting}, which also can be seen as ``ordering'' the unordered tuples, is an informal description of our algorithm $\FriendlyReorder$ which is described in \cref{sec:applications:unordered-to-ordered}. 
	Note that computing the averages in \stepref{step:FCClustering:private-avg} can be done by applying $\AlgAvgUnknownDiam$ on each of the $\cQ^i$'s (i.e., additional $k$ calls to $\FriendlyCore$). But actually, we do that by a new algorithm $\AlgAvgOrdTup$ that only uses a single call to $\FriendlyCore$ which is applied with a special type of predicate over \emph{ordered} tuples. Algorithm $\AlgAvgOrdTup$ is described in \cref{sec:applications:ordered-tuples}.
\end{remark}

\begin{figure*}
	\centerline{\includegraphics[scale=.27]{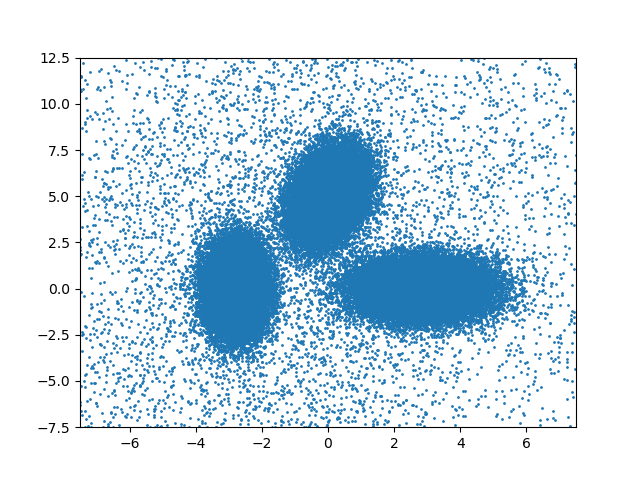}
		\includegraphics[scale=.27]{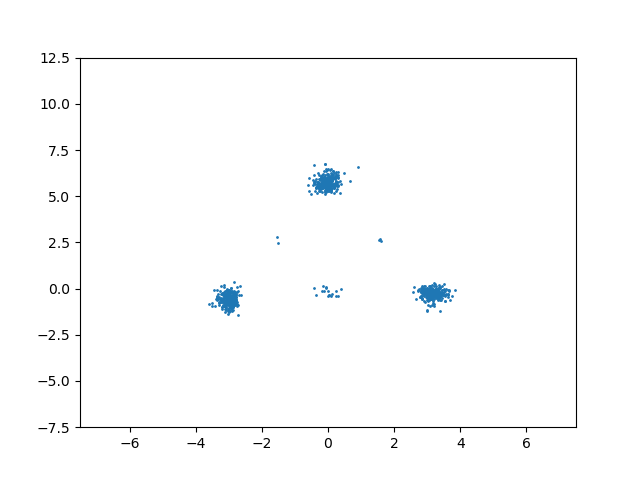}
		\includegraphics[scale=.27]{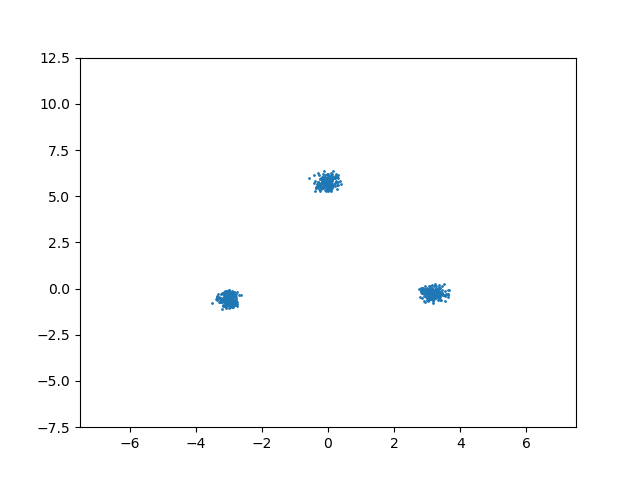}			
	}
	\centerline{
		\includegraphics[scale=.27]{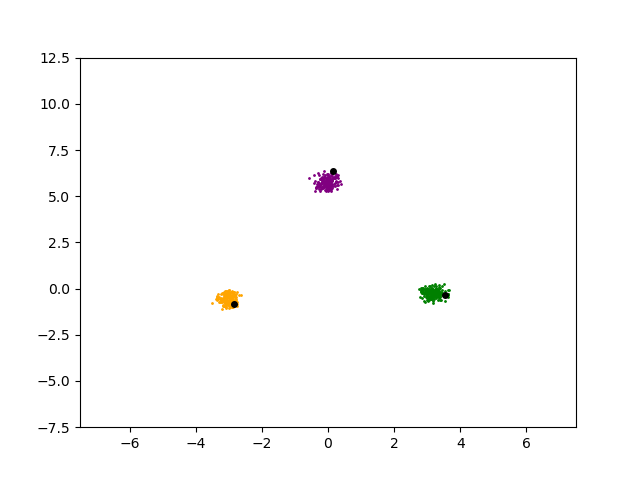}
		\includegraphics[scale=.27]{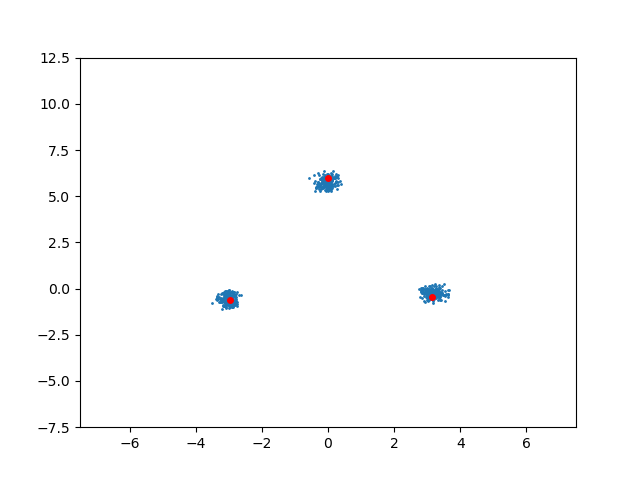}
		\includegraphics[scale=.27]{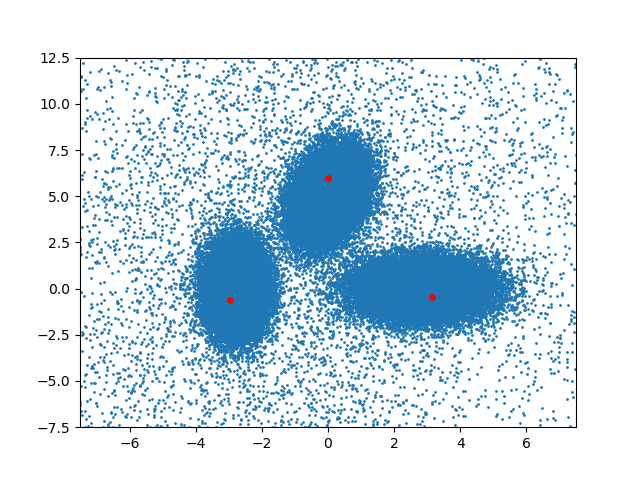}
		\includegraphics[scale=.27]{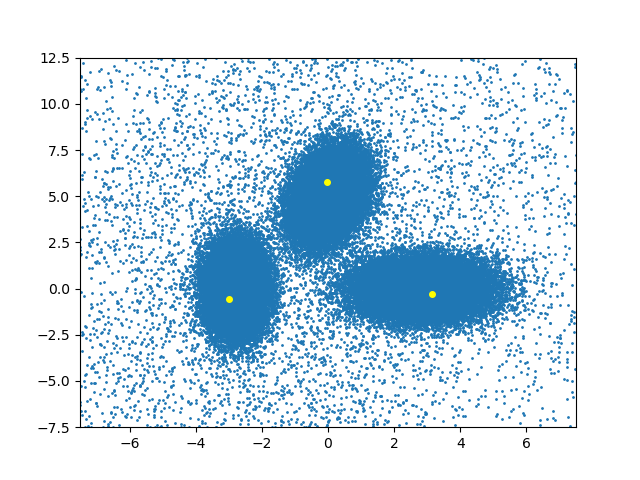}				
	}
	\caption{Top figures from left to right: (a) Database of size $n=10^5$. (b) Points of $300$ $3$-tuples that have been generated by (non-private) $k$-means++ on random parts of the database (\stepref{step:FCClustering:non-private}).  (c) Points of all the $202$ $3$-tuples that were chosen to be in the core (\stepref{step:FCClustering:core}). Bottom figures from left to right: (d) Picking the first tuple (black points) and splitting the points according to it (\stepref{step:FCClustering:splitting}). (e) Privately estimating the averages of each part (red points, \stepref{step:FCClustering:private-avg}). (f) The private centers places on the entire data. (g) The centers after a private Lloyd step (yellow points, \stepref{step:FCClustering:Lloyd})}
	\label{FC_Clustering_figures}
\end{figure*}

\subsubsection{Unordered $k$-Tuple Clustering}\label{sec:applications:unordered-tuples}

In this section we are given a database $\cD \in ((\bbR^d)^k)^*$, where each element $X = (\px_1,\ldots,\px_k) \in \cD$ is a $k$-tuple of points in $\bbR^d$. 
In case all tuples are close to each other (up to reordering), the goal is to privately determine a new $k$-tuple that is close to them. 

We start by defining a predicate over such tuples that captures the ``closeness'' property.


\begin{definition}[Predicate $\match_{\gamma}$]\label{def:match}
	For $\gamma \in [0,1]$, a permutation $\pi \colon [k] \rightarrow [k]$ and  $X=(\px_1,\ldots,\px_k)$, $Y=(\py_1,\ldots,\py_k) \in (\bbR^d)^k$, let $\match^{\pi}_{\gamma}(X,Y) = 1$ iff for every $i \in [k]$ it holds that $$\norm{\px_i - \py_{\pi(i)}} < \gamma\cdot \min_{j\neq i}\set{\min \set{\norm{\px_i - \py_{\pi(j)}},\: \norm{\px_j - \py_{\pi(i)}}}}.$$ We let $\match_{\gamma}(X,Y) = 1$ iff there exists a permutation $\pi$ such that $\match^{\pi}_{\gamma}(X,Y) = 1$ (otherwise, $\match_{\gamma}(X,Y) = 0$).
\end{definition}

Namely, for small constant $\gamma$, $\match_{\gamma}(X,Y)=1$ means that there is a clear one-to-one matching between the points in $X = (\px_1,\ldots,\px_k)$ and the points in $Y=(\py_1,\ldots,\py_k)$ (see \cref{match_tuples} for an illustration).

\begin{figure*}
	\centerline{\includegraphics[scale=.27]{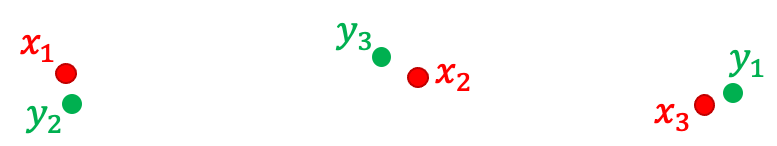}}
	\caption{A graphical illustration of tuples $X = (\px_1,\px_2,\px_3)$ and $Y=(\py_1,\py_2,\py_3)$ with $\match_{1/7}(X,Y)=1$.}
	\label{match_tuples}
\end{figure*}

\remove{
\begin{definition}[Predicate $\match_{\gamma}$]
	For $\gamma \in [0,1)$ and  $X=(\px_1,\ldots,\px_k),Y=(\py_1,\ldots,\py_k) \in (\bbR^d)^k$, let $\match_{\gamma}(X,Y) = 1$ iff there exists a permutation $\pi \colon [k] \rightarrow [k]$ such that for every $i \in [k]$ and $j \neq \pi(i)$ it holds that $\norm{\px_i - \py_{\pi(i)}} < \gamma\cdot \min_{j \neq \pi(i)}\set{\norm{\px_i - \py_j}}$ and $\norm{\px_i - \py_{\pi(i)}} < \gamma\cdot \min_{j \neq i}\set{\norm{\px_j - \py_{\pi(i)}}}$ (otherwise, $\match_{\gamma}(X,Y) = 0$).
\end{definition}
}

In the following we prove key properties of this predicate. 
We start by stating an approximate triangle inequality with respect to this predicate for the case of the identity permutation.

\begin{claim}\label{claim:unord_tuples_main_calc}
	Let $X,Y,Z \in (\bbR^d)^k$ such that $\match_{\gamma}^{\id}(X,Z) = \match_{\gamma}^{\id}(Y,Z) = 1$, where $\id$ is the identity permutation. Then $\match_{2\gamma/(1-\gamma)}^{\id}(X,Y) = 1$.
\end{claim}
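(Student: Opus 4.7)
My approach is to fix $i$ and $j\neq i$ and bound both $\norm{\px_i-\py_j}$ and $\norm{\px_j-\py_i}$ from below in terms of $\norm{\px_i-\py_i}$. Set $a=\norm{\px_i-\pz_i}$, $b=\norm{\py_i-\pz_i}$, and $c=\norm{\py_j-\pz_j}$. The triangle inequality via $\pz_i$ yields $\norm{\px_i-\py_i}\leq a+b$, so with $\gamma'=2\gamma/(1-\gamma)$ it suffices to show
\[
\norm{\px_i-\py_j},\ \norm{\px_j-\py_i}\ >\ \frac{(a+b)(1-\gamma)}{2\gamma}\ =\ \frac{a+b}{\gamma'}.
\]

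The next step is to extract three lower bounds from the hypotheses. From $\match_{\gamma}^{\id}(X,Z)=1$ applied at coordinate $i$ (with witness $j$), I get $\norm{\px_i-\pz_j}>a/\gamma$. From $\match_{\gamma}^{\id}(Y,Z)=1$ at coordinate $i$, I get $\norm{\py_j-\pz_i}>b/\gamma$; and from the same predicate at coordinate $j$, $\norm{\py_j-\pz_i}>c/\gamma$. Hence $\norm{\py_j-\pz_i}>\max(b,c)/\gamma$. I then lower bound $\norm{\px_i-\py_j}$ in two ways by the triangle inequality: via the vertex $\pz_j$,
\[
\norm{\px_i-\py_j}\ \geq\ \norm{\px_i-\pz_j}-\norm{\pz_j-\py_j}\ >\ a/\gamma - c,
\]
and via the vertex $\pz_i$,
\[
\norm{\px_i-\py_j}\ \geq\ \norm{\py_j-\pz_i}-\norm{\pz_i-\px_i}\ >\ \max(b,c)/\gamma - a.
\]

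Adding these two inequalities gives $2\,\norm{\px_i-\py_j}>(a+\max(b,c))/\gamma-(a+c)$. A short case split finishes the computation: if $b\geq c$, the right-hand side is $\geq(a+b)/\gamma-(a+b)=(a+b)(1-\gamma)/\gamma$; if $b<c$, it equals $(a+c)(1-\gamma)/\gamma\geq(a+b)(1-\gamma)/\gamma$. Either way, $\norm{\px_i-\py_j}>(a+b)(1-\gamma)/(2\gamma)\geq\norm{\px_i-\py_i}/\gamma'$. The bound on $\norm{\px_j-\py_i}$ follows by the symmetric argument obtained by swapping the roles of $X$ and $Y$ (so the third quantity becomes $\norm{\px_j-\pz_j}$ and one invokes $\norm{\px_j-\pz_i}>\max(a,\norm{\px_j-\pz_j})/\gamma$ together with $\norm{\py_i-\pz_j}>b/\gamma$). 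Combining both bounds, the condition defining $\match_{\gamma'}^{\id}(X,Y)=1$ holds at coordinate $i$; since $i$ was arbitrary, the claim follows.

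The only technical subtlety is that neither of the two triangle-inequality bounds on $\norm{\px_i-\py_j}$ is strong enough on its own — summing them is essential so that the ``$-a-c$'' cross terms balance against the ``$/\gamma$'' gains — and that the auxiliary constant $c=\norm{\py_j-\pz_j}$, which is unrelated a priori to the quantities in $\norm{\px_i-\py_i}$, must be absorbed through the case split $b\gtrless c$ using both of the lower bounds $\norm{\py_j-\pz_i}>b/\gamma$ and $\norm{\py_j-\pz_i}>c/\gamma$.
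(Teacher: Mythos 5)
Your proof is correct, and while it draws on exactly the same three $\match$-hypotheses as the paper's argument (the $X$--$Z$ condition at coordinate $i$ with witness $j$, and the $Y$--$Z$ conditions at coordinates $i$ and $j$ with witnesses $j$ and $i$), the algebraic route is genuinely different. The paper first writes $\norm{\px_i-\py_i}<\gamma\bigl(\norm{\px_i-\pz_j}+\norm{\py_j-\pz_i}\bigr)\le\gamma\bigl(2\norm{\px_i-\py_j}+a+c\bigr)$ and then controls the leftover $a+c$ by a \emph{self-referential} pair of inequalities whose sum rearranges to $a+c<\tfrac{2\gamma}{1-\gamma}\norm{\px_i-\py_j}$, after which it substitutes back. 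You instead bound $\norm{\px_i-\py_j}$ from below directly by two triangle-inequality paths (one through $\pz_j$, one through $\pz_i$), add them so the parasitic $a$ and $c$ terms are offset by the $1/\gamma$-amplified ones, and then dispose of the mismatch between $b$ and $c$ by a short case split. Both routes land on the same constant $\tfrac{2\gamma}{1-\gamma}$; yours is somewhat more elementary in that it avoids the implicit (rearrangement) step, at the modest cost of the $b\gtrless c$ case analysis, and the strict inequalities propagate cleanly. The bound on $\norm{\px_j-\py_i}$ then follows by the same $X\leftrightarrow Y$ symmetry the paper invokes, so the two proofs are interchangeable.
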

\begin{proof}
	Fix $i \in [k]$ and $j \in [k] \setminus \set{i}$, and note that
	\begin{enumerate}
		\item $\match_{\gamma}^{\id}(X,Z) = 1 \implies \norm{\px_i - \pz_i} < \gamma\cdot \min\set{\norm{\px_i - \pz_j},\norm{\px_j - \pz_i}}$.
		
		\item $\match_{\gamma}^{\id}(Y,Z) = 1 \implies  \norm{\py_i - \pz_i} < \gamma\cdot \min\set{\norm{\py_i - \pz_j},\norm{\py_j - \pz_i}}$. 
	\end{enumerate}
	We prove the claim by showing that $\norm{\px_i-\py_i} < \frac{2\gamma}{1-\gamma}  \norm{\px_i - \py_j}$ (and by symmetry between $X$ and $Y$ we also deduce that $\norm{\px_i-\py_i} < \frac{2\gamma}{1-\gamma}  \norm{\px_j - \py_i}$). Using the triangle inequality multiple times, it holds that
	\begin{align}\label{eq:friend-to-comp-1}
		\norm{\px_i-\py_i} \leq \norm{\px_i-\pz_i} + \norm{\py_i-\pz_i} 
		&< \gamma(\norm{\px_i-\pz_j} + \norm{\py_j-\pz_i})\nonumber\\
		&\leq \gamma(2\norm{\px_i-\py_j} + \norm{\px_i-\pz_i} +  \norm{\py_j-\pz_j}).
	\end{align}
	We next bound $\norm{\px_i-\pz_i} + \norm{\py_j-\pz_j}$ as a function of $\norm{\px_i-\py_j}$. Observe that
	\begin{align*}
		\norm{\px_i-\pz_i} < \gamma \norm{\px_i - \pz_j} \leq \gamma(\norm{\px_i - \py_j} + \norm{\py_j-\pz_j})
	\end{align*}
	and
	\begin{align*}
		\norm{\py_j-\pz_j} < \gamma \norm{\py_j - \pz_i} \leq \gamma(\norm{\px_i - \py_j} + \norm{\px_i-\pz_i}).
	\end{align*}
	By summing the above two inequalities we deduce that
	\begin{align}\label{eq:friend-to-comp-2}
		\norm{\px_i-\pz_i} + \norm{\py_j-\pz_j} < \frac{2\gamma}{1-\gamma} \norm{\px_i - \py_j}.
	\end{align}
	We now conclude by \cref{eq:friend-to-comp-1,eq:friend-to-comp-2} that
	\begin{align*}
		\norm{\px_i-\py_i} < \paren{2\gamma + \frac{2\gamma^2}{1-\gamma}}\norm{\px_i-\py_j} = \frac{2\gamma}{1-\gamma} \norm{\px_i-\py_j}.
	\end{align*}
\end{proof}

We next extend \cref{claim:unord_tuples_main_calc} for arbitrary permutations.

\begin{claim}\label{claim:unord_tuples_gen}
	Let $X,Y,Z \in (\bbR^d)^k$ such that $\match_{\gamma}^{\pi_1}(X,Z) = \match_{\gamma}^{\pi_2}(Y,Z) = 1$. Then\\ $\match_{2\gamma/(1-\gamma)}^{\pi_2 \circ \pi_1^{-1}}(X,Y) = 1$.
\end{claim}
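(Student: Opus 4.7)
The plan is to reduce Claim~\ref{claim:unord_tuples_gen} to its identity-permutation special case, Claim~\ref{claim:unord_tuples_main_calc}, by relabeling the coordinates of $X$ and $Y$ so that each one matches $Z$ under the identity permutation.

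First, I will introduce reordered tuples $\tilde{X},\tilde{Y} \in (\bbR^d)^k$ defined by $\tilde{\px}_i \eqdef \px_{\pi_1^{-1}(i)}$ and $\tilde{\py}_i \eqdef \py_{\pi_2^{-1}(i)}$, while leaving $Z$ unchanged. By a direct index substitution (setting $\ell=\pi_1^{-1}(i)$), the hypothesis $\match_{\gamma}^{\pi_1}(X,Z)=1$ becomes equivalent to $\match_{\gamma}^{\id}(\tilde{X},Z)=1$: the inequality $\norm{\tilde{\px}_i - \pz_i} < \gamma \min_{j\neq i}\min\set{\norm{\tilde{\px}_i - \pz_j},\: \norm{\tilde{\px}_j - \pz_i}}$ reduces exactly to the defining condition of $\match_{\gamma}^{\pi_1}(X,Z)$ after reindexing, and analogously for $\tilde{Y}$ and $\pi_2$.

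Next, I will apply Claim~\ref{claim:unord_tuples_main_calc} to the triple $(\tilde{X},\tilde{Y},Z)$, obtaining $\match_{2\gamma/(1-\gamma)}^{\id}(\tilde{X},\tilde{Y})=1$. Finally, I will translate this conclusion back to $X$ and $Y$: expanding the identity-match inequality for $\tilde{X},\tilde{Y}$, substituting the definitions $\tilde{\px}_i = \px_{\pi_1^{-1}(i)}$ and $\tilde{\py}_i = \py_{\pi_2^{-1}(i)}$, and reindexing with $\ell = \pi_1^{-1}(i)$ (so that $\pi_2^{-1}(i) = \pi_2^{-1}(\pi_1(\ell))$), yields precisely the defining inequality for $\match_{2\gamma/(1-\gamma)}^{\pi_2 \circ \pi_1^{-1}}(X,Y)=1$, read with the paper's composition convention.

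The main obstacle is purely bookkeeping: I must verify that both arguments of the inner $\min$ in the definition of $\match^{\pi}$, namely the ``swap $j$'' term $\norm{\px_i - \py_{\pi(j)}}$ and the ``swap $i$'' term $\norm{\px_j - \py_{\pi(i)}}$, transform consistently when $X$ and $Y$ are relabeled by two different permutations. The key observation that makes this go through is that $j \mapsto \pi_1^{-1}(j)$ bijects $[k]\setminus\set{i}$ with $[k]\setminus\set{\ell}$ (and likewise for $\pi_2^{-1}$), so the ``min over $j\neq i$'' transfers to the ``min over $\ell'\neq \ell$'' correctly. No analytic content beyond Claim~\ref{claim:unord_tuples_main_calc} is needed; the work is entirely in confirming that the three reindexings compose to the permutation stated in the claim.
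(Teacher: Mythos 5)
Your proposal is correct and follows exactly the paper's own proof: the paper also defines $X'_i = \px_{\pi_1^{-1}(i)}$, $Y'_i = \py_{\pi_2^{-1}(i)}$, observes that the two hypotheses become identity-permutation matches with $Z$, invokes Claim~\ref{claim:unord_tuples_main_calc}, and translates back. You simply spell out the reindexing bookkeeping that the paper leaves implicit, including the correct observation that the literal permutation emerging from the substitution $\ell = \pi_1^{-1}(i)$ is the one sending $\ell \mapsto \pi_2^{-1}(\pi_1(\ell))$ (which the paper, and you, record as ``$\pi_2 \circ \pi_1^{-1}$'' under the paper's composition convention).
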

\begin{proof}
	Let $X' = (\px_{\pi_1^{-1}(i)})_{i=1}^k$ and $Y' = (\py_{\pi_2^{-1}(i)})_{i=1}^k$. Then it holds that $\match^{\id}_{\gamma}(X',Z) = \match^{\id}_{\gamma}(Y',Z) = 1$, where $\id$ is the identity permutation. By \cref{claim:unord_tuples_main_calc} we deduce that  $\match_{2\gamma/(1-\gamma)}^{\id}(X',Y') = 1$, yielding that $\match_{2\gamma/(1-\gamma)}^{\pi_2 \circ \pi_1^{-1}}(X,Y) = 1$.
\end{proof}
	

In the following we state the main claim of this section. The claim captures the matching quality loss when moving from the weaker notion of friendly database (in which for every two tuples, there exists a tuple that matches both of them) into the stronger notion of a complete database (in which there is a match between every pair of tuples).

\begin{claim}\label{claim:match-friendly-to-complete}
	If $\cD \in  ((\bbR^d)^k)^*$ is $\match_{\gamma}$-friendly, then it is $\match_{2\gamma/(1-\gamma)}$-complete.
\end{claim}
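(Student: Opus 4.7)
The plan is to unpack the two definitions and observe that the statement reduces almost immediately to Claim~\ref{claim:unord_tuples_gen}. Recall that $\cD$ being $\match_\gamma$-friendly means that for every pair $X,Y \in \cD$ there exists some $Z \in (\bbR^d)^k$ (not necessarily in $\cD$) with $\match_\gamma(X,Z) = \match_\gamma(Y,Z) = 1$, while $\match_{\gamma'}$-complete means that $\match_{\gamma'}(X,Y) = 1$ for every pair $X,Y \in \cD$. So we must show: for every $X,Y \in \cD$, there is a single permutation witnessing $\match_{2\gamma/(1-\gamma)}(X,Y) = 1$.

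First I would fix an arbitrary pair $X,Y \in \cD$ and apply friendliness to obtain a witness $Z$. By the definition of $\match_\gamma$, there exist permutations $\pi_1, \pi_2 \colon [k] \to [k]$ with $\match_\gamma^{\pi_1}(X,Z) = 1$ and $\match_\gamma^{\pi_2}(Y,Z) = 1$. Then I would invoke Claim~\ref{claim:unord_tuples_gen} directly on $X,Y,Z$ with these permutations, which yields $\match_{2\gamma/(1-\gamma)}^{\pi_2 \circ \pi_1^{-1}}(X,Y) = 1$. Since a single permutation witnessing the matching suffices to have $\match_{2\gamma/(1-\gamma)}(X,Y) = 1$, we obtain the desired conclusion, and since $X,Y$ were arbitrary this gives $\match_{2\gamma/(1-\gamma)}$-completeness of $\cD$.

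There is no real obstacle here: all the geometric content was packaged into Claims~\ref{claim:unord_tuples_main_calc} and~\ref{claim:unord_tuples_gen}, whose proofs already handle the triangle-inequality-style manipulation needed to combine two independent matchings through a common ``hub'' $Z$ into a single matching between $X$ and $Y$, with the quality degrading from $\gamma$ to $2\gamma/(1-\gamma)$. Thus this final claim is essentially a bookkeeping corollary: friendliness supplies the hub $Z$ and the two permutations $\pi_1, \pi_2$, and Claim~\ref{claim:unord_tuples_gen} composes them via $\pi_2 \circ \pi_1^{-1}$ to produce the single permutation witnessing completeness.
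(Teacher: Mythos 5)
Your proposal is correct and follows exactly the paper's own argument: obtain the hub $Z$ from friendliness, extract permutations $\pi_1,\pi_2$ from $\match_\gamma(X,Z)=\match_\gamma(Y,Z)=1$, and apply \cref{claim:unord_tuples_gen} to compose them into $\pi_2\circ\pi_1^{-1}$, witnessing $\match_{2\gamma/(1-\gamma)}(X,Y)=1$. The paper states this more tersely as an immediate corollary of \cref{claim:unord_tuples_gen}, but the content is identical.
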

\begin{proof}
	Immediately follows by \cref{claim:unord_tuples_gen} since the $\match_{\gamma}$-friendly assumption implies that for every $X,Y \in \cD$ there exists $Z \in (\bbR^d)^k$ such that $\match_{\gamma}(X,Z) = \match_{\gamma}(Y,Z) = 1$.
\end{proof}

\subsubsection{From Unordered to Ordered Tuples}\label{sec:applications:unordered-to-ordered}

The main component of our clustering algorithm is to reorder the unordered tuples in a way that is not influenced by adding or removing a single tuple. Note that without privacy, such a reordering can be easily done by picking an arbitrary tuple $X$, and reorder every tuple $Y$ according to it, as describe in the following definition.

\begin{definition}\label{def:ord}
	For $X = (\px_1,\ldots,\px_k), Y = (\py_1,\ldots,\py_k) \in (\bbR^d)^k$ with $\match_{1}(X,Y) = 1$, define $\ord_{X}(Y) \eqdef (\py_{\pi(1)},\ldots,\py_{\pi(k)})$, where $\pi \colon [k] \rightarrow [k]$ is the (unique) permutation such that $\match_{1}^{\pi}(X,Y) = 1$ (i.e., $\forall i \in [k]: \pi(i) = \argmin_{j \in [k]}\set{\norm{\px_i - \py_j}}$).
\end{definition}

The following claim implies that picking one of the tuples and ordering the others according to it, is actually safe when the database is friendly. 
In other words, the claim states that for a $\match_{1/7}$-friendly database, every two tuples must induce the same reordering of the other tuples (up to a permutation).

\begin{claim}\label{claim:same-ordering}
For any $\match_{1/7}$-friendly $\cS \in ((\bbR^d)^k)^*$ and any $X,Y \in \cS$,  there exists a permutation $\pi \colon [k] \rightarrow [k]$ (depends only on $X,Y$) such that for all $Z \in \cS$, the tuples $\tZ = \ord_X(Z)$ and $\tZ' = \ord_Y(Z)$ satisfy for all $i \in [k]$ that $\tZ_{\pi(i)} = \tZ'_{i}$.
\end{claim}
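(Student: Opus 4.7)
My strategy is to upgrade the hypothesis of $\match_{1/7}$-friendliness to a strong completeness property, use this to make the ``best matching'' permutation between any two tuples in $\cS$ uniquely defined, and then extract an approximate triangle identity that exhibits the desired $Z$-independence.

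First, I would apply \cref{claim:match-friendly-to-complete} with $\gamma = 1/7$ to conclude that $\cS$ is $\match_{1/3}$-complete (since $2 \cdot (1/7)/(1 - 1/7) = 1/3$), and hence in particular $\match_1$-complete, because $\match_\gamma^\pi$ is visibly monotone in $\gamma$. So for every pair $A, B \in \cS$ with $A = (\pa_1,\ldots,\pa_k)$ and $B = (\pb_1,\ldots,\pb_k)$, there is a permutation $\pi_{AB}$ with $\match_1^{\pi_{AB}}(A, B) = 1$. The strict inequality built into $\match_1^{\pi}$ forces, for each $i$, $\pi_{AB}(i) = \argmin_j \norm{\pa_i - \pb_j}$ with a \emph{unique} minimizer, so $\pi_{AB}$ is uniquely determined by $A$ and $B$ and coincides with the permutation used in \cref{def:ord} to define $\ord_A(B)$.

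The core step is an approximate triangle identity for these unique matching permutations. Fix any $Z \in \cS$ and apply \cref{claim:unord_tuples_gen} with $\gamma = 1/3$ to the matchings $\pi_{XZ}$ and $\pi_{YZ}$. Because $2\gamma/(1 - \gamma) = 1$ at $\gamma = 1/3$, the conclusion is that $\match_1^{\sigma}(X, Y) = 1$ for a permutation $\sigma$ built canonically from $\pi_{XZ}$ and $\pi_{YZ}$ (the composition dictated by that claim). By uniqueness of the $\match_1$ matching, $\sigma$ must equal $\pi_{XY}$, which yields an algebraic identity of the shape $\pi_{XZ} = \pi_{YZ} \circ \pi_{XY}$. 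The essential point is that $\pi_{XY}$ is $Z$-independent.

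Finally, I would set $\pi \eqdef \pi_{XY}^{-1}$ and verify the claim directly. Unwinding \cref{def:ord} gives $\tZ_{\pi(i)} = \pz_{\pi_{XZ}(\pi(i))}$ and $\tZ'_i = \pz_{\pi_{YZ}(i)}$; substituting the triangle identity yields $\pi_{XZ}(\pi(i)) = \pi_{YZ}(\pi_{XY}(\pi_{XY}^{-1}(i))) = \pi_{YZ}(i)$, so $\tZ_{\pi(i)} = \tZ'_i$. The coordinates of $Z$ appearing in a $\match_1$-match to any fixed $X \in \cS$ are automatically distinct (again by the strict inequalities), so equality of points forces the needed equality of indices. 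The main bookkeeping hazard I anticipate is tracking the composition order in \cref{claim:unord_tuples_gen}, which determines whether one should take $\pi = \pi_{XY}$ or $\pi = \pi_{XY}^{-1}$; a cyclic-shift sanity check on three three-tuples is enough to pin down the correct orientation.
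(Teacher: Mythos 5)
Your proof is correct and follows essentially the same route as the paper's: upgrade $\match_{1/7}$-friendliness to $\match_{1/3}$-completeness via \cref{claim:match-friendly-to-complete}, use the strict inequalities in $\match_1$ to pin down the unique matching permutations $\pi_{AB}$, and apply \cref{claim:unord_tuples_gen} at $\gamma=1/3$ to obtain a $Z$-independent triangle identity for these permutations. Your flagged bookkeeping hazard (whether the answer is $\pi_{XY}$ or $\pi_{XY}^{-1}$) is a real one — the paper's own statement of \cref{claim:unord_tuples_gen} appears to have the composition reversed, and the paper simply sets $\pi = \pi_3$ without the final verification you carry out — but since the claim only asserts existence, either resolution suffices.
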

\begin{proof}
	Fix $X,Y,Z \in \cS$. By \cref{claim:match-friendly-to-complete} it holds that $\cD$ is $\match_{1/3}$-complete. In particular, there exist permutations $\pi_1,\pi_2,\pi_3$ such that $\match_{1/3}^{\pi_1}(X,Z)= \match_{1/3}^{\pi_2}(Y,Z) =  \match_{1/3}^{\pi_3}(X,Y) = 1$. This implies that $\ord_X(Z) = (Z_{\pi_1(i)})_{i=1}^k$ and $\ord_Y(Z) = (Z_{\pi_2(i)})_{i=1}^k$. By applying \cref{claim:unord_tuples_gen} on the fact that $\match_{1/3}^{\pi_1}(X,Z)= \match_{1/3}^{\pi_2}(Y,Z) = 1$, we obtain that $\match_{1}^{\pi_2\circ \pi_1^{-1}}(X,Y) = 1$. Since it also holds that $\match_{1/3}^{\pi_3}(X,Y) = 1$, we conclude that $\pi_3 = \pi_2\circ \pi_1^{-1}$, and the claim follows by setting $\pi = \pi_3$ (which only depends on $X,Y$).
\end{proof}

We now use \cref{claim:same-ordering} in order to construct an $\match_{1/7}$-friendly $\zCDP$ algorithm for unordered tuples that  applies a $\zCDP$ algorithm for ordered tuples (i.e., it
reduces the unordered tuples problem to the ordered ones).

\begin{algorithm}[$\FriendlyReorder$]\label{alg:FriendlyReorder}
	
	\item Input: A database $\cD = (X^1,\ldots,X^n) \in ((\bbR^d)^k)^*$.
	
	\item Operation:~
	\begin{enumerate}
		\item If $\cD$ is empty, output $\sA(D)$. Otherwise:
		
		\item Sample a uniformly random permutation $\pi \colon [k] \rightarrow [k]$.\label{step:perm}
		
		\item For $i \in [n]$ let $(\py_1^i,\ldots,\py_k^i) = \ord_{X^1}(X^i)$ and let $\tY^i =(\py_{\pi(1)}^i,\ldots,\py_{\pi(k)}^i)$.
		
		\item Output $\tcD = (\tY^1,\ldots,\tY^n)$.
	\end{enumerate}
	
\end{algorithm}

\begin{claim}[Privacy of $\FriendlyReorder$]\label{claim:FriendlyReorder:privacy}
	If $\sA$ (algorithm for ordered tuples) is $(\rho,\delta)$-$\zCDP$ then $\sB(\cD) \eqdef \sA(\FriendlyReorder(\cD))$ is $\match_{1/7}$-friendly $(\rho,\delta)$-$\zCDP$.
\end{claim}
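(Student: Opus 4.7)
Fix neighboring databases $\cD,\cD'$ whose union is $\match_{1/7}$-friendly. Without loss of generality write $\cD=(X^1,\dots,X^n)$ and $\cD'=\cD_{-j}$ for some $j\in[n]$; the edge cases where one database is empty reduce immediately to applying the $(\rho,\delta)$-$\zCDP$ guarantee of $\sA$ to an empty vs.\ singleton database, so assume $n\ge 2$. I will show that one can couple the two executions of $\FriendlyReorder$ so that their outputs are a \emph{neighboring} pair of databases of ordered tuples; the claim then follows from the $\zCDP$ guarantee of $\sA$.

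The easy case is $j\ge 2$. Then both executions use the same anchor $X^1$, so $\ord_{X^1}(X^i)$ agrees between the two for every $i\neq j$. Coupling the two executions to draw the same random permutation $\pi$ at \stepref{step:perm}, the sequences $(\tY^i)_{i\neq j}$ are identical in both runs. Hence $\FriendlyReorder(\cD)$ and $\FriendlyReorder(\cD')$ differ in exactly the tuple $\tY^j$, making them neighboring databases, and $\sA$'s $\zCDP$ guarantee concludes.

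The main case (and the only nontrivial one) is $j=1$: the anchor changes from $X^1$ to $X^2$, so a priori the two runs reorder each $X^i$ differently. This is where the $\match_{1/7}$-friendliness is essential. Apply \cref{claim:same-ordering} with $\cS=\cD\cup\cD'$, $X=X^1$, $Y=X^2$ to obtain a permutation $\sigma$, depending only on $X^1,X^2$, such that for every $Z\in\cS$ (in particular every $X^i$ with $i\ge 2$), $\ord_{X^1}(Z)_{\sigma(m)}=\ord_{X^2}(Z)_m$ for all $m\in[k]$. Let $\pi$ be the random permutation drawn in $\FriendlyReorder(\cD)$ and couple $\FriendlyReorder(\cD')$ to use $\pi'\eqdef\sigma^{-1}\circ\pi$; since $\pi$ is uniform, so is $\pi'$, giving the correct marginal. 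Unfolding the definitions of $\tY^i$ and $\tY'^i$, the identity $\tZ_m=\tZ'_{\sigma^{-1}(m)}$ yields $\tY^i=\tY'^i$ for every $i\ge 2$. Thus, under the coupling, $\FriendlyReorder(\cD')$ is exactly $\FriendlyReorder(\cD)$ with its first entry $\tY^1$ deleted, so they are neighboring.

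With the coupling in hand, the $(\rho,\delta)$-$\zCDP$ property of $\sA$ gives, conditionally on every outcome of $\pi$, that $\sA(\FriendlyReorder(\cD))\mid\pi \approx_{\rho,\delta} \sA(\FriendlyReorder(\cD'))\mid\pi$; averaging over the uniform $\pi$ using joint convexity of R\'enyi divergence (and the standard fact that $(\rho,\delta)$-indistinguishability is preserved under taking mixtures) yields the unconditional bound $\sB(\cD)\approx_{\rho,\delta}\sB(\cD')$. The main obstacle is precisely the $j=1$ case: making the anchor change invisible by exploiting \cref{claim:same-ordering} to realign the coordinates via the random permutation $\pi$.
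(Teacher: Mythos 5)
Your proof is correct and follows the same strategy as the paper's: case split on whether the removed index is the anchor $X^1$, handle $j\geq 2$ by a shared permutation, invoke \cref{claim:same-ordering} for $j=1$ to obtain the realigning permutation $\sigma$ and pair each $\pi$ with a unique $\pi'$ so the two $\FriendlyReorder$ outputs are neighboring, then average over the uniform $\pi$. A small bonus: your formula $\pi'=\sigma^{-1}\circ\pi$ is the correct one (it makes $\tY'^i_m=\py^i_{\sigma(\pi'(m))}=\py^i_{\pi(m)}=\tY^i_m$), whereas the paper writes $\pi'=\pi\circ\sigma^{-1}$, which appears to be a transposition typo.
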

\begin{proof}
	Fix neighboring databases $\cD = (X^1,\ldots,X^n)\in ((\bbR^d)^k)^*$ and $\cD' = \cD_{-j}$ such that $\cD \cup \cD'$ is $\match_{1/7}$-friendly.  For a permutation $\pi \colon [k] \rightarrow [k]$ let $\FriendlyReorder_{\pi}$ be algorithm $\FriendlyReorder$ where the permutation chosen in \stepref{step:perm} is set to $\pi$ (and not chosen uniformly at random). We prove the claim by showing that for every permutation $\pi$ there exists a permutation $\pi'$ such that $\sA(\FriendlyReorder_{\pi}(\cD)) \approx_{\rho,\delta} \sA(\FriendlyReorder_{\pi'}(\cD'))$.
	
	If $j \neq1$ (i.e., the first tuple in $\cD$ and $\cD'$ is $X^1$), then for every permutation $\pi$, the resulting database $\tcD$ in $\FriendlyReorder_{\pi}(\cD)$ and the corresponding database  $\tcD'$ in $\FriendlyReorder_{\pi}(\cD')$ are neighboring (in particular, $\tcD' = \tcD_{-j}$), and we deduce that the outputs (after applying $\sA$) are $(\rho,\delta)$-indistinguishable since $\sA$ is $(\rho,\delta)$-$\zCDP$.
	
	Otherwise, $\cD' = (X^2,\ldots,X^n)$. Since $\cD$ is $\match_{1/7}$-friendly, \cref{claim:same-ordering} implies that there exists a permutation $\sigma \colon [k] \rightarrow [k]$ such that for all $i \in [n]\setminus \set{1}$, the tuple $(\py_1^i,\ldots,\py_k^i) = \ord_{X^1}(X^i)$ satisfies $(\py_{\sigma(1)}^i,\ldots,\py_{\sigma(k)}^i) = \ord_{X^2}(X^i)$. In the following, fix a permutation $\pi \colon [k] \rightarrow [k]$, and define $\pi' = \pi \circ \sigma^{-1}$. Then it holds that the resulting database $\tcD$ in $\FriendlyReorder_{\pi}(\cD)$ and the corresponding database  $\tcD'$ in $\FriendlyReorder_{\pi'}(\cD')$ are neighboring (in particular, $\tcD' = \tcD_{-1}$), and conclude that $\sA(\FriendlyReorder_{\pi}(\cD)) \approx_{(\rho,\delta)} \sA(\FriendlyReorder_{\pi'}(\cD'))$.
\end{proof}

\subsubsection{Ordered $k$-Tuple Clustering}\label{sec:applications:ordered-tuples}
In this section we are given a database $\cD = (X^1,\ldots,X^n) \in ((\bbR^d)^k)^*$ 
where each $X^i = (\px_1^i,\ldots,\px_k^i)$ is an \emph{ordered} $k$-tuple, and the goal is to estimate the averages in each coordinates of the tuples. That is, to estimate $(\Avg(\cD^1),\ldots,\Avg(\cD^k))$ where $\cD^j = (\px^i_j)_{i=1}^n$.
We present an algorithm that given an (utility) advice of values $r_1,\ldots,r_k \geq 0$ such that for all $j \in [k]$ and $\px,\py \in \cD^j$ it holds that $\norm{\px-\py} \leq r_j$, it estimate each $\Avg(\cD^j)$ up to an additive error of $\tilde{O}\paren{\frac{r_j}{n}\cdot \sqrt{\frac{d}{\rho}}}$. The diameters advice are computed in a private preprocessing step. 

Note that this problem can be trivially solved by applying our average algorithm (\cref{sec:averaging}) on each set $\cD^j$. This however, requires $k$ invocations of $\FriendlyCore$ (one per average), which requires $n = \Omega\paren{k \log(1/\min\set{\beta,\delta})/\rho}$ (i.e., $n$ is linearly dependent in $k$).
In this section we show how to solve it using a single invocation of $\FriendlyCore$ with the following extension of the predicate $\distt_r$ for pairs over $\bbR^d$ to $\distt_{r_1,\ldots,r_k}$ for pairs over $(\bbR^d)^k$ .

\begin{definition}[Predicate $\distt_{r_1,\ldots,r_k}$]
	For $r_1,\ldots,r_k$ and $X= (\px_1,\ldots,\px_k),Y = (\py_1,\ldots,\py_k) \in (\bbR^d)^k$, we let $\distt_{r_1,\ldots,r_k}(X,Y) = \prod_{i=1}^k \distt_{r_i}(\px_i,\py_i)$.
\end{definition}

\begin{algorithm}[$\AlgFriendlyAvgOrdTup$]\label{alg:AlgFriendlyAvgOrdTup}
	
	\item Input: A database $\cD = (X^i = (\px_1^i,\ldots,\px_k^i))_{i=1}^n$ of \emph{ordered} tuples, privacy parameters $\rho, \delta > 0$, and diameters $r_1,\ldots,r_k \geq 0$.
	
	\item Operation:~
	\begin{enumerate}
		
		\item Let $\rho_1 = 0.1 (1-\delta) \rho$ and $\rho_2 = 0.9 \rho$.\label{step:friendlyAvg:split_rho:OrdTup}
		
		\item Compute $\hat{n} = n - \sqrt{\frac{\ln(1/\delta)}{\rho_1}} - 1 + \cN(0,\frac1{2 \rho_1})$, where $n = \size{\cD}$.
		
		\item If $n=0$ or $\hat{n} \leq 0$, output $\perp$ and abort.\label{step:friendlyAvg:not_empty:OrdTup}
		
		\item Otherwise, for $j \in [k]$:
		
		\begin{itemize}
			\item Let  $\cD^j = (\px_j^i)_{i=1}^n$.
			
			\item Compute $\hpa^j = \Avg(\cD^j) + \cN(0,\sigma^2\cdot \bbI_{d \times d}),\:$ for $\sigma = 
			\frac{2 r_j}{n} \cdot \sqrt{\frac{k}{2 \rho_2}}$. 
		\end{itemize}
		
		\item Output $(\hpa^1,\ldots,\hpa^n)$.
	\end{enumerate}
	
\end{algorithm}

\begin{claim}[Privacy of $\AlgFriendlyAvgOrdTup$]\label{claim:FriendlyAvgOrdTup:privacy}
	Algorithm $\AlgFriendlyAvgOrdTup(\cdot,\rho,\delta,r_1,\ldots,r_k)$ is $\distt_{r_1,\ldots,r_k}$-friendly $(\rho,\delta)$-\zCDP.
\end{claim}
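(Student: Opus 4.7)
The plan is to mirror the proof of \cref{claim:FriendlyAvg:privacy}, extending it from a single average to the $k$ coordinate-wise averages via composition. I would fix two $\distt_{r_1,\ldots,r_k}$-friendly neighboring databases $\cD=(X^1,\ldots,X^n)$ and $\cD'=\cD_{-j_0}$ (for some $j_0 \in [n]$, and $n' = n-1$), and consider the random variables $\widehat{N},O$ and $\widehat{N}',O'$ arising from independent executions on $\cD$ and $\cD'$. The edge case $n'=0$ is identical to that of \cref{claim:FriendlyAvg:privacy}: by \cref{fact:one-gaus-concent} the $\perp$ branch is taken with probability $\geq 1-\delta$, giving $O \approx_{0,\delta} O'$.

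For the main case $n'\geq 1$, I would again use \cref{fact:one-gaus-concent} to argue $\pr{\widehat{N}\leq n},\pr{\widehat{N}'\leq n} \geq 1-\delta$, and reduce by \cref{fact:indist_under_event} to proving $O|_{\widehat N \leq n} \approx_{\rho} O'|_{\widehat N' \leq n}$. The value of $\hn$ itself costs $\rho_1$-indistinguishability by the Gaussian mechanism (\cref{fact:Gaus}) applied to $n$ (sensitivity $1$), and the conditioning on $\{\hn \leq n\}$ inflates this to $\rho_1/(1-\delta)$ via \cref{fact:zCDP-under-conditioning}; this is exactly why $\rho_1$ was chosen as $0.1(1-\delta)\rho$ so that $\rho_1/(1-\delta) = 0.1\rho$. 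By composition (\cref{fact:composition}), it then suffices to show, for every fixed $\hn \in (0,n]$, that the $k$-tuple of noisy averages satisfies $(\hpa^1,\ldots,\hpa^k) \approx_{\rho_2} (\hpa'^{1},\ldots,\hpa'^{k})$.

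The main (and only truly new) step is bounding the per-coordinate sensitivity. Here I would observe that the $\distt_{r_1,\ldots,r_k}$-friendliness of $\cD \cup \cD'$ means that for every pair of tuples in it there is a witness $Z=(\pz_1,\ldots,\pz_k)$ whose every coordinate $\pz_j$ is within $r_j$ of the corresponding coordinate of both tuples. In particular, for each fixed $j \in [k]$ the projected database $\cD^j = (\px^i_j)_{i=1}^n$ is $\distt_{r_j}$-friendly in the sense of \cref{def:Npred-friendly}. Hence the exact same triangle-inequality calculation carried out in \cref{claim:FriendlyAvg:privacy} shows $\norm{\Avg(\cD^j) - \Avg(\cD'^{\,j})} \leq 2r_j/n \leq 2r_j/\hn$. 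The Gaussian mechanism (\cref{fact:Gaus}) with $\sigma = (2r_j/n)\sqrt{k/(2\rho_2)}$ therefore gives $\hpa^j \approx_{\rho_2/k} \hpa'^{j}$ for each $j$.

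Finally I would compose the $k$ releases using \cref{fact:composition} to conclude $(\hpa^1,\ldots,\hpa^k) \approx_{\rho_2} (\hpa'^{1},\ldots,\hpa'^{k})$, chain this with the $0.1\rho$-cost of releasing $\hn$, and assemble everything into $O|_{\widehat N \leq n} \approx_{\rho} O'|_{\widehat N' \leq n}$, as needed. The only subtle point worth double-checking is the per-coordinate sensitivity argument — concretely that friendliness with respect to the product predicate $\distt_{r_1,\ldots,r_k}$ genuinely yields friendliness in each coordinate separately — but this follows directly from the definition of $\distt_{r_1,\ldots,r_k}$ as a product and the existence of a common witness tuple $Z$. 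Everything else is verbatim reuse of the single-average analysis plus standard composition.
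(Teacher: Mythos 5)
Your proposal is correct and takes essentially the same approach as the paper's proof: both reduce to the per-coordinate sensitivity analysis of \cref{claim:FriendlyAvg:privacy} via composition, using the key observation that $\distt_{r_1,\ldots,r_k}$-friendliness of $\cD \cup \cD'$ yields coordinate-wise $\distt_{r_j}$-friendliness of each $\cD^j \cup (\cD^j)'$, hence sensitivity $2r_j/\hn$ and $\rho_2/k$-indistinguishability per coordinate. The extra detail you supply (the $n'=0$ edge case, the $\hN$-release budget, and the role of the $(1-\delta)$ factor in $\rho_1$) merely spells out what the paper abbreviates as ``as done in the proof of \cref{claim:FriendlyAvg:privacy}.''
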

\begin{proof}
	Let $\cD=(X_1,\ldots,X_n)$ and $\cD' = \cD_{-j}$ be two $\distt_{r_1,\ldots,r_k}$-friendly neighboring databases, and let $n' = n-1$.
	Consider two independent random executions of $\AlgFriendlyAvg(\cD)$ and $\AlgFriendlyAvg(\cD')$ (both with the same input parameters $\rho,\delta,r_1,\ldots,r_k$). Let $\widehat{N}$ and $\hA = (\hA^1,\ldots,\hA^k)$ be the (r.v.'s of the) values of $\hat{n}$ and $(\hpa^1,\ldots,\hpa^k)$ in the execution $\AlgFriendlyAvg(\cD)$, let $\widehat{N}'$ and $\hA'$ be these r.v.'s w.r.t. the execution $\AlgFriendlyAvg(\cD')$, and let $\rho_1, \rho_2$ be as in Step~\ref{step:friendlyAvg:split_rho}. As done in the proof of \cref{claim:FriendlyAvg:privacy}, it is enough to prove that $\hA|_{\hat{N} = \hn} \approx_{\rho_2} \hA'|_{\hat{N'} = \hn}$ for every $\hn \leq n$. In particular, it is enough to prove that for every $j \in [k]$ it holds that $\hA^j|_{\hat{N} = \hn} \approx_{\rho_2/k} \hA'^j|_{\hat{N'} = \hn}$. Since $\cD \cup\cD'$ is $\distt_{r_1,\ldots,r_k}$-friendly, for every $j$ it holds that $\cD^j\cup (\cD^j)'$ is $\distt_{r_j}$-friendly. Hence, using the same arguments as in the proof of \cref{claim:FriendlyAvg:privacy}, it holds that 
	$\norm{\Avg(\cD^j)-\Avg((\cD^j)')} \leq 2r_j/n \leq 2r_j/\hn$. Hence, by the properties of the Gaussian mechanism (\cref{fact:Gaus}) we conclude that $\hA^j|_{\hat{N} = \hn} \approx_{\rho_2/k} \hA'^j|_{\hat{N'} = \hn}$, as required. 
\end{proof}

We now present our main $\zCDP$ algorithm for averaging ordered $k$-tuples, that is based on finding a friendly core of such tuples, and applying the friendly algorithm $\AlgFriendlyAvgOrdTup$.

\begin{algorithm}[$\AlgAvgOrdTup$]\label{alg:AlgAvgOrdTup}
	
	\item Input: A database $\cD = (X^i = (\px_1^i,\ldots,\px_k^i))_{i=1}^n \in ((\bbR^d)^k)^*$, privacy parameters $\rho,\delta > 0$, a confidence parameter $\beta>0$ and lower and upper bounds  $r_{\min},r_{\max} > 0$ on the diameters (respectively).
	
	\item Operation:~
	\begin{enumerate}
		
		\item Let $\rho_1 =  \rho_2 = 0.05 \rho$ and $\rho_3 = 0.9 \rho$.
		
		\item For $j \in [k]$:
		\begin{itemize}
			\item Let $\cD^j = (\px_j^i)_{i=1}^n$.
			
			\item Compute $r_j = \AlgFindDist(\cD^j,\rho_1/k,\beta/(2k),r_{\min},r_{\max}, b = 1.5)$ (\cref{alg:find-dist}).
		\end{itemize}
	
		\item Compute $\cC = \FriendlyCore(\cD,\distt_{r_1,\ldots,r_k},\rho_2,\delta/2,\beta/2)$.\label{step:FriendlyCoreOrdTup:computeCore}
		
		\item Output $\AlgFriendlyAvgOrdTup(\cC,\rho_3,\delta/2,r_1,\ldots,r_k)$ (\cref{alg:AlgFriendlyAvgOrdTup}).\label{step:FriendlyCoreOrdTup:output}

	\end{enumerate}
\end{algorithm}

\begin{claim}[Privacy of $\AlgAvgOrdTup$]\label{claim:AlgAvgOrdTup:privacy}
	Algorithm $\AlgAvgOrdTup(\cdot, \rho,\delta,\beta,r_{\min},r_{\max})$ is $(\rho,\delta)$-$\zCDP$.
\end{claim}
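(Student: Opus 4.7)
The plan is to decompose $\AlgAvgOrdTup$ into three privacy-consuming pieces and apply adaptive composition. First, the loop computing the diameters $r_1,\ldots,r_k$ consists of $k$ invocations of $\AlgFindDist$, each on a different projection $\cD^j$ of the input with privacy budget $\rho_1/k$. Since neighboring databases $\cD,\cD'$ yield neighboring projections $\cD^j,(\cD^j)'$ for every $j$, \cref{claim:AlgFindDist:privacy} tells us each call is $(\rho_1/k)$-$\zCDP$ in $\cD$, and basic composition (\cref{fact:composition}) shows the entire loop is $\rho_1$-$\zCDP$.

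Second, conditioned on any fixed output $(r_1,\ldots,r_k)$ of the loop, I would analyze Steps \ref{step:FriendlyCoreOrdTup:computeCore}--\ref{step:FriendlyCoreOrdTup:output} as a single mechanism that applies a $\distt_{r_1,\ldots,r_k}$-friendly algorithm to $\FriendlyCore(\cdot,\distt_{r_1,\ldots,r_k},\rho_2,\delta/2)$. By \cref{claim:FriendlyAvgOrdTup:privacy}, $\AlgFriendlyAvgOrdTup(\cdot,\rho_3,\delta/2,r_1,\ldots,r_k)$ is $\distt_{r_1,\ldots,r_k}$-friendly $(\rho_3,\delta/2)$-$\zCDP$. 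The paradigm theorem \cref{thm:FriendlyCore} then guarantees that the composed mechanism is $(\rho_2+\rho_3,\: \delta/2 + \delta/2) = (\rho_2+\rho_3,\delta)$-$\zCDP$ in its input database.

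Finally, I would conclude by applying the adaptive composition lemma (\cref{fact:composition}) to these two pieces: the $\rho_1$-$\zCDP$ diameter-finding step, followed by the $(\rho_2+\rho_3,\delta)$-$\zCDP$ mechanism that takes $\cD$ together with the diameters as input. This yields total privacy $(\rho_1+\rho_2+\rho_3,\: 0+\delta) = (\rho,\delta)$-$\zCDP$, as required.

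The only delicate point is justifying that the second piece really is $(\rho_2+\rho_3,\delta)$-$\zCDP$ as a function of its first argument for \emph{every} fixed choice of the diameters, since the $\distt_{r_1,\ldots,r_k}$ predicate used by $\FriendlyCore$ is data-dependent. This is handled cleanly because for any fixed choice of $(r_1,\ldots,r_k)$ the predicate $\distt_{r_1,\ldots,r_k}$ is a fixed predicate over $(\bbR^d)^k$, so both \cref{claim:FriendlyAvgOrdTup:privacy} and \cref{thm:FriendlyCore} apply verbatim; adaptive composition then absorbs the dependence on the first stage's output.
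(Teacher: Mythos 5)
Your proof is correct and follows essentially the same decomposition as the paper: compose the $\rho_1$-$\zCDP$ diameter-finding loop (by $k$-fold composition of $\AlgFindDist$) with the $(\rho_2+\rho_3,\delta)$-$\zCDP$ mechanism obtained by applying \cref{thm:FriendlyCore} to the $\distt_{r_1,\ldots,r_k}$-friendly algorithm $\AlgFriendlyAvgOrdTup$, then apply \cref{fact:composition}. Your added remark about why the data-dependent predicate poses no problem (it is fixed once $(r_1,\ldots,r_k)$ is fixed, and adaptive composition absorbs that dependence) is the right justification, and in fact makes the argument slightly cleaner than the paper's version, which imprecisely attributes a $\delta/2$ term to the diameter-finding step even though each $\AlgFindDist$ call is pure $\zCDP$.
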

\begin{proof}
	By \cref{claim:AlgFindDist:privacy}, each execution of  $\AlgFindDist(\cdot,\rho_1/k,\beta/(2k),r_{\min},r_{\max}, b = 1.5)$ is $\rho_1/k$-$\zCDP$, and therefore the computation of $r_1,\ldots,r_k$ is $(\rho_1,\delta/2)$-$\zCDP$. Since $\AlgFriendlyAvgOrdTup(\cdot,\rho_3,\delta/2,r_1,\ldots,r_k)$ is $\distt_{r_1,\ldots,r_k}$-friendly $\rho_3$-$\zCDP$ (\cref{claim:FriendlyAvgOrdTup:privacy}), we deduce by the privacy guarantee of the $\FriendlyCore$ paradigm (\cref{thm:FriendlyCore}) that Steps~\ref{step:FriendlyCoreOrdTup:computeCore}+\ref{step:FriendlyCoreOrdTup:output} are $(\rho_2+\rho_3,\delta)$-$\zCDP$. We now conclude by composition that the entire computation is $(\rho = \rho_1+\rho_2+\rho_3,\delta)$-$\zCDP$.
\end{proof}

\subsubsection{Unordered $k$-Tuple Clustering: Putting All Together}\label{sec:clustering:putting_together}

Now that we have the reduction $\FriendlyReorder$ from unordered to ordered $k$-tuples (for friendly databases), and given our algorithm $\AlgAvgOrdTup$ for ordered $k$-tuple clustering, we describe the fully end-to-end $\zCDP$ algorithm for unordered $k$ tuple clustering.

\begin{algorithm}[$\FCkTuplesClustering$]\label{alg:FCkTupleClustering}
	
	\item Input: A database $\cD = (X^i = (\px_1^i,\ldots,\px_k^i))_{i=1}^n \in ((\bbR^d)^k)^*$, privacy parameters $\rho,\delta > 0$, a confidence parameter $\beta>0$ and lower and upper bounds  $r_{\min},r_{\max} > 0$ on the diameters (respectively).
	
	\item Operation:~
	\begin{itemize}
		
		\item Compute $\cC = \FriendlyCore(\cD,\match_{1/7},\rho/2,\delta/2,\beta/2)$.
		
		\item Compute $\tilde{\cC} = \FriendlyReorder(\cC)$ (\cref{alg:FriendlyReorder}).
		
		\item Output $\AlgAvgOrdTup(\tilde{\cC},\rho/2,\delta/2,\beta/2,r_{\min},r_{\max})$ (\cref{alg:AlgAvgOrdTup}).
	\end{itemize}
	
\end{algorithm}

\begin{claim}[Privacy of $\FCkTuplesClustering$]\label{claim:FCkTuplesClustering:privacy}
	Algorithm $\FCkTuplesClustering(\cdot,\rho,\delta,\beta,r_{\max},r_{\min})$ is $(\rho,\delta)$-$\zCDP$.
\end{claim}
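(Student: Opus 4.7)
The plan is to compose the three privacy building blocks already established: (i) the ordered-tuple averaging algorithm $\AlgAvgOrdTup$ is standard $\zCDP$, (ii) wrapping a $\zCDP$ ordered-tuple algorithm with $\FriendlyReorder$ produces a $\match_{1/7}$-friendly $\zCDP$ algorithm on unordered tuples, and (iii) the $\FriendlyCore$ paradigm lifts any $\pred$-friendly $\zCDP$ algorithm to a standard $\zCDP$ algorithm. Pipelining these yields the final guarantee with exactly the split $(\rho/2,\delta/2) + (\rho/2,\delta/2) = (\rho,\delta)$.

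Concretely, first I would apply Claim~\ref{claim:AlgAvgOrdTup:privacy} to conclude that $\sA(\cdot) \eqdef \AlgAvgOrdTup(\cdot, \rho/2, \delta/2, \beta/2, r_{\min}, r_{\max})$ is $(\rho/2, \delta/2)$-$\zCDP$ on its ordered-tuple input. Next, define $\sB(\cdot) \eqdef \sA(\FriendlyReorder(\cdot))$ and invoke Claim~\ref{claim:FriendlyReorder:privacy} (with $(\rho,\delta)$ set to $(\rho/2,\delta/2)$) to obtain that $\sB$ is $\match_{1/7}$-friendly $(\rho/2, \delta/2)$-$\zCDP$. This is the key conceptual step: $\FriendlyReorder$ on its own has no privacy guarantee, but when it acts on an input whose union with its neighbor is $\match_{1/7}$-friendly, the randomized relabeling scheme makes the resulting ordered databases neighboring (up to a permutation whose distribution is itself symmetric), which is exactly the hypothesis invoked by the friendly $\zCDP$ relaxation of Definition~\ref{def:friendly-zCDP}.

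Finally I would apply Theorem~\ref{thm:FriendlyCore} (the $\FriendlyCore$ paradigm for $\zCDP$) to the friendly algorithm $\sB$ with predicate $\pred = \match_{1/7}$ and the $\FriendlyCore$ parameters $(\rho/2, \delta/2)$. This yields that the composition
\[
\cD \mapsto \sB\bigl(\FriendlyCore(\cD, \match_{1/7}, \rho/2, \delta/2)\bigr) = \AlgAvgOrdTup\bigl(\FriendlyReorder(\FriendlyCore(\cD, \match_{1/7}, \rho/2, \delta/2)), \rho/2, \delta/2, \beta/2, r_{\min}, r_{\max}\bigr)
\]
is $(\rho/2 + \rho/2,\; \delta/2 + \delta/2)$-$\zCDP$, i.e. $(\rho, \delta)$-$\zCDP$, which is precisely $\FCkTuplesClustering(\cdot, \rho, \delta, \beta, r_{\min}, r_{\max})$.

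I do not anticipate a real obstacle: every piece is already proven and the budgets are allocated to match the paradigm exactly. The only point that warrants a sentence of care is verifying that the notion of ``friendliness'' threaded through the three lemmas is consistent — namely that Theorem~\ref{thm:FriendlyCore} is applied to $\sB$, not to $\sA$, and that $\sB$'s friendly $\zCDP$ guarantee from Claim~\ref{claim:FriendlyReorder:privacy} is stated exactly with respect to the same predicate $\match_{1/7}$ used by the outer $\FriendlyCore$ call. Once this alignment is noted, the proof is a one-line composition.
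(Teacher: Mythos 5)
Your proposal matches the paper's proof exactly: it chains \cref{claim:AlgAvgOrdTup:privacy}, \cref{claim:FriendlyReorder:privacy}, and \cref{thm:FriendlyCore} with the same $(\rho/2,\delta/2)+(\rho/2,\delta/2)$ budget split, and the care you flag about applying \cref{thm:FriendlyCore} to $\sB = \sA \circ \FriendlyReorder$ rather than to $\sA$ is indeed the only subtle point. No gaps.
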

\begin{proof}
	Since $\sA = \AlgAvgOrdTup(\cdot,\rho/2,\delta/2,\beta,r_{\max},r_{\min})$ is $(\rho/2,\delta/2)$-$\zCDP$ (\cref{claim:AlgAvgOrdTup:privacy}), we deduce by \cref{claim:FriendlyReorder:privacy} that $ \sA(\FriendlyReorder(\cdot))$ is $\match_{1/7}$-friendly $(\rho/2,\delta/2)$-$\zCDP$. Hence, we conclude by \cref{thm:FriendlyCore} that the output is $(\rho,\delta)$-$\zCDP$.
\end{proof}

\subsubsection{FriendlyCore Clustering}\label{sec:FCClustering}

Given algorithm  $\FCkTuplesClustering$, we now can plug it into the reduction of \cite{CKMST:ICML2021} from standard clustering problems into the $k$ tuple clustering, for obtaining our final clustering method $\FCClustering$ (described below). In this section we only prove its privacy guarantee, where we refer to \cref{sec:missing-proofs:clustering} for the utility guarantees of $\FCkTuplesClustering$ and of $\FCClustering$ for $k$-means and $k$-GMM under common separation assumptions (which follow by the tools of \cite{CKMST:ICML2021}).

\begin{algorithm}[$\NoisyLloydStep$]\label{alg:NoisyLloydStep}
	\item Input: A database $\cD \in (\bbR^d)^*$, a $k$-tuple $Y = (\py_1,\ldots,\py_k) \in (\bbR^d)^k$, privacy parameters $\rho,\delta > 0$, and a bound $\Lambda$ on the $\ell_2$ norm of the points.
	
	\item Operation:~
	\begin{enumerate}
		
		\item Remove all $\px \in \cD$ with $\norm{\px} > \Lambda$ .

		\item For $i \in [k]$:
		\begin{enumerate}
			\item Let $\cD^i = (\px \in \cD \colon i = \argmin_{j \in [k]} \norm{\px-\py_j})$.\label{step:cDi}
			
			\item Compute $\ha_i = \AlgFriendlyAvg(\cD^i,\rho,\delta, r=2\Lambda)$ (\cref{alg:friendly-avg}).
			
		\end{enumerate}
		\item Output $(\ha_1,\ldots,\ha_k)$.
	\end{enumerate}
\end{algorithm}

\begin{algorithm}[$\FCClustering$]\label{alg:FCClustering}
	
	\item Input: A database $\cD  \in (\bbR^d)^*$, privacy parameters $\rho,\delta > 0$, a confidence parameter $\beta>0$, a lower bound $r_{\min} > 0$ on the diameters of the clusters, a bound $\Lambda > 0$ on the $\ell_2$ norm of the points, and a parameter $t \in \bbN$ (number of tuples). 
	
	\item Oracle: Non private clustering algorithm $\sA$.
	
	\item Operation:~
	\begin{enumerate}
		
		\item Shuffle the order of the points in $\cD$. Let $\cD = (\px_1,\ldots,\px_n)$ be the database after the shuffle.
		
		\item Let $m = \floor{n/t}$.
		
		\item For $i \in [t]$: Compute the $k$-tuple $X^i = \Alg(\cD^i)$ for $\cD^i = (\px_{(i-1)\cdot m + 1},\ldots,\px_{i\cdot m})$.
		
		\item Let $\cT = (X^1,\ldots,X^t)$.
		
		\item Compute $Y = \FCkTuplesClustering(\cT,\rho/2,\delta/2,\beta,r_{\min},r_{\max}=2\Lambda)$. 
		
		\item Output $\NoisyLloydStep(\cD,Y,\rho/2,\delta/2,\Lambda)$.

	\end{enumerate}
	
\end{algorithm}

\begin{theorem}[Privacy of $\FCClustering$, Restatement of \cref{thm:FCClustering:privacy}]
	Algorithm\\$\FCClustering^{\sA}(\cdot,\rho,\delta,\beta,r_{\min},\Lambda,t)$ is $(\rho,\delta)$-$\zCDP$ (for any $\sA$).
\end{theorem}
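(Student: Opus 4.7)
My plan is to view $\FCClustering(\cD)$ as an adaptive composition of two $(\rho/2,\delta/2)$-$\zCDP$ mechanisms on $\cD$: first, the computation that produces $Y$ (shuffle, partition, run $\sA$ on each part, and then apply $\FCkTuplesClustering$); second, $\NoisyLloydStep(\cD,Y,\rho/2,\delta/2,\Lambda)$. Composition via \cref{fact:composition} then yields the desired $(\rho,\delta)$-$\zCDP$ guarantee.

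For the first stage, I would reduce to showing that for any two neighboring databases $\cD,\cD'$, the resulting tuple databases $\cT,\cT'$ are themselves neighboring in the tuple domain, i.e., differ by exactly one tuple. This is the standard sample-and-aggregate argument: by coupling the shuffles appropriately, one views the partition into $t$ groups as each point independently drawing a group label uniformly at random from $[t]$, so an added or removed point in $\cD$ perturbs only a single group, and hence only a single $X^i$. Because $\sA$ is executed separately and independently on each group, the other tuples are identically distributed under the coupling. Given $\cT \sim \cT'$ as neighboring tuple databases, \cref{claim:FCkTuplesClustering:privacy} immediately gives $(\rho/2,\delta/2)$-$\zCDP$ for the computation of $Y$ viewed as a function of $\cD$.

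For the second stage, the key observation is that after clipping all points outside $B(\pt{0},\Lambda)$, every surviving point lies within distance $2\Lambda$ of every other. Thus for each $i\in[k]$ and each pair of neighboring databases $\cD,\cD'$, the partition $\cD^i \cup (\cD^i)'$ is $\distt_{2\Lambda}$-complete, and in particular $\distt_{2\Lambda}$-friendly. Moreover, a single insertion or deletion in $\cD$ changes exactly one $\cD^i$, namely the one whose center $\py_i$ is nearest (in $\argmin$) to the inserted/deleted point, so only one of the $k$ invocations of $\AlgFriendlyAvg(\cdot,\rho/2,\delta/2,r=2\Lambda)$ sees differing inputs. By \cref{claim:FriendlyAvg:privacy}, that call produces $(\rho/2,\delta/2)$-indistinguishable outputs, while the remaining $k-1$ calls are identical in distribution across $\cD$ and $\cD'$. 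Joint indistinguishability then yields $(\rho/2,\delta/2)$-$\zCDP$ for $\NoisyLloydStep(\cdot,Y,\rho/2,\delta/2,\Lambda)$, uniformly over $Y$.

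The main obstacle is the coupling argument for the shuffle in the first stage: a naive insertion of the extra point at a uniformly random position would seem to shift the contents of many consecutive groups, altering multiple tuples. The correct framing is that the shuffle followed by the fixed-stride partition is (up to minor and benign boundary effects when $n+1$ crosses a multiple of $t$) equivalent to each point independently drawing a group label uniformly from $[t]$, so a single added point truly lands in only one group. Once this reduction to neighboring tuple databases is in place, everything else follows directly from the privacy guarantees already established for $\FriendlyCore$ (via $\FCkTuplesClustering$) and for $\AlgFriendlyAvg$ (via $\NoisyLloydStep$), combined with adaptive composition.
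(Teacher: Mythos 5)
Your proposal follows the same two-stage decomposition that the paper uses: show $\NoisyLloydStep(\cdot, Y, \rho/2, \delta/2, \Lambda)$ is $(\rho/2,\delta/2)$-$\zCDP$ for any fixed $Y$ because a single insertion/deletion perturbs exactly one clipped $\cD^i$ (the paper states this identically), and then compose with the first stage, whose privacy descends from $\FCkTuplesClustering$ via a sample-and-aggregate coupling. In fact the paper is \emph{terser} than you are: for the first stage it merely writes ``by composition of $\FCkTuplesClustering$ and $\NoisyLloydStep$,'' leaving the coupling entirely implicit, whereas you try to spell it out. One inaccuracy worth flagging in your account: a uniform shuffle followed by a fixed-stride partition is not, even approximately, the same as each point drawing an i.i.d.\ uniform group label in $[t]$ --- the former forces every group to have size exactly $m$, the latter gives binomial group sizes --- so ``equivalence up to benign boundary effects'' is not a valid reduction. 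The fact that only one $X^i$ is perturbed follows instead from coupling two \emph{balanced} random partitions; under that coupling the affected group changes by one substituted point, so $\cT$ and $\cT'$ always both have $t$ tuples and differ by a \emph{substitution} of one tuple rather than by an insertion/deletion, meaning that invoking \cref{claim:FCkTuplesClustering:privacy} strictly requires carrying its insertion/deletion guarantee over to substitution neighbors (a constant-factor group-privacy step). The paper also leaves this unaddressed, so it is a shared informality rather than a gap unique to your argument, but you should not assert the i.i.d.-labels equivalence.
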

\begin{proof}
	First, note that for every $Y \in (\bbR^d)^k$, algorithm $\NoisyLloydStep(\cdot,Y,\rho,\delta,\beta,r_{\min},r_{\max})$ is $\rho$-$\zCDP$. This is because $\AlgAvgUnknownDiam(\cdot,\rho,\delta, \beta, r_{\min},r_{\max})$ is $(\rho,\delta)$-$\zCDP$ and for every neighboring databases $\cD$ and $\cD'$, there is only a single $i$ such that the databases $\cD^i$ and $\cD'^i$ from \stepref{step:cDi} of $\NoisyLloydStep(\cD,\ldots)$ and $\NoisyLloydStep(\cD',\ldots)$ (respectively) are neighboring, and the others equal to each other. 
	
	Back to $\FCClustering$, we obtain the required privacy by composition of $\FCkTuplesClustering$ and $\NoisyLloydStep$.
\end{proof}

\section{Empirical Results} \label{sec:experiments}

In this section we present empirical results of our $\FriendlyCore$ based averaging and clustering algorithms. In all experiments we used privacy parameter $\rho=1$, $\delta=10^{-8}$, and all of them were tested in a MacBook Pro Laptop with 4-core Intel i7 CPU with 2.8GHz, and with 16GB RAM. \Enote{New:}The code is publicly available in the supplementary material of the ICML 2022 publication of the paper: \url{https://proceedings.mlr.press/v162/tsfadia22a.html}.

\subsection{Averaging}\label{sec:experiments:averaging}

We tested mean estimation of samples from a Gaussian with \emph{unknown} mean and \emph{known} variance.
We compared a Python implementation of our private averaging algorithm $\FCAvg$ with the algorithm $\CoinPress$ of \cite{BDKU20}.
The implementations of $\CoinPress$, and the experimental test bed, were taken from the publicly available code of \cite{BDKU20} provided at \url{https://github.com/twistedcubic/coin-press}.
Following \cite{BDKU20}, we generate a dataset of $n$ samples from a $d$-dimensional Gaussian $\cN(0,I_{d \times d})$. We ran $\FCAvg$ with $r=\sqrt{2}(\sqrt{d} + \sqrt{\ln(100n)})$ for guaranteeing that almost all pairs of samples have $\ell_2$ distance at most $r$ from each other (computed according to the known variance).

Algorithm $\CoinPress$ requires a bound $R$ on the $\ell_2$ norm of the unknown mean. 
Both algorithms perform a similar final private averaging step that has dependence on $\sqrt{d}$. But they differ in the "preparation:" $\CoinPress$ has inherent dependence on $d$ and $R$.
$\FCAvg$ preparation, on the other hand,  has no dependence on $d$ or $R$.

Following \cite{BDKU20} we perform $50$ repetitions of each experiment and use the trimmed average of values between the $0.1$ and $0.9$ quantiles.
We show the $\ell_2$ error of our estimate on the $Y$-axis.  \cref{figures-avg}(1) reports the effect of varying the bound $R$, with fixed $d=1000$ and $n=800$. We tested
$\CoinPress$ with $4$, $20$ and $40$ iterations. We observe that $\FCAvg$, that does not depend on $R$, outperforms $\CoinPress$ for $R> 10^7$.
\cref{figures-avg}(2) reports the effect of varying the dimension $d$, with fixed $n=800$ and $R = 10 \sqrt{d}$. We tested $\CoinPress$ with $2$, $4$ and $8$ iterations. We observed that the performance of all algorithms deteriorates with increasing $d$, which is expected due to all algorithms using private averaging, but $\CoinPress$ deteriorates much faster in the large-$d$ regime. 

Finally we note that $\CoinPress$ slightly performs better than $\FCAvg$ in the small-$d$ small-$R$ regime (see \cref{figures-avg}(3) that includes also a comparison to the algorithm of \cite{KV18}).  The reason is that $\AlgFriendlyAvg$ (\cref{alg:friendly-avg}), which is the last step of $\FCAvg$, uses noise of magnitude $\approx \frac{2r}{n \sqrt{2\rho}}$ which is far by a factor of $2$ from the ideal magnitude that we could hope for.

\begin{figure*}
	\centerline{
		\includegraphics[scale=.35]{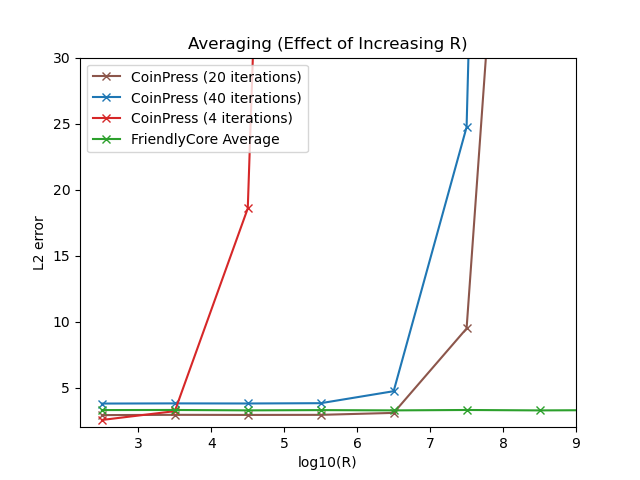}
		\includegraphics[scale=.35]{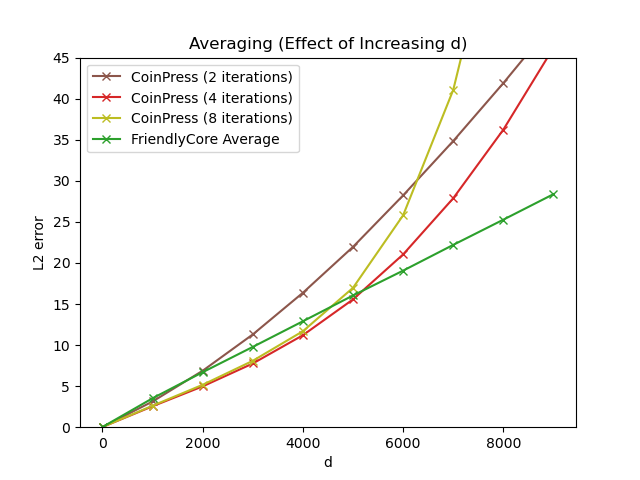}
		\includegraphics[scale=.35]{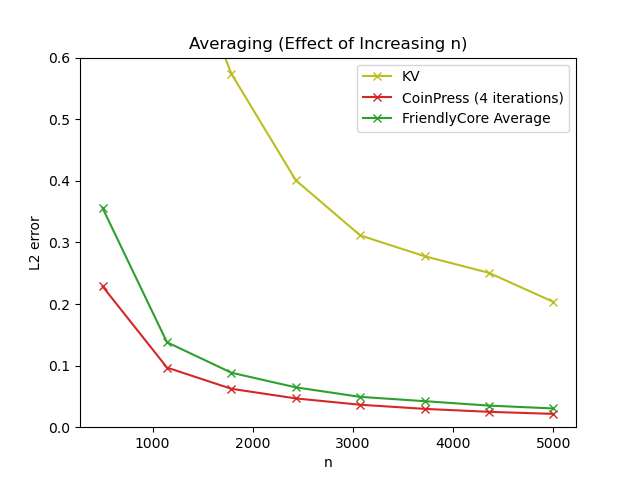}		
	}
	\caption{From Left to Right: (1) Averaging in $d=1000$ and $n=800$, varying $R$. (2) Averaging in $n=800$, $R = 10 \sqrt{d}$, varying $d$. (3) Averaging in $d=50$ and $R = 10\sqrt{d}$, varying $n$.}
	\label{figures-avg}
\end{figure*}

\subsection{Clustering}\label{sec:experiments:clustering}

We tested the performance of our private clustering algorithm $\FCClustering$ with $t=200$ tuples on a number of $k$-Means and $k$-GMM tasks. We compared a Python implementation of $\FCClustering$ with a recent algorithm of \citet{LSH2021} that is based on recursive locality-sensitive hashing (LSH). We denote their algorithm by $\LSHClustering$.
The implementations of $\LSHClustering$, and the experimental test bed of \cref{figures-clustering-1}, were taken from the publicly available code of \cite{LSH2021} provided at \url{https://github.com/google/differential-privacy/tree/main/learning/clustering}. 
$\LSHClustering$ guarantees privacy in the $\DP$ model. Therefore, in order to compare it with our $(\rho=1,\delta)$-$\zCDP$ guarantee, we chose to apply it with a $(\eps=2,\delta)$-$\DP$ guarantee, so that neither guarantee implies the other.
Furthermore, 
unlike $\FCClustering$ which may fail to produce centers in some cases (e.g., when the core of tuples is empty or close to be empty), $\LSHClustering$ always produces centers. Therefore, in order to handle failures of $\FCClustering$, we used only $\rho =0.99$ privacy budget, and on failures we executed  $\LSHClustering$ with $\eps = \sqrt{0.02}$ (which implies $\rho=0.01$ $\zCDP$) as backup.

We performed $30$ repetitions of each experiment and present the medians (points) along with the $0.1$ and $0.9$ quantiles. 

In \cref{figures-clustering-1} (Left) we present a comparison in dimension $d=2$ with $k=8$ clusters. In each repetition, we sampled eight random centers $\set{\pc_i}_{i=1}^8$ from the unit ball, and the database was obtained by collecting $n/8$ samples from each Gaussian $\cN(\pc_i, \: 0.0221 I_{2 \times 2})$, where the samples were clipped to $\ell_2$ norm of $1$. For $\FCClustering$ we used an oracle access to $k$-means$++$ provided by the KMeans algorithm of the Python library sklearn, and used $r_{\min} = 0.001$ and radius $\Lambda=1$. We set the radius parameter of  $\LSHClustering$ to $1$. We plotted the normalized $k$-means loss that is computed by $1-X/Y$, where $X$ is the cost of $k$-means$++$ on the entire data, and $Y$ is the cost of the tested private algorithm. From this experiment we observed that for small values of $n$, $\FCClustering$ fails often, which yields inaccurate results. Yet, increasing $n$ also increases the success probability of $\FCClustering$ which yields very accurate results, while $\LSHClustering$ stay behind. See \cref{figures-clustering-1} (Right) for a graphical illustration of the centers in one of the iterations for $n=2\cdot 10^5$.

In \cref{figures-clustering-2} (Left) we present a comparison for separating $n = 2.5\cdot 10^5$ samples from a uniform mixture of $k=5$ Gaussians $\cN(\pc_i,\: I_{d\times d})$ for varying $d$. In each repetition, each of the $\pc_i$'s was chosen uniformly from $\set{1,2}^d$, yielding that the distance between each pair of centers is $\approx \sqrt{d/2}$. We analyze the labeling accuracy, which is computed by finding the best permutation that fits between the true labeling and the induced clustering, and plotted the labeling failure of the best fit. Here, we used $r_{\min} = 0.1$, and radius $\Lambda = 10\sqrt{d}$ for $\FCClustering$ and $\LSHClustering$. For the non-private oracle access of $\FCClustering$, we used a PCA-based clustering that easily separate between such  Gaussians in high dimension.\footnote{The algorithm project the points into the $k$ principal components, cluster the points in that low dimension, and then translate the clustering back to the original points and perform a Lloyd step.}
From this experiment we observed that $\FCClustering$ takes advantage of the PCA method and gains perfect accuracy on large values of $d$, in contrast to $\LSHClustering$.

At that point, we showed that $\FCClustering$ succeed well on well-separated databases, since the results of the non-private algorithm (each is executed on a random piece of data) are very similar to each other in such cases. We next show that such stability can also be achieved on large enough real-world datasets, even when there is no clear separation into $k$ clusters. 

In \cref{figures-clustering-2} (Right) we used the publicly available dataset of \cite{GasSensors15} that contains the acquired time series from 16 chemical gas sensors exposed to gas mixtures at varying concentration levels. The dataset contains $\approx 8M$ rows, where each row contains 16 sensors' measurements at a given point in time, so we translate each such row into a $16$-dimensional point. We compared the clustering algorithms for varying $k$, where we used $r_{\min} = 0.1$, and radius $\Lambda = 10^5$ for $\FCClustering$ and $\LSHClustering$. We observed that $\FCClustering$, with $k$-means++ as the non-private oracle, succeed well on various $k$'s, except of $k=5$ in which it fails due to instability of the non-private algorithm.\footnote{There are two different solutions for $k=5$ that have similar low cost but do not match, yielding that when splitting the data into random pieces, the non-private KMeans choose one of them in one set of part and the other one in the other pieces, and therefore fails.}

\begin{figure*}
	\centerline{
		\includegraphics[scale=.45]{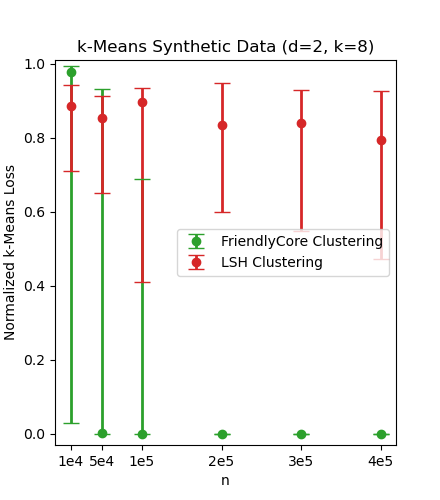}
		\includegraphics[scale=.45]{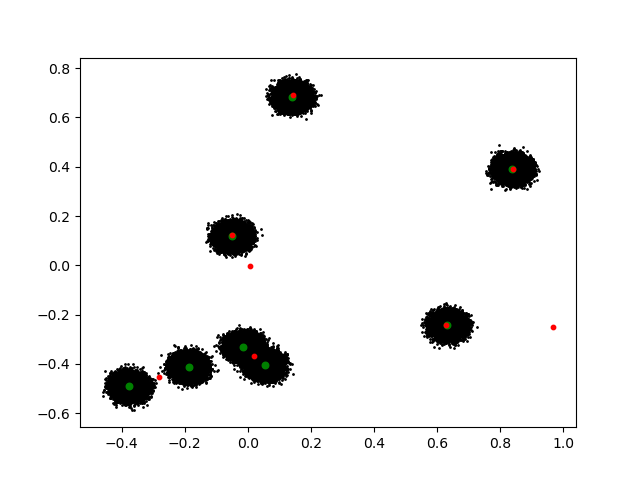}
	}
	\caption{Left: $k$-means results in $d=2$ and $k=8$, for varying $n$. Right: A graphical illustration of the centers in one of the iterations for $n=2\cdot 10^5$. Green points are the centers of $\FCClustering$ and the red points are the centers of $\LSHClustering$.}
	\label{figures-clustering-1}
\end{figure*}

\begin{figure*}
	\centerline{
		\includegraphics[scale=.45]{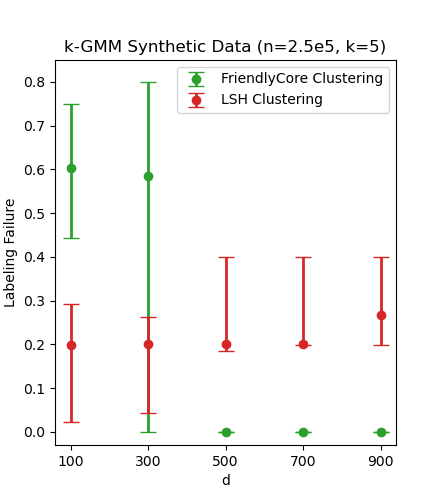}
		\includegraphics[scale=.45]{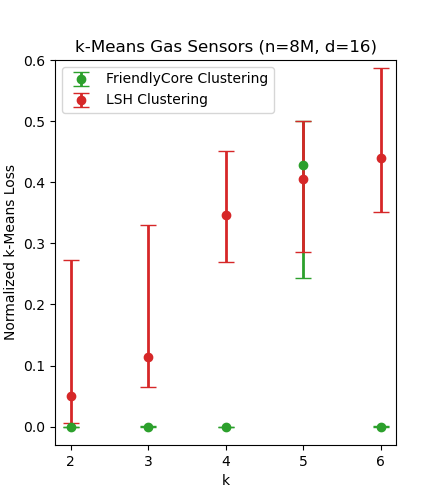}
	}
	\caption{Left: Labeling Failure of samples from a uniform mixture of $k=5$ Gaussians, varying $d$. Right: $k$-means results on Gas Sensors' measurements over time, varying $k$.}
	\label{figures-clustering-2}
\end{figure*}

In summary, we observed from the experiments that when $\FCClustering$ succeed, it outputs very accurate results. However, $\FCClustering$ may fail due to instability of the non-private algorithm on random pieces of the database. 
Hence, it seems that in cases where we have a clear separation or many points, we might gain by combining between $\FCClustering$ and $\LSHClustering$. In this work we chose to spend $0.99$ of the privacy budget on $\FCClustering$, but other combinations might perform better on different cases.

\section{Conclusion}
We presented a general tool $\FriendlyCore$ for preprocessing metric data before privately aggregating it.  The processed data is guaranteed to have some properties that can simplify or boost the accuracy of  aggregation.  Our tool is flexible, and in this work we illustrate it by presenting three different applications (averaging, clustering, and learning an unrestricted covariance matrix). 
We show the wide applicability of our framework by applying it to private mean estimation and clustering, and comparing it to private algorithms which are specifically tailored for those tasks.
For private averaging, we presented a simple algorithm with dimension-independent preprocessing, that is also independent of the $\ell_2$ norm of the points.\footnote{The latter property results with an  optimal asymptotic that matches the histogram-based construction of \cite{KV18}}
For private clustering, we presented the first practical algorithm that is based on the sample and aggregate framework of \cite{NRS07}, which has proven utility guarantees for easy instances (see \cref{sec:missing-proofs:clustering}), and achieves very accurate results in practice when the data is either well separated or very large.

\section*{Acknowledgments}

Edith Cohen is supported by Israel Science Foundation grant no.\ 1595-19.

Haim Kaplan is supported by Israel Science Foundation grant no.\ 1595-19,  and the Blavatnik Family Foundation.	

Yishay Mansour has received funding from the European Research Council (ERC) under the European Union’sHorizon 2020 research and innovation program (grant agreement No. 882396), by the Israel Science Foundation (grant number 993/17) and the Yandex Initiative for Machine Learning at Tel Aviv University.

Uri Stemmer is partially supported by the Israel Science Foundation (grant 1871/19) and by
Len Blavatnik and the Blavatnik Family foundation.

\printbibliography

\appendix

\section{Learning a Covariance Matrix}\label{sec:covariance}

In this section, we are given a database that consists of independent samples from a Gaussian $\cN(0,\Sigma)$ where the covariance matrix $\Sigma\succeq 0$ is unknown,  no bounds on $\norm{\Sigma}$ (the operator norm) are given, and the goal is to privately estimate $\Sigma$. 
Without privacy, it can just be estimated by the empirical covariance of the samples: $\frac1{n} \sum_{i=1}^n \px_i \cdot \px_i^T$.
Recently, three independent and concurrent works of \citet{KSSU21,AL21,KMV21} gave a polynomial-time algorithm for this problem (all the three works were published after the first version of our work that did not include the covariance matrix application). The core of \citet{AL21}'s construction consists of a framework in the $\DP$ model for privately learning average-based aggregation tasks, that has the same flavor of $\FriendlyCore$.
Their tool does not output a subset $\cC \subseteq \cD$ as $\FriendlyCore$. Rather, it outputs a weighted average of the elements, where the weights are chosen in a way that makes the task of privately estimating it to be simpler than its unrestricted counterpart (in particular, their framework guarantees that ``outliers'' receive  weight $0$, and by that it is certified that the weighted average has only low sensitivity).\footnote{The framework of \citet{AL21} has similar ideas to $\FriendlyCore$. However, it is wrapped by a more complicated abstraction, and it is not clear how to apply it for tasks like $k$-tuple clustering, in which a weighted average is not meaningful. We therefore believe that $\FriendlyCore$, apart of being practical, is also simpler, more intuitive and more general.}
For learning a covariance matrix, 
they implicitly use a special ``friendliness'' predicate between covariance matrices, and apply their tool on the empirical covariance matrices, each is computed (non-privately) from a different part of the data points.

We next show how to apply $\FriendlyCore$ along with the tools of  \citet{AL21} in order to privately learn an unrestricted covariance matrix.  Similarly to \cite{AL21}, we only handle the case that $\Sigma \succ 0$ (i.e., where $\Sigma$ has full rank).\footnote{The general case can be done by first privately computing the exact subspace using propose-test-release or a private histogram (see \cite{SS21,AL21}), and then working on the resulting subspace with a full rank matrix.} In addition, following the main step of \cite{AL21}, we only show the reduction to the restricted covariance case that is well studied (e.g., see \cite{BDKU20,KLSU19}).

We start by defining a predicate and stating key properties from \cite{AL21}.


\begin{definition}[Predicate $\matrixDist_{\gamma}$ \cite{AL21}]
	For $d \times d$ matrices $\Sigma_1,\Sigma_2 \succeq 0$, let 
	\begin{align*}
		\matrixDist(\Sigma_1,\Sigma_2) \eqdef \begin{cases} \max\paren{\norm{\Sigma_2^{-1/2} \Sigma_1 \Sigma_2^{-1/2} - I_d}, \norm{\Sigma_1^{-1/2} \Sigma_2 \Sigma_1^{-1/2} - I_d}} & \Sigma_1,\Sigma_2 \succ 0 \\ \infty & \text{otherwise} \end{cases},
	\end{align*}
	and for $\gamma \geq 0$ let $\matrixDist_{\gamma}(\Sigma_1,\Sigma_2) \eqdef \indic{\matrixDist(\Sigma_1,\Sigma_2) \leq \gamma}$.	
\end{definition}

The intuition behind the distance measure $\matrixDist$ is that it does not scale with the norms of the matrices, i.e.,\ $\matrixDist(\Sigma_1,\Sigma_2) = \matrixDist(\lambda \Sigma_1, \lambda \Sigma_2)$ for any $\lambda > 0$. Therefore, it is useful in our case where we do not have any bounds on the norm of the covariance matrix. 

In addition, $\matrixDist$ satisfies an approximate triangle inequality (stated bellow).

\begin{lemma}[Approximate triangle inequality (Lemma 7.2 in \cite{AL21})]\label{lemma:AL21:approx_triangle_ineq}
	If $\matrixDist(\Sigma_1,\Sigma_2) \leq 1$ and $\matrixDist(\Sigma_2,\Sigma_3) \leq 1$ then $\matrixDist(\Sigma_1,\Sigma_3) \leq \frac32\cdot \paren{\matrixDist(\Sigma_1,\Sigma_2) + \matrixDist(\Sigma_2,\Sigma_3)}$.
\end{lemma}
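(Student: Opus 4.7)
The plan is to work in the ``$\Sigma_2$-normalized'' coordinate system, where both comparisons against $\Sigma_2$ become direct statements about the spectrum of symmetric matrices, and then pass back to the comparison between $\Sigma_1$ and $\Sigma_3$ through a similarity transformation.

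First, let me set $A := \matrixDist(\Sigma_1,\Sigma_2) \leq 1$ and $B := \matrixDist(\Sigma_2,\Sigma_3) \leq 1$, and define the symmetric positive definite matrices
\[
M \;:=\; \Sigma_2^{-1/2}\Sigma_1\Sigma_2^{-1/2}, \qquad N \;:=\; \Sigma_2^{-1/2}\Sigma_3\Sigma_2^{-1/2}.
\]
The key observation is that $\matrixDist(\Sigma_1,\Sigma_2) \leq A$ translates exactly into a two-sided spectral bound: $\|M-I\|\leq A$ forces every eigenvalue $\lambda$ of $M$ to satisfy $\lambda\leq 1+A$, while $\|\Sigma_1^{-1/2}\Sigma_2\Sigma_1^{-1/2} - I\|\leq A$ (which has the same spectrum as $M^{-1}$) forces $1/\lambda \leq 1+A$, i.e.\ $\lambda \geq 1/(1+A)$. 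Hence $\mathrm{spec}(M)\subseteq [1/(1+A),\,1+A]$, and analogously $\mathrm{spec}(N)\subseteq [1/(1+B),\,1+B]$.

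Next, I would argue that the quantity we care about, $\matrixDist(\Sigma_1,\Sigma_3)$, is determined by the spectrum of $N^{-1/2}MN^{-1/2}$. Indeed, $\Sigma_3^{-1/2}\Sigma_1\Sigma_3^{-1/2}$ shares eigenvalues with $\Sigma_1\Sigma_3^{-1}$, which shares eigenvalues with $MN^{-1}$, which shares eigenvalues with the symmetric PSD matrix $N^{-1/2}MN^{-1/2}$. Writing $\nu_{\max}$ and $\nu_{\min}$ for the extreme eigenvalues of the latter and using the standard PSD sandwich $N^{-1/2}MN^{-1/2}\preceq \lambda_{\max}(M)\cdot N^{-1}\preceq \lambda_{\max}(M)\lambda_{\max}(N^{-1})\cdot I$ (and the matching lower bound), we obtain
\[
\tfrac{1}{(1+A)(1+B)} \;\leq\; \nu_{\min}\;\leq\; \nu_{\max}\;\leq\; (1+A)(1+B).
\]
By the same reformulation used in the first step, this means $\matrixDist(\Sigma_1,\Sigma_3)\leq (1+A)(1+B) - 1 = A+B+AB$.

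Finally I would close with the simple numerical step: since $A,B\leq 1$ we have $AB\leq \min(A,B)\leq (A+B)/2$, so $A+B+AB \leq \tfrac{3}{2}(A+B)$, which is the desired bound. The main obstacle I anticipate is not any single inequality but the careful bookkeeping in the second step — tracking that the various similarity transformations preserve eigenvalues and that $N^{-1/2}MN^{-1/2}$ is the right symmetric proxy for $\Sigma_3^{-1/2}\Sigma_1\Sigma_3^{-1/2}$. Everything else is a direct consequence of the spectral characterization of $\matrixDist$ and the standard PSD operator inequality $P^{1/2}QP^{1/2}\preceq \|Q\|\cdot P$.
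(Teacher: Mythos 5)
This lemma is stated in the paper as a citation (Lemma 7.2 in \cite{AL21}) and the paper does not reproduce a proof, so there is nothing in the source to compare against line by line. That said, your argument is a correct, self-contained proof. The spectral reformulation is exactly right: since $\Sigma_1^{-1/2}\Sigma_2\Sigma_1^{-1/2}$ is similar to $M^{-1}$, the two operator-norm conditions combine to pin $\mathrm{spec}(M)$ inside $[1/(1+A),\,1+A]$ (the lower endpoint comes from the $M^{-1}$ side and dominates $1-A$, the upper endpoint from the $M$ side and dominates $1/(1-A)$), and symmetrically for $N$. The reduction of $\matrixDist(\Sigma_1,\Sigma_3)$ to the spectrum of $N^{-1/2}MN^{-1/2}$ via the similarity $\Sigma_1\Sigma_3^{-1}\sim MN^{-1}\sim N^{-1/2}MN^{-1/2}$ is sound, and the two-sided conjugation bound $\lambda_{\min}(M)N^{-1}\preceq N^{-1/2}MN^{-1/2}\preceq\lambda_{\max}(M)N^{-1}$ yields $\mathrm{spec}(N^{-1/2}MN^{-1/2})\subseteq[1/((1+A)(1+B)),\,(1+A)(1+B)]$. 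Translating back, $\matrixDist(\Sigma_1,\Sigma_3)\leq\max\bigl(C-1,\,1-1/C\bigr)=C-1=A+B+AB$ with $C=(1+A)(1+B)$, and the final step $AB\leq\min(A,B)\leq(A+B)/2$ under $A,B\leq 1$ closes it. One small presentational nit: you implicitly use that for $C\geq 1$ the deviation $C-1$ dominates $1-1/C$; this is immediate from $(C-1)-(1-1/C)=(C-1)^2/C\geq 0$, but it is worth stating so the reader sees why the single quantity $C-1$ controls both operator-norm terms in the definition of $\matrixDist$.
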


Note that \cref{lemma:AL21:approx_triangle_ineq} implies that if $\cD$ is $\matrixDist_{\gamma}$-friendly (for $\gamma \leq 1$), then it is $\matrixDist_{3\gamma}$-complete. 

The idea now is to apply $\FriendlyCoreDP$ with this predicate (using a small constant $\gamma$, say $0.1$) in order to privately estimate the covariance matrix.  At the high-level, this is done by the following process: (1) Split the samples into equal-size parts and compute the empirical covariance matrix of each part. (2) On the resulting database of matrices $\cT = (\Sigma_1,\ldots,\Sigma_t)$, apply $\FriendlyCoreDP$ for obtaining a core $\cC \subseteq \cT$ that is certified to be $\matrixDist_{0.1}$-friendly. Then, execute an $\matrixDist_{0.1}$-friendly $\DP$ algorithm over the core $\cC$. 

It is left to design an $\matrixDist_{0.1}$-friendly $\DP$ algorithm.
The first step is to use the following fact which states that if $\cT \cup \cT'$ is $\matrixDist_{0.1}$-friendly (and therefore, $\matrixDist_{0.3}$-complete), then $\matrixDist(\Avg(\cT),\Avg(\cT')) \leq O(1/\size{\cT})$. 

\begin{lemma}[Implicit in \cite{AL21}]\label{lemma:AL21:constant}
	There exists a constant $c_1>0$ such that the following holds: Let $\gamma > 0$ and let $\Sigma_1,\ldots,\Sigma_n \succ 0$ such that $\matrixDist(\Sigma_i,\Sigma_j) \leq 0.3$ for every $i,j \in [n]$. Assuming that $n \geq c_1/\gamma$, then it holds that $\matrixDist(\frac1n \sum_{i=1}^n \Sigma_i, \: \frac1{n-1} \sum_{i=2}^n \Sigma_{i}) \leq \gamma$. 
\end{lemma}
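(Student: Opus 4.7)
The plan is to exploit the algebraic identity that if we set $A = \frac{1}{n}\sum_{i=1}^n \Sigma_i$ and $B = \frac{1}{n-1}\sum_{i=2}^n \Sigma_i$, then $A - B = \frac{1}{n}(\Sigma_1 - B)$, and conjugating by $B^{-1/2}$ yields the clean identity
\[
B^{-1/2} A B^{-1/2} - I \;=\; \frac{1}{n}\paren{B^{-1/2}\Sigma_1 B^{-1/2} - I}.
\]
Thus the first of the two quantities controlling $\matrixDist(A,B)$ automatically inherits a factor of $1/n$ from the corresponding quantity comparing $\Sigma_1$ and $B$. It then suffices to show that $\Sigma_1$ is $O(1)$-close to $B$ in the $\matrixDist$ sense, and to transfer the bound to the reverse direction $\|A^{-1/2} B A^{-1/2} - I\|$.

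For the first task, I would translate the pairwise closeness assumption into Löwner bounds: $\matrixDist(\Sigma_1,\Sigma_i) \leq 0.3$ says that the eigenvalues of $\Sigma_1^{-1/2} \Sigma_i \Sigma_1^{-1/2}$ lie in $[0.7, 1.3]$, equivalently $0.7 \Sigma_1 \preceq \Sigma_i \preceq 1.3 \Sigma_1$, for every $i \geq 2$. Averaging these Löwner inequalities over $i = 2,\ldots,n$ preserves them and gives $0.7\Sigma_1 \preceq B \preceq 1.3\Sigma_1$. Since $B^{-1/2}\Sigma_1 B^{-1/2}$ has the same eigenvalues as $(\Sigma_1^{-1/2} B \Sigma_1^{-1/2})^{-1}$, those eigenvalues lie in $[1/1.3,\,1/0.7]$, so $\|B^{-1/2}\Sigma_1 B^{-1/2} - I\| \leq 3/7$. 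Plugging into the identity above yields $\|B^{-1/2} A B^{-1/2} - I\| \leq 3/(7n)$.

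For the reverse direction of $\matrixDist(A,B)$, I would use the observation that $B^{-1/2} A B^{-1/2}$ is similar to $AB^{-1}$ while $A^{-1/2} B A^{-1/2}$ is similar to $BA^{-1} = (AB^{-1})^{-1}$, so their eigenvalues are reciprocals of each other. Since the eigenvalues of $B^{-1/2} A B^{-1/2}$ lie in $[1 - 3/(7n),\,1 + 3/(7n)]$, their reciprocals lie in $[1 - O(1/n),\, 1 + O(1/n)]$ provided $n$ exceeds a fixed constant, so $\|A^{-1/2} B A^{-1/2} - I\| = O(1/n)$ as well. Combining, $\matrixDist(A, B) = O(1/n) \leq \gamma$ as soon as $n \geq c_1/\gamma$ for an absolute constant $c_1$ absorbing both the constant $3/7$ and the loss from inverting eigenvalues. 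The one mildly delicate point is precisely this reverse-direction step: obtaining the reciprocal bound without a separate conjugation argument, which is resolved cleanly by the similarity observation $BA^{-1} = (AB^{-1})^{-1}$.
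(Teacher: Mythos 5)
The paper itself gives no proof of this lemma: it simply attributes it to \cite{AL21} as ``implicit,'' so there is no internal argument to compare against. Your proof is correct and self-contained, and it is a clean way to establish the claim.

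All three ingredients check out. The algebraic identity $A - B = \tfrac{1}{n}(\Sigma_1 - B)$ is a direct computation (writing $T = \sum_{i\geq 2}\Sigma_i$, one has $A-B = \tfrac{1}{n}\Sigma_1 - \tfrac{1}{n(n-1)}T = \tfrac{1}{n}(\Sigma_1 - B)$), and conjugating by $B^{-1/2}$ does yield $B^{-1/2}AB^{-1/2} - I = \tfrac{1}{n}(B^{-1/2}\Sigma_1 B^{-1/2} - I)$, so the factor $1/n$ appears for free in this direction. The translation from $\matrixDist(\Sigma_1,\Sigma_i)\leq 0.3$ to $0.7\,\Sigma_1 \preceq \Sigma_i \preceq 1.3\,\Sigma_1$ is exactly the correct use of the norm bound on $\Sigma_1^{-1/2}\Sigma_i\Sigma_1^{-1/2}-I$, L\"owner averaging gives $0.7\,\Sigma_1 \preceq B \preceq 1.3\,\Sigma_1$, and the reciprocal spectrum argument bounds $\|B^{-1/2}\Sigma_1 B^{-1/2}-I\|\leq 3/7$. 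The similarity observation $BA^{-1}=(AB^{-1})^{-1}$ for transferring the bound to $\|A^{-1/2}BA^{-1/2}-I\|$ is also correct, and since $3/(7n)<1/2$ the inversion of eigenvalues loses only a constant factor. Tracing the constants, one in fact gets $\matrixDist(A,B)\leq 3/(7n)$, so any $c_1 \geq 3/7$ works (and one should implicitly assume $n\geq 2$ for $B$ to be defined). This is a tight and correct argument.
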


Next, we define a mechanism $\sB_{\eta}$ such that for any two matrices $\Sigma_1,\Sigma_2$ with $\matrixDist(\Sigma_1,\Sigma_2) \leq \widetilde{O}\paren{\frac{\eps \eta}{\sqrt{d \ln(2/\delta)}}}$, it holds that $\sB_{\eta}(\Sigma_1) \approx_{\eps,\delta}^{\DP} \sB_{\eta}(\Sigma_2)$.

\begin{lemma}[Lemma 9.1 in \cite{AL21}]\label{lem:AL21:indist}
	For a matrix $\Sigma \succ 0$ and $\eta > 0$, define $\sB_{\eta}(\Sigma) \eqdef \Sigma^{1/2}(I + \eta G)(I + \eta G)^{T} \Sigma^{1/2}$, where $G$ is a $d \times d$ matrix with independent $\cN(0,1)$ entries.
	Then for every $\eta > 0$, $\eps,\delta \in (0,1]$, and every matrices $\Sigma_1,\Sigma_2 \succ 0$ such that $\matrixDist(\Sigma_1,\Sigma_2) \leq \gamma$ for 
	\begin{align*}
	\gamma = \min\set{\sqrt{\frac{\eps}{2d(d+1/\eta^2)}}, \frac{\eps}{8d\sqrt{\ln(1/\delta)}},\frac{\eps}{8 \ln(2/\delta)}, \frac{\eps \eta}{12\sqrt{d \ln(2/\delta)}}},
	\end{align*}
	it holds that $\sB_{\eta}(\Sigma_1) \approx_{\eps,\delta}^{\DP} \sB_{\eta}(\Sigma_2)$. 
\end{lemma}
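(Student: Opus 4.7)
The plan is to pass through the ``square-root'' representation $\sB_\eta(\Sigma) = W_\Sigma W_\Sigma^T$, where $W_\Sigma := \Sigma^{1/2}(I + \eta G) = \Sigma^{1/2} + \eta \Sigma^{1/2} G$; since $A \mapsto AA^T$ is a (randomized) post-processing, by \cref{fact:post-processing} it suffices to prove $W_{\Sigma_1} \approx_{\eps,\delta}^{\DP} W_{\Sigma_2}$. I would further conjugate by $\Sigma_1^{-1/2}$ (another deterministic bijection) to reduce to the case $\Sigma_1 = I$, $\Sigma_2 = \tilde\Sigma := \Sigma_1^{-1/2}\Sigma_2\Sigma_1^{-1/2}$, where the hypothesis $\matrixDist(\Sigma_1,\Sigma_2)\le\gamma$ directly gives $\|\tilde\Sigma - I\|\le\gamma$, and a short calculation using $\Sigma_1^{1/2}\Sigma_2^{-1}\Sigma_1^{1/2} = \tilde\Sigma^{-1}$ yields $\|\tilde\Sigma^{-1} - I\|\le\gamma$ as well.

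After this reduction, both $W_{\Sigma_i}$ are matrix-normal with independent columns: column $j$ of $W_{\Sigma_1}$ is $\cN(e_j, \eta^2 I)$ and column $j$ of $W_{\Sigma_2}$ is $\cN(\tilde\Sigma^{1/2} e_j, \eta^2\tilde\Sigma)$. Substituting $W = I + \eta G$ (with $G$ a standard Gaussian matrix) and expanding the Gaussian quadratic forms column-by-column, the total log-density ratio $L = \log(p_1(W)/p_2(W))$ splits naturally into (i) a deterministic piece $L_{\mathrm{det}}$ coming from $\log\det(\tilde\Sigma)$ and from the mean displacement $\tilde\Sigma^{1/2} - I$, (ii) a piece $L_{\mathrm{lin}}$ linear in the entries of $G$, and (iii) a quadratic-in-$G$ piece $L_{\mathrm{quad}}$ of the form $\tfrac12\,\tr\bigl(G^T(\tilde\Sigma^{-1}-I)G\bigr)$.

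I would then bound each of these using $\|\tilde\Sigma^{\pm 1}-I\|\le\gamma$. For (i), the term $\tfrac{d}{2}|\log\det\tilde\Sigma|$ is $O(d\gamma^2)$ and $\tfrac1{2\eta^2}\|\tilde\Sigma^{-1/2}(\tilde\Sigma^{1/2}-I)\|_F^2$ is $O(d\gamma^2/\eta^2)$, producing the first ceiling $\gamma \le \sqrt{\eps/(2d(d+1/\eta^2))}$. For (ii), $L_{\mathrm{lin}}$ is a zero-mean Gaussian whose variance one checks is $O(d\gamma^2/\eta^2)$, so by \cref{fact:one-gaus-concent} it is at most $\eps/4$ with probability $\ge 1-\delta/2$ provided $\gamma \le O(\eps\eta/\sqrt{d\ln(1/\delta)})$, matching the fourth ceiling. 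For (iii), $L_{\mathrm{quad}}$ is a weighted sum of independent $\chi^2$ variables whose weight matrix has operator norm at most $\gamma/(1-\gamma)$; a Hanson--Wright / $\chi^2$-concentration argument yields $|L_{\mathrm{quad}}| = O\bigl(\gamma d + \gamma\sqrt{d\ln(1/\delta)} + \gamma\ln(1/\delta)\bigr)$ with probability $\ge 1-\delta/2$, and matching these sub-terms to $\eps$ produces the remaining ceilings $\gamma\le\eps/(8d\sqrt{\ln(1/\delta)})$ (from the $\gamma d$ and $\gamma\sqrt{d\ln(1/\delta)}$ terms) and $\gamma\le\eps/(8\ln(2/\delta))$ (from the $\gamma\ln(1/\delta)$ term).

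Union-bounding the two tail failures gives $L\le\eps$ with probability $\ge 1-\delta$, which via the standard equivalence between a bounded log-density ratio on a high-probability event and $(\eps,\delta)$-DP yields $W_{\Sigma_1}\approx_{\eps,\delta}^{\DP} W_{\Sigma_2}$, and the lemma follows by post-processing. I expect the trickiest step to be the deterministic/linear pieces, which actually require control of $\|\tilde\Sigma^{1/2}-I\|$ (not $\|\tilde\Sigma-I\|$); upgrading the hypothesis from $\tilde\Sigma$ to $\tilde\Sigma^{1/2}$ via the matrix-square-root perturbation bound $\|\tilde\Sigma^{1/2}-I\| \le \|\tilde\Sigma - I\|/\lambda_{\min}(I+\tilde\Sigma^{1/2}) \le \gamma/(1-\gamma/2)$, and then tracking all four separate ceilings to the right sub-term (in particular keeping the $d$ vs.\ $d^2$ vs.\ $d/\eta^2$ dependencies distinct), is where careful bookkeeping will be needed.
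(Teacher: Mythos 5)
This statement is Lemma~9.1 of \cite{AL21}, which the paper cites without proof; there is therefore no in-paper argument to compare against, so I can only assess your reconstruction on its own merits.

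Your overall plan — factor $\sB_\eta(\Sigma) = W_\Sigma W_\Sigma^T$, use post-processing, reduce to the case $\Sigma_1 = I$, $\Sigma_2 = \tilde\Sigma$, then decompose the column-wise Gaussian log-density ratio $L = L_{\mathrm{det}} + L_{\mathrm{lin}} + L_{\mathrm{quad}}$ and bound each by $\|\tilde\Sigma^{\pm 1}-I\|\le\gamma$, Gaussian concentration, and Hanson--Wright — is the right shape of argument, and your identification of which ceiling each sub-term produces is essentially correct.

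There is, however, a genuine (if repairable) gap in the reduction step. You propose to post-process to $W$ first and then left-multiply by $\Sigma_1^{-1/2}$. But $\Sigma_1^{-1/2} W_{\Sigma_2} = A(I+\eta G)$ with $A := \Sigma_1^{-1/2}\Sigma_2^{1/2}$, so the column means are $A e_j$, \emph{not} $\tilde\Sigma^{1/2} e_j$ as you write; in general $A \neq \tilde\Sigma^{1/2}$ and is not even symmetric. Your bound on the mean displacement therefore needs $\|A - I\| = O(\gamma)$, which does not follow from your matrix-square-root perturbation lemma (that controls $\|\tilde\Sigma^{1/2}-I\|$, and $A$ differs from $\tilde\Sigma^{1/2}$ by the orthogonal factor in the polar decomposition $A = \tilde\Sigma^{1/2}V$). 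The clean fix is to swap the order of your two reductions: conjugate $\sB_\eta(\Sigma_i)$ by $\Sigma_1^{-1/2}$ \emph{first}, obtaining $(I+\eta G)(I+\eta G)^T$ versus $A(I+\eta G)(I+\eta G)^T A^T$, and then absorb $V$ by rotational invariance of $G$, i.e.\ $A(I+\eta G)(I+\eta G)^T A^T = \tilde\Sigma^{1/2}\bigl[(I+\eta H)(I+\eta H)^T\bigr]\tilde\Sigma^{1/2}$ in distribution for $H = V G V^T \stackrel{d}{=} G$. \emph{After} this one may safely factor the reduced instance as $W_I W_I^T$ vs.\ $W_{\tilde\Sigma}W_{\tilde\Sigma}^T$, which is where your claimed column distributions $\cN(e_j,\eta^2 I)$ and $\cN(\tilde\Sigma^{1/2}e_j,\eta^2\tilde\Sigma)$ are correct. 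A secondary, smaller slip: the determinant contribution to $L_{\mathrm{det}}$ is $\frac{d}{2}\log\det\tilde\Sigma = O(d^2\gamma)$ on its own; you only get the $O(d^2\gamma^2)$ scaling (which is what the first ceiling needs) after explicitly grouping it with $\mathbb{E}[L_{\mathrm{quad}}] = \frac{d}{2}\tr(\tilde\Sigma^{-1}-I)$ and using $\log\lambda + \lambda^{-1} - 1 = O((\lambda-1)^2)$, so that cancellation should be made explicit rather than asserted.
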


We now describe our friendly $\DP$ algorithm for estimating the mean of covariance matrices.

\begin{algorithm}[$\FriendlyCovariance$]\label{alg:FriendlyCovariance}
	
	\item Input: A database $\cD = (\Sigma_1,\ldots,\Sigma_n) \in (\bbR^{d\times d})^*$ and parameters $\eps,\delta, \eta > 0$.
	
	\item Operation:~
	\begin{enumerate}
		
		\item Let $n = \size{\cD}$, let $\eps_1 = 0.1\eps$ and $\eps_2 = 0.9 \eps$,  let $\gamma = \gamma(\eta,\eps_2,\delta e^{-\eps_1})$ be the value from \cref{lem:AL21:indist}, and let $c_1$ be the constant from \cref{lemma:AL21:constant}.\label{step:FriendlyCovariance:split_eps}
		
		\item Compute $\hat{n} = n - \frac{\ln(1/\delta)}{\eps_1} + \Lap(1/\eps_1)$.
		
		\item If $n=0$ or $\hat{n} \leq  c_1/\gamma$, output $\perp$ and abort.\label{step:FriendlyCovariance:not_empty}
		
		\item Output $\sB_{\eta}\paren{\frac1n\sum_{i=1}^n \Sigma_i}$, where $\sB_{\eta}$ is the algorithm from \cref{lem:AL21:indist}.
	\end{enumerate}
	
\end{algorithm}

\begin{claim}[Privacy of $\FriendlyCovariance$]\label{claim:FriendlyCovariance:privacy}
	Algorithm $\FriendlyCovariance(\cdot,\eps,\delta,\gamma)$ is $\matrixDist_{0.1}$-friendly $(\eps,\delta)$-$\DP$.
\end{claim}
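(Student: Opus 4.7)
The plan is to analyze the privacy by splitting $\eps = \eps_1 + \eps_2$ across the two releases the algorithm makes: (i) the noisy count $\hat{N}$, which by the Laplace mechanism (\cref{fact:laplace}) is $\eps_1$-$\DP$ (since $n$ has sensitivity one under add/remove), and (ii) the output $\sB_\eta(\Avg(\cD))$ in the non-abort branch, which I will show is $(\eps_2, \delta e^{-\eps_1})$-$\DP$-indistinguishable between $\cD$ and $\cD'$. Since the two randomizations (Laplace noise for $\hat{N}$ and the Gaussian matrix $G$ inside $\sB_\eta$) are independent, it will suffice to combine these two guarantees by a direct calculation.

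Fix neighboring $\cD,\cD'$ with $\cD \cup \cD'$ being $\matrixDist_{0.1}$-friendly, and assume without loss of generality that $|\cD| = n$ and $|\cD'| = n-1$. By the approximate triangle inequality (\cref{lemma:AL21:approx_triangle_ineq}) applied to the friendliness assumption, the set $\cD \cup \cD'$ is $\matrixDist_{0.3}$-complete. I will split the analysis into two cases according to whether the hypothesis of \cref{lemma:AL21:constant} is met.

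In the ``large $n$'' case $n \geq c_1/\gamma$, \cref{lemma:AL21:constant} yields $\matrixDist(\Avg(\cD),\Avg(\cD')) \leq \gamma$, and hence by \cref{lem:AL21:indist} with the parameters $(\eps_2, \delta e^{-\eps_1})$ for which $\gamma$ was defined in \stepref{step:FriendlyCovariance:split_eps}, we obtain $\sB_\eta(\Avg(\cD)) \approx_{\eps_2,\: \delta e^{-\eps_1}}^{\DP} \sB_\eta(\Avg(\cD'))$. For any output set $T$, independence of the Laplace noise from the randomness in $\sB_\eta$ gives
\[
\Pr[\sA(\cD) \in T] = \indic{\perp \in T}\cdot \Pr[\hat{N}(\cD) \leq c_1/\gamma] \;+\; \Pr[\hat{N}(\cD) > c_1/\gamma]\cdot \Pr[\sB_\eta(\Avg(\cD)) \in T],
\]
and analogously for $\cD'$. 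By the $\eps_1$-$\DP$ Laplace guarantee, both probabilities involving $\hat{N}(\cD)$ are at most $e^{\eps_1}$ times their $\cD'$ counterparts; combining with the $(\eps_2,\delta e^{-\eps_1})$-bound on $\sB_\eta$ produces an additive slack of at most $e^{\eps_1}\cdot\Pr[\hat{N}(\cD') > c_1/\gamma]\cdot \delta e^{-\eps_1} \leq \delta$ and a multiplicative factor of $e^{\eps_1 + \eps_2} = e^\eps$, as desired.

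In the ``small $n$'' case $n < c_1/\gamma$, \cref{lemma:AL21:constant} does not apply; instead, the shift $-\ln(1/\delta)/\eps_1$ in the definition of $\hat{N}$ and the Laplace tail give
\[
\Pr[\hat{N}(\cD) > n] = \Pr[\Lap(1/\eps_1) > \ln(1/\delta)/\eps_1] \leq \delta/2,
\]
so $\Pr[\text{proceed}(\cD)] \leq \Pr[\hat{N}(\cD) > n] \leq \delta/2$ (and similarly for $\cD'$). Therefore $\Pr[\sA(\cD)\in T]$ is within $\delta/2$ of $\indic{\perp \in T}\Pr[\text{abort}(\cD)]$, which by the $\eps_1$-$\DP$ Laplace bound is at most $e^{\eps_1}\Pr[\sA(\cD')\in T]$, again giving $\Pr[\sA(\cD)\in T] \leq e^\eps \Pr[\sA(\cD')\in T] + \delta$. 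The main obstacle is a careful $\delta$-budgeting: the shift by $\ln(1/\delta)/\eps_1$ inside $\hat{N}$ and the choice $\delta' = \delta e^{-\eps_1}$ used to define $\gamma$ are calibrated precisely so that the two cases compose to $(\eps,\delta)$-$\DP$ with the stated $\delta$ (rather than $O(\delta)$).
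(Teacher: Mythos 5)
Your proof is correct and follows the same route as the paper: split $\eps=\eps_1+\eps_2$, case on $n$ versus $c_1/\gamma$, use the Laplace tail for the small-$n$ case and \cref{lemma:AL21:constant,lem:AL21:indist} (via the $\matrixDist_{0.3}$-completeness from \cref{lemma:AL21:approx_triangle_ineq}) for the large-$n$ case. The only cosmetic difference is that the paper packages the combination of the Laplace and $\sB_\eta$ guarantees through an abstract helper (\cref{fact:similar-E}) whereas you unfold the same conditional decomposition explicitly.
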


For proving \cref{claim:FriendlyCovariance:privacy} we use the following simple fact.

\begin{fact}\label{fact:similar-E}
	Let $X,X'$ be two random variables over $\cX$. Assume there exist events $E,E' \subseteq  \cX$ such that: (1) $\pr{X \in E} \in  e^{\pm \eps_1}\cdot \pr{X' \in E'}$, and (2) $X|_{E} \approx_{\eps_2,\delta}^{\DP} X'|_{E'}$, and (3) $X|_{\neg E} \approx_{\eps_2,\delta}^{\DP} X'|_{\neg E'}$. Then $X \approx_{\eps_1+\eps_2,\: \delta e^{\eps_1}}^{\DP} X'$.
\end{fact}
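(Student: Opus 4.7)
The plan is to verify the asserted $(\eps_1 + \eps_2, \delta e^{\eps_1})$-DP-indistinguishability by a direct partition argument. Fix an arbitrary event $T \subseteq \cX$ and decompose $\pr{X \in T} = \pr{X \in T \cap E} + \pr{X \in T \cap \neg E}$, together with the analogous split of $\pr{X' \in T}$ using $E'$ and $\neg E'$. I would bound $\pr{X \in T \cap E}$ against $\pr{X' \in T \cap E'}$ using conditions~(2) and~(1), and bound $\pr{X \in T \cap \neg E}$ against $\pr{X' \in T \cap \neg E'}$ using condition~(3) together with the complementary form of~(1); summing the two bounds gives the conclusion.

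For the first piece, write $\pr{X \in T \cap E} = \pr{X \in E} \cdot \pr{X \in T \mid X \in E}$. Condition~(2) yields $\pr{X \in T \mid X \in E} \leq e^{\eps_2}\pr{X' \in T \mid X' \in E'} + \delta$, and condition~(1) gives $\pr{X \in E} \leq e^{\eps_1}\pr{X' \in E'}$. Multiplying and using $\pr{X' \in E'} \cdot \pr{X' \in T \mid X' \in E'} = \pr{X' \in T \cap E'}$ produces
\[
\pr{X \in T \cap E} \;\leq\; e^{\eps_1+\eps_2}\pr{X' \in T \cap E'} \;+\; e^{\eps_1}\delta \, \pr{X' \in E'}.
\]
An entirely parallel argument, using condition~(3) in place of~(2) and the complementary bound $\pr{X \in \neg E} \leq e^{\eps_1}\pr{X' \in \neg E'}$ in place of~(1), yields
\[
\pr{X \in T \cap \neg E} \;\leq\; e^{\eps_1+\eps_2}\pr{X' \in T \cap \neg E'} \;+\; e^{\eps_1}\delta \, \pr{X' \in \neg E'}.
\]

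Adding the two bounds and using $\pr{X' \in E'} + \pr{X' \in \neg E'} = 1$ gives $\pr{X \in T} \leq e^{\eps_1+\eps_2}\pr{X' \in T} + \delta e^{\eps_1}$, and since $T$ was arbitrary this is exactly $X \approx_{\eps_1+\eps_2,\: \delta e^{\eps_1}}^{\DP} X'$. The only non-routine ingredient is the passage from~(1), stated explicitly for $E, E'$, to its analogue for $\neg E, \neg E'$; the cleanest way I would justify this is to read~(1) as asserting that the binary indicator random variables $\mathds{1}_{E}(X)$ and $\mathds{1}_{E'}(X')$ are $(\eps_1,0)$-DP-indistinguishable in both directions, so the $e^{\eps_1}$ multiplicative ratio transfers automatically to both atoms of the underlying two-atom partition. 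This is the step I expect to warrant the most scrutiny when writing out the full proof; everything else is a routine conditional-probability computation.
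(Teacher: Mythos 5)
Your decomposition of $\pr{X\in T}$ into the $E$ and $\neg E$ pieces is identical to the paper's, and the bounding steps and final algebra are the same; once the complementary bound $\pr{X\notin E}\leq e^{\eps_1}\pr{X'\notin E'}$ is in hand, everything else is routine, exactly as you say.

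You are right to flag that complementary bound as the one step ``warranting the most scrutiny,'' and it is worth being emphatic: as condition~(1) is literally written, that bound does \emph{not} follow. Take $\eps_1=1$, $\pr{X'\in E'}=0.9$, $\pr{X\in E}=0.9/e\approx 0.33$; condition~(1) holds (indeed with equality on the lower side), yet $\pr{X\notin E}\approx 0.67$ while $e^{\eps_1}\pr{X'\notin E'}\approx 0.27$. One can promote this to a counterexample to the Fact as literally stated by concentrating $X|_E$ and $X'|_{E'}$ at one common point and $X|_{\neg E}$ and $X'|_{\neg E'}$ at another, so that (2) and (3) hold with $\eps_2=\delta=0$, yet $\pr{X\in T}\leq e^{\eps_1}\pr{X'\in T}$ fails for $T=\neg E$. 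Your proposed reading of~(1) --- that $\mathds{1}_E(X)$ and $\mathds{1}_{E'}(X')$ are $\eps_1$-DP-indistinguishable in both directions, which forces the ratio bound on both atoms $\{0\}$ and $\{1\}$ --- is exactly the strengthening needed for the argument to go through, and it is what actually holds in the paper's application (there $E$ is the non-abort event, determined by thresholding a count perturbed with Laplace noise, which yields full $\eps_1$-DP for the abort indicator). The paper's own proof invokes the complementary bound without remarking that it exceeds what condition~(1) literally gives; your write-up is therefore the more careful of the two, though both rest on the same implicit strengthening of the hypothesis.
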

\begin{proof}
	Fix an event $T \subseteq \cX$ and compute 
	\begin{align*}
		\lefteqn{\pr{X \in T}
			= \pr{X \in T \mid E}\cdot \pr{X \in E} + \pr{X \in T \mid \neg E}\cdot \pr{X \notin E}}\\
		&\leq  \paren{e^{\eps_2}\cdot \pr{X' \in T \mid E'} + \delta}\cdot e^{\eps_1}\cdot \pr{X' \in E'} + \paren{e^{\eps_2}\cdot \pr{X' \in T \mid \neg E'} + \delta}\cdot e^{\eps_1}\cdot \pr{X' \notin E'}\\
		&= e^{\eps_1 + \eps_2} \cdot \pr{X' \in T} + \delta e^{\eps_1}.
	\end{align*}
\end{proof}

We now prove \cref{claim:FriendlyCovariance:privacy} using \cref{fact:similar-E}.

\begin{proof}
	Let $\cD=(\Sigma_1,\ldots,\Sigma_n)$ and $\cD' = \cD_{-j}$ be two $\matrixDist_{0.1}$-friendly neighboring databases, and let $n' = n-1$.
	Consider two independent random executions of $\FriendlyCovariance(\cD)$ and $\FriendlyCovariance(\cD')$ (both with the same input parameters $\eps,\delta,\eta$). Let $\widehat{N}$ and $O$ be the (r.v.'s of the) values of $\hat{n}$ and the output in the execution $\AlgFriendlyAvg(\cD)$, let $\widehat{N}'$ and $O'$ be these r.v.'s w.r.t.\ the execution $\FriendlyCovariance(\cD')$, and let $\eps_1, \eps_2$ be as in Step~\ref{step:FriendlyCovariance:split_eps}. If $n \leq c_1/\gamma$ then $\pr{O = \perp}, \pr{O' = \perp} \geq 1-\delta$ and we conclude that $O \approx_{0,\delta}^{\DP} O'$ in this case. 
	
	It is left to handle the case $n \geq c_1/\gamma$. 
	Let $E$ be the event $\set{O \neq \perp}$ and $E'$ be the event $\set{O' \neq \perp}$. By construction it is clear that $\pr{E} \in e^{\pm \eps_1} \pr{E'}$ and that $O|_{\neg E} \equiv O|_{\neg E'}$ (under the conditioning on $\neg E$ and $\neg E'$, it holds that $O=O'=\perp$). By \cref{fact:similar-E}, it suffices to prove that $O|_{E} \approx_{\eps_2,\: \delta e^{-\eps_1}}^{\DP} O'|_{E'}$ for showing that $O \approx_{\eps,\delta}^{\DP} O'$. 
	
	Since $\cD \cup \cD'$ is $\matrixDist_{0.1}$-friendly, we deduce by approximate triangle inequality (\cref{lemma:AL21:approx_triangle_ineq}) that it is $\matrixDist_{0.3}$-complete.
	Let $\Sigma = \frac1n \sum_{i\in [n]} \Sigma_i$ and $\Sigma' = \frac1{n-1} \sum_{i \in [n] \setminus \set{j}} \Sigma_i$. By \cref{lemma:AL21:constant} it holds that $\matrixDist(\Sigma,\Sigma') \leq \gamma$. Since $O|_E = \sB_{\eta}(\Sigma)$ and $O'|_{E'} = \sB_{\eta}(\Sigma')$, we conclude by \cref{lem:AL21:indist} that $O|_E \approx_{\eps_2,\: \delta e^{-\eps_1}}^{\DP} O'|_{E'}$, as required.

\end{proof}

We now formally describe the end-to-end $\DP$ algorithm for covariance estimation:

\begin{algorithm}[$\FCCovariance$]\label{alg:FCCovariance}
	
	\item Input: A database $\cD = (\px_1,\ldots,\px_n) \in (\bbR^{d})^*$, privacy parameters $\eps,\delta \in (0,1]$, and parameters $t \in \bbN$ and $\eta > 0$.
	
	\item Operation:~
	\begin{enumerate}
		
		\item Let $m = \floor{n/t}$.
		
		\item For $i \in [t]$: Compute $\Sigma_i = \frac{1}{m}\cdot \sum_{j=(i-1)\cdot m + 1}^{i\cdot m} \px_j \cdot \px_j^T$.
		
		\item Let $\cT = (\Sigma_1,\ldots,\Sigma_t)$.
		
		\item Compute $\cC = \FriendlyCoreDP(\cT, \matrixDist_{0.1},\alpha=0)$.
		
		\item Output $\FriendlyCovariance(\cC,\eps,\delta,\eta)$.
	\end{enumerate}
	
\end{algorithm}

\begin{theorem}[Privacy of $\FCCovariance$]
	$\FCCovariance(\cdot,\eps,\delta,t)$ is $(2(e^\eps-1),2\delta e^{\eps + 2(e^{\eps}-1)})$-$\DP$.
\end{theorem}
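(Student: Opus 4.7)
The plan is to combine the friendly-DP guarantee of $\FriendlyCovariance$ established in \cref{claim:FriendlyCovariance:privacy} with the $\FriendlyCoreDP$ paradigm (\cref{thm:FriendlyCoreDP}), and then treat the initial partitioning step $\cD \mapsto \cT$ as privacy-preserving post-processing on the raw input. The overall parameter count is then forced: setting $\alpha=0$ in \cref{thm:FriendlyCoreDP} gives $\gamma = \frac{1}{1-0}+1 = 2$, which plugged into $(\gamma(e^\eps-1),\,\gamma\delta e^{\eps+\gamma(e^\eps-1)})$-$\DP$ yields exactly the claimed $(2(e^\eps-1),\,2\delta e^{\eps+2(e^\eps-1)})$-$\DP$ bound.

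More concretely, I would proceed in three steps. First, note that \cref{claim:FriendlyCovariance:privacy} tells us $\FriendlyCovariance(\cdot,\eps,\delta,\eta)$ is $\matrixDist_{0.1}$-friendly $(\eps,\delta)$-$\DP$. Second, instantiate \cref{thm:FriendlyCoreDP} with $\sA = \FriendlyCovariance(\cdot,\eps,\delta,\eta)$, $\pred = \matrixDist_{0.1}$, and $\alpha = 0$, to conclude that the composed map
\[
\cT \;\longmapsto\; \FriendlyCovariance\bigl(\FriendlyCoreDP(\cT,\matrixDist_{0.1},0),\,\eps,\delta,\eta\bigr)
\]
is $(2(e^\eps-1),\,2\delta e^{\eps+2(e^\eps-1)})$-$\DP$ with respect to neighboring databases $\cT,\cT'$ of covariance matrices. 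Third, lift this guarantee to end-to-end $\DP$ in $\cD$ by viewing $\cD \mapsto \cT$ as a data-independent preprocessing: for any neighboring $\cD,\cD'$ the change of a single point $\px_\ell$ affects at most one of the $t$ disjoint length-$m$ blocks defining the $\Sigma_i$'s, so $\cT(\cD)$ and $\cT(\cD')$ are themselves neighboring (or identical). Invoking post-processing (\cref{fact:post-processing}) on the $\DP$ guarantee above then completes the proof.

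The main obstacle I anticipate is the last step, specifically the boundary case in which $m = \lfloor n/t \rfloor$ differs between $\cD$ and $\cD'$. In the generic case $\lfloor n/t \rfloor = \lfloor (n+1)/t \rfloor$, the matrices on both sides are computed from the same first $tm$ points (the extra point in $\cD'$ is placed at position $n+1 > tm$ and discarded), so in fact $\cT(\cD) = \cT(\cD')$ and there is no privacy loss at this stage. Thus post-processing goes through cleanly wherever $m$ is stable. At the exceptional indices where adding one point causes $m$ to increase by one, all $t$ matrices can shift simultaneously; to absorb this one would argue either (i)~by a small rewriting of $\FCCovariance$ to use a fixed $m$ (independent of $n$), making the map $\cD \mapsto \cT$ genuinely neighbor-preserving for every neighboring pair, or (ii)~directly, that in this case $\cT(\cD) = \cT(\cD')$ on the relevant indices since the matrices only depend on the first $tm$ points. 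Either way, the quantitative bound obtained from \cref{thm:FriendlyCoreDP} carries over without loss, yielding the stated $(2(e^\eps-1),\,2\delta e^{\eps+2(e^\eps-1)})$-$\DP$ guarantee for $\FCCovariance$.
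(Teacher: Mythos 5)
Your first two steps are exactly the paper's argument: invoke \cref{claim:FriendlyCovariance:privacy} to establish that $\FriendlyCovariance(\cdot,\eps,\delta,\eta)$ is $\matrixDist_{0.1}$-friendly $(\eps,\delta)$-$\DP$, then apply \cref{thm:FriendlyCoreDP} with $\alpha=0$ (hence $\gamma=2$) to obtain the stated parameters. The paper's proof is precisely that one-line invocation and says nothing more.

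Your step~3 --- lifting DP over the database of matrices $\cT$ to DP over the raw point database $\cD$ --- is a step the paper silently omits, and you are right that it needs to be addressed. The argument you supply for it, however, is not correct. The claim that ``the change of a single point $\px_\ell$ affects at most one of the $t$ disjoint length-$m$ blocks'' fails under the paper's insertion/deletion notion of neighboring (\cref{def:neighboring}): removing a point at position $j$ shifts every subsequent entry down by one, so for $j\le tm$ every block from the one containing $j$ onward changes (a remove-and-add within each). For $j=1$, all $t$ covariance matrices $\Sigma_i$ change simultaneously, so $\cT(\cD)$ and $\cT(\cD')$ need not be neighboring and \cref{thm:FriendlyCoreDP} does not apply directly. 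The $\floor{n/t}$ boundary case you flag is a secondary effect; the shift is the real obstruction, and neither of your proposed patches (fixing $m$; arguing $\cT(\cD)=\cT(\cD')$ on the relevant indices) addresses it. A clean fix is to precede the blocking step with a uniformly random shuffle of $\cD$ (as $\FCClustering$ does explicitly), or to work in a replace-one-element neighboring model, either of which restores the single-block-change property your lift requires.
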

\begin{proof}
	Immediately follows by the privacy guarantee of $\FriendlyCoreDP$ (\cref{thm:FriendlyCoreDP}) because $\FriendlyCovariance$ is $\matrixDist_{0.1}$-friendly $(\eps,\delta)$-$\DP$ (\cref{claim:FriendlyCovariance:privacy}).
\end{proof}

\subsection{Utility of $\FCCovariance$}

The following key lemma (\cref{lem:AL21:friendliness-util}) implies that it is enough to take $n = \tilde{\Omega}\paren{t\cdot(d + \ln(1/\beta))}$ samples in order to guarantee with confidence $1-\beta/3$ that the database $\cT$ is $\matrixDist_{0.1}$-complete, yielding that $\FriendlyCoreDP$ takes all matrices into the core.


\begin{lemma}[Lemma 9.3 in \cite{AL21}]\label{lem:AL21:friendliness-util}
	There exists a constant $c_2>0$ such that the following holds:
	Let $m \geq c_2(d + \ln(6/\beta))$,  let $X_1,\ldots,X_m$ be i.i.d. samples from $\cN(0,\Sigma)$ for $\Sigma \succ 0$, and let $\hSigma = \frac1m\cdot \sum_{i=1}^m X_i X_i^{T}$. Then $\matrixDist(\Sigma,\hSigma) \leq 1/30$ with probability $1-\beta/3$.
\end{lemma}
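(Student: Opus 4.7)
The plan is to whiten the samples and reduce both branches of the $\matrixDist$ distance to a single spectral-norm concentration statement about the empirical covariance of iid standard Gaussians. First, I would set $Y_i := \Sigma^{-1/2} X_i$, so that $Y_1,\ldots,Y_m$ are iid $\cN(0,I_d)$, and define $M := \frac{1}{m}\sum_{i=1}^m Y_i Y_i^T = \Sigma^{-1/2}\hSigma\,\Sigma^{-1/2}$. By definition, the first branch of $\matrixDist(\Sigma,\hSigma)$ is then exactly $\|M - I_d\|$.

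To handle the second branch, I would observe that since $\hSigma = \Sigma^{1/2} M \Sigma^{1/2}$ and thus $\hSigma^{-1} = \Sigma^{-1/2} M^{-1} \Sigma^{-1/2}$, the symmetric matrix $\hSigma^{-1/2}\Sigma\hSigma^{-1/2}$ has the same eigenvalues as $\Sigma\hSigma^{-1} = \Sigma^{1/2} M^{-1} \Sigma^{-1/2}$, which in turn is similar to $M^{-1}$. Hence the eigenvalues of $\hSigma^{-1/2}\Sigma\hSigma^{-1/2}$ are exactly the reciprocals of the eigenvalues of $M$. Consequently, if $\|M - I_d\| \leq \epsilon$ with $\epsilon < 1$ (so that all eigenvalues of $M$ lie in $[1-\epsilon,1+\epsilon]$), then all eigenvalues of $\hSigma^{-1/2}\Sigma\hSigma^{-1/2}$ lie in $[1/(1+\epsilon), 1/(1-\epsilon)]$, giving $\|\hSigma^{-1/2}\Sigma\hSigma^{-1/2} - I_d\| \leq \epsilon/(1-\epsilon)$. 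Taking $\epsilon = 1/40$ makes both branches at most $1/30$, so it suffices to establish $\|M-I_d\| \leq 1/40$ with probability at least $1-\beta/3$.

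For the spectral-deviation step, I would invoke the standard non-asymptotic bound: there exists a universal constant $C$ such that for any $\epsilon,\beta' \in (0,1)$, if $m \geq C(d + \log(1/\beta'))/\epsilon^2$, then $\|M-I_d\| \leq \epsilon$ with probability at least $1-\beta'$. The textbook proof takes a $1/4$-net of the unit sphere of $\bbR^d$ of cardinality at most $9^d$, applies a Bernstein-type tail bound to the scalar quadratic forms $\langle v,(M-I_d)v\rangle = \frac{1}{m}\sum_{i=1}^m(\langle v,Y_i\rangle^2 - 1)$ (each summand being a centered $\chi^2_1-1$ variable, hence sub-exponential), and takes a union bound over the net. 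Plugging in $\epsilon = 1/40$ and $\beta' = \beta/3$ yields the required sample size $m \geq c_2(d + \log(6/\beta))$ for a suitable absolute constant $c_2$.

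The main obstacle is not conceptual but bookkeeping: tracking constants through the net-plus-Bernstein argument so the reciprocal slack $\epsilon/(1-\epsilon)$ is absorbed and the final spectral bound comfortably delivers $1/30$. Everything else (whitening, the similarity argument converting the second branch of $\matrixDist$ into reciprocals of eigenvalues of $M$, and the standard Gaussian covariance concentration inequality) is routine once this conversion is in place.
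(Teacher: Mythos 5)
Your reconstruction is correct. Note, however, that the paper does not actually prove this statement: it is imported verbatim as Lemma~9.3 of \cite{AL21}, so there is no ``paper proof'' to compare against. That said, your argument is the natural one and agrees with the source: whitening gives $M=\Sigma^{-1/2}\hSigma\Sigma^{-1/2}$ with $\|M-I_d\|$ equal to one branch of $\matrixDist(\Sigma,\hSigma)$; the similarity chain $\hSigma^{-1/2}\Sigma\hSigma^{-1/2}\sim\Sigma\hSigma^{-1}=\Sigma^{1/2}M^{-1}\Sigma^{-1/2}\sim M^{-1}$ shows the other branch has eigenvalues that are reciprocals of those of $M$, so $\|M-I_d\|\leq\epsilon<1$ gives $\|\hSigma^{-1/2}\Sigma\hSigma^{-1/2}-I_d\|\leq\epsilon/(1-\epsilon)$ (valid since the matrix is symmetric); with $\epsilon=1/40$ both branches are below $1/30$; and the standard $\epsilon$-net-plus-Bernstein covariance concentration bound for $\cN(0,I_d)$ samples supplies $\|M-I_d\|\leq 1/40$ with probability $1-\beta/3$ once $m\geq c_2(d+\ln(6/\beta))$ for an appropriate absolute constant $c_2$. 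The only cosmetic slip is that you label $\|\Sigma^{-1/2}\hSigma\Sigma^{-1/2}-I_d\|$ as the ``first branch,'' whereas by the paper's definition of $\matrixDist(\Sigma_1,\Sigma_2)$ with $\Sigma_1=\Sigma$, $\Sigma_2=\hSigma$, it is the second term listed; this has no bearing on the argument.
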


In addition, \cite{AL21} proved that choosing $\eta = \Theta\paren{\frac1{\sqrt{d} + \sqrt{\ln(1/\beta)}}}$ suffices for making algorithm $\sB_{\eta}$ accurate, as stated below.

\begin{lemma}[Lemma 9.2 in \cite{AL21}]\label{lem:AL21:B-accuracy}
	There exists a constant $c_3 > 0$ such that the following holds: Let $\eta = \frac1{c_3 (\sqrt{d} + \sqrt{\ln(6/\beta)})}$ and let $\sB_{\eta}$ be the algorithm from \cref{lem:AL21:indist}. Then for any $\Sigma \succ 0$ it holds that $\matrixDist(\sB_{\eta}(\Sigma),\Sigma) \leq 1/30$ with probability $1-\beta/3$.
\end{lemma}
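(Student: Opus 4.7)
\begin{proofidea}
Unfolding the definition gives
$\Sigma^{-1/2}\sB_\eta(\Sigma)\Sigma^{-1/2}=(I+\eta G)(I+\eta G)^{T}=:M$,
so the first coordinate of $\matrixDist(\sB_\eta(\Sigma),\Sigma)$ is just $\|M-I\|$. Expanding,
\[
M-I \;=\; \eta(G+G^{T}) + \eta^{2} GG^{T},
\]
so by the triangle inequality and submultiplicativity,
$\|M-I\|\le 2\eta\|G\|+\eta^{2}\|G\|^{2}$.
The plan is to use a standard concentration bound for the operator norm of a $d\times d$ matrix of i.i.d.\ $\cN(0,1)$ entries: there is an absolute constant $K$ such that for any $t\ge 0$, $\|G\|\le K(\sqrt{d}+t)$ with probability at least $1-2e^{-t^{2}/2}$. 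Taking $t=\sqrt{2\ln(12/\beta)}$ yields $\|G\|\le K(\sqrt{d}+\sqrt{2\ln(12/\beta)})$ with probability $\ge 1-\beta/6$, which in turn is at most $K'(\sqrt{d}+\sqrt{\ln(6/\beta)})$ for a suitable absolute constant $K'$.

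Now take $c_{3}$ large enough that $\eta=1/(c_{3}(\sqrt{d}+\sqrt{\ln(6/\beta)}))$ gives $\eta\|G\|\le 1/120$ on this event. Then $\|M-I\|\le 2\eta\|G\|+\eta^{2}\|G\|^{2}\le \tfrac{1}{60}+\tfrac{1}{120^{2}}\le \tfrac{1}{60}$. This handles the first term in the $\matrixDist$ maximum.

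For the second term, note that $\sB_\eta(\Sigma)^{-1/2}\Sigma\sB_\eta(\Sigma)^{-1/2}$ is similar (via $\Sigma^{1/2}$-conjugation) to $M^{-1}$, so its eigenvalues are exactly $\{1/\lambda_{i}(M)\}$ and its distance from $I$ in operator norm equals $\max_{i}|1/\lambda_{i}(M)-1|$. When $\|M-I\|\le 1/2$ every eigenvalue of $M$ lies in $[1/2,3/2]$, and
\[
\Big|\tfrac{1}{\lambda_{i}(M)}-1\Big|=\frac{|1-\lambda_{i}(M)|}{\lambda_{i}(M)}\le 2|1-\lambda_{i}(M)|,
\]
so $\|M^{-1}-I\|\le 2\|M-I\|\le 1/30$. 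Combining both bounds yields $\matrixDist(\sB_\eta(\Sigma),\Sigma)\le 1/30$ on the event, which occurs with probability at least $1-\beta/3$ (after adjusting the constant in the Gaussian tail so that a single application suffices; the two directions rely on the same event). The main obstacle is simply bookkeeping the absolute constants so that $c_{3}$ can be chosen to absorb $K'$ and ensure both $\eta\|G\|\le 1/120$ and $\|M-I\|\le 1/2$ simultaneously; everything else is direct computation plus the standard Gaussian operator-norm concentration.
\end{proofidea}
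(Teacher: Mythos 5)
Your proposal is correct, but note that the paper itself does not prove this statement at all: it is imported verbatim as Lemma~9.2 of \cite{AL21}, so there is no in-paper argument to compare against. Your blind reconstruction is a sound self-contained proof and follows the natural route one would expect the cited proof to take: write $\Sigma^{-1/2}\sB_\eta(\Sigma)\Sigma^{-1/2}=M=(I+\eta G)(I+\eta G)^T$, bound $\norm{M-I}\le 2\eta\norm{G}+\eta^2\norm{G}^2$, invoke the standard operator-norm concentration $\norm{G}\lesssim \sqrt{d}+\sqrt{\ln(1/\beta)}$ for an i.i.d.\ Gaussian matrix, and control the second coordinate of $\matrixDist$ by the spectral observation that $\sB_\eta(\Sigma)^{-1/2}\Sigma\,\sB_\eta(\Sigma)^{-1/2}$ has eigenvalues $1/\lambda_i(M)$, so that $\norm{M-I}\le 1/2$ gives $\max_i\abs{1/\lambda_i(M)-1}\le 2\norm{M-I}$; a single high-probability event on $\norm{G}$ then covers both terms, and this event also guarantees $I+\eta G$ is invertible so that $\sB_\eta(\Sigma)\succ 0$ and $\matrixDist$ is finite. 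The only blemish is the line ``$2\eta\norm{G}+\eta^2\norm{G}^2\le \tfrac1{60}+\tfrac1{120^2}\le\tfrac1{60}$'': the last inequality is literally false, so doubling would exceed $1/30$ by a hair; this is purely a slack issue, fixed by choosing $c_3$ so that, say, $\eta\norm{G}\le 1/125$ on the event, exactly the kind of constant bookkeeping you already flag. With that adjustment the argument is complete.
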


Now note that by \stepref{step:FriendlyCovariance:not_empty} of $\FriendlyCovariance$, it is required to create at least $t = \Omega\paren{\frac1{\gamma} + \frac{\ln(1/(\beta\delta))}{\eps}}$ matrices in order to fail with probability at most $\beta/3$.

Putting all together, we obtain the following utility guarantee. 

\begin{theorem}[Utility of $\FCCovariance$]\label{thm:FCCovariance:utlity}
	Let $c_1,c_2$ and $c_3$ be the constants from \cref{lemma:AL21:constant,lem:AL21:friendliness-util,lem:AL21:B-accuracy} (respectively), 
	Let $m = c_2(d + \ln(6/\beta))$, let $\eta = \frac1{c_3 (\sqrt{d} + \sqrt{\ln(6/\beta)})}$, and let $t = c_1/\gamma + \ln(1/\delta) + \ln(3/\beta)$ where $\gamma = \gamma(\eta,0.9\eps, \delta e^{-0.1\eps}) = \widetilde{O}\paren{\frac{\eps}{d\sqrt{\ln(1/\beta)\ln(1/\delta)}}}$ is the value from \cref{lem:AL21:indist}.
	Consider a random execution of $\FCCovariance(\cD=(X_1,\ldots,X_n),\eps,\delta,t,\eta)$ where $n = m\cdot t$ and
	$X_1,\ldots,X_n$ are i.i.d.\ samples from $\cN(0,\Sigma)$ for $\Sigma \succ 0$. Then
	with probability $1-\beta$, the output $\hSigma$ satisfy $\matrixDist(\hSigma,\Sigma) \leq 0.1$.  
\end{theorem}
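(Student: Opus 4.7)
The plan is to chain the accuracy guarantees of the three main components of $\FCCovariance$: the non-private empirical covariances, the averaging inside $\FriendlyCovariance$, and the noise mechanism $\sB_\eta$. Each step contributes a failure probability of at most $\beta/3$, and a union bound will give overall success probability $\ge 1-\beta$.

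First, since $m \geq c_2(d + \ln(6/\beta))$ (with the log inflated by $\ln t$ if necessary), apply \cref{lem:AL21:friendliness-util} and a union bound over the $t$ parts to conclude that with probability $\ge 1-\beta/3$, every empirical covariance $\Sigma_i$ satisfies $\matrixDist(\Sigma,\Sigma_i)\le 1/30$. By the approximate triangle inequality (\cref{lemma:AL21:approx_triangle_ineq}), this gives $\matrixDist(\Sigma_i,\Sigma_j)\le \tfrac{3}{2}\cdot \tfrac{2}{30}=1/10$ for every pair, so the database $\cT$ is $\matrixDist_{0.1}$-complete. By the utility guarantee of $\FriendlyCoreDP$ (\cref{thm:FriendlyCoreDP} with $\alpha=0$), the core is exactly $\cC=\cT$.

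Second, condition on the above good event and analyze $\FriendlyCovariance(\cC,\eps,\delta,\eta)$. Because $\size{\cC}=t \gtrsim c_1/\gamma + \ln(1/\delta)/\eps + \ln(3/\beta)/\eps$, the concentration of $\Lap(1/\eps_1)$ with $\eps_1=0.1\eps$ yields that with probability $\ge 1-\beta/3$ we have $\hat n > c_1/\gamma$, so the algorithm does not abort and outputs $\sB_\eta(\bar\Sigma)$ for $\bar\Sigma=\tfrac{1}{t}\sum_{i=1}^t \Sigma_i$. I would then show $\matrixDist(\bar\Sigma,\Sigma)\le 1/30$: averaging $\|\Sigma^{-1/2}\Sigma_i\Sigma^{-1/2}-I\|\le 1/30$ gives the ``forward'' bound immediately, and for the ``reverse'' direction I convert the per-$i$ $\matrixDist$ bound to the PSD sandwich $(1-1/30)\Sigma\preceq \Sigma_i\preceq (1+1/30)\Sigma$, which is preserved under averaging, and then translate back to $\|\bar\Sigma^{-1/2}\Sigma\bar\Sigma^{-1/2}-I\|\le O(1/30)$.

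Third, by \cref{lem:AL21:B-accuracy} applied with parameter $\eta = 1/(c_3(\sqrt d+\sqrt{\ln(6/\beta)}))$, with probability $\ge 1-\beta/3$ we have $\matrixDist(\sB_\eta(\bar\Sigma),\bar\Sigma)\le 1/30$. One last application of \cref{lemma:AL21:approx_triangle_ineq} to the triple $(\sB_\eta(\bar\Sigma),\bar\Sigma,\Sigma)$ then gives $\matrixDist(\hSigma,\Sigma)=\matrixDist(\sB_\eta(\bar\Sigma),\Sigma)\le \tfrac{3}{2}\cdot\tfrac{2}{30}=0.1$, as required.

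The main obstacle is the averaging step: $\matrixDist$ is not obviously preserved under averaging in both directions, and the approximate triangle inequality only works for inputs of $\matrixDist$-distance at most $1$. This is why the inner constant $1/30$ is chosen so conservatively --- each application of \cref{lemma:AL21:approx_triangle_ineq} costs a factor of $3/2$, and one has to verify that the total accumulated error stays below $0.1$ while leaving enough slack for the log-factor blowups needed to make the union bounds tight in $t$.
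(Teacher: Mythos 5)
Your proposal is correct and follows essentially the same three-event decomposition (each costing $\beta/3$), the same chain of lemmas, and the same two applications of the approximate triangle inequality as the paper's proof. The one place you diverge is the intermediate claim $\matrixDist(\bar\Sigma,\Sigma)\le 1/30$: the paper obtains it by citing the convexity of $\matrixDist$ (Lemma~7.2 of \cite{AL21}), whereas you derive it by hand via the PSD sandwich $(1-1/30)\Sigma \preceq \Sigma_i \preceq (1+1/30)\Sigma$ being preserved under averaging --- this is sound but, as you note, only yields $\approx 1/29$ on the reverse direction, so the final $\frac{3}{2}(\cdot+\cdot)$ step lands at $\approx 0.102$ rather than cleanly at $0.1$; the paper's convexity citation avoids that slack. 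You also correctly flag that the union bound over $t$ batches in the first step requires $m = c_2(d+\ln(6t/\beta))$ rather than $m=c_2(d+\ln(6/\beta))$, a constant-factor detail the paper's statement glosses over.
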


Note that the overall sample complexity that \cref{thm:FCCovariance:utlity} requires is $n = \widetilde{\Omega}\paren{\frac{d^2 \ln(1/\beta)^{3/2} \ln(1/\delta)^{1/2}}{\eps}}$ which matches the sample complexity of \cite{AL21} (Theorem 9.4).

\begin{proof}
	Let $C$ and $T=(\Sigma_1,\ldots,\Sigma_t)$ be the (r.v.'s of the) values of $\cC$ and $\cT$ in the execution. Let $E_1$ be the event that $\forall i \in [t]: \: \matrixDist(\Sigma_i,\Sigma) \leq 1/30$, and let $E_2$ be the event that $\FriendlyCovariance$ outputs a matrix $\hSigma$ (and not $\perp$).
	By \cref{lem:AL21:friendliness-util} we obtain that $\pr{E_1} \geq 1-\beta/3$, and in the following we assume it occurs. Note that by approximate triangle inequality (\cref{lemma:AL21:approx_triangle_ineq}) we obtain that $T$ is $\matrixDist_{0.1}$-complete, and therefore $C=T$ by the utility of $\FriendlyCoreDP$ (\cref{thm:FriendlyCoreDP}). By definition of $t$ (the size of $C$ in this case) and concentration bound of the Laplace distribution, it holds that $\pr{E_2 \mid E_1} \geq 1-\beta/3$, and in the following we assume it occurs. Let $E_3$ be the event that the output $\hSigma$ satisfy $\matrixDist(\hSigma,\Avg(T)) \leq 1/30$. 
	By \cref{lem:AL21:B-accuracy} it holds that $\pr{E_3 \mid E_1 \land E_2} \geq 1-\beta/3$, and in the following we assume it occurs. By the convexity of $\matrixDist$ (Lemma 7.2 in \cite{AL21}), we deduce that $\matrixDist(\Avg(T),\Sigma) \leq 1/30$. Hence by applying again approximate triangle inequality (\cref{lemma:AL21:approx_triangle_ineq}) we conclude that $\matrixDist(\hSigma,\Sigma) \leq 0.1$ whenever $E_1 \land E_2 \land E_3$ occurs, and the proof of the theorem follows since this event happens with probability $1-\beta$.
\end{proof}

We note that by \cref{thm:FCCovariance:utlity}, with probability $1-\beta$, $\FCCovariance$ outputs a matrix $\hSigma$ such that $0.9 I_d \preceq \hSigma^{-1/2} \Sigma \hSigma^{1/2} \preceq 1.1 I_d$.  In addition, note that if $X \sim \cN(0,\Sigma)$, then $\hSigma^{-1/2} X \sim \cN(0,\hSigma^{-1/2} \Sigma \hSigma^{1/2})$.  Therefore, we reduced the problem to the restricted covariance case.

\section{Computational Efficiency of $\FriendlyCore$}\label{sec:appendix:comp-eff}

Recall that our filters $\BasicFilter$ and $\zCDPFilter$ (the main components of $\FriendlyCoreDP$ and $\FriendlyCore$, respectively) compute $\pred(x,y)$  for all pairs, that is, doing $O(n^2)$ applications of the predicate. However, using standard concentration bounds, it is possible to use a random sample of $O(\log (n/\delta))$ elements $y$ for estimating with high accuracy the number of friends of each $x$.

In more detail, given a database $\cD =(x_1,\ldots,x_n)$, the goal is to efficiently estimate $w_i \eqdef \sum_{i=j}^n f(x_i,x_j)$ for each $i \in [n]$. For that, we can sample a random $m$-size subset $\cS = (y_1,\ldots,y_m)$ of $\cD$ (without replacement). Then, we use the estimations $\tilde{w}_i = \frac{n}{m} \cdot \sum_{j=1}^m f(x_i,y_j)$. For making the privacy analysis go through, we need to replace each $z_i = w_i-n/2$ with a value $\tilde{z}_i$ such that for all $i \in [n]$, if $z_i\leq0$ then $\tilde{z}_i \leq 0$ (except with probability $\delta$). Note that if $z_i \leq 0$ (i.e., $w_i \leq n/2$), then $\sum_{j=1}^m f(x_i,y_j)$ (where the $y_j$'s are the random variables) is distributed according to the Hypergeometric distribution $\HG(n,w_i,m)$ which is defined by the number of ones in an $m$-size random subset of an $n$-size binary vector with $w_i$ ones.  
We now use the following tail inequality for Hypergeometric distribution:

\begin{fact}[\cite{scala2009hypergeometric}, Equation 10]\label{fact:hoef-hyper}
	Let $S$ be distributed according to $\HG(n,w,m)$. Then
	\begin{align*}
		\forall \xi \geq 0: \quad \pr{S \geq (w/n+ \xi)\cdot m} \leq e^{-2\xi^2 m}
	\end{align*}
\end{fact}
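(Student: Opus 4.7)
The plan is to prove this tail bound via Hoeffding's classical reduction from sampling without replacement to sampling with replacement, followed by the standard Chernoff--Hoeffding bound for the binomial distribution. The hypergeometric random variable $S$ can be written as $S = \sum_{i=1}^m X_i$, where $X_i \in \set{0,1}$ is the indicator that the $i$-th element drawn (without replacement) from an $n$-element population containing $w$ ones is a one. Let $\tilde{S} \sim \Bin(m,w/n)$ be the corresponding sum when the draws are made with replacement. The marginal expectation of each $X_i$ is $w/n$, so $\Exp[S] = \Exp[\tilde S] = m w/n$.

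The key step I would invoke is Hoeffding's (1963) comparison inequality: for any continuous convex function $\phi \colon \bbR \to \bbR$,
\begin{align*}
\Exp[\phi(S)] \;\leq\; \Exp[\phi(\tilde S)].
\end{align*}
Applying this with $\phi(x) = e^{tx}$ for $t \geq 0$ yields the MGF domination $\Exp[e^{t S}] \leq \Exp[e^{t \tilde S}]$. Then by the standard Markov/Chernoff argument,
\begin{align*}
\pr{S \geq (w/n + \xi)m} \;\leq\; e^{-t m (w/n + \xi)} \cdot \Exp[e^{t \tilde S}].
\end{align*}

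Since $\tilde S$ is a sum of $m$ i.i.d.\ Bernoulli$(w/n)$ variables bounded in $[0,1]$, Hoeffding's lemma gives $\Exp[e^{t \tilde S}] \leq \exp\bracketss{tm(w/n) + t^2 m/8}$. Substituting and choosing $t = 4\xi$ cancels the linear term and leaves the bound $\exp(-2\xi^2 m)$, which is exactly the statement of the fact.

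The proof is essentially routine once the comparison step is in place, so there is no real obstacle; the only subtle point is the comparison inequality itself, which I would quote rather than re-derive (its proof exchanges one pair of draws at a time and uses convexity to show the expectation does not decrease when moving from without-replacement to with-replacement sampling). Since the excerpt simply cites \cite{scala2009hypergeometric} for this fact, no further work is needed beyond this sketch.
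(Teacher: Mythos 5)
Your proof is correct, and the approach is the standard one — in fact the paper gives no proof at all here (it cites \cite{scala2009hypergeometric} for the bound, which in turn is Hoeffding's classical tail inequality for sampling without replacement). The decomposition $S=\sum_{i=1}^m X_i$, the invocation of Hoeffding's comparison inequality $\Exp[\phi(S)]\le\Exp[\phi(\tilde S)]$ for convex $\phi$, the Chernoff step, the MGF bound $\Exp[e^{t\tilde S}]\le e^{tmw/n+t^2m/8}$, and the optimization at $t=4\xi$ giving $e^{-4\xi^2 m+2\xi^2 m}=e^{-2\xi^2 m}$ are all exactly as they should be; there is no gap.
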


By \cref{fact:hoef-hyper}, if $w_i \leq n/2$ then $\pr{\tilde{w}_i \geq (1/2 + \xi) n} \leq e^{-2\xi^2 m}$. By taking $m$ of the form $m = \frac1{2\xi^2} \cdot \ln(n/\delta)$ we obtain that with probability $1-\delta$, for every $i \in [n]$ it holds that $z_i\leq0 \implies \tilde{w}_i \leq (1/2 + \xi) n$. Therefore, the only change we should do is to replace the value $z_i = w_i - n/2$ (computed in $\BasicFilter$ and $\zCDPFilter$) with the value 
$\tilde{z}_i \eqdef \tilde{w}_i - (1/2 + \xi) n$.
This results with a privacy guarantee that is equivalent to \cref{thm:FriendlyCore,thm:FriendlyCoreDP} up to an additional $\delta$ in the privacy approximation error.
Regarding utility guarantees, in \cref{thm:FriendlyCore} ($\zCDP$ model), this change means that using the same lower bound on $n$ implies now a utility guarantee for elements with $(1-\alpha + \xi) n$ friends (rather than $(1-\alpha)n$).\footnote{In the $\DP$ model we slightly need to change the probabilities $p_i$'s in $\BasicFilter$ such that $\tilde{z}_i \geq (1/2-\alpha + \xi)n \implies p_i = 1$ rather than $z_i \geq (1/2-\alpha)n \implies p_i = 1$.}
The parameter $\xi$ determines the trade-of between utility and efficiency: Smaller $\xi$ requires more computations, but results with a better utility guarantee.

\remove{
\section{Additional Details on the Empirical Evaluation}

We compared three algorithms on the task of mean estimation of spherical Gaussians with known variance. A Python implementation of our private algorithm $\AlgEstAvg$ for estimating the average of points  where the input parameter $r$ is set to $\sqrt{2}(\sqrt{d} + \sqrt{\ln(100n)})$. The algorithm $\KV$ of \cite{KV18} and the  algorithm $\CoinPress$ of \cite{BDKU20}.  The implementations of $\KV$ and $\CoinPress$ were taken from the publicly available code of \cite{BDKU20}, which is provided at \url{https://github.com/twistedcubic/coin-press}.

\paragraph{Computation efficiency}
\Enote{Update}
Our Algorithm $\AlgFriendlyCore$ (Algorithm~\ref{alg:friendly-generator}) computes $\pred(\px,\py)$  for all pairs, that is, doing $O(n^2)$ computation.  In our implementation we used a $O(n\log n)$ version that for each element $\px$ computes $\pred(\px,\py)$ for a random sample of $O(\log n)$ elements $\py$. This version provides very similar privacy guarantees, but is computationally more efficient for large $n$.

\paragraph{Privacy model}
\Enote{Update}
Our analysis of $\AlgFriendlyCore$ was in the standard $(\eps,\delta)$-DP model whereas $\CoinPress$ is analyzed in a different model, called \emph{Zero-Concentrated Differential Privacy} \cite{BS16} (in short, zCDP). 
The zCDP model.
The zCDP model uses a single parameter $\rho$ that measures the privacy loss. As shown in \cite{BS16}, any $\eps$-DP algorithm implies $(\frac12\eps^2)$-zCDP, and any $\rho$-zCDP implies $(\eps,\delta)$-DP whenever $\rho \leq \paren{\sqrt{\eps + \ln(1/\delta)} - \sqrt{\ln(1/\delta)}}^2 \approx \frac{\eps^2}{4 \ln(1/\delta)}$.  The zCDP model is particularly suited to working with Gaussian noise.  The final step of all algorithms was a private averaging algorithm which sums points and adds a Gaussian noise in each coordinate.   When comparing the algorithms we chose the privacy parameters so that the amount of the Gaussian noise that is added in the averaging step is the same.  
This meant setting $\rho$ at a level that provides 
$(\eps,\delta)$-DP). 

\paragraph{Implementation of $\KV$}
\Enote{Remove}
As described in Section~\ref{related:sec}, the mean estimation algorithm $\KV$ is applied coordinate-wise with the privacy budget divided between invocations.
The implementation in \cite{BDKU20} gave each invocation a privacy budget of $\eps/\sqrt{d}$, since with zCDP composition the algorithm is $\rho$-zCDP for $\rho = \frac12 \eps^2$. Since our evaluation is in
$(\eps,\delta)$-DP, we gave each invocation a budget of $\eps/\min\set{d,\sqrt{2d\ln(1/\delta)}}$ according to the advanced composition theorems of $(\eps,\delta)$-DP.
}

\section{Missing Proofs}\label{sec:missing-proofs}

\subsection{Proving \cref{fact:zCDP-under-conditioning}}\label{sec:zCDP-under-conditioning}
\cref{fact:zCDP-under-conditioning} is an immediate corollary of the following fact.

\begin{fact}
	Let $\alpha \in (1,\infty)$, let $P$ and $Q$ be probability distributions over $\cX$ with $D_{\alpha}(P || Q) < \infty$, and let $E \subseteq \cX$ be an event. Then it holds that
	\begin{align*}
		D_{\alpha}(P|_E || Q|_E) \leq \frac{1}{P[E]} \cdot D_{\alpha}(P || Q)
	\end{align*}
\end{fact}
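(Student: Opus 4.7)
\begin{proofidea}
My plan is to rewrite the claim in its equivalent exponential form. Setting $M_\alpha(P || Q) \eqdef \sum_x P(x)^\alpha Q(x)^{1-\alpha} = e^{(\alpha-1) D_\alpha(P || Q)}$, the desired inequality becomes $M_\alpha(P|_E || Q|_E)^{P[E]} \leq M_\alpha(P || Q)$, which is equivalent to the Rényi bound since $\alpha - 1 > 0$.

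First I would split the sum defining $M_\alpha(P || Q)$ according to whether $x \in E$ or $x \in E^c$. Using the factorizations $P(x) = P[E]\, P|_E(x)$ for $x \in E$ (and analogously for $Q$ and for $E^c$), this gives the identity
\[
M_\alpha(P || Q) \;=\; p^\alpha q^{1-\alpha}\, A \;+\; (1-p)^\alpha (1-q)^{1-\alpha}\, B,
\]
with $p = P[E]$, $q = Q[E]$, $A = M_\alpha(P|_E || Q|_E)$, and $B = M_\alpha(P|_{E^c} || Q|_{E^c})$. Since every Rényi divergence is non-negative we have $B \geq 1$, so I can drop it and keep the lower bound $M_\alpha(P || Q) \geq p^\alpha q^{1-\alpha} A + (1-p)^\alpha (1-q)^{1-\alpha}$.

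The heart of the argument is then to apply the weighted AM--GM inequality $\lambda u + (1-\lambda) v \geq u^\lambda v^{1-\lambda}$ with weights $\lambda = p$ and $1-\lambda = 1-p$, after factoring one power of $p$ out of the first summand and one power of $1-p$ out of the second. This yields
\[
M_\alpha(P || Q) \;\geq\; \bigl[p^{\alpha-1} q^{1-\alpha} A\bigr]^p \cdot \bigl[(1-p)^{\alpha-1}(1-q)^{1-\alpha}\bigr]^{1-p} \;=\; A^p \cdot \Bigl[(p/q)^p \bigl((1-p)/(1-q)\bigr)^{1-p}\Bigr]^{\alpha-1}.
\]
The bracketed factor is exactly $\exp\bigl(\KL(\Bern(p) \,||\, \Bern(q))\bigr)$, which is at least $1$ by non-negativity of KL divergence; since $\alpha > 1$ the whole correction factor is at least $1$, and therefore $M_\alpha(P || Q) \geq A^p$, which is the required inequality.

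The only mildly subtle point is identifying the right factoring so that weighted AM--GM with weights $(p, 1-p)$ applies cleanly to the two summands; once that is set up, the remaining calculation is routine, and the leftover factor conveniently turns out to be a KL divergence of Bernoulli marginals, giving the needed non-negative slack for free. A pleasant byproduct is that the final bound is independent of $q = Q[E]$, exactly as the statement requires.
\end{proofidea}
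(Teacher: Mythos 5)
Your proof is correct and is essentially the paper's argument cast in exponential rather than logarithmic form: the weighted AM--GM step $\lambda u + (1-\lambda) v \geq u^\lambda v^{1-\lambda}$ with $\lambda = P[E]$ is exactly the Jensen (concavity of $\ln$) step the paper uses, and both proofs identify the leftover correction factor as a Bernoulli KL divergence (plus, in the paper, the R\'{e}nyi divergence on $E^c$), all discarded by nonnegativity. The only cosmetic difference is that you drop the $E^c$ contribution via $B \geq 1$ before applying AM--GM, whereas the paper applies Jensen first and then discards the two nonnegative residual terms afterward.
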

\begin{proof}
	For simplicity we only present the proof for the case that $P$ and $Q$ are discrete, but it can easily be extended to the continuous case as well. 
	Compute
	\begin{align*}
		D_{\alpha}(P || Q) 
		&= \frac1{\alpha-1} \ln\paren{\sum_{x \in E} \frac{P(x)^{\alpha}}{Q(x)^{\alpha-1}} + \sum_{x \notin E} \frac{P(x)^{\alpha}}{Q(x)^{\alpha-1}}}\\
		&\geq \frac1{\alpha-1} P[E] \cdot \ln\paren{\frac1{P[E]}\cdot \sum_{x \in E} \frac{P(x)^{\alpha}}{Q(x)^{\alpha-1}}} + \frac1{\alpha-1} P[\neg E] \cdot \ln\paren{\frac1{P[\neg E]}\cdot \sum_{x \notin E} \frac{P(x)^{\alpha}}{Q(x)^{\alpha-1}}}\\
		&= P[E]\cdot D_{\alpha}(P|_E || Q|_E) + P[E]\cdot \ln\paren{\frac{P[E]}{Q[E]}} + P[\neg E]\cdot D_{\alpha}(P|_{\neg E} || Q|_{\neg E}) + P[\neg E]\cdot \ln\paren{\frac{P[\neg E]}{Q[\neg E]}}\\
		&= P[E]\cdot D_{\alpha}(P|_E || Q|_E) + P[\neg E]\cdot D_{\alpha}(P|_{\neg E} || Q|_{\neg E}) + D_{KL}(\Bern(P[E]) || \Bern(Q[E]))\\
		&\geq P[E]\cdot D_{\alpha}(P|_E || Q|_E),
	\end{align*}
	where the second inequality holds by Jensen's inequality, and $D_{KL}$ denotes the KL-divergence.
\end{proof}

\subsection{Utility of Averaging Algorithms}\label{sec:missing_proofs:avg-utility}

Throughout this section, we use the following definition. 

\begin{definition}[$(\pred,\alpha,\ell)$-complete]
	A database $\cD$ is called \emph{$(\pred,\alpha,\ell)$-complete} iff there exist at least $n - \ell$ elements $x \in \cD$ such that $\size{\set{y \in \cD \colon f(x,y)=1}} \geq (1-\alpha) n$. If $\alpha=\ell =0$ (meaning that $f(x,y)=1$ for all $x,y \in \cD$), we say that $\cD$ is $\pred$-complete.
\end{definition}

\subsubsection{Utility of $\AlgFriendlyAvg$}

\remove{
	\begin{claim}[Utility of $\AlgFriendlyAvg$]\label{claim:FriendlyAvg:utility-old}
		The following holds for any $\beta > 0$:
		Let $\cD \in (\bbR^d)^n$ and let $\tn =  n - \sqrt{\frac{\ln(1/\delta)}{\rho_1}} - \sqrt{\frac{\ln(2/\beta)}{\rho_1}} - 1$. If \  $\: \tn > 0$, then with probability $1-\beta$,  $\AlgFriendlyAvg(\cD,\rho,\delta,r)$ outputs an estimation $\hpa \in \bbR^d$ with $\norm{\hpa - \Avg(\cD)} \leq \frac{2r}{\tn} \cdot \sqrt{\frac{d \ln(2/\beta)}{\rho}}$.
	\end{claim}
}

\begin{claim}[Utility of $\AlgFriendlyAvg$]\label{claim:FriendlyAvg:utility}
	The following holds for any $\rho,\beta,\delta > 0$:
	Let $\cD \in (\bbR^d)^n$ for $n = \Omega\paren{\sqrt{\frac{\ln(1/(\beta \delta))}{\rho}}}$. Then with probability $1-\beta$, $\AlgFriendlyAvg(\cD,\rho,\delta,r)$ (\cref{alg:friendly-avg}) outputs $\hpa \in \bbR^d$ with $\norm{\hpa - \Avg(\cD)} \leq O\paren{\frac{r}{n} \cdot \sqrt{\frac{d \ln(1/\beta)}{\rho}}}$.
\end{claim}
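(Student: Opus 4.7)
\begin{proofidea}
The plan is to run two independent Gaussian concentration arguments -- one for the one-dimensional noise in $\hat{n}$ and one for the norm of the $d$-dimensional noise added in Step~\ref{step:Gaussian} -- and then union bound. The assumed lower bound on $n$ is precisely what is needed to guarantee (i) the algorithm does not abort and (ii) $\hat{n}$ is a constant factor of $n$, which in turn makes the final noise scale like $r/(n\sqrt{\rho})$ rather than $r/(\hat{n}\sqrt{\rho})$ with $\hat{n}$ possibly tiny.

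First I would handle $\hat{n}$. Write $\hat{n} = n - \sqrt{\ln(1/\delta)/\rho_1} - 1 + W$ with $W \sim \cN(0, 1/(2\rho_1))$. Applying \cref{fact:one-gaus-concent} to $W$ and $-W$ and a union bound, with probability $\geq 1-\beta/2$ we have $|W| \leq \sqrt{\ln(4/\beta)/\rho_1}$. Since $\rho_1 = 0.1(1-\delta)\rho = \Theta(\rho)$, the hypothesis $n = \Omega\bigl(\sqrt{\ln(1/(\beta\delta))/\rho}\bigr)$ can be chosen with a large enough constant so that $\sqrt{\ln(1/\delta)/\rho_1} + \sqrt{\ln(4/\beta)/\rho_1} + 1 \leq n/2$. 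This gives $\hat{n} \geq n/2 > 0$ on this event, so the algorithm does not abort and we may use $1/\hat{n} \leq 2/n$ in the noise scale.

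Next I would bound the $d$-dimensional Gaussian noise $Z \sim \cN(\pt{0},\sigma^2 \cdot \bbI_{d\times d})$ added in Step~\ref{step:Gaussian}, where $\sigma = 2r/(\hat{n}\sqrt{2\rho_2})$ and $\rho_2 = 0.9\rho$. By standard concentration of the norm of a spherical Gaussian (a one-line consequence of Lipschitz concentration, or of a chi-squared tail bound),
\[
\Pr\bigl[\,\|Z\| \geq \sigma\bigl(\sqrt{d} + \sqrt{2\ln(2/\beta)}\bigr)\bigr] \leq \beta/2.
\]
Conditioning on the event from the previous paragraph, $\sigma \leq \frac{4r}{n\sqrt{2\rho_2}} = O\bigl(r/(n\sqrt{\rho})\bigr)$, so on this second good event
\[
\|\hpa - \Avg(\cD)\| = \|Z\| \leq O\!\left(\frac{r}{n\sqrt{\rho}}\right)\cdot\bigl(\sqrt{d} + \sqrt{\ln(1/\beta)}\bigr) = O\!\left(\frac{r}{n}\sqrt{\frac{d\ln(1/\beta)}{\rho}}\right),
\]
using $\sqrt{d}+\sqrt{\ln(1/\beta)} = O(\sqrt{d\ln(1/\beta)})$ for $d,\ln(1/\beta)\geq 1$. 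A union bound over the two failure events of total probability $\beta$ yields the claim.

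There is no real obstacle here: both steps are routine Gaussian concentration. The only thing to be careful about is bookkeeping the constants in the hypothesis $n = \Omega\bigl(\sqrt{\ln(1/(\beta\delta))/\rho}\bigr)$ so that the deterministic offset $\sqrt{\ln(1/\delta)/\rho_1} + 1$ and the stochastic offset $\sqrt{\ln(4/\beta)/\rho_1}$ are jointly absorbed into $n/2$, and ensuring the constants in $\rho_1 = 0.1(1-\delta)\rho$ do not blow up (which requires, e.g., $\delta \leq 1/2$; for $\delta > 1/2$ the claim is trivial since one can take $\beta \geq \delta$).
\end{proofidea}
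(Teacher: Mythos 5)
Your proof is correct and follows essentially the same route as the paper's: first show $\hat{N} \geq n/2$ with probability $1-\beta/2$ via one-dimensional Gaussian concentration (the lower bound on $n$ exists exactly to absorb the deterministic and stochastic offsets), then bound the norm of the added Gaussian noise and union bound. The one place where you are actually a bit more careful than the paper is the tail bound on $\|Z\|$: the paper cites only the one-dimensional Gaussian fact (\cref{fact:one-gaus-concent}), whereas a naive coordinate-wise union bound from that fact would pick up an extraneous $\log d$ factor; your appeal to Lipschitz/chi-squared concentration of the norm of a spherical Gaussian gives the clean $\sigma(\sqrt{d}+\sqrt{2\ln(2/\beta)})$ bound directly, which is the intended $O\bigl(\frac{r}{n}\sqrt{d\ln(1/\beta)/\rho}\bigr)$. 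Your observation about the $(1-\delta)$ factor in $\rho_1$ is also a valid (minor) point the paper glosses over.
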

\begin{proof}
	Consider a random execution of $\AlgFriendlyAvg(\cD,\rho,\delta,r)$,  let $\hat{N}$ be the  value of $\hn$ in the execution, and let $\hat{A}$ be its output. Assuming that $n \geq 2 \cdot \frac{\sqrt{\ln(1/\delta)} + \sqrt{ \ln(2/\beta)}}{\sqrt{\rho_1}}+2$, it holds that $\hat{N} \geq n/2$ with probability $1-\beta/2$ (holds by \cref{fact:one-gaus-concent}). Given that $\hat{N} \geq n/2$, we obtain by \cref{fact:one-gaus-concent} that $\norm{\hat{A} - \Avg(\cD)} \leq \frac{2r}{n/2} \cdot \sqrt{\frac{d \ln(2/\beta)}{\rho_2}}$ with probability $1-\beta/2$, as required.
\end{proof}

\subsubsection{Utility of $\AlgAvgKnownDiam$}

\begin{claim}\label{claim:avg-known-diam:utility}
	Let $\cD \in (\bbR^d)^n$ be $\distt_r$-complete, for $n \geq \frac{16\cdot \ln\paren{4/(\rho\cdot \max\set{\beta,\delta})}}{\rho}$. Then w.p. $1-\beta$, $\AlgAvgKnownDiam(\cD,\rho,\delta,\beta,r)$ estimates $\Avg(\cD)$ up to an additive error of $O\paren{\frac{r}{n}\cdot \sqrt{\frac{d \ln(1/\beta)}{\rho}}}$.
\end{claim}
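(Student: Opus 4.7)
The plan is to reduce the claim to the two previously established utility statements: the utility of $\FriendlyCore$ (the ``furthermore'' clause of \cref{thm:FriendlyCore}) and the utility of $\AlgFriendlyAvg$ (\cref{claim:FriendlyAvg:utility}). The key observation is that because $\cD$ is $\distt_r$-complete, every element $\px\in\cD$ has the maximal possible number of friends, namely $\sum_{\py\in\cD}\distt_r(\px,\py)=n$. This puts us in the regime where $\FriendlyCore$, with very high probability, retains \emph{every} element of $\cD$, reducing the analysis to that of the friendly averaging subroutine on the original dataset.

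First I would invoke \cref{thm:FriendlyCore} on the inner call $\FriendlyCore(\cD,\distt_r,\rho_1,\delta/2)$ with $\rho_1=0.1\rho$, confidence parameter $\beta/2$, and utility parameter $\alpha=0$. Substituting $\alpha=0$ into the quantitative requirement $n\geq \frac{-4\ln((1/2-\alpha)\rho_1\min\{\beta/2,\delta/2\})}{(1/2-\alpha)^2\rho_1}$ gives a lower bound of the form $n=\Omega\bracketss{\tfrac{1}{\rho}\ln\!\tfrac{1}{\rho\min\{\beta,\delta\}}}$, which is dominated by the hypothesis $n\geq \tfrac{16\ln(4/(\rho\max\{\beta,\delta\}))}{\rho}$ (up to the constant split between $\rho_1$ and $\rho_2$). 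The conclusion is that with probability at least $1-\beta/2$, the core $\cC$ includes every $\px\in \cD$ with at least $(1-0)\cdot n=n$ friends, so $\cC=\cD$ as a multiset and hence $\Avg(\cC)=\Avg(\cD)$ and $|\cC|=n$.

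Conditioned on that event, I would then apply the utility of $\AlgFriendlyAvg$ (\cref{claim:FriendlyAvg:utility}) to the second step with parameters $(\rho_2=0.9\rho,\delta/2)$ and confidence $\beta/2$. The required lower bound $n=\Omega\bracketss{\sqrt{\ln(1/(\beta\delta))/\rho}}$ is again comfortably met by our hypothesis. This yields an estimate $\hpa$ satisfying
\begin{equation*}
\norm{\hpa-\Avg(\cC)}\leq O\!\paren{\frac{r}{n}\cdot\sqrt{\frac{d\ln(1/\beta)}{\rho}}}
\end{equation*}
with probability at least $1-\beta/2$. Since $\Avg(\cC)=\Avg(\cD)$, a union bound over the two failure events yields the stated accuracy with probability at least $1-\beta$.

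There is no real obstacle in this proof — it is purely a composition argument. The only minor bookkeeping is to verify that the hypothesis on $n$ absorbs (i) the $0.1$ factor separating $\rho_1$ from $\rho$, (ii) the $1/2$ factors arising from splitting $\beta$ and $\delta$ across the two subroutines, and (iii) the $\alpha=0$ evaluation of the $\FriendlyCore$ utility bound; all of these merely affect constants hidden in the $O(\cdot)$ and in the logarithmic factor, so the stated $n$-threshold suffices.
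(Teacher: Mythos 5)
Your proof is correct and follows essentially the same route as the paper: invoke the utility of $\FriendlyCore$ with $\alpha=0$ (the ``furthermore'' clause of \cref{thm:FriendlyCore}) to deduce $\cC=\cD$ with probability $1-\beta/2$, then apply \cref{claim:FriendlyAvg:utility} to the friendly averaging step, and union bound. Your writeup just spells out more of the bookkeeping (the $\rho_1/\rho_2$ split and the $\beta/2$, $\delta/2$ budgeting) that the paper's two-line proof leaves implicit.
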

\begin{proof}
	By applying the utility guarantee of $\FriendlyCore$ with $\alpha=0$ (\cref{thm:FriendlyCore}) it holds that with probability $1-\beta/2$, the core $\cC$ that $\FriendlyCore$ forwards to $\AlgFriendlyAvg$ is all $\cD$. The proof then follows by the utility guarantee of $\AlgFriendlyAvg$ (\cref{claim:FriendlyAvg:utility}).
\end{proof}

\cref{claim:avg-known-diam:utility} can be extended to cases where the database $\cD$ is only close to be $\distt_r$-complete, i.e., cases in which we are only given $r'$ that is smaller than the effective diameter $r$ of the database, but still most of the points are close by $\ell_2$ distance of $r'$.

\begin{lemma}\label{lemma:avg-known-diam-out}
	Let $\cD \in (\bbR^d)^n$ be an $\distt_r$-complete database for $\distt_{r}(\px,\py) \eqdef \indic{\norm{\px - \py} \leq r}$, and let $r' \leq r$ be such that $\cD$ is $(\distt_{r'},\alpha,\beta)$-complete for $0 \leq \alpha < 1/2$ and $\ell < n/2$. If $n \geq \frac{-4\cdot \ln\paren{1/2\cdot (1/2-\alpha)\rho\max\set{\beta,\delta}}}{(1/2-\alpha)^2 \rho}$, then w.p. $1-\beta$ over $\AlgAvgKnownDiam(\cD,\rho,\delta,\beta,r')$, the output $\hpa$ satisfy $\norm{\hpa - \Avg(\cD)} \leq \frac{\ell r}{n} + O\paren{\frac{r'}{n}\cdot \sqrt{\frac{d \ln(1/\beta)}{\rho}}}$.
\end{lemma}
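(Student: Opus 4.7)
The plan is to compose the two sources of error separately: the deterministic bias incurred by $\FriendlyCore$ removing up to $\ell$ ``outliers'' from $\cD$, and the stochastic noise of $\AlgFriendlyAvg$ applied to the resulting core.

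\textbf{Step 1 (core size and composition).} Let $\cC$ be the core produced by $\FriendlyCore(\cD,\distt_{r'},\rho_1,\delta/2)$ inside $\AlgAvgKnownDiam$, and let $\hpa$ be the final output. By the utility guarantee of $\FriendlyCore$ in \cref{thm:FriendlyCore}, applied with the given $\alpha$ and confidence $\beta/2$ (the hypothesis on $n$ matches the lower bound required there up to constants), with probability at least $1-\beta/2$ the core $\cC$ contains every $\px\in\cD$ whose number of $\distt_{r'}$-friends in $\cD$ is at least $(1-\alpha)n$. The $(\distt_{r'},\alpha,\ell)$-completeness of $\cD$ guarantees at least $n-\ell$ such points, so $|\cC|\geq n-\ell>n/2$ under the assumption $\ell<n/2$. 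Condition on this event for the rest of the argument.

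\textbf{Step 2 (bias from removed outliers).} Let $\cR=\cD\setminus\cC$, so $|\cR|\leq\ell$. The algebraic identity
\[
\Avg(\cC)-\Avg(\cD)\;=\;\frac{1}{|\cC|}\sum_{\px\in\cR}\bigl(\Avg(\cD)-\px\bigr)
\]
combined with the triangle inequality yields $\norm{\Avg(\cC)-\Avg(\cD)}\leq \tfrac{1}{|\cC|}\sum_{\px\in\cR}\norm{\Avg(\cD)-\px}$. The $\distt_r$-completeness of $\cD$ gives $\norm{\px-\Avg(\cD)}=\tfrac1n\norm{\sum_{\py\in\cD}(\px-\py)}\leq r$ for every $\px\in\cD$, by one more triangle inequality. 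Hence
\[
\norm{\Avg(\cC)-\Avg(\cD)}\;\leq\;\frac{|\cR|\,r}{|\cC|}\;\leq\;\frac{\ell\,r}{n-\ell}\;=\;O\!\paren{\frac{\ell r}{n}},
\]
using $|\cC|\geq n-\ell>n/2$ in the last step.

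\textbf{Step 3 (noise of the friendly average).} The call $\AlgFriendlyAvg(\cC,\rho_2,\delta/2,r')$ in \stepref{step:calling-PA} of $\FCAvg$ adds spherical Gaussian noise whose utility bound \cref{claim:FriendlyAvg:utility} only requires $|\cC|=\Omega(\sqrt{\ln(1/(\beta\delta))/\rho})$ (which holds since $|\cC|>n/2$ and by the assumption on $n$). Therefore with probability at least $1-\beta/2$,
\[
\norm{\hpa-\Avg(\cC)}\;\leq\;O\!\paren{\frac{r'}{|\cC|}\sqrt{\frac{d\ln(1/\beta)}{\rho}}}\;=\;O\!\paren{\frac{r'}{n}\sqrt{\frac{d\ln(1/\beta)}{\rho}}}.
\]

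\textbf{Step 4 (combine).} A final triangle inequality and a union bound over the two failure events of probability $\beta/2$ give the claimed bound.

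The main conceptual point (rather than the obstacle) is noticing that utility of $\AlgFriendlyAvg$ is a purely probabilistic statement about the added Gaussian noise and therefore does not need friendliness of $\cC$; friendliness of $\cC\cup\cC'$ is only needed for privacy, which has already been dispatched by \cref{claim:avg-known-diam:privacy}. Thus the proof cleanly decomposes into the bias bound of Step~2 (which uses $\distt_r$-completeness of the whole dataset, not of the core) and the standard noise bound of Step~3.
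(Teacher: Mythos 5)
Your proof is correct and follows essentially the same route as the paper's: condition on $\FriendlyCore$ succeeding (via \cref{thm:FriendlyCore}) so that $|\cC|\geq n-\ell$, invoke \cref{claim:FriendlyAvg:utility} for the Gaussian-noise error relative to $\Avg(\cC)$, and add a deterministic bias term for $\norm{\Avg(\cC)-\Avg(\cD)}$ using $\distt_r$-completeness of $\cD$.

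The one place you diverge is the algebra for the bias term, and yours is slightly weaker. You write $\Avg(\cC)-\Avg(\cD)=\tfrac{1}{|\cC|}\sum_{\px\in\cR}(\Avg(\cD)-\px)$ and bound each summand by $r$, giving $\norm{\Avg(\cC)-\Avg(\cD)}\leq \ell r/(n-\ell)$, which you then relax to $O(\ell r/n)$ via $\ell<n/2$. The paper instead uses the mixture identity $\Avg(\cD)=\tfrac{|\cC|}{n}\Avg(\cC)+\tfrac{|\cR|}{n}\Avg(\cR)$, so $\Avg(\cD)-\Avg(\cC)=\tfrac{|\cR|}{n}\bigl(\Avg(\cR)-\Avg(\cC)\bigr)$, and $\norm{\Avg(\cR)-\Avg(\cC)}\leq r$ since both averages lie in the $r$-diameter convex hull of $\cD$. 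That gives exactly $\norm{\Avg(\cD)-\Avg(\cC)}\leq \ell r/n$, matching the lemma's stated bound with no $O(\cdot)$ in the first term. Your version loses a factor of $n/(n-\ell)<2$; the conclusion $O(\ell r/n)$ is still fine in spirit, but to prove the literal inequality as stated you should switch to the mixture decomposition. Your closing remark — that utility of $\AlgFriendlyAvg$ only concerns the added Gaussian noise and does not need friendliness of $\cC$, which is only a privacy requirement — is a correct and worthwhile observation even though the paper leaves it implicit.
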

\begin{proof}
	By applying the utility guarantee of $\FriendlyCore$ (\cref{thm:FriendlyCore}) it holds that with probability $1-\beta/2$, the core $\cC \subseteq \cD$ that $\FCParadigm$ forwards to $\AlgFriendlyAvg$ in the execution $\AlgAvgKnownDiam(\cD,\rho,\delta,\beta,r')$ contains all points $\px \in \cD$ with $\size{\set{\py \in \cD \colon \norm{\px - \py} \leq r'}} \geq (1-\alpha)n$, which in particular yields that $\size{\cC} \geq n - \ell$. Along with the utility guarantee of $\AlgFriendlyAvg$ (\cref{claim:FriendlyAvg:utility}), we obtain that the output $\hpa$ satisfy $\norm{\hpa - \Avg(\cC)} \leq O\paren{\frac{r'}{n}\cdot \sqrt{\frac{d \ln(1/\beta)}{\rho}}}$. We conclude the proof since $\norm{\Avg(\cD) - \Avg(\cC)} \leq \frac{(n-\size{\cC})\cdot r}{n} \leq \frac{\ell r}{n}$, where the first inequality since $\cD$ is $\distt_r$-complete.
\end{proof}

\subsubsection{Utility of $\AlgAvgUnknownDiam$}\label{sec:AvgUnknownDiam:utility}

\begin{claim}[Utility of $\AlgCheckDist$ (\cref{alg:check-dist})]\label{claim:AlgCheckDist:utility}
	If $\cD$ is $\distt_r$-complete, then $\AlgCheckDist(\cD,\rho,\beta,r)$ outputs $1$ w.p. $1-\beta$. If $\cD$ is {\bf not} $(\distt_r, \alpha, \ell \eqdef \frac{2}{\alpha} \sqrt{\frac{4 \ln(1/\beta)}{\rho}})$-complete for some $\alpha > 0$, then $\AlgCheckDist(\cD,\rho,\beta,r)$ outputs $0$ w.p. $1-\beta$.
\end{claim}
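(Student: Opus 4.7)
\begin{proofsketch}
The plan is to analyze the two cases separately, relying only on the concentration of the Gaussian noise $\cN(0, 2/\rho)$ (\cref{fact:one-gaus-concent}): for $Z \sim \cN(0, 2/\rho)$ and $\tau \eqdef \sqrt{4 \ln(1/\beta)/\rho}$, we have $\Pr[|Z| \geq \tau] \leq 2\beta$ (and each one-sided tail is bounded by $\beta$). The threshold in the algorithm is exactly $n - \tau$, and the key observation is that the ``completeness gap'' $\ell = (2/\alpha)\tau$ has been engineered so that the shortfall $\alpha \ell = 2\tau$ is twice the Gaussian concentration radius, leaving $\tau$ of slack on each side of the threshold.

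For the first part (completeness case), if $\cD$ is $\distt_r$-complete then $\norm{\px_i - \px_j} \leq r$ for all $i,j$, so $s_i = n$ for every $i$ and hence $a = n$. Thus $\hat{a} = n + Z$ with $Z \sim \cN(0, 2/\rho)$, and the output is $1$ unless $Z < -\tau$. By the one-sided Gaussian tail bound (\cref{fact:one-gaus-concent}), this happens with probability at most $\beta$.

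For the second part (far-from-complete case), suppose $\cD$ is not $(\distt_r, \alpha, \ell)$-complete, i.e.\ strictly more than $\ell$ indices $i \in [n]$ satisfy $s_i < (1-\alpha)n$. Call this set of indices $B$, so $\size{B} > \ell$. Bounding $s_i \leq (1-\alpha)n$ for $i \in B$ and $s_i \leq n$ for the remaining indices gives
\begin{align*}
a = \frac{1}{n}\sum_{i=1}^n s_i \leq \frac{1}{n}\paren{\size{B}\cdot (1-\alpha)n + (n-\size{B})\cdot n} = n - \alpha \cdot \size{B} < n - \alpha \ell = n - 2\tau.
\end{align*}
Therefore the algorithm outputs $0$ unless $\hat{a} = a + Z \geq n - \tau$, which requires $Z > \tau$. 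Again by \cref{fact:one-gaus-concent}, this happens with probability at most $\beta$.

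I do not expect any real obstacle: both parts are a direct computation of $a$ followed by a one-line Gaussian tail bound, and the parameters in the algorithm and in the claim statement line up so that the same quantity $\tau$ controls both the concentration and the gap.
\end{proofsketch}
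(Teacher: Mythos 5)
Your proof is correct and follows essentially the same route as the paper's: in the complete case you compute $a = n$ and apply the one-sided Gaussian tail bound, and in the far-from-complete case you bound $a < n - \alpha\ell = n - 2\tau$ by counting the (strictly more than $\ell$) bad indices and again apply the one-sided tail bound. The only cosmetic difference is that the paper writes the bound on $a$ directly via $\frac{(n-\ell)n + \ell(1-\alpha)n}{n}$ rather than introducing the set $B$, but the inequality is the same.
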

\begin{proof}
	Let $\cD = (\px_1,\ldots,\px_n) \in (\bbR^d)^*$ and let $\hA$ be the value of $\hat{a}$ in a random execution of $\AlgCheckDist(\cD,\rho,\beta,r)$. 
	If $\cD$ is $\distt_r$-complete, then $a = n$, and therefore we deduce by \cref{fact:one-gaus-concent} that $\pr{\hA \geq n- \sqrt{\frac{4 \ln(1/\beta)}{\rho}}} = \pr{\cN(0,2/\rho) \geq - \sqrt{\frac{4 \ln(1/\beta)}{\rho}}} \geq 1-\beta$.
	If $\cD$ is not $(\distt_r, \alpha, \ell)$-complete, then there are more than $\ell$ points $\px_i \in \cD$ with $s_i < (1-\alpha)n$. Therefore
	\begin{align*}
		a = (\sum_{i=1}^n s_i)/n < \frac{(n - \ell)n + \ell \cdot (1-\alpha)n}{n} = n - \alpha \ell  = n - 2 \sqrt{\frac{4\ln(1/\beta)}{\rho}}.
	\end{align*}
	Hence, we conclude by \cref{fact:one-gaus-concent} that $\pr{\hA \geq n- \sqrt{\frac{4 \ln(1/\beta)}{\rho}}} = \pr{\cN(0,2/\rho) \geq \sqrt{\frac{4 \ln(1/\beta)}{\rho}}} \leq \beta$.
\end{proof}

\begin{claim}[Utility of $\AlgFindDist$ (\cref{alg:find-dist})]\label{claim:AlgFindDist:utility}
	Let $\cD \in (\bbR^d)^*$ be an $\distt_{r_{\max}}$-complete database. Then for every $\alpha,\beta > 0$, with probability $1-\beta$ over a random execution of $\AlgFindDist(\cD,\rho,\beta,r_{\max},r_{\min},b)$, the output $r$ of the execution satisfies that $\cD$ is $(\distt_{r},\alpha,\ell)$-complete for $\ell = O\paren{\frac1{\alpha} \cdot \sqrt{\frac{\log(1/\beta) \log\log(r_{\max}/r_{\min})}{\rho}}}$.
\end{claim}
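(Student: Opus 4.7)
The plan is to combine a union bound over the binary search iterations with the utility guarantee of $\AlgCheckDist$ (\cref{claim:AlgCheckDist:utility}). Algorithm $\AlgFindDist$ performs a binary search of depth at most $k = \lceil\log_2(t)\rceil$ where $t = \log_b(r_{\max}/r_{\min})$, so it invokes $\AlgCheckDist$ at most $k$ times, each with privacy parameter $\rho/k$ and confidence parameter $\beta/k$. By a union bound, with probability at least $1 - k\cdot (\beta/k) = 1-\beta$ all $k$ invocations return the ``correct'' answer in the sense guaranteed by \cref{claim:AlgCheckDist:utility} (instantiated with the given $\alpha$ and with confidence $\beta/k$). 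I will condition on this good event throughout the rest of the argument.

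Next, I would observe that the predicate $\distt_r$ is monotone in $r$, so both ``$\distt_r$-completeness'' and ``$(\distt_r,\alpha,\ell)$-completeness'' are monotone in $r$ as well. Consequently, on the good event, the results returned by $\AlgCheckDist$ along the binary search are monotone in $r$: they return $0$ on ``small'' $r$ (where $\cD$ is far from $\distt_r$-complete) and $1$ on ``large'' $r$. In particular, since $\cD$ is $\distt_{r_{\max}}$-complete by assumption, the first item of \cref{claim:AlgCheckDist:utility} guarantees that $\AlgCheckDist$ returns $1$ at $r_{\max}$, so the binary search succeeds in returning some $r = b^{i^\ast}\cdot r_{\min}$ at which $\AlgCheckDist(\cD, \rho/k, \beta/k, r) = 1$.

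Applying the contrapositive of the second item of \cref{claim:AlgCheckDist:utility} to the output $r$: if $\cD$ were not $(\distt_r, \alpha, \ell)$-complete for
\[
\ell = \frac{2}{\alpha}\sqrt{\frac{4 \ln(k/\beta)}{\rho/k}} = \frac{2}{\alpha}\sqrt{\frac{4 k \ln(k/\beta)}{\rho}},
\]
then $\AlgCheckDist$ would have returned $0$ on $r$, contradicting the good event. Since $k = O(\log\log(r_{\max}/r_{\min}))$ and $\ln(k/\beta) = O(\log(1/\beta) + \log\log\log(r_{\max}/r_{\min}))$, the latter term is dominated by $k$ itself and the whole bound simplifies to the claimed $\ell = O\!\paren{\frac{1}{\alpha}\sqrt{\frac{\log(1/\beta)\log\log(r_{\max}/r_{\min})}{\rho}}}$.

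I do not expect any serious obstacle: the proof is a clean composition of a union bound with the utility of $\AlgCheckDist$, leveraging the monotonicity of $\distt_r$ in $r$. The one routine item to track carefully is the rescaling of the confidence parameter from $\beta$ to $\beta/k$ inside the logarithm of $\AlgCheckDist$'s guarantee, and verifying that the resulting $\ln(k/\beta)\cdot k$ term absorbs neatly into $\log(1/\beta)\cdot\log\log(r_{\max}/r_{\min})$ in asymptotic form.
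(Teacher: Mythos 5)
Your proposal follows essentially the same route as the paper's own proof: a union bound over the $\log_2 t$ binary-search calls to $\AlgCheckDist$, each run with rescaled parameters $\rho/\log_2 t$ and $\beta/\log_2 t$, combined with \cref{claim:AlgCheckDist:utility} and the monotonicity of $\distt_r$ in $r$ to conclude that the output $r$ satisfies the stated completeness. You are in fact slightly more careful than the paper's short proof in explicitly carrying the rescaled confidence inside the logarithm (the paper writes $\ln(1/\beta)$ where it should be $\ln(\log_2 t/\beta)$ before the final asymptotic simplification), though your closing remark that the extra $\log\log\log(r_{\max}/r_{\min})$ term is ``absorbed'' is the same mild hand-wave the paper makes—strictly speaking, $k\ln(k/\beta) = k\ln(1/\beta) + k\ln k$, and dropping $k\ln k$ requires $\ln k = O(\ln(1/\beta))$; but this matches the paper's own treatment and the stated $O(\cdot)$ bound.
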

\begin{proof}
	The binary search performs at most $\log_2(t)$ calls to $\AlgCheckDist$, each returns a ``correct'' result with probability $1-\beta/\log_2(t)$ (follows by \cref{claim:AlgCheckDist:utility}), where ``correct'' means that if the output for $r$ is $1$ then $\cD$ is $(\distt_r, \alpha, \ell(r) \eqdef \frac{2}{\alpha} \sqrt{\frac{4 \ln(1/\beta) \log_2 \log_b(r_{\min}/r_{\max})}{\rho}})$-complete, and if the output for $r$ is $0$ then $\cD$ is not $\distt_r$-complete. Overall, all calls are ``correct'' with probability $1-\beta$, yielding that the resulting $r$ of the binary search satisfy that  $\cD$ is $(\distt_r, \alpha, \ell(r))$-complete, as required.
\end{proof}

\begin{claim}[Utility of $\AlgAvgUnknownDiam$ (\cref{alg:avg-unknown-diam})]\label{claim:AvgUnknownDiam:utility}
	Let $\cD \in (\bbR^d)^n$ be an $\distt_r$-complete database  for $r \in [r_{\min},r_{\max}]$ and
	$$n = \Omega\paren{\frac{\log(1/\min\set{\beta,\delta})}{\rho} + \sqrt{\frac{\log(1/\beta)\log\log(r_{\max}/r_{\min})}{\rho}}}.$$
	Then with probability $1-\beta$ over the execution $\AlgAvgUnknownDiam(\cD, \rho,\delta,\beta,r_{\min},r_{\max})$, the output $\hpa$ satisfy $$\norm{\hpa - \Avg(\cD)} \leq O\paren{\frac{r}{n} \sqrt{\frac{\log(1/\beta)\paren{d + \log\log(r_{\max}/r_{\min})}}{\rho}}}.$$
\end{claim}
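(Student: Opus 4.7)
The plan is to decompose the error along the two-stage structure of $\AlgAvgUnknownDiam$: first, the diameter-search stage $\AlgFindDist$ produces a scale $r'$; second, the known-diameter averaging stage $\FCAvg$ is invoked with this scale. I will argue that each stage succeeds with probability at least $1-\beta/2$ via the corresponding utility claims already proven, and then combine the guarantees.

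\textbf{Step 1 (analyzing $\AlgFindDist$).} I would fix a small absolute constant $\alpha \in (0,1/2)$ (e.g., $\alpha = 1/4$) and apply \cref{claim:AlgFindDist:utility} to the call in \stepref{step:FindDiam}. Since $\cD$ is $\distt_r$-complete and $r \leq r_{\max}$, $\cD$ is in particular $\distt_{r_{\max}}$-complete, so the hypothesis of that claim holds. With probability $1-\beta/2$, the returned value $r'$ satisfies that $\cD$ is $(\distt_{r'},\alpha,\ell)$-complete for $\ell = O\bigl(\sqrt{\log(1/\beta)\log\log(r_{\max}/r_{\min})/\rho_1}\bigr)$. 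Moreover, because the search iterates over multiples of $b=1.5$ and $\cD$ is $\distt_{r}$-complete, every query at scale $\geq r$ returns $1$ w.h.p.\ (\cref{claim:AlgCheckDist:utility}), so conditioning on the high-probability event that all $O(\log\log(r_{\max}/r_{\min}))$ internal calls are correct, the binary search's output satisfies $r' \leq 1.5 r$. Thus, with probability $1-\beta/2$, both $r' \leq 1.5 r$ and $\cD$ is $(\distt_{r'},\alpha,\ell)$-complete with the above $\ell$.

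\textbf{Step 2 (analyzing $\FCAvg$).} Conditioned on the event of Step 1, I would apply \cref{lemma:avg-known-diam-out} to the call $\FCAvg(\cD,\rho_2,\delta,\beta/2,r')$: since $\cD$ is $\distt_r$-complete and $(\distt_{r'},\alpha,\ell)$-complete for the constant $\alpha$ fixed above, and since the hypothesis on $n$ in the theorem statement is chosen precisely so that $n$ exceeds the threshold in that lemma and also satisfies $\ell < n/2$, we get with probability $1-\beta/2$ that the output $\hpa$ satisfies
\[
\norm{\hpa - \Avg(\cD)} \;\leq\; \frac{\ell \cdot r}{n} + O\!\left(\frac{r'}{n}\sqrt{\frac{d\log(1/\beta)}{\rho_2}}\right).
\]
Plugging in $r'\leq 1.5r$ and the bound on $\ell$ from Step 1, and absorbing $\rho_1, \rho_2 = \Theta(\rho)$ into the constants, the right-hand side becomes
\[
O\!\left(\frac{r}{n}\sqrt{\frac{\log(1/\beta)\log\log(r_{\max}/r_{\min})}{\rho}}\right) + O\!\left(\frac{r}{n}\sqrt{\frac{d\log(1/\beta)}{\rho}}\right),
\]
which is $O\bigl((r/n)\sqrt{\log(1/\beta)(d+\log\log(r_{\max}/r_{\min}))/\rho}\bigr)$, as required. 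A union bound over the two high-probability events of Steps 1 and 2 completes the proof with total failure probability $\beta$.

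The main obstacle I expect is a careful handling of the binary-search stage: showing that $r'\leq 1.5\, r$ requires a clean statement of what $\AlgFindDist$'s binary search actually outputs (the smallest candidate at which $\AlgCheckDist$ returned $1$), together with the observation that $\distt_r$-completeness of $\cD$ makes \emph{all} such checks at scales $\geq r$ return $1$ with high probability. All other pieces are straightforward invocations of the already-proven utility guarantees, and one must only verify that the lower-bound on $n$ assumed in the theorem statement is strong enough to satisfy the prerequisites of both \cref{claim:AlgFindDist:utility} (which is immediate for constant $\alpha$) and \cref{lemma:avg-known-diam-out} (which demands $n = \Omega(\log(1/\min\{\beta,\delta\})/\rho)$ and $\ell < n/2$, both ensured by the stated hypothesis).
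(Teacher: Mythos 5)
Your proof is correct and follows the same two-stage decomposition as the paper's: first invoke the utility guarantee for $\AlgFindDist$ (\cref{claim:AlgFindDist:utility}) to obtain a scale $r'$ at which $\cD$ is $(\distt_{r'},\alpha,\ell)$-complete with $\ell = O\bigl(\sqrt{\log(1/\beta)\log\log(r_{\max}/r_{\min})/\rho}\bigr)$, then feed this into the extended utility guarantee for known-diameter averaging (\cref{lemma:avg-known-diam-out}) and combine.

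The one place you go beyond the paper's own write-up is in explicitly establishing $r' \leq 1.5\,r$: you observe that under the high-probability event that all internal $\AlgCheckDist$ calls are ``correct'', every grid point $\geq r$ passes the check (since $\cD$ is $\distt_r$-complete), so the binary search cannot overshoot the smallest grid point exceeding $r$, which is at most $b\cdot r = 1.5\,r$. This observation is necessary: the error bound returned by \cref{lemma:avg-known-diam-out} contains $\frac{r'}{n}\sqrt{d\log(1/\beta)/\rho}$, and one needs $r' = O(r)$ to convert it to the claimed bound in $r$. The paper's proof performs exactly this substitution silently, implicitly assuming the hypothesis $r' \leq r$ from \cref{lemma:avg-known-diam-out} without deriving it from \cref{claim:AlgFindDist:utility} (which, as stated, says nothing about the magnitude of the returned $r'$). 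So you have, if anything, filled a small gap in the paper's exposition rather than introduced one. A minor technical note: the hypothesis of \cref{lemma:avg-known-diam-out} is $r' \leq r$, whereas you only get $r' \leq 1.5\,r$; this is fine because (as you implicitly rely on) the lemma's proof never uses $r' \leq r$ — the $\distt_r$-completeness is used only to bound the displacement from removing at most $\ell$ points — and the final bound degrades only by the constant $1.5$.
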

\begin{proof}
	Let $\cD$ as in the theorem statement. 
	By the utility guarantee of $\AlgFindDist$ (\cref{claim:AlgCheckDist:utility}) it holds that with probability $1-\beta/2$, the resulting $r'$ (the value of $r$ that is computed in \stepref{step:FindDiam} of $\AlgAvgUnknownDiam$) satisfy that $\cD$ is $(\distt_{r'}, 0.1, \ell)$-complete for $\ell = O\paren{\sqrt{\frac{\log(1/\beta) \log\log(r_{\max}/r_{\min})}{\rho}}}$. Given that, we apply the extended utility guarantee of $\AlgAvgKnownDiam$ (\cref{lemma:avg-known-diam-out}) which yields that with probability $1-\beta/2$, the additive error is at most $\frac{\ell r}{n} + O\paren{\frac{r'}{n}\sqrt{\frac{d \log(1/\beta)}{\rho}}} = O\paren{\frac{r}{n} \sqrt{\frac{\log(1/\beta)\paren{d + \log\log(r_{\max}/r_{\min})}}{\rho}}}$, as required.
\end{proof}

\subsubsection{Utility of $\AlgAvgOrdTup$}

\begin{claim}[Utility of $\AlgAvgOrdTup$ (\cref{alg:AlgAvgOrdTup})]\label{claim:AvgOrdTup:utility}
	Let $\cD = (X^i=(\px_1^i,\ldots,\px_k^i)) \in ((\bbR^d)^k)^n$ be an $\distt_{r_1,\ldots,r_k}$-complete database for $r_1,\ldots,r_k \in [r_{\min},r_{\max}]$ where
	$$n = \Omega\paren{\frac{\log(1/\min\set{\beta,\delta})}{\rho} + \sqrt{\frac{k\log(k/\beta)\log\log(r_{\max}/r_{\min})}{\rho}}},$$
	and for $j \in [k]$ let $\cD^j = (\px_j^i)_{i=1}^n$.
	Then w.p. $1-\beta$ over the execution $\AlgAvgOrdTup(\cD, \rho,\delta,\beta,r_{\min},r_{\max})$, the output $(\hpa^1,\ldots,\hpa^k)$ satisfy $$\forall j\in [k]: \quad \norm{\hpa^j - \Avg(\cD^j)} \leq O\paren{\frac{r}{n} \sqrt{\frac{k\log(k/\beta)\paren{d + \log\log(r_{\max}/r_{\min})}}{\rho}}}.$$
\end{claim}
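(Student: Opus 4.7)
The plan is to decompose the error into three sources and bound each separately: (i) the inaccuracy in the per-coordinate diameters $r_1,\ldots,r_k$ returned by $\AlgFindDist$, (ii) the outliers removed by $\FriendlyCore$, and (iii) the Gaussian noise injected by $\AlgFriendlyAvgOrdTup$.

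First I would invoke the utility of $\AlgFindDist$ (\cref{claim:AlgFindDist:utility}) separately on each of the $k$ coordinate databases $\cD^j$. Since $\cD$ is assumed $\distt_{r_1,\ldots,r_k}$-complete with $r_j\le r_{\max}$, each $\cD^j$ is in particular $\distt_{r_{\max}}$-complete, so the claim applies. Choosing a small parameter $\alpha' = \Theta(1/k)$ inside that utility statement and applying a union bound over the $k$ calls (each run with confidence $\beta/(2k)$ and budget $\rho_1/k$), with probability $1-\beta/2$ we obtain values $r'_1,\ldots,r'_k$ such that every $\cD^j$ is $(\distt_{r'_j},\alpha',\ell_j)$-complete for
\[
\ell_j = O\!\left(\tfrac{1}{\alpha'}\sqrt{\tfrac{k\log(k/\beta)\log\log(r_{\max}/r_{\min})}{\rho}}\right),
\]
and the geometric base of the search guarantees $r'_j = \Theta(r_j)$.

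Next I would lift the per-coordinate completeness to the tuple level. A tuple $X^i$ is $\distt_{r'_1,\ldots,r'_k}$-friendly with $X^{i'}$ iff $\|\px^i_j-\px^{i'}_j\|\le r'_j$ for every $j$. If $i$ avoids all $k$ "bad" index sets (totaling at most $\sum_j\ell_j$ indices), then by a union bound across coordinates, $X^i$ fails to match at most $k\alpha' n$ other tuples, giving it at least $(1-k\alpha') n$ friends. Picking $\alpha' = 1/(4k)$ yields $(1 - 1/4)n$ friends. Then by \cref{thm:FriendlyCore} (applied to $\FriendlyCore$ in \stepref{step:FriendlyCoreOrdTup:computeCore} with threshold $\alpha = 1/4 < 1/2$), with probability $1 - \beta/4$ the core $\cC$ contains every such good tuple, so $|\cC| \geq n - \ell$ where $\ell = \sum_j \ell_j$. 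The lower bound on $n$ in the claim is exactly what is needed to make this an application of $\FriendlyCore$'s utility with $\alpha = 1/4$.

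Finally, I would apply the utility of $\AlgFriendlyAvgOrdTup$ (which one proves, analogously to \cref{claim:FriendlyAvg:utility}, by concentration of one-dimensional Gaussians and a union bound over the $kd$ added noise coordinates) to conclude that with probability $1-\beta/4$, for every $j\in[k]$,
\[
\|\hpa^j - \Avg(\cC^j)\| \le O\!\left(\tfrac{r'_j}{|\cC|}\sqrt{\tfrac{kd\log(k/\beta)}{\rho}}\right).
\]
Since $\cD^j$ is $\distt_{r_j}$-complete, removing $n - |\cC|$ tuples shifts the coordinate-$j$ average by at most $(n-|\cC|)\,r_j/n$, yielding
$\|\Avg(\cD^j)-\Avg(\cC^j)\| \le \ell\, r_j/n$. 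Combining this "truncation error" with the Gaussian noise error, and using $r'_j = \Theta(r_j)$ together with $|\cC| = \Omega(n)$, gives the claimed bound after summing probabilities. The main obstacle I expect is the careful balancing of the parameter $\alpha'$ in step one: it must be small enough ($\Theta(1/k)$) that the union bound over coordinates still leaves each good tuple with a majority of friends, yet not so small that the outlier count $\ell_j$ blows up and dominates the final error — matching the $\sqrt{k\log\log(r_{\max}/r_{\min})}$ term in the stated bound requires tight bookkeeping of these constants.
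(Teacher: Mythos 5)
Your proof decomposition --- (i) per-coordinate $\AlgFindDist$ accuracy, (ii) $\FriendlyCore$ retaining a large core, (iii) Gaussian noise from $\AlgFriendlyAvgOrdTup$, with a truncation term bridging $\Avg(\cC^j)$ and $\Avg(\cD^j)$ --- is exactly the fleshed-out version of what the paper intends: the paper's own proof of this claim is a single sentence saying the argument ``holds similarly to [the scalar case] up to the factor $k$ lost in the privacy and confidence parameters.'' So you have correctly reconstructed the intended route, and your reliance on \cref{claim:AlgFindDist:utility}, \cref{thm:FriendlyCore}, and a \cref{lemma:avg-known-diam-out}-style truncation bound is the right machinery.

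Where you should push harder is the point you defer at the end as ``tight bookkeeping of constants'': it is not. For the tuple-level $\FriendlyCore$ to retain a tuple, it needs $(1-\alpha)n$ friends for some $\alpha < 1/2$; the union bound across the $k$ coordinates forces the per-coordinate parameter $\alpha'$ used in the $\AlgFindDist$ utility to satisfy $k\alpha' \le \alpha < 1/2$, i.e.\ $\alpha' = \Theta(1/k)$. But $\AlgFindDist$'s outlier count scales as $\ell_j = O\paren{\tfrac{1}{\alpha'}\sqrt{\tfrac{k\log(k/\beta)\log\log(r_{\max}/r_{\min})}{\rho}}}$, so $\alpha'=\Theta(1/k)$ gives $\ell_j = \Theta(k)\cdot\sqrt{k\log(k/\beta)\log\log(r_{\max}/r_{\min})/\rho}$ and $\ell=\sum_j\ell_j = \Theta(k^2)\cdot\sqrt{k\log(k/\beta)\log\log(r_{\max}/r_{\min})/\rho}$. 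No choice of the per-coordinate parameters $\alpha_1,\ldots,\alpha_k$ improves this by more than constants: subject to $\sum_j\alpha_j < 1/2$, Cauchy--Schwarz gives $\sum_j\ell_j(\alpha_j) = \Omega\paren{k^2\sqrt{k\log(k/\beta)\log\log(r_{\max}/r_{\min})/\rho}}$. That is a $\Theta(k^2)$ gap against the $\log\log$ piece of the claimed error bound (and against the stated lower bound on $n$, which is supposed to guarantee $\ell < n/2$). The paper's one-line proof does not touch this either, so you have identified a genuine missing step rather than a constant-chasing exercise. Closing it requires either conceding a $k^2$-worse factor on the $\log\log$ term, or a different argument for tuple-level near-completeness that does not route through $k$ independent per-coordinate $\AlgFindDist$ bounds.
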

The proof holds similarly to \cref{claim:AvgUnknownDiam:utility} up to the factor $k$ that we loose in the privacy parameter and the confidence parameter, except for the first term in the lower bound on $n$ that does not need to be multiply by $k$ since we only apply $\FriendlyCore$ twice and not $k$ times.

\subsection{Utility of Clustering Algorithms}\label{sec:missing-proofs:clustering}

In this section we state the utility of our main clustering algorithm $\FCClustering$ for $k$-means and $k$-GMM under common separation assumption, using the reductions of \cite{CKMST:ICML2021} to $k$-tuple clustering.

\subsubsection{Definitions from \cite{CKMST:ICML2021}}

We recall from \cite{CKMST:ICML2021} the property of a collection of unordered $k$-tuples $(\px_1,\ldots,\px_k) \in (\bbR^d)^k$, which we call \emph{partitioned by $\Delta$-far balls}.

\begin{definition}[$\Delta$-far balls]\label{def:far-balls}
	A set of $k$ balls $\cB = \set{B_i = B(\pc_i,r_i)}_{i=1}^k$ over $\bbR^d$ is called \textbf{$\Delta$-far balls}, if for every $i \in [k]$ it holds that $\norm{\pc_i - \pc_j} \geq \Delta \cdot \max\set{r_i,r_j}$ (i.e., the balls are relatively far from each other). 
\end{definition}

\begin{definition}[partitioned by $\Delta$-far balls]\label{def:sep-balls}	
	A $k$-tuple $X \in (\bbR^d)^k$  is partitioned by a given set of $k$ $\Delta$-far balls $\cB = \set{B_1,\ldots,B_k}$, if for every $i \in [k]$ it holds that $\size{X \cap B_i} = 1$. A database $k$-tuples $\cD \in ((\bbR^d)^k)^*$ is \emph{partitioned by} $\cB$, if each $X \in \cD$ is partitioned by $\cB$. We say that  $\cD$ is \emph{partitioned by $\Delta$-far balls} if such a set $\cB$ of $k$ $\Delta$-far balls exists. 
\end{definition}

For a database of $k$-tuples $\cD \in ((\bbR^d)^k)^*$, we let $\Points(\cD)$ be the collection of all the points in all the $k$-tuples in $\cD$.

\begin{definition}[The points in a collection of $k$-tuples]
	For $\cD = ((\px_{1,j})_{j=1}^k,\ldots,(\px_{n,j})_{j=1}^k) \in ((\bbR^d)^k)^n$, we define $\Points(\cD) = (\px_{i,j})_{i\in [n], j \in [k]} \in (\bbR^d)^{kn}$.
\end{definition}

We now formally define the partition of a database $\cD \in ((\bbR^d)^k)^*$ which is partitioned by $\Delta$-far balls for $\Delta > 3$.

\begin{definition}[$\Partition(\cD)$]\label{def:clusters-rel}
	Given a database $\cD \in ((\bbR^d)^k)^*$ which is partitioned by $\Delta$-far balls for $\Delta > 3$, 
	we define the partition of $\cD$, which we denote by $\Partition(\cD) = \set{\cP_1,\ldots,\cP_k}$,  by fixing an (arbitrary) $k$-tuple $X = (\px_1,\ldots,\px_k) \in \cD$ and setting $\cP_i = (\px \in \Points(\cD) \colon i = \argmin_{j \in [k]} \norm{\px - \px_j})$.
\end{definition}

\begin{definition}[good-averages solutions]\label{def:gamma-good}
	Let $\cD \in ((\bbR^d)^k)^n$, let $\set{\cP_1,\ldots,\cP_k} = \Partition(\cP)$, let $\pa_i = \Avg(\cP_i)$, and let $\alpha, r_{\min} \geq 0$. We say that a $k$-tuple $Y = \set{\py_1,\ldots,\py_k} \in (\bbR^d)^k$ is an \emph{$(\alpha,r_{\min})$-good-averages solution} for clustering $\cD$, if there exist radii $r_1,\ldots,r_k \geq 0$ such that  $\cB = \set{B_i = B(\pa_i,r_i)}_{i=1}^k$ are $\Delta$-far balls (for $\Delta > 3$) that partitions $\cD$, and for every $i \in [k]$ it holds that:	
	\begin{align*}
		\norm{\py_i - \pa_i} \leq \alpha \cdot \max\set{r_i,r_{\min}}
	\end{align*}
\end{definition}

For applications, \cite{CKMST:ICML2021} focused on a specific type of algorithms for the $k$-tuple clustering problems, that outputs a good-averages solution.  

\begin{definition}[averages-estimator for $k$-tuple clustering]\label{def:averages-estimator}
	Let $\Ac$ be an algorithm that gets as input a database of unordered tuples in $((\bbR^d)^k)^*$. We say that $\Ac$ is an \emph{$(n,\alpha,r_{\min},\beta,\Lambda,\Delta)$-averages-estimator for $k$-tuple clustering},
	if for every $\cD \in (B(\pt{0}, \Lambda)^k)^* \subseteq ((\bbR^d)^k)^n$ that is partitioned by $\Delta$-far balls, $\Ac(\cD)$ outputs w.p. $1-\beta$ an $(\alpha,r_{\min})$-good-averages solution $Y \in (\bbR^d)^k$ for clustering $\cD$.	
\end{definition}

\subsubsection{Utility of $\FCkTuplesClustering$}

We next prove that $\FCkTuplesClustering$ (\cref{alg:FCkTupleClustering}) is a good averages-estimator for $k$-tuple clustering.

\begin{claim}[Utility of $\FCkTuplesClustering$]\label{claim:FCkTuplesClustering:utility}
	Algorithm $\FCkTuplesClustering(\cdot,\rho,\delta,\beta,r_{\min},r_{\max})$ is an  $(n,\alpha=1,r_{\min},\beta,\Lambda=r_{\max}/2,\Delta=10)$-averages-estimator for $k$-tuple clustering, for\\ $n =  \Omega\paren{\frac{\log(1/\min\set{\beta,\delta})}{\rho} + \sqrt{\frac{k\log(k/\beta)(d+\log\log(r_{\max}/r_{\min}))}{\rho}}}$.
\end{claim}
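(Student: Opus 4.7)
The proof proceeds in three stages mirroring the algorithm's pipeline (\cref{alg:FCkTupleClustering}), followed by a verification of the good-averages condition. Let $\cB = \{B(\pc_i, r_i)\}_{i=1}^k$ be the $\Delta = 10$-far balls partitioning $\cD$, let $\cP_1, \ldots, \cP_k$ be the induced partition of $\Points(\cD)$ (as in \cref{def:clusters-rel}), and let $\pa_i = \Avg(\cP_i)$.

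\emph{Stage 1 (friendly core).} I will first show that $\cD$ is $\match_{1/7}$-friendly so that $\FriendlyCore$ retains all tuples. The tuple of ball centers $\pc = (\pc_1, \ldots, \pc_k)$ is a uniform common $\match_{1/7}$-friend: for any $X \in \cD$ with $x_i \in B_i$, one has $\|x_i - \pc_i\| \leq r_i$ while $\|x_i - \pc_j\| \geq (\Delta - 1) \max(r_i, r_j) = 9 \max(r_i, r_j)$, so $\match_{1/7}^{\id}(X, \pc) = 1$. An analogous direct calculation---using $\|x_i - y_i\| \leq 2 r_i$ together with $\|x_i - y_j\| \geq (\Delta - 2) \max(r_i, r_j) = 8 \max(r_i, r_j)$ for $j \neq i$---will provide the pairwise bounds needed to invoke the friend-count utility of \cref{thm:FriendlyCore} with a small constant $\alpha$. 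The first term $\Omega(\log(1/\min\{\beta,\delta\})/\rho)$ of the lower bound on $n$ is calibrated so that, w.p.\ $1-\beta/3$, the core $\cC$ equals $\cD$.

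\emph{Stages 2--3 (reorder and average).} With $\cC = \cD$ being $\match_{1/7}$-friendly, \cref{claim:same-ordering} invoked inside $\FriendlyReorder$ yields a fixed permutation $\sigma$ (depending on $\cC$ and the random permutation sampled by $\FriendlyReorder$) such that the $i$-th coordinate of every tuple of $\tilde{\cC} = \FriendlyReorder(\cC)$ lies in $B_{\sigma(i)}$. Hence $\tilde{\cC}$ is $\distt_{2r_{\sigma(1)}, \ldots, 2r_{\sigma(k)}}$-complete and the $i$-th coordinate database $\tcD^i$ has average $\pa_{\sigma(i)}$. Applying the utility of $\AlgAvgOrdTup$ (\cref{claim:AvgOrdTup:utility}), the second term of the lower bound on $n$ guarantees that, w.p.\ $1-\beta/3$, the output $(\hpa^1, \ldots, \hpa^k)$ satisfies $\|\hpa^i - \pa_{\sigma(i)}\| = O\bigl(\tfrac{r_{\sigma(i)}}{n}\sqrt{k\log(k/\beta)(d+\log\log(r_{\max}/r_{\min}))/\rho}\bigr)$ for every $i \in [k]$.

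\emph{Stage 4 (good-averages verification).} I will set $r_i' \eqdef \max_{\px \in \cP_i}\|\px - \pa_i\|$; since $\cP_i \subseteq B(\pc_i, r_i)$ and $\pa_i \in B(\pc_i, r_i)$, we have $r_i' \leq 2 r_i$. By construction $\{B(\pa_i, r_i')\}_i$ partitions $\cD$, and these balls are $\Delta' > 3$-far: $\|\pa_i - \pa_j\| \geq (\Delta - 2)\max(r_i, r_j) = 8 \max(r_i, r_j) \geq 4\max(r_i', r_j')$. Combined with the Stage~3 error bound and the choice of $n$, $\|\hpa^i - \pa_{\sigma(i)}\| \leq \max(r_{\sigma(i)}', r_{\min})$; reading off the unordered output $\{\hpa^1,\ldots,\hpa^k\}$ against $\{\pa_1,\ldots,\pa_k\}$ yields the $(1, r_{\min})$-good-averages solution. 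The main technical obstacle will be Stage~1: the ratio $\|x_i - y_i\|/\|x_i - y_j\|$ is quite tight at $\Delta = 10$, so the friendliness-via-$\pc$ argument alone is not enough to apply \cref{thm:FriendlyCore} (which only counts friends within $\cD$), and the sharper pairwise bound---together with the choice of $\alpha$ in $\FriendlyCore$'s utility---must be argued carefully to ensure every tuple clears the friend-count threshold within the stated sample complexity.
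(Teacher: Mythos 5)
Your overall outline (show the filter keeps every tuple, then apply $\FriendlyReorder$ and $\AlgAvgOrdTup$, then verify the good-averages condition) matches the paper's route, but the paper's own proof is a one-liner: it asserts directly that a database partitioned by $10$-far balls is $\match_{1/7}$-\emph{complete}, so by the $\alpha=0$ case of the utility in \cref{thm:FriendlyCore} the core equals $\cD$ with high probability, and \cref{claim:AvgOrdTup:utility} then finishes. The paper never argues via friendliness. Your opening move in Stage~1 --- showing $\cD$ is $\match_{1/7}$-\emph{friendly} through the external ball-centers tuple $\pc$ --- is a dead end for precisely the reason you identify at the end: the utility guarantee of \cref{thm:FriendlyCore} counts friends inside $\cD$, i.e.\ it lower-bounds $\sum_{y\in\cD}f(x,y)$, and an external common friend gives no such bound.

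The genuine gap is the step you flag as the ``main technical obstacle'' and leave open, and you are right to worry. Your own pairwise computation gives $\|x_i-y_i\|\le 2r_i$ and $\min_{j\ne i}\min(\|x_i-y_j\|,\|x_j-y_i\|)\ge(\Delta-2)\max(r_i,r_j)\ge 8r_i$ at $\Delta=10$, so the ratio is at most $1/4$. That establishes $\match_{\gamma}$-completeness for any $\gamma\ge 1/4$, but \emph{not} $\match_{1/7}$-completeness, which is what the filter is run with in $\FCkTuplesClustering$. Relaxing $\alpha$ in \cref{thm:FriendlyCore} cannot close this: $\alpha$ controls the \emph{fraction} of required friends, not the level $\gamma$ of the predicate --- if no pair in $\cD$ is a $\match_{1/7}$-friend, every friend count is zero regardless of $\alpha$ and the core is empty. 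So as written the proposal does not establish the claim (and your bound suggests that the paper's own one-line assertion needs either $\Delta>16$ or a relaxation of the filter to $\gamma\ge 1/3$ with downstream constants adjusted; you should either supply the missing sharper argument or make that correction explicit). Stages~2--4 track the paper's intent correctly, although the final translation of the $\AlgAvgOrdTup$ error into the $(\alpha=1,r_{\min})$-good-averages inequality should spell out how the lower bound on $n$ absorbs the $O(\cdot)$ factor.
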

\begin{proof}
	If $\cD$ is partitioned by $10$-far balls, then in particular it is $\match_{1/7}$-complete (the predicate from \cref{def:match}). Therefore, at the first step of $\FCkTuplesClustering$, the core of tuples contains all of $\cD$. The proof now immediately follow by the utility guarantee of algorithm $\AlgAvgOrdTup$ (\cref{claim:AvgOrdTup:utility}).
\end{proof}

\subsubsection{Utility of $\FCClustering$ for $k$-Means}

In the $k$-means problem, we are given a database $\cD \in (\bbR^d)^*$ and a parameter $k \in \bbN$, the goal is to compute $k$ centers $C = (\pc_1,\ldots,\pc_k) \in (\bbR^d)^k$ that minimize $\COST_{\cD}(C)\eqdef \sum_{\px \in \cD} \min_{i \in [k]}\norm{\px-\pc_i}$ as possible, i.e.\ close as possible to $\OPT_k(\cD) \eqdef \min_{C \in (\bbR^d)^k} \COST_{\cD}(C)$.


We state our utility guarantee for databases that are separated according to \citet{OstrovskyRSS12}. 

\begin{definition}[$(\phi,\xi)$-separated \cite{OstrovskyRSS12,CKMST:ICML2021}]
	A database $\cD \in (\bbR^d)^*$ is $(\phi,\xi)$-separated for $k$-means if $\OPT_k(\cD) + \xi \leq \phi^2\cdot \OPT_{k-1}(\cD)$.
\end{definition}

As shown by \cite{OstrovskyRSS12}, for such database with sufficiently small $\phi$, any set of centers $C$ that well approximate the $k$-means cost, must be close in distance to the optimal centers (i.e., there must be a match between the centers). Therefore, by using a good approximation $k$-means algorithm as an oracle for $\FCClustering$, we obtain a guarantee that $\FCkTuplesClustering$ succeed to compute a tuple $Y$ that is close to all other non-private algorithm. This property has been used by  \cite{CKMST:ICML2021,ShechnerSS20} for constructing private clustering for such databases. Here we state the properties of our construction, which follows from Theorem 5.11 in \cite{CKMST:ICML2021} (reduction to $k$-tuple clustering).

\begin{claim}[Utility of $\FCClustering$ for $k$-Means]
	Let $\sA$ be a (non-private) $\omega$-approximation algorithm for $k$-means (i.e., that always returns centers with cost $\leq \omega \OPT_k$), and let\\$t = \Omega\paren{\frac{\log(1/\min\set{\beta,\delta})}{\rho} + \sqrt{\frac{k\log(k/\beta)(d+\log\log(r_{\max}/r_{\min}))}{\rho}}}$ (the number of tuples that are required by \cref{claim:FCkTuplesClustering:utility}). Then 
	for any $\cD \in \cB(\pt{0},\Lambda)^n$ that is $(\phi,\xi)$-separated for $k$-means for $\phi\leq \frac1{\sqrt{17(1+\omega)}}$ and $\xi = \tilde{\Omega}(\Lambda^2kdt + \Lambda\sqrt{kdt\omega\cdot \OPT_k(\cD)})$, algorithm $\FCClustering(\cD,\rho,\delta,\beta,r_{\min}=\gamma/n, \Lambda,t)$ outputs with probability $1-\beta$ centers $C \in (\bbR^d)^k$ such that
	\begin{align*}
		\COST_{\cD}(C) \leq (1+64\gamma)\OPT_k(\cD) + O\paren{\Lambda^2 k (d + \log(k/\beta))/\rho},
	\end{align*}
	for $\gamma = 2\cdot \frac{\omega \phi^2 + \phi}{1-\phi}$.
\end{claim}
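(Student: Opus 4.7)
The plan is to invoke Theorem~5.11 of \cite{CKMST:ICML2021} in a black-box manner, since that result provides precisely the desired reduction from the $k$-means problem (on a $(\phi,\xi)$-separated database) to the $k$-tuple clustering problem, followed by a private Lloyd step. This matches exactly the architecture of $\FCClustering$: compute $t$ non-private tuples via $\sA$, run a $k$-tuple clustering sub-procedure (here $\FCkTuplesClustering$), and then refine with $\NoisyLloydStep$. The key input that this reduction requires is a private $k$-tuple clustering algorithm that is an $(n,\alpha,r_{\min},\beta,\Lambda,\Delta)$-averages-estimator with appropriate parameters, along with its privacy guarantees.

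First, I would instantiate the parameters. Using \cref{claim:FCkTuplesClustering:utility} with $r_{\max}=2\Lambda$ (as set inside $\FCClustering$), $\FCkTuplesClustering(\cdot,\rho/2,\delta/2,\beta,r_{\min},2\Lambda)$ is an $(n,1,r_{\min},\beta,\Lambda,10)$-averages-estimator whenever $n$ matches the claimed lower bound (up to constants absorbed in the $\Omega$). The value $r_{\min} = \gamma/n$ is exactly what the reduction of \cite{CKMST:ICML2021} plugs in, and their separation assumption $\phi \leq 1/\sqrt{17(1+\omega)}$ together with the threshold $\xi = \widetilde\Omega(\Lambda^2 kdt + \Lambda\sqrt{kdt\omega\cdot\OPT_k(\cD)})$ is chosen precisely to certify that, with probability $1-\beta/2$ over the random shuffling and the behavior of $\sA$ on each part, the resulting $t$-tuple database $\cT$ is partitioned by $\Delta=10$-far balls whose centers approximate the true $k$-means optima within an $r_{\min}$-radius factor dependent on $\gamma$.

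The next step is to apply \cref{claim:FCkTuplesClustering:utility}: conditioned on $\cT$ being partitioned by $10$-far balls, $\FCkTuplesClustering$ outputs with probability $1-\beta/2$ a $k$-tuple $Y$ that is a $(1,r_{\min})$-good-averages solution for $\cT$. Via the matching lemma in \cite{CKMST:ICML2021}, this translates to $Y$ being within distance $O(\gamma)\cdot$(cluster radius) of the true $k$-means optimum centers of $\cD$. Feeding such $Y$ to $\NoisyLloydStep$ on $\cD$ with radius $\Lambda$ correctly partitions $\cD$ (up to a $\gamma$-fraction cost blowup) and averages each part with Gaussian noise calibrated to $\ell_2$-sensitivity $2\Lambda$, which by $\AlgFriendlyAvg$'s utility (\cref{claim:FriendlyAvg:utility}) contributes an additive error of $O(\Lambda^2 k(d+\log(k/\beta))/\rho)$ in squared distance per center, aggregating to the stated additive term.

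The main (and only) nontrivial obstacle is checking that the parameters line up cleanly with Theorem~5.11 of \cite{CKMST:ICML2021}, in particular that (i) the factor $1+64\gamma$ emerges from combining the $(1,r_{\min})$-good-averages guarantee with the standard Ostrovsky--Rabani--Schulman--Swamy cost-perturbation bound, and (ii) the additive error term is dominated by the Lloyd step rather than the tuple-clustering step (so that replacing their tuple-clustering sub-procedure with $\FCkTuplesClustering$ preserves the statement verbatim with our $t$). Both are mechanical once one tracks the constants through their proof, so this claim follows by substituting our algorithm into the black-box reduction and applying \cref{claim:FCkTuplesClustering:utility} and \cref{claim:FriendlyAvg:utility} for the two private sub-routines.
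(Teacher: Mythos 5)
Your proposal takes essentially the same route as the paper, which itself gives no explicit proof but states that the claim ``follows from Theorem 5.11 in \cite{CKMST:ICML2021} (reduction to $k$-tuple clustering)'' combined with the utility guarantee of $\FCkTuplesClustering$ (\cref{claim:FCkTuplesClustering:utility}); your reconstruction of the parameter instantiation ($r_{\max}=2\Lambda$, $\Delta=10$, $\alpha=1$) and the black-box invocation of the reduction is consistent with what the paper intends. The only small slip is in the phrase ``per center, aggregating to the stated additive term'': the per-center squared error from $\AlgFriendlyAvg$ in $\NoisyLloydStep$ should be $O(\Lambda^2(d+\log(k/\beta))/\rho)$, which summed over $k$ centers yields the claimed $O(\Lambda^2 k(d+\log(k/\beta))/\rho)$.
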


We remark that additive errors in the cost is independent of $n$, and the additive term $\xi$ in the separation is only logarithmic in $n$ (hidden inside the $\tilde{\Omega}$).

\subsubsection{Utility of $\FCClustering$ for $k$-GMM}

In this section we state the utility guarantee for learning a mixture of well separated and bounded $k$ Gaussians. The setting is that we are given $n$ samples from a mixture $\set{(\mu_1,\Sigma_1,w_1),\ldots,(\mu_k,\Sigma_k,w_k)}$, i.e., for each sample, one of the Gaussians is chosen w.p. propositional to its weight (the $i$'th Gaussian is chosen w.p. $w_i/\sum_{j=1}^k w_j$), and then the sample is taken from $\cN(\mu_i,\Sigma_i)$ for the chosen $i$. The goal here is to output a set of $k$ centers $C =(\pc_1,\ldots,\pc_k) \in (\bbR^d)^k$ which is a perfect classifier: Up to reordering of the $\pc_i$'s, for every sample $\px$ that was drawn from the $i$'th Gaussian in the mixture, it holds that $i = \argmin_{j \in [k]}\norm{\px-\pc_i}$. 

As done in previous works  \cite{CKMST:ICML2021,KSSU19}, we assume that we are given a lower bound $w_{\min}$ on the weights, and a lower and upper bounds $\sigma_{\min},\sigma_{\max}$ on the norm of each covariance matrix $\Sigma_i$. Unlike those works, we do not need to assume a bound $R$ on the $\ell_2$ norms of each $\mu_i$. 

We use the PCA-based algorithm of \cite{AM05} as the non-private oracle access for $\FCClustering$ given $s = \Omega(k(d+\log(k/\beta))/w_{\min})$ samples from a mixture that has assumed separation
\begin{align}\label{eq:kGMM:separation}
	\forall i,j\in [k]:\quad \norm{\mu_i-\mu_j} \geq \Omega\paren{\sqrt{k \log(nk)} + 1/\sqrt{w_{i}} + 1/\sqrt{w_{j}} }\cdot \max\set{ \norm{\Sigma_i}, \norm{\Sigma_j}},
\end{align}
outputs a perfect classifier with confidence $1-\beta$ (note that the separation is independent of $d$).
We now state the utility guarantee of $\FCClustering$ that follows (implicitly) by the proof of Theorem 6.12 in \cite{CKMST:ICML2021} (reduction to $k$-tuple clustering).

\begin{claim}[Utility of $\FCClustering$ for $k$-GMM]
	Let $\cD$ be a set of $n = s\cdot t$ samples from a mixture $\set{(\mu_1,\Sigma_1,w_1),\ldots,(\mu_k,\Sigma_k,w_k)}$ for $t = \Omega\paren{\frac{\log(1/\min\set{\beta,\delta})}{\rho} + \sqrt{\frac{k\log(k/\beta)(d+\log\log(r_{\max}/r_{\min}))}{\rho}}}$ (the number of tuples that are required by \cref{claim:FCkTuplesClustering:utility}) and $s = \Omega(k(d+\log(kt/\beta))/w_{\min})$ (the numer of samples required by \cite{AM05} for confidence $1-\beta/t$). Assume that the mixture is separated according to \cref{eq:kGMM:separation}, and for each $i$: $w_i \geq w_{\min}$ and $\sigma_{\min} \leq \norm{\Sigma_i} \leq \sigma_{\max}$. Then with probability $1-2\beta$, the output of $\FCClustering^{\sA}(\cD,\rho,\delta,\beta,r_{\min}=0.1\sigma_{\min}, \Delta=10\sigma_{\max})$, for $\sA$ being  \cite{AM05}'s algorithm, outputs a perfect classifier.
\end{claim}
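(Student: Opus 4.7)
The plan is to reduce the claim to \cref{claim:FCkTuplesClustering:utility} via the sample-and-aggregate framework of \cite{CKMST:ICML2021}, essentially following their proof of Theorem 6.12. The three stages of $\FCClustering$ correspond to three stages of the analysis: (i) run the non-private oracle $\sA$ on each of the $t$ random pieces, (ii) privately aggregate the resulting $k$-tuples via $\FCkTuplesClustering$, (iii) perform a $\NoisyLloydStep$ on the entire database $\cD$ using these aggregated centers.

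First I would verify that each piece $\cD^i$ of size $s$ satisfies the hypotheses of \cite{AM05}. By a Chernoff bound on the multinomial sampling of Gaussian labels, with probability $1 - \beta/(2t)$ each piece contains at least $\Omega(s w_{\min}) = \Omega(k(d+\log(kt/\beta)))$ samples from every component (this is why the bound on $s$ includes the factor $1/w_{\min}$). By the separation assumption \cref{eq:kGMM:separation} and the guarantee of \cite{AM05}, with probability $1-\beta/(2t)$ the output $X^i$ is a perfect classifier for $\cD^i$. Moreover, standard Gaussian mean concentration gives that each coordinate of $X^i$ is within distance $O\paren{\sigma_{\max}/\sqrt{s w_{\min}}}$ of the corresponding $\mu_j$. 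A union bound over the $t$ pieces yields that, with probability $1-\beta$, all $t$ tuples in $\cT$ are simultaneously close to the true means $\{\mu_j\}_{j=1}^k$ in this sense.

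Next I would show that $\cT$ is partitioned by $10$-far balls in the sense of \cref{def:far-balls,def:sep-balls}, so that \cref{claim:FCkTuplesClustering:utility} applies. Taking the balls $B_j = B(\mu_j, r)$ with $r = O(\sigma_{\max}/\sqrt{s w_{\min}})$, the separation \cref{eq:kGMM:separation} gives $\norm{\mu_i - \mu_j} = \Omega(\sqrt{k \log(nk)}\cdot \sigma_{\max})$, which dominates $10 r$ for $s$ large enough (\ie $s w_{\min} \gtrsim 1/(k \log n)$, trivially implied by our hypothesis). Then \cref{claim:FCkTuplesClustering:utility}, applied with $t$ as in the hypothesis and with $r_{\min} = 0.1 \sigma_{\min}$, outputs with probability $1-\beta$ a $(1,r_{\min})$-good-averages solution $Y = (\py_1,\ldots,\py_k)$ for $\cT$. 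Unpacking \cref{def:gamma-good}, this gives $\norm{\py_i - \Avg(\cP_i)} \le 0.1 \sigma_{\min}$ for every $i$, and since $\Avg(\cP_i)$ is itself within $O(\sigma_{\max}/\sqrt{s w_{\min}})$ of $\mu_i$, each $\py_i$ is much closer to $\mu_i$ than to any other $\mu_j$ by the separation condition.

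Finally, I would invoke the $\NoisyLloydStep$ to conclude. Since $\py_i$ is close to $\mu_i$ relative to the pairwise separation, the partition of $\cD$ induced by $Y$ in \stepref{step:cDi} matches the true cluster assignment with probability $1-\beta$ (using Gaussian tail bounds together with the $\sqrt{k \log(nk)}$ factor in \cref{eq:kGMM:separation}), and the private averages in each cluster, produced by $\AlgFriendlyAvg$, remain within $o(\sigma_{\min})$ of the true $\mu_i$. In particular the final output is a perfect classifier, which gives the total failure probability of $2\beta$ after absorbing all the union bounds into the two $\beta$'s from the non-private estimations and the private aggregation. The main obstacle is the bookkeeping in the second paragraph: carefully verifying that the single separation condition \cref{eq:kGMM:separation} is strong enough to simultaneously ensure (a) the hypotheses of \cite{AM05} on every piece, (b) the $\Delta=10$ property of $\cT$ with $\Lambda \approx \sigma_{\max}$ as the norm cap, and (c) the quality needed for the final Lloyd step to classify correctly. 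All three follow from the same $\sqrt{k\log(nk)}$ factor and the given lower bounds on $s$ and $t$, so the proof is essentially an application of \cref{claim:FCkTuplesClustering:utility} wrapped by the \cite{CKMST:ICML2021} reduction.
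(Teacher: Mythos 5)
The paper does not actually give a proof here; it states that the claim ``follows (implicitly) by the proof of Theorem~6.12 in \cite{CKMST:ICML2021} (reduction to $k$-tuple clustering),'' so your proposal is essentially a reconstruction of the argument that the paper defers to. Your three-stage decomposition (non-private clustering of pieces via \cite{AM05}, then $\FCkTuplesClustering$ via \cref{claim:FCkTuplesClustering:utility}, then $\NoisyLloydStep$) mirrors the intended reduction and is the right skeleton.

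One place where your bookkeeping is looser than what the reduction actually requires: when verifying that $\cT$ is partitioned by $10$-far balls, you place a \emph{uniform} radius $r = O(\sigma_{\max}/\sqrt{s w_{\min}})$ around every $\mu_j$ and compare it against ``$\norm{\mu_i - \mu_j} = \Omega(\sqrt{k\log(nk)}\,\sigma_{\max})$.'' But \cref{eq:kGMM:separation} only lower-bounds $\norm{\mu_i-\mu_j}$ in terms of $\max\set{\norm{\Sigma_i},\norm{\Sigma_j}}$, which can be as small as $\sigma_{\min}$ — it does \emph{not} give you a lower bound proportional to $\sigma_{\max}$. With a uniform radius tied to $\sigma_{\max}$, the $\Delta = 10$ condition could fail for a pair of well-separated but low-variance components when $\sigma_{\max}/\sigma_{\min}$ is large. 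The fix, consistent with \cref{def:far-balls}, is to take per-component radii $r_j$ that scale with the concentration of the $j$-th empirical mean, i.e.\ with $\norm{\Sigma_j}$ rather than with the global $\sigma_{\max}$; then $\max\set{r_i,r_j}$ scales with $\max\set{\norm{\Sigma_i},\norm{\Sigma_j}}$, exactly matching the separation, and the $10$-far-balls property follows for $s$ large enough. The same per-component bookkeeping is what makes the final Lloyd-step argument go through. This is a precision issue rather than a wrong approach — your decomposition is the one the paper intends — but as written the second paragraph over-claims.
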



\subsection{Proving \cref{lemma:zCDPFilter}}\label{sec:zCDPFilter}

In this section we prove the properties of $\zCDPFilter$ (\cref{alg:zCDPFilter}), restated below.

\begin{lemma}[Restatement of \cref{lemma:zCDPFilter}]
	\lemZCDPFilter
	\begin{enumerate}
		\item Friendliness: For every $\pv \in E$ and $\pv' \in E'$, the database $\cC \cup \cC'$, for $\cC= \cD_{\set{i \in [n] \colon v_i = 1}}$ and $\cC' = \cD'_{\set{i \in [n-1]\colon v_i' = 1}}$, is $\pred$-friendly, and
		\item Privacy: $(V_{-j})|_E \approx_{\rho} V'|_{E'}$.
	\end{enumerate}
\end{lemma}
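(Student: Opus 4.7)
The plan is to execute the proof sketch, being careful about how the random noise scale interacts with the conditioning events. Take $j = n$ for notational convenience, and set $B = \{i \in [n-1] : \sum_{\ell=1}^{n-1} \pred(x_i, x_\ell) \leq (n-1)/2\}$ to be the set of ``bad'' indices of the common sub-database. The events I will use are $E = \{\pv \in \zo^n : \pv_i = 0 \text{ for every } i \in B\}$ and $E' = \{\pv' \in \zo^{n-1} : \pv'_i = 0 \text{ for every } i \in B\}$. Anchoring both events on the same set $B$ (a function only of the common part) is what lets a single pigeonhole argument deliver friendliness and also what keeps the marginal distributions of the surviving coordinates symmetric across the two executions.

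For utility and for the probability bound $\Pr[V \in E], \Pr[V' \in E'] \geq 1-\delta$, I would apply the one-dimensional Gaussian tail bound (Fact~\ref{fact:one-gaus-concent}) twice. The offset $+\sqrt{\ln(2/\delta)/\rho_1}$ in step~\ref{step:FriendlyCore:hatn} gives $\Pr[\hat{N} \geq n] \geq 1-\delta/2$. Conditional on $\hat{N} \geq n$, any $i \in B$ satisfies $z_i \leq 1/2$ in either execution, so $v_i = 1$ demands $\cN(0,\hat{N}/(8\rho_2))$ to exceed $\sqrt{\hat{N}\ln(2\hat{N}/\delta)/(4\rho_2)}$, which by the same tail bound happens with probability at most $\delta/(2\hat{N})$; union-bounding over at most $\hat{N}$ bad indices contributes the remaining $\delta/2$. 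For the completeness statement, an element with $\geq (1-\alpha)n$ friends has $z_i \geq (1/2-\alpha)n$, and the hypothesis $n \geq -4 \ln((1/2-\alpha)\rho \min\{\beta,\delta\})/((1/2-\alpha)^2\rho)$ is calibrated precisely so that $(1/2-\alpha)n$ exceeds $T(\hat{N}) + \sqrt{\hat{N}\ln(2\hat{N}/\beta)/(4\rho_2)}$ (where $T$ denotes the threshold from the algorithm) with failure probability $\leq \beta/n$ per index, which I close with another union bound.

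The friendliness assertion reduces to pigeonhole. For $\pv \in E$ and $\pv' \in E'$ with $\pv_i = \pv'_k = 1$, neither $i$ nor $k$ lies in $B$, so each of $x_i, x_k$ has at least $\lfloor (n-1)/2 \rfloor + 1$ friends inside $\{x_1, \ldots, x_{n-1}\}$; the inequality $2(\lfloor (n-1)/2 \rfloor + 1) > n-1$ forces the two friend-sets to intersect, yielding the required common witness $x_\ell$.

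Privacy is the main obstacle. The plan is to view the mechanism as adaptive composition of two Gaussian releases: $\hat{N}$ (sensitivity $1$, variance $1/(2\rho_1)$, contributing $\rho_1$-zCDP), followed by the vector $(\hat{Z}_i)_{i \in [n-1]}$ given $\hat{N}$ (per-coordinate sensitivity $|z_i - z_i'| = |\pred(x_i,x_n) - 1/2| = 1/2$, giving $\ell_2$-sensitivity $\sqrt{n-1}/2$, variance $\hat{N}/(8\rho_2)$, and hence $(n-1)\rho_2/\hat{N}$-zCDP at each realization of $\hat{N}$). The subtle point is that this second parameter depends on the random $\hat{N}$: strict composition only yields $\rho_1 + \rho_2 = \rho$ once $\hat{N} \geq n-1$, which only holds with high probability. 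I plan to split on the internal event $\{\hat{N} \geq n-1\}$ (already absorbed inside the $\delta/2$ slack exploited above), conclude pure $\rho$-zCDP for the joint release $(\hat{N}, (\hat{Z}_i)_{i \in [n-1]})$, and post-process through the thresholding to obtain $V_{-j} \approx_\rho V'$. Finally, since the $V_i$'s are independent given $\hat{N}$ and $E$ only fixes the bad-index coordinates to zero, the marginal distribution of the surviving coordinates is unaffected by the conditioning, so $\rho$-indistinguishability descends to $(V_{-j})|_E \approx_\rho V'|_{E'}$, completing the proof.
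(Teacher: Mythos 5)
Your friendliness argument has a genuine gap. You define $B$ over $[n-1]$, so your $E$ never constrains $\pv_n$. But friendliness must hold deterministically for \emph{every} $\pv \in E$, including any $\pv$ with $\pv_n = 1$. If $x_n$ happens to have few or no friends among $\{x_1,\ldots,x_{n-1}\}$, nothing in your $E$ forces $\pv_n = 0$ (the noisy threshold makes this unlikely, not impossible), and your pigeonhole claim --- ``each of $x_i, x_k$ has at least $\lfloor(n-1)/2\rfloor+1$ friends inside $\{x_1,\ldots,x_{n-1}\}$'' --- is then false for $i = n$. The paper's bad set $\cI$ ranges over all of $[n]$, so its $E$ \emph{does} force $\pv_n = 0$ whenever $x_n$ has at most $(n-1)/2$ common-part friends. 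This costs nothing elsewhere: for $n \in \cI$ one still has $z_n \leq 1/2$, so the same Gaussian tail bound gives $\Pr[V\in E]\geq 1-\delta$; and $\cI\setminus\{n\}$ coincides with your $B$, so $E'$ and the privacy conditioning are unaffected.

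On privacy, you flag a real subtlety (the \zCDP{} parameter of the $\hZ$-release depends on the realized $\hN$), but the proposed repair --- splitting on the \emph{internal} event $\{\hN\geq n-1\}$ and ``absorbing it into the $\delta/2$ slack'' --- does not fit the shape of the lemma. The events $E$ and $E'$ must be subsets of $\zo^n$ and $\zo^{n-1}$, and the conclusion $(V_{-j})|_E \approx_{\rho} V'|_{E'}$ is \emph{pure} $\rho$-indistinguishability with no residual $\delta$ to spend; a condition on $\hN$ is not an admissible choice of $E$. You would need to show that the small-$\hN$ outcomes can be folded into a condition on $V$ itself, or follow the paper's route of composing the release of $\hN$ with the per-coordinate release of the $\hZ_i$'s and post-processing through the threshold before conditioning on $E$.
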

\begin{proof}
	Fix two neighboring databases $\cD = (x_1,\ldots,x_n)$ and $\cD' =  \cD_{-k}$. For simplicity and \wlg, we assume that $k = n$, i.e., $\cD' = (x_1,\ldots,x_{n-1})$.
	Consider two independent executions $\sF(\cD)$ and $\sF(\cD')$ for $\sF = \zCDPFilter(\cdot,\pred,\rho,\delta)$ (\cref{alg:zCDPFilter}). Let $\rho_1,\rho_2$ be as in \stepref{step:rho12}, let $\z = (z_1,\ldots,z_n)$ be the values of these variables in the execution $\sF(\cD)$, and let $\set{z_i'}_{i=1}^{n-1}$ be these values in the execution $\sF(\cD')$.
	In addition, let $\hN, \set{\hZ_i, V_i}_{i=1}^n$ be the (r.v.'s of) the values of $\hn, \set{\hz_i, v_i}_{i=1}^n$ in the execution $\sF(\cD)$, and let $\hN', \set{\hZ_i', V_i'}_{i=1}^{n-1}$ be these r.v.'s w.r.t. $\sF(\cD')$.
	
	We first prove that $\sF$ is $(\pred,n,\alpha,\beta)$-complete (\cref{def:complete-gen}) for every $n$ that satisfy 
	\begin{align}\label{eq:n-bound}
	(1/2-\alpha)n \geq \paren{\sqrt{\frac{\tn \cdot \ln(2\tn/\delta)}{4\rho_2}}  + \sqrt{\frac{\tn \cdot \ln(2n/\beta)}{4\rho_2}} + \frac12},\text{ for }\tn = n + \sqrt{\frac{\ln(2/\delta)}{\rho_1}} + \sqrt{\frac{\ln(2/\beta)}{\rho_1}},
	\end{align}
	
	(In particular, this holds for $n \geq \frac{-4\cdot \ln\paren{(1/2-\alpha)\rho\cdot \max\set{\beta,\delta}}}{(1/2-\alpha)^2 \rho}$).
	Fix $n$ that satisfy \cref{eq:n-bound}. First, note that by a concentration bound of Gaussians (\cref{fact:one-gaus-concent}) it holds that
	\begin{align}\label{eq:hNbound}
	\pr{\hN > \tilde{n}} \leq \beta/2
	\end{align}
	
	
	Second, note that for every $i$ with $\sum_{j=1}^n f(x_i,x_j)=1 \geq (1-\alpha)n$ it holds that $z_i \geq (1/2-\alpha) n$. We deduce that for every such $i$
	
	\begin{align}\label{eq:complete}
	\pr{\V_i = 0 \mid \hN \leq \tilde{n}} 
	&= \pr{\hZ_i < \sqrt{\frac{\hN\cdot \ln(2\hN/\delta)}{4\rho_2}} + \frac12  \mid \hN \leq \tilde{n}}\\
	&\leq \pr{ \cN\paren{0,\frac{\hN}{8\rho_2}} < -(1/2-\alpha)n + \sqrt{\frac{\hN\cdot \ln(2\hN/\delta)}{4\rho_2}} + \frac12 \mid \hN \leq \tilde{n}}\nonumber\\
	&\leq \pr{ \cN\paren{0,\frac{\hN}{8\rho_2}} <-\sqrt{\frac{\tn \cdot \ln(2n/\beta)}{4\rho_2}}\mid \hN \leq \tilde{n}}\nonumber\\
	&\leq \beta/2n,\nonumber
	\end{align}
	where the penultimate inequality holds by \cref{eq:n-bound}.
	Hence, by the union bound, we deduce that w.p. $1-\beta$, for all these $i$'s it holds that $V_i = 1$, as required.
	
	We next define the events $E$ and $E'$ for the friendliness and privacy properties.
	
	First, note that by \cref{fact:one-gaus-concent} it holds that
	\begin{align}\label{eq:hNupperBound}
	\pr{\hN < n} \leq \delta/2
	\end{align}
	
	In the following,  let $\cI = \set{i \in [n] \colon \sum_{j=1}^{n-1} f(x_i,x_j) \leq (n-1)/2}$ and let $E \subseteq \zo^n$ be the event $\set{\pv \in \zo^n \colon \pv_{\cI} = 0^{\size{\cI}}}$. In addition, let $\cI' = \cI\setminus\set{n}$ and let $E' \subseteq \zo^{n-1}$ be the event $\set{\pv' \in \zo^{n-1} \colon \pv_{\cI'}' = 0^{\size{\cI'}}}$.
	Note that for every $i \in \cI$ it holds that $z_i \leq -1/2$ and $z_i' \leq 1/2$, and therefore
	\begin{align*}
	\pr{\pV_i = 1 \mid \hN \geq n}
	=  \pr{\hZ_i > \sqrt{\frac{\hN \cdot \ln(2 \hN/\delta)}{4\rho_2}} + \frac12 \mid \hN \geq n}
	\leq \frac{\delta}{2n},
	\end{align*}
	where the last inequality holds by \cref{fact:one-gaus-concent}. Therefore, by the union bound we deduce that
	\begin{align*}
	\pr{\pV \notin E} \leq \delta/2 + \pr{\pV \notin E \mid \hN \geq n} \leq \delta.
	\end{align*}
	A similar calculation also yields that $\pr{\pV' \notin E'} \leq \delta$. It remains to prove friendliness and privacy w.r.t. the events $E$ and $E'$.

	To prove friendliness, fix $\pv \in E$ and $\pv' \in E'$. By definition of $E$, for every $i \in [n]$ s.t.\ $\pv_i = 1$ it holds that $\sum_{j=1}^{n-1} f(x_i,x_j) > (n-1)/2$, and for every $i' \in [n-1]$ s.t. $\pv_{i'}' = 1$ it holds that $\sum_{j=1}^{n-1} f(x_{i'},x_j) > (n-1)/2$. This yields that there exists at least one $j \in [n-1]$ such that $f(x_{i},x_j) = f(x_{i'},x_j) = 1$. We therefore conclude that $\cD_{\set{i \colon v_i = 1}} \cup \cD'_{\set{i \colon v_i' = 1}}$ is $\pred$-friendly.
	
	We now prove privacy. Note that for every $i \in [n-1]$ it holds that $\size{z_i - z_i'} = \size{1/2-f(x_i,x_n)} = 1/2$. By the properties of the Gaussian Mechanism for zCDP (\cref{fact:Gaus}) we obtain that $\hZ_i \approx_{\rho/n}\hZ_i'$. By composition of zCDP mechanisms (\cref{fact:composition}) we obtain that $(\hZ_1,\ldots,\hZ_{n-1}) \approx_{\rho} (\hZ_1',\ldots,\hZ_{n-1}')$. Hence, by post-processing, it holds that $\pV_{-n} \approx_{\rho} \pV'$. Now note that when conditioning $\pV$ on the event $E$, the coordinates in $\cI$ become $0$, and the distribution of the coordinates outside $\cI$ remain the same, i.e. $\pV_{-\cI}|_{E} \equiv \pV_{-\cI}$ (this is because the $\pV_i$'s are independent, and $E$ is only an event on the coordinates in $\cI$). Similarly, the same holds when conditioning $\pV'$ on the event $E'$. Since $\cI' = \cI \setminus \set{n}$, we conclude that $(\pV_{-n})|_{E} \approx_{\rho} (\pV')|_{E'}$.
\end{proof}

\subsection{Proving \cref{lemma:composition-random-DB}}\label{sec:proving-comp-rand-DB}
In this section we prove \cref{lemma:composition-random-DB}, restated below.

\begin{lemma}[Restatement of \cref{lemma:composition-random-DB}]
	\lemCompRanDB
\end{lemma}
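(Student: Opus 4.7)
My plan is to prove the stronger joint statement $(V_{-j}, \Alg(R)) \approx_{\rho+\rho',\, \delta+\delta'} (V', \Alg(R'))$ on the joint space $\zo^{n-1}\times\mathrm{Output}(\Alg)$ and derive the lemma by post-processing (projection to the second coordinate). Working with joint distributions is critical, because it pins the ``good events'' witnessing approximate indistinguishability on a single space that carries both the filter coordinate and the algorithm's output, rather than only on $\Alg$'s output (where the conditioning event $\{V_{-j}\in E\}$ has no natural representation).

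I first apply \cref{def:indis} to the hypothesis $V_{-j} \approx_{\rho,\delta} V'$ to extract events $E, E' \subseteq \zo^{n-1}$ with $\Pr[V_{-j}\in E], \Pr[V'\in E']\ge 1-\delta$ and $V_{-j}|_E \approx_\rho V'|_{E'}$. The key reduction is to prove the pure-$\delta'$-slack joint statement $(V_{-j}, \Alg(R))|_{V_{-j}\in E} \approx_{\rho+\rho',\,\delta'} (V', \Alg(R'))|_{V'\in E'}$; once established, intersecting the witnessing good events with $E\times\mathrm{Output}$ (resp.\ $E'\times\mathrm{Output}$) lifts them to joint-space events of unconditional probability at least $1-\delta-\delta'$, giving joint $(\rho+\rho',\delta+\delta')$-indistinguishability directly. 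For the pure-slack joint claim, with $Q_w := \Alg(\cD'_{\{i:w_i=1\}})$ and $P_w := \Alg(R)|_{V_{-j}=w}$ (a mixture over $V_j|V_{-j}=w$ of $\Alg(\cD_{\{i:(w,V_j)_i=1\}})$), the Rényi chain rule for joint distributions yields
\[
D_\alpha\bigl((V_{-j},\Alg(R))|_{V_{-j}\in E}\,\bigl\|\,(V',\Alg(R'))|_{V'\in E'}\bigr)\le D_\alpha(V_{-j}|_E\,\|\,V'|_{E'})+\max_{w\in E}D_\alpha(P_w\,\|\,Q_w),
\]
whose first term is at most $\rho\alpha$ by hypothesis. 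For each $w\in E$, the $V_j=0$ branch of the $P_w$-mixture equals $Q_w$ exactly (the $\cD$-subset coincides with the $\cD'$-subset), while the $V_j=1$ branch uses a subset that differs from the $\cD'$-subset only by $x_j$, a neighboring pair in $\Supp(R)\times\Supp(R')$; the hypothesis on $\Alg$ gives $(\rho',\delta')$-indistinguishability of that branch and $Q_w$, and quasi-convexity of Rényi lifts this through the $V_j$-mixture to the bound $D_\alpha(P_w\|Q_w)\le \rho'\alpha$ in the pure case.

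The main obstacle is coherently aggregating the $\delta'$-slack inside the mixture step, since its witnessing good events live on $\Alg$'s output and may depend on $w$. My plan is to construct a joint good event $\hat F \subseteq \zo^{n-1}\times\mathrm{Output}$ by setting $\hat F=\{(w,y): w\in E,\ y\in F_w\}$, where $F_w$ is the output-side good event certifying $(\rho',\delta')$-indistinguishability of the $V_j=1$ pair at $w$ (the $V_j=0$ branches are identically distributed, so no event is needed). A conditional union bound that averages the pairwise $\delta'$-slack over $w\sim V_{-j}|_E$ and separately adds $\delta$ for the $\{V_{-j}\notin E\}$ failure bounds $\Pr[(V_{-j},\Alg(R))\notin \hat F]\le \delta+\delta'$; an analogous $\hat F'$ is defined on the $V'$-side. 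Restricting to these joint good events makes the chain-rule computation above yield the clean $(\rho+\rho')\alpha$ bound on $D_\alpha$ in both directions, establishing the joint $(\rho+\rho',\delta+\delta')$-indistinguishability, and projecting to the $\Alg(R)$-coordinate by post-processing concludes the lemma.
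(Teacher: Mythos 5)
Your proposal takes essentially the same route as the paper: decompose the conditional $\Alg(R)\mid V_{-j}=w$ as a mixture over $V_j$ of $\Alg$ on the common core (the $V_j=0$ branch, equal to $Q_w$) and $\Alg$ on the neighboring core with $x_j$ inserted (the $V_j=1$ branch), use quasi-convexity of R\'enyi divergence to bound the mixture against $Q_w$, and compose this per-$w$ bound with the hypothesis $V_{-j}\approx_{\rho,\delta}V'$. The paper packages the composition step as \cref{fact:composition-aux-Y}; your joint-space chain-rule argument over $(V_{-j},\Alg(R))$ is an unpacking of exactly that fact.

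The gap is in your aggregation of the $\delta'$-slack. After constructing $\hat F=\{(w,y):w\in E,\ y\in F_w\}$, you assert that conditioning on $\hat F$ and $\hat F'$ ``makes the chain-rule computation above yield the clean $(\rho+\rho')\alpha$ bound.'' That step is not correct as written: conditioning on $\hat F$ reweights the first-coordinate marginal by the $w$-dependent factor $\Pr[\Alg(R)\in F_w\mid V_{-j}=w]\in[1-\delta',1]$, and similarly on the $V'$-side, so the first term of the chain rule now compares reweighted marginals whose likelihood ratio differs from that of $V_{-j}|_E$ versus $V'|_{E'}$ by a factor in $[(1-\delta')^2,(1-\delta')^{-2}]$. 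This contributes an extra additive term to $D_\alpha$ of order $\frac{2\alpha-1}{\alpha-1}\log\frac{1}{1-\delta'}$, which diverges as $\alpha\to1^+$ and hence cannot be absorbed into $\rho''\alpha$ for any finite budget $\rho''$. Separately, your choice of $F_w$ as the good event of the $V_j=1$ branch forces you to condition the $V_j=0$ branch (which is literally $Q_w$) on an event tailored to a different distribution, and $Q_w|_{F_w}$ need not compare cleanly with $Q_w|_{F'_w}$; the paper sidesteps this by first applying \cref{fact:convex-indist} to obtain the unconditional per-$w$ bound $P_w\approx_{\rho',\delta'}Q_w$ and only then composing. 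Both issues vanish at $\delta'=0$, so your outline works for the pure case; in the approximate case a correct joint-space argument needs good events with per-$w$ failure probability \emph{exactly} $\delta'$ (e.g.\ via randomized padding of the events, or via the mixture characterization of $(\rho,\delta)$-indistinguishability), which is precisely the nontrivial content of the approximate-zCDP composition that the paper defers to \cite{BS16} rather than re-deriving.
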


We use the following fact about R\'{e}nyi divergence.

\begin{fact}[Quasi-Convexity][Lemma 2.2 in \cite{BS16}]\label{fact:convex-div}
	Let $P_0,P_1$ and $Q_0,Q_1$ be two distributions, and let $P = t P_0 + (1-t)P_1$ and $Q = t Q_0 + (1-t)Q_1$ for $t \in [0,1]$. Then for any $\alpha > 1$:
	\begin{align*}
		D_{\alpha}(P||Q) \leq \max\set{D_{\alpha}(P_0 || Q_0),D_{\alpha}(P_1 || Q_1)}
	\end{align*}
\end{fact}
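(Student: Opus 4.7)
The plan is to reduce the claim to a pointwise convexity inequality applied to the ``power-form'' of the R\'{e}nyi divergence. Setting $g(P,Q) \eqdef \int P(x)^\alpha Q(x)^{1-\alpha}\,dx$, we have $D_\alpha(P || Q) = \tfrac{1}{\alpha-1}\ln g(P,Q)$, and since $\tfrac{1}{\alpha-1}\ln(\cdot)$ is monotone increasing for $\alpha > 1$, it suffices to bound $g(P,Q)$ by the \emph{maximum} of $g(P_0,Q_0)$ and $g(P_1,Q_1)$. I will first establish the joint convexity $g(tP_0+(1-t)P_1,\, tQ_0+(1-t)Q_1) \leq t\,g(P_0,Q_0) + (1-t)\,g(P_1,Q_1)$, and then bound that convex combination by its maximum.

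The key technical step is the pointwise inequality $(a_1+a_2)^\alpha(b_1+b_2)^{1-\alpha} \leq a_1^\alpha b_1^{1-\alpha} + a_2^\alpha b_2^{1-\alpha}$ for any nonnegative reals $a_i, b_i$ and $\alpha > 1$. To prove this, I will substitute $u_i = a_i^\alpha b_i^{1-\alpha}$ and $v_i = b_i$, so that $a_i = u_i^{1/\alpha} v_i^{(\alpha-1)/\alpha}$, and then apply H\"older's inequality with conjugate exponents $\alpha$ and $\alpha/(\alpha-1)$ to the sum $u_1^{1/\alpha} v_1^{(\alpha-1)/\alpha} + u_2^{1/\alpha} v_2^{(\alpha-1)/\alpha}$. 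H\"older yields $a_1 + a_2 \leq (u_1+u_2)^{1/\alpha}(v_1+v_2)^{(\alpha-1)/\alpha}$; raising both sides to the $\alpha$-th power and multiplying through by $(b_1+b_2)^{1-\alpha}$ produces exactly the claimed bound.

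With the pointwise inequality in hand, I apply it with $a_1 = tP_0(x)$, $a_2 = (1-t)P_1(x)$, $b_1 = tQ_0(x)$, $b_2 = (1-t)Q_1(x)$; the mixing weights combine cleanly as $t^\alpha \cdot t^{1-\alpha} = t$ (and analogously $(1-t)$), so integrating over $x$ yields the joint convexity of $g$. Finally, $t\,g(P_0,Q_0) + (1-t)\,g(P_1,Q_1) \leq \max\{g(P_0,Q_0),\, g(P_1,Q_1)\}$, and applying $\tfrac{1}{\alpha-1}\ln(\cdot)$ to the entire chain (using its monotonicity for $\alpha > 1$) gives the quasi-convexity statement.

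The only real obstacle is spotting the correct H\"older substitution; once it is in place, the remaining manipulations are mechanical. An alternative route would be to introduce an auxiliary ``mixture-label'' variable $B \in \{0,1\}$ with $\Pr[B=0]=t$, form the joint distributions of $(B,X)$ under each mixture, compute their R\'{e}nyi divergence explicitly as $\tfrac{1}{\alpha-1}\ln\!\paren{t\,g(P_0,Q_0)+(1-t)\,g(P_1,Q_1)}$, and invoke the data processing inequality to drop to the $X$-marginal; but that requires importing DPI as an outside fact, whereas the H\"older approach is entirely self-contained.
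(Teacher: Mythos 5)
Your proof is correct. The paper does not prove this statement at all — it is quoted as Lemma 2.2 of \cite{BS16} — and your argument is the standard one underlying that cited result: you establish joint convexity of the power integral $g(P,Q)=\int P(x)^{\alpha}Q(x)^{1-\alpha}\,dx$ via the pointwise subadditivity $(a_1+a_2)^{\alpha}(b_1+b_2)^{1-\alpha}\leq a_1^{\alpha}b_1^{1-\alpha}+a_2^{\alpha}b_2^{1-\alpha}$, proved by H\"older with exponents $\alpha$ and $\alpha/(\alpha-1)$, and then use monotonicity of $\frac{1}{\alpha-1}\ln(\cdot)$. The substitution $a_i=u_i^{1/\alpha}v_i^{(\alpha-1)/\alpha}$ and the cancellation $t^{\alpha}t^{1-\alpha}=t$ both check out; the only detail left implicit is the degenerate case where some $Q_i(x)=0<P_i(x)$ on a set of positive measure, where the corresponding $D_{\alpha}(P_i\|Q_i)=\infty$ and the claim is vacuous, so nothing is lost.
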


The following fact is an immediate corollary of  \cref{fact:convex-div}. 

\begin{fact}\label{fact:convex-indist}
	Let $X = t X_0 + (1-t) X_1$ for $t \in [0,1]$. If $X_0 \approx_{\rho,\delta} Y$ and $X_1 \approx_{\rho,\delta} Y$, then $X \approx_{\rho,\delta} Y$.
\end{fact}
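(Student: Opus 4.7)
The plan is to reduce Fact~\ref{fact:convex-indist} to the quasi-convexity of Rényi divergence (Fact~\ref{fact:convex-div}) via a coupling/tagging argument. By Definition~\ref{def:indis} applied to each hypothesis, I obtain events $E_0, E_1, F_0, F_1 \subseteq \cX$ with $\Pr[X_i \in E_i], \Pr[Y \in F_i] \geq 1-\delta$ such that $X_i|_{E_i} \approx_\rho Y|_{F_i}$ for $i \in \{0,1\}$. A preliminary step lets me assume without loss of generality that these four probabilities equal $1-\delta$ exactly, by the standard trick of attaching a dummy atom $\perp$ to the sample space onto which both sides can ``dump'' any excess mass; this atom disappears after the final post-processing back to $X, Y$.

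Next, I lift to the augmented sample space $\{0,1\}\times \cX$ by tagging each sample with its mixture component. Define $\tilde{X}$ as the joint distribution of $(B, X_B)$ where $B$ is Bernoulli with $\Pr[B=0]=t$, and define $\tilde{Y}$ as the joint distribution of $(C, Y)$ where $C$ has the same law as $B$ and is independent of $Y$. Projecting out the first coordinate recovers $X$ and $Y$ respectively, so by post-processing invariance of $\approx_{\rho,\delta}$ (Fact~\ref{fact:post-processing}) it suffices to prove $\tilde{X} \approx_{\rho,\delta} \tilde{Y}$. The key advantage of tagging is that the ``good'' events
\[
\tilde{E} = (\{0\}\times E_0) \cup (\{1\}\times E_1), \qquad \tilde{F} = (\{0\}\times F_0) \cup (\{1\}\times F_1)
\]
split cleanly across the two tags, so $\Pr[\tilde{X}\in \tilde{E}] = t(1-\delta) + (1-t)(1-\delta) = 1-\delta$ (and similarly for $\tilde{F}$), and each conditional $\tilde{X}|_{\tilde{E}}$, $\tilde{Y}|_{\tilde{F}}$ decomposes into a mixture with weights exactly $t, 1-t$ of the pairwise $\rho$-indistinguishable blocks (the equality of weights is what the preliminary normalization buys us).

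Applying Fact~\ref{fact:convex-div} with this common weight pair to the component bounds $D_\alpha(X_i|_{E_i} \| Y|_{F_i}) \leq \rho\alpha$ (which lift unchanged to the tagged product distributions, since appending the same deterministic tag to both sides preserves Rényi divergence) and to the symmetric bounds $D_\alpha(Y|_{F_i} \| X_i|_{E_i}) \leq \rho\alpha$ yields $\tilde{X}|_{\tilde{E}} \approx_\rho \tilde{Y}|_{\tilde{F}}$. This establishes $\tilde{X} \approx_{\rho,\delta} \tilde{Y}$ and hence $X \approx_{\rho,\delta} Y$ after projection. The main obstacle, which the dummy-atom trick sidesteps, is that shrinking events arbitrarily to force their probabilities to $1-\delta$ could in principle change conditional distributions and break the $\rho$-indistinguishability; the tagging argument is what makes this obstruction disappear, because the mixture weights on $X$'s side and on $Y$'s side can now be made to align without touching the conditionals themselves.
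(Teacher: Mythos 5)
Your overall route is the same one the paper has in mind: the statement is reduced to the quasi-convexity of R\'enyi divergence (\cref{fact:convex-div}), and the only real content is the bookkeeping for the $\delta$ part, which the paper dismisses as ``immediate'' and which you correctly identify as the crux (the two hypotheses come with \emph{different} good events $F_0,F_1$ for $Y$ and with good-event probabilities that need not match, so \cref{fact:convex-div} cannot be applied until the mixture weights on the two sides are aligned). Your tagging step and the componentwise application of quasi-convexity are fine.

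The step that does not work as written is the normalization. If you literally move excess mass of $X_0,X_1,Y$ onto a dummy atom $\perp$, the quasi-convexity argument proves $(\rho,\delta)$-indistinguishability of the \emph{modified} tagged variables, and no single post-processing map can then recover both originals: to undo the dumping on the $X$-side the map would have to send $\perp$ (with tag $b$) to a fresh sample of $X_b|_{E_b}$, while on the $Y$-side it would have to send the same atom to a fresh sample of $Y|_{F_b}$; since these refill distributions differ, the atom does not simply ``disappear,'' and you end up with a statement about $\hat X,\hat Y$ rather than $X,Y$. The standard fix leaves the laws of $X$ and $Y$ untouched and instead \emph{thins the events}: augment the probability space with an independent uniform $U$ and take, e.g., the good event on tag $b$ to be $\set{B=b,\,X_b\in E_b,\,U\le (1-\delta)/\Pr[X_b\in E_b]}$ (and analogously for $Y$ with the artificial coin $C$). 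Each such event has probability exactly $(\Pr[B=b])(1-\delta)$, the conditional law on it is exactly $X_b|_{E_b}$ (resp.\ $Y|_{F_b}$), so the conditional laws of $X$ and $Y$ given the unions are precisely $t\,X_0|_{E_0}+(1-t)X_1|_{E_1}$ and $t\,Y|_{F_0}+(1-t)Y|_{F_1}$, and \cref{fact:convex-div} (applied in both directions) finishes the proof directly --- no projection or appeal to \cref{fact:post-processing} is needed. This does require reading the events in \cref{def:indis} as events of the underlying probability space rather than subsets of the value space, but that is exactly how the paper itself uses the definition (e.g., conditioning $\sA(R)$ on $\set{V\in E}$ in the proof of \cref{thm:FriendlyCore}), so it is the intended reading; with it, your argument becomes correct and is a genuine service over the paper's one-line assertion.
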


The composition proof for $\zCDP$ mechanisms immediately follows by the composition property of R\'{e}nyi divergence (see \cite{BS16}), and can straightforwardly be extended to the following fact.

\begin{fact}\label{fact:composition-aux-Y}
	Let $Y \approx_{\rho,\delta} Y'$, and let $\sF$ and $\sF'$ be two (randomized) functions such that $\forall y \in \Supp(Y) \cup \Supp(Y'):\: \sF(y) \approx_{\rho',\delta'} \sF'(y)$. Then $\sF(Y) \approx_{\rho + \rho',\: \delta + \delta'} \sF'(Y')$.
\end{fact}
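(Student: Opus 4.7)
The plan is to lift to the joint distributions $(Y, \sF(Y))$ and $(Y', \sF'(Y'))$, establish pure $(\rho+\rho')$-indistinguishability after conditioning on a high-probability product event, and then project onto the second coordinate via post-processing. This is the standard structure for an ``approximate'' composition theorem for $\zCDP$.

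First, I would unpack the approximate indistinguishability assumptions. By the definition of $\approx_{\rho,\delta}$, extract events $E \subseteq \Supp(Y)$ and $E' \subseteq \Supp(Y')$ with $\Pr[Y \in E], \Pr[Y' \in E'] \geq 1-\delta$ and $Y|_E \approx_{\rho} Y'|_{E'}$. Similarly, for each $y \in \Supp(Y) \cup \Supp(Y')$, extract events $F_y, F_y'$ (in the relevant output range) with $\Pr[\sF(y) \in F_y], \Pr[\sF'(y) \in F_y'] \geq 1-\delta'$ and $\sF(y)|_{F_y} \approx_{\rho'} \sF'(y)|_{F_y'}$.

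Next, in the joint product spaces, define $G = \{(y,z) \colon y \in E,\, z \in F_y\}$ and $G' = \{(y,z) \colon y \in E',\, z \in F_y'\}$. By the tower rule and uniformity of the $F_y$-probabilities,
\begin{align*}
\Pr[(Y,\sF(Y)) \in G] \;=\; \int_{E} \Pr[\sF(y) \in F_y]\, dP_Y(y) \;\geq\; (1-\delta')(1-\delta) \;\geq\; 1-\delta-\delta',
\end{align*}
and analogously for $G'$. The crux is to show $(Y,\sF(Y))|_G \approx_{\rho+\rho'} (Y', \sF'(Y'))|_{G'}$. This follows from the chain rule for R\'enyi divergence: for any $\alpha > 1$, the conditional joint distribution factors into a marginal on $y$ times the conditional $\sF(y)|_{F_y}$ on $z$; one then bounds
\begin{align*}
D_\alpha\bigl((Y,\sF(Y))|_G \,\big\|\, (Y',\sF'(Y'))|_{G'}\bigr) \;\leq\; D_\alpha(\widetilde Y \,\|\, \widetilde Y') + \sup_y D_\alpha\bigl(\sF(y)|_{F_y} \,\big\|\, \sF'(y)|_{F_y'}\bigr),
\end{align*}
where $\widetilde Y, \widetilde Y'$ denote the $Y$-marginals of the conditional joints. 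The second term is at most $\rho'\alpha$ by the unpacked conditional indistinguishability. For the first term, $\widetilde Y, \widetilde Y'$ are reweightings of $Y|_E, Y'|_{E'}$ by the $y$-dependent factors $\Pr[\sF(y) \in F_y]$ and $\Pr[\sF'(y) \in F_y']$, both in $[1-\delta',1]$; a careful application of Jensen's inequality absorbs these reweightings and yields a bound of $\rho\alpha$.

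Finally, by \cref{fact:post-processing} applied with the projection onto the second coordinate, we conclude $\sF(Y) \approx_{\rho+\rho',\, \delta+\delta'} \sF'(Y')$. The main obstacle is the reweighting step in the chain rule: the marginal on $Y$ under conditioning on $G$ is not exactly $Y|_E$ but a perturbation of it. An alternative route that side-steps this issue is to introduce the modified randomized functions $\widetilde{\sF}(y) \eqdef \sF(y)|_{F_y}$ and $\widetilde{\sF}'(y) \eqdef \sF'(y)|_{F_y'}$, apply pure $(\rho+\rho')$-composition to $\widetilde{\sF}(Y|_E)$ versus $\widetilde{\sF}'(Y'|_{E'})$ directly via the chain rule, and then observe that $\sF(Y)$ and $\widetilde{\sF}(Y|_E)$ differ in total variation by at most $\delta + \delta'$ (by the triangle inequality applied to the data-processing inequality $\TV(\sF(Y),\sF(Y|_E)) \le \TV(Y,Y|_E) \le \delta$ and the averaged bound $\TV(\sF(Y|_E),\widetilde{\sF}(Y|_E)) \le \delta'$), which recovers the required event-based $(\rho+\rho',\delta+\delta')$-indistinguishability.
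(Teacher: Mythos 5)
Both of the routes you sketch break at the same pressure point, and in both cases the step you wave at is the actual content of the fact (the paper never writes this argument out at all: it defers in one sentence to the composition property of R\'enyi divergence from \cite{BS16}, so your write-up has to stand on its own). In your first route, the claim that Jensen's inequality ``absorbs'' the reweighting is not correct. After conditioning the joint law on $G$, the $Y$-marginal is $Y|_E$ reweighted by $w(y)=\pr{\sF(y)\in F_y}$, while the primed marginal is $Y'|_{E'}$ reweighted by the \emph{different} function $w'(y)=\pr{\sF'(y)\in F'_y}$. These factors lie in $[1-\delta',1]$ but need not agree, and the generic bound one gets is $D_{\alpha}(\widetilde{Y}\|\widetilde{Y}')\le\rho\alpha+\frac{2\alpha-1}{\alpha-1}\ln\frac{1}{1-\delta'}$, whose extra term does not vanish (and blows up as $\alpha\to1$). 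Concretely, take $\rho=0$ with $Y|_E=Y'|_{E'}$ uniform on two points, $w=(1,1-\delta')$ and $w'=(1-\delta',1)$: the two reweighted marginals are simply different distributions, so no bound of the form $\rho\alpha$ can hold.

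Your second route has the right skeleton --- pure $(\rho+\rho')$-composition of $\widetilde{\sF}(Y|_E)$ against $\widetilde{\sF}'(Y'|_{E'})$ via the chain rule is exactly the \cite{BS16} argument the paper is invoking --- but the final inference from total variation is a genuine gap. Knowing $\dTV\bigl(\sF(Y),\widetilde{\sF}(Y|_E)\bigr)\le\delta+\delta'$ does not ``recover'' $(\rho+\rho',\delta+\delta')$-indistinguishability in the sense of \cref{def:indis}: total variation lets mass be displaced arbitrarily, whereas the definition requires a conditioning event, i.e.\ a pointwise-dominated piece of the distribution. For instance, moving a $\delta$-fraction of a uniform distribution on $[N]$ onto one fixed atom stays within total variation $\delta$ of uniform, yet no event of probability at least $1-\delta$ can be conditioned on to remove that atom, so the pair is not even $(0,\delta)$-indistinguishable. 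The repair is to drop total variation altogether: since $\pr{\sF(y)\in F_y}\ge 1-\delta'$, for every event $S$ one has $\pr{\sF(y)\in S}\ge\pr{\sF(y)\in F_y}\cdot\pr{\widetilde{\sF}(y)\in S}\ge(1-\delta')\pr{\widetilde{\sF}(y)\in S}$, and integrating over $Y$ gives $\pr{\sF(Y)\in S}\ge(1-\delta)(1-\delta')\,\pr{\widetilde{\sF}(Y|_E)\in S}$; that is, $\widetilde{\sF}(Y|_E)$ sits inside $\sF(Y)$ as an honest mixture component of weight at least $1-\delta-\delta'$ (symmetrically on the primed side), and it is this domination, combined with your composition step, that yields the statement --- modulo the conversion of a mixture component back into an event as \cref{def:indis} literally demands, a point the paper itself glosses over but which your TV shortcut cannot substitute for.
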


We now use \cref{fact:convex-indist,fact:composition-aux-Y} to prove \cref{lemma:composition-random-DB} which handles specific cases where the input databases that we consider are random variables which are only ``close'' to being neighboring. 

\begin{proof}[proof of \cref{lemma:composition-random-DB}]
	Let $\cD = (\px_1,\ldots,\px_n)$ and $\cD' = \cD_{-j} = (\px_1',\ldots,\px_{n-1}')$.
	The proof holds by \cref{fact:composition-aux-Y} for the following choices of $Y,Y',F,F'$: Let $Y \eqdef V_{-j}$ and $Y' \eqdef V'$. For $\py \in \Supp(Y) \cup \Supp(Y') \subseteq \zo^{n-1}$, define $F'(\py) \eqdef \Alg(\cC')$ for $\cC' = (x_i')_{i \in [n-1]\colon \py_i = 1}$, and define $F(\py)$ as the output of the following process: (1) Sample $v_j \la V_j|_{V_{-j} = \py}$ and let $\pv_{-j}\eqdef \py$, (2) Output $\Alg(\cC)$ for $\cC = (x_i)_{i \in [n]\colon \pv_i = 1}$. By definition, $\Alg(R) \equiv F(Y)$ and $\Alg(R') \equiv F'(Y')$. Since $Y \approx_{\rho,\delta} Y'$, it is left to prove that $F(\py) \approx_{\rho',\delta'} F'(\py)$ for every $\py \in \Supp(Y) \cup \Supp(Y')$. Fix such $\py$, let $\cC' = (x_i')_{\set{i \in [n-1]\colon \py_i = 1}}$ and let $\cC$ be the database that is obtained by adding $x_j$ to the $j$'th location in $\cC'$ (i.e., $\cC_j = x_j$ and $\cC_{-j} = \cC'$). Note that $F'(\py) \equiv \Alg(\cC')$, and $F(\py)$ depends on the value of the sample $v_j$: If $v_j = 0$ then it outputs $\Alg(\cC')$ (same output as $F'(\py)$), and if $v_j = 1$ then it outputs $\Alg(\cC)$ which is $(\rho',\delta')$-indistinguishable from $\Alg(\cC')$ since $\cC,\cC'$ are neighboring databases in $\Supp(R), \Supp(R')$ (respectively). In particular, $F(\py)$ is a convex combination of random variables that are $(\rho',\delta')$-indistinguishable from $F'(\py)$. Hence, we deduce by \cref{fact:convex-indist} that $F(\py) \approx_{\rho',\delta'} F'(\py)$, as required.
	
	
\end{proof}

\subsection{Proof of \cref{claim:vec-ind}}\label{sec:miss-proofs:vec-ind}

	\begin{claim} [Restatement of \cref{claim:vec-ind}]
		\claimVecInd
	\end{claim}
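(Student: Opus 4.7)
The plan is to introduce an optimal coupling between $V$ and $V'$, apply group privacy conditional on the disagreement pattern, and integrate using moment generating function bounds. For each coordinate $i$ independently, draw $U_i \sim \mathrm{Unif}[0,1]$ and set $V_i = \indic{U_i \leq p_i}$ and $V_i' = \indic{U_i \leq p_i'}$; the disagreement indicators $B_i = \indic{V_i \neq V_i'}$ are then independent Bernoulli with $\Pr[B_i = 1] = q_i := |p_i - p_i'|$. Let $K = \sum_i B_i$, so $E[K] = \sum_i q_i = \norm{\pp - \pp'}_1 \leq \gamma$. Conditional on the realization of $B$, the databases $R$ and $R'$ differ by exactly $K$ insertions/deletions: for each $i$ with $B_i = 1$, the element $\cD_i$ belongs to $R \setminus R'$ if $p_i > p_i'$ and to $R' \setminus R$ otherwise.

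Arranging these $K$ flips into a chain of neighboring databases and iterating the $\DP$ triangle inequality via the lemma's hypothesis on $\sA$ gives, for every event $T$,
\begin{align*}
  \Pr[\sA(R) \in T \mid B] \leq e^{K\eps} \Pr[\sA(R') \in T \mid B] + K e^{K\eps} \delta.
\end{align*}
Taking expectation over $B$ and using independence of the $B_i$'s, I would establish the moment generating function bounds
$E[e^{K\eps}] = \prod_i(1 + q_i(e^\eps - 1)) \leq e^{(e^\eps-1)\sum_i q_i} \leq e^{\gamma(e^\eps - 1)}$
and
\begin{align*}
  E[K e^{K\eps}] = e^\eps \sum_i q_i \prod_{j \neq i}(1 + q_j(e^\eps - 1)) \leq \gamma e^{\eps + \gamma(e^\eps - 1)},
\end{align*}
where the second identity follows from writing $E[B_i e^{K\eps}] = q_i e^\eps \prod_{j \neq i}(1 + q_j(e^\eps - 1))$ and the inequality uses $\prod_{j \neq i}(\cdot) \leq \prod_j(\cdot)$. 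The second bound immediately yields the stated additive term $\gamma \delta e^{\eps + \gamma(e^\eps - 1)}$. The multiplicative term $E[e^{K\eps}\Pr[\sA(R') \in T \mid B]]$ is bounded by $e^{\gamma(e^\eps - 1)} \Pr[\sA(R') \in T]$; the cleanest path is to repeat the same argument as a hybrid one coordinate at a time, so that each hybrid step contributes a multiplicative factor $1 + q_i(e^\eps - 1)$ to the privacy parameter (via a single-coordinate computation analogous to the standard privacy amplification by subsampling, applying the hypothesis to the neighboring pair $(S, S \cup \{\cD_i\})$), and the factors compose to $\prod_i(1 + q_i(e^\eps - 1)) \leq e^{\gamma(e^\eps - 1)}$.

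The main obstacle will be verifying that the chain/hybrid argument only invokes the $\DP$ hypothesis on pairs $(\cC, \cC')$ that actually lie in $\Supp(R) \times \Supp(R')$, since the hypothesis provides no $\DP$ guarantee outside this class. This is handled by observing that intermediate databases along the chain differ from $R$ and $R'$ only at disagreement coordinates, so each neighboring pair $(\cC, \cC' = \cC \pm \{\cD_i\})$ lies in $\Supp(R) \times \Supp(R')$ whenever the coordinate-wise probabilities $p_i, p_i'$ are non-degenerate, with the extremal cases $p_i, p_i' \in \{0,1\}$ either handled by direct inspection or ruled out in the uses of the lemma (e.g., in \cref{thm:FriendlyCoreDP} where $\pp,\tilde{\pp}'$ come from $\BasicFilter$).
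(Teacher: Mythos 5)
Your decomposition — condition on the disagreement pattern $B$, then integrate the group-privacy bound over $B$ — breaks at the multiplicative term. You assert $\Exp\bigl[e^{\varepsilon K}\Pr[\sA(R')\in T\mid B]\bigr]\le e^{\gamma(e^\varepsilon-1)}\Pr[\sA(R')\in T]$, but this is false in general, because under the optimal coupling $(B_i,V_i')$ are jointly determined by the same $T_i$: if $p_i<p_i'$ then $B_i=1$ forces $V_i'=1$, and if $p_i>p_i'$ it forces $V_i'=0$, so $e^{\varepsilon K}$ is positively correlated with $\indic{\sA(R')\in T}$ when $\sA$ leans toward acceptance in those directions. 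A one-coordinate counterexample: take $n=1$, $p_1=0$, $p_1'=q\in(0,1)$, $\delta=0$, $a_0:=\Pr[\sA(\emptyset)\in T]$, $a_1:=\Pr[\sA(\set{\cD_1})\in T]=e^{\varepsilon}a_0$. Then the left side equals $\bigl(1-q+qe^{2\varepsilon}\bigr)a_0$ and the right side equals $e^{q(e^\varepsilon-1)}\bigl(1-q+qe^{\varepsilon}\bigr)a_0$; both equal $a_0$ at $q=0$, but the derivatives in $q$ at $0$ are $(e^{2\varepsilon}-1)a_0$ versus $2(e^\varepsilon-1)a_0$, so the left side strictly exceeds the right for small $q>0$ (the lemma's conclusion still holds in this example; it is your intermediate inequality that fails). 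Your fallback — "repeat the argument as a hybrid one coordinate at a time" — is not a patch for this inequality but a wholesale replacement of the $B$-conditioning scheme, and you do not carry it out: you neither track how the per-step $\delta$ contributions compose to $\gamma\delta e^{\varepsilon+\gamma(e^\varepsilon-1)}$ nor verify that every neighboring pair the hybrid invokes falls under the lemma's restricted hypothesis.

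The paper's proof eliminates the correlation with a \emph{finer} conditioning than $B$. It partitions the coupling space into the cells $F_{\pz}=\bigcap_i E_{i,z_i}$, where $E_{i,0}=\set{T_i\le\tau_i}$, $E_{i,1}=\set{T_i\ge\tau_i}$, and $\tau_i=\min\set{p_i,p_i'}/(1-|p_i-p_i'|)$. Within each cell there is a \emph{deterministic} center vector $V^*$: it equals $V_i$ on the set $I$ of coordinates where $V_i$ is frozen in the cell, and equals $V_i'$ on the complementary set $I'$ where $V_i'$ is frozen. Conditioned on $F_{\pz}$, the Hamming distances $d(V,V^*)$ and $d(V',V^*)$ are sums of independent $\Bern(\Delta_i)$ over the \emph{disjoint} index sets $I'$ and $I$. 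Group privacy is then applied once from $R$ to the fixed $R^*$ and once from $R^*$ to $R'$, each averaged with the same MGF bounds you invoke (\cref{claim:basic_exp_1,claim:basic_exp_2}), and there is no cross-term because the quantity against which each expectation is taken, $\Pr[\sA(R^*)\in T\mid F_{\pz}]$, is a constant. That intermediate deterministic reference point is the idea your argument is missing.
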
	
\begin{proof}
	We assume w.l.o.g.\ that $V$ and $V'$ are jointly distributed in the following probability space: For each $i \in [n]$, we draw $T_i \gets U[0,1]$, and set $V_i = \indic{T_i \leq p_i}$ and $V_i' = \indic{T_i \leq p_i'}$.  Note that with this choice, 
	\begin{equation} \label{prdiff:eq}
	    \Pr[V_i \not= V'_i]=\Delta_i \eqdef |p_i-p'_i| .
	\end{equation}

	In the following, for $i \in [n]$ define 
	\begin{align*}
		\tau_i \eqdef \frac{\min\{p_i,p'_i\}}{1- |p_i-p'_i| },
	\end{align*}
	where we let $\tau_i = 0$ in case $\size{p_i - p_i'} = 1$. Consider a partition of the support of this joint probability space as a product over $i$ of two parts for each $i$: Let $E_{i,0}$ be the event  $\set{T_i \leq \tau_i}$ and let $E_{i,1}$ be the event $\set{T_i \geq \tau_i}$.
	
	This partition has the following structure. 
	First note that 
	$\min\{p_i,p'_i\} \leq \tau_i \leq \max\{p_i,p'_i\}$. 
	The first inequality is immediate. The second inequality follows from
	\[\tau_i(1-\Delta_i) = \min\{p_i,p'_i\} \implies \tau_i = \min\{p_i,p'_i\} + \tau_i\Delta_i \leq \min\{p_i,p'_i\} + \Delta_i = \max\{p_i,p'_i\}\ .\]
	Therefore, under $E_{i,z}$ (for each $z\in\{0,1\}$), at least one of $V_i$ or $V'_i$ is fixed.
	
	We use the following claim.	
	\begin{claim}
		For every $i\in [n]$ and $z\in \{0,1\}$ it holds that
		\[
		\pr{V_i\neq V_i' \mid E_{i,z}} = \Delta_i, 	\]
	\end{claim}
	\begin{proof}
		By \cref{prdiff:eq}, it suffices to establish the claim for $E_{i,0}$ ($T_i\leq \tau_i$).  Assume without loss of generality that $p_i \leq  p'_i$.  Since $T_i\leq \tau_i\leq p'_i$, we have $V'_i =1$.  For outcomes $T_i\leq p_i$ we have $V_i=V'_i$.  For outcomes $T_i \in (p_i,\tau_i)$ we have $V_i\not=V'_i$.  The conditional probability is
		\[
		\frac{\tau_i-p_i}{\tau_i} = \paren{\frac{p_i}{1-\Delta_i} - p_i}\frac{1-\Delta_i}{p_i}  = \Delta_i\ .
		\]
	\end{proof}

	As a corollary, due to the joint space being a product space, we have that this also holds in each part $F_{\pz}= \bigcap_i E_{i, z_i}$, for $\pz = (z_1,\ldots,z_n) \in \zo^n$ of the joint space.  That is,
\begin{equation} \label{diffratio:eq}
    \forall \pz\in\{0,1\}^n,  \forall i\in [n]: \quad\, \Pr[V_i\not = V'_i \mid F_{\pz}] = \Delta_i .	
\end{equation}
 		
We now get to the group privacy analysis.
For possible outputs $\cS$ of Algorithm $\sA$, we relate the probabilities that $\sA(R)\in \cS$ and 
that of $\sA(R') \in \cS$ (recall that $R = \cD_{\set{i \colon V_i = 1}}$ and $R' = \cD_{\set{i \colon V_i' = 1}}$).

Note that for the random variables $R$ and $R'$ we have 
\begin{align}
    \Pr[\sA(R)\in \cS] &= \sum_{\pz \in \{0,1\}^n} \Pr[F_{\pz}] \cdot \Pr[\sA(R) \in T \mid F_{\pz}] \label{gsum1:eq}\\
    \Pr[\sA(R')\in \cS] &= \sum_{\pz \in \{0,1\}^n} \Pr[F_{\pz}] \cdot \Pr[\sA(R') \in T \mid F_{\pz}]\ .\label{gsum2:eq}
\end{align}

In the following, recall that by definition of $\Delta_i$ it holds that $\sum_{i=1}^n \Delta_i = \norm{\pp-\pp'} \leq \gamma$.
The following Claim will complete the proof.
\begin{claim}
For $z \in \{0,1\}^n$, 
\[ \Pr[\sA(R') \in T \mid F_{\pz}] \leq 
e^{\gamma (e^\varepsilon-1)} \Pr[\sA(R) \in T \mid F_{\pz}] + \gamma e^{\eps + \gamma (e^\varepsilon-1)} \delta
\]
\end{claim}
\begin{proof}
Let $V^*$ be the center vector of part $F_{\pz}$, that is, for each $i$, if $V_i$ is fixed on the support of $E_{i, z_i}$ to a value $b\in \{0,1\}$ then $V_i^* = b$ and otherwise, if $V'_i$ is fixed to $b\in \{0,1\}$ let $V_i^* = b$. Define the random variable $R^* = \cD_{\set{i \colon V_i^* = 1}}$,
let $I \subseteq [n]$ be the positions $i$ where $V_i$ is fixed on the support of $E_{i, z_i}$, let 
$I' = [n]\setminus I$. 

We now relate the two probabilities $\pr{\sA(R) \in T \mid F_{\pz}}$ and $\Pr[\sA(R^*)\in T \mid  F_{\pz}]$.  
Note that for every $i\in I$ we have $V_i = V_i^*$.  It is only possible to have
$V_i \not= V_i^*$ for $i\in I'$.
Let $H_k$ be the event in $F_{\pz}$ that $V$ is different than $V^*$ in $k$ coordinates.  This event is a sum of $|I'|$ Bernoulli random variables with probabilities $\{\Delta_i\}_{i\in I'}$. Let $\Delta_{I'} = \sum_{i\in I'} \Delta_i$ and let $\Delta_{I} = \sum_{i\in I} \Delta_i$. Compute

\begin{align}
    \Pr[\sA(R) \in \cS \mid F_{\pz}] &= \sum_{k=0}^n \Pr[H_k \mid F_{\pz}] \cdot \Pr[\sA(R) \in \cS \mid H_k \cap F_{\pz}]\nonumber\\ &\leq \sum_{k=0}^n \Pr[H_k \mid F_{\pz}]\cdot e^{k\eps} \Pr[\sA(R^*)\in \cS \mid F_{\pz}] +   \sum_{k=0}^n \Pr[H_k \mid F_{\pz}]\cdot k e^{k\eps}\delta\nonumber\\
    &= \Pr[\sA(R^*)\in \cS \mid F_{\pz}] \sum_{k=0}^n \Pr[H_k \mid F_{\pz}]\cdot e^{k\eps} + \delta \sum_{k=0}^n \Pr[H_k \mid F_{\pz}]\cdot k e^{k\eps}\nonumber\\
    &\leq \Pr[\sA(R^*)\in \cS \mid F_{\pz}] e^{\Delta_{I'} (e^\eps -1)} + \Delta_{I'} \delta e^{\eps+ \Delta_{I'}(e^\eps-1)}\label{group1:eq}
\end{align}
	The first inequality holds by group privacy and by the fact that $R^*$ is fixed under $F_{\pz}$.  The last inequality holds by \cref{claim:basic_exp_1} and \cref{claim:basic_exp_2}.
	
Similarly,
let $H'_k$ be the event in $F_{\pz}$ that $V'$ is different than $V^*$ in $k$ coordinates.  The probability of $H'_k$ is according to a sum of $|I|$ Bernoulli random variables with probabilities $\{\Delta_i\}_{i\in I}$.
\begin{align*}
    \Pr[\sA(R') \in T \mid F_{\pz}] &= \sum_{k=0}^n \Pr[H'_k \mid F_{\pz}] \cdot \Pr[\sA(R') \in \cS \mid H_k \cap F_{\pz}]\\ 
    &\geq \sum_{k=0}^n \Pr[H'_k \mid F_{\pz}]\cdot e^{-k\eps} \paren{\Pr[\sA(R^*)\in \cS \mid F_{\pz}]-  k e^{k\eps}\delta}\\
    &= \Pr[\sA(R^*)\in \cS \mid F_{\pz}] \sum_{k=0}^n \Pr[H'_k \mid F_{\pz}]\cdot e^{-k\eps} - \delta \sum_{k=0}^n \Pr[H'_k \mid F_{\pz}] k \\
        &= \Pr[\sA(R^*)\in \cS \mid F_{\pz}] \sum_{k=0}^n \Pr[H'_k \mid F_{\pz}]\cdot e^{-k\eps} - \delta \Delta_I \\
        &\geq \Pr[\sA(R^*)\in \cS \mid F_{\pz}] e^{-\Delta_I (e^\eps-1)} - \delta \Delta_I 
\end{align*}
	The first inequality holds by group privacy.  The last inequality holds by an adaptation of \cref{claim:basic_exp_1}.
Rearranging, we obtain
\begin{equation}\label{group2:eq}
     \Pr[\sA(R^*)\in \cS \mid F_{\pz}] \leq \Pr[\sA(R') \in \cS \mid  F_{\pz}]e^{\Delta_I (e^\eps-1)} + \delta \Delta_I e^{\Delta_I (e^\eps-1)}
\end{equation}
The claim follows by combining \eqref{group1:eq} and \eqref{group2:eq}, noting that $\gamma = \sum_{i=1}^n \Delta_i = \Delta_I + \Delta_{I'}$.
\end{proof}

The proof follows using \eqref{gsum1:eq} and \eqref{gsum2:eq} by substitution the claim for each $F_{\pz}$.
\end{proof}

\remove{
\begin{proof}
	Let $\tpp = (\tp_1,\ldots,\tp_n)$ where $\tp_i = \begin{cases} \min\set{p_i,p_i'} & \min\set{p_i,p_i'} \leq 1/2 \\ \max\set{p_i,p_i'} & \min\set{p_i,p_i'} > 1/2\end{cases}$, let $\talpha = \norm{\pp - \tpp}_1$, let $\talpha' = \norm{\pp' - \tpp}_1$, and note that $\alpha = \talpha + \talpha'$. 
	In addition, let $\tpV \la \Bern(\tpp)$, and let $\tS = \set{x_i \colon \tpV_i = 1}$. Note that both pairs $(\tpp, \pp)$ and $(\tpp, \pp')$ satisfy the conditions of \cref{claim:vec-ind-special-case} (if $\tp_i \leq 1/2$, then $\tp_i = \min\set{p_i,p_i'}$, and if  $\tp_i > 1/2$, then $\tp_i = \max\set{p_i,p_i'}$).
	Therefore, $\sA(S),\sA(\tS)$ are $(2\talpha (e^\eps-1),\text{ }2\talpha \delta e^{\eps + 2\talpha (e^\eps-1)})$-indistinguishable, and $\sA(S'),\sA(\tS)$ are $(2\talpha' (e^\eps-1),\text{ }2\talpha' \delta e^{\eps + 2\talpha' (e^\eps-1)})$-indistinguishable. Hence, by \cref{fact:group-priv-ind} we conclude that $\sA(S),\sA(S')$ are $(2 \alpha (e^\eps-1),\text{ }2\alpha \delta e^{\eps + 2\alpha (e^\eps-1)})$-indistinguishable.
\end{proof}

\begin{claim}\label{claim:vec-ind-special-case}
	Let $\sA$ be an $(\eps,\delta)$-DP algorithm, let $\cD = \set{x_1,\ldots,x_n}$ be a database, and let $\pp, \pp \in [0,1]^n$ with $\norm{\pp-\pp'}_1 \leq \alpha$. Let $\pV$ and $\pV'$ be two random variables, distributed according to $\Bern(\pp)$ and $\Bern(\pp')$, respectively, and let $S = \set{x_i \colon V_i = 1}$ and $S' = \set{x_i \colon V_i' = 1}$. Assume that for every $i$, one of the following holds: (1) $p_i \leq 1/2$ and $p_i \leq p_i'$, or (2) $p_i > 1/2$ and $p_i \geq p_i'$.	
	Then $\sA(S)$ and $\sA(S')$ are $(2\alpha (e^\eps-1),\text{ }2\alpha \delta e^{\eps + 2\alpha (e^\eps-1)})$-indistinguishable.
\end{claim}
\begin{proof}
	We assume w.l.o.g. that $\pV$ and $\pV'$ are jointly distributed in the following probability space: For each $i \in [n]$, we draw $r_i \gets [0,1]$, and set $V_i = \indic{r_i \leq p_i}$ and $V_i' = \indic{r_i \leq p_i'}$. In addition, for $i \in [n]$ let $D_i = \indic{V_i \neq V_i'}$, and 
	for $k \in \bbZ$, we define $E_k$ the event that $\sum_{i=1}^n D_i = k$.
	
	In the following, fix a vector $\pv  \in \zo^n$, and let $$S =  \set{i \in [n] \colon \paren{(V_i = 0) \land (p_i \leq p_i')} \lor  \paren{(V_i = 1) \land (p_i \geq p_i')}}.$$
	Note that by the assumption about $\pp$ and $\pp'$, for every $i \in S$, it holds that
	\begin{align*}
		\pr{D_i = 1 \mid \pV = \pv} = \pr{V_i' \neq V_i \mid V_i = v_i} \leq \begin{cases} 2 \size{p_i - p_i'}  & i \in S \\ 0 & i \notin S\end{cases}
	\end{align*}
	which yields that
	\begin{align}\label{eq:D_i-prop}
		\sum_{i=1}^n \pr{D_i = 1 \mid \pV = \pv} \leq 2 \alpha
	\end{align}
	Therefore, for every set $T$ it holds that
	\begin{align}\label{eq:random-group-priv}
		&\pr{\sA(S') \in T \mid \pV = \pv}\\
		&= \sum_{k=0}^{n} \pr{E_k \mid \pV = \pv} \cdot \pr{\sA(S') \in T \mid (\pV = \pv) \land E_k}\nonumber\\
		&\leq \sum_{k=0}^{n} \pr{E_k \mid \pV = \pv} \cdot \paren{e^{k \eps}\cdot \pr{\sA(S) \in T \mid (\pV = \pv) \land E_k} + k e^{k \eps} \delta}\nonumber\\
		&= \pr{\sA(S) \in T \mid \pV = \pv} \cdot \sum_{k=0}^n \pr{E_k \mid \pV = \pv}\cdot e^{k \eps} + \sum_{k=0}^{n} \pr{E_k \mid \pV = v} \cdot k e^{k \eps} \delta,\nonumber\\
		&\leq \pr{\sA(S) \in T \mid \pV = \pv} \cdot e^{2\alpha (e^\eps-1)} + 2\alpha \delta e^{\eps + 2\alpha (e^\eps-1)}.
	\end{align}
	The first inequality holds by group privacy. The second equality holds since $S$ is a deterministic function of $\pV$ (and therefore, independent of $E_k$ when conditioning on $\pV = \pv$). The last inequality holds by \cref{eq:D_i-prop} along with \cref{claim:basic_exp_1} and \cref{claim:basic_exp_2}.
	
	The proof of the claim now follows since \cref{eq:random-group-priv} holds for every $\pv \in \zo^n$.
	
\end{proof}
}

\begin{claim}\label{claim:basic_exp_1}
	Let $X = X_1 + \ldots + X_n$, where the $X_i$'s are independent, and each $X_i$ is distributed according to $\Bern(p_i)$, and let $\alpha = \sum_{i=1}^n p_i$. Then for every $\eps > 0$ it holds that $\ex{e^{\eps X}} \leq e^{(e^{\eps}-1) \alpha}$.
\end{claim}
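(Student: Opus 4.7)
The plan is to use the standard moment generating function computation for sums of independent Bernoullis, combined with the elementary inequality $1+y \leq e^y$.

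First I would exploit independence of the $X_i$'s to factor the MGF:
\begin{align*}
\ex{e^{\eps X}} = \ex{\prod_{i=1}^n e^{\eps X_i}} = \prod_{i=1}^n \ex{e^{\eps X_i}}.
\end{align*}
Next, since $X_i \sim \Bern(p_i)$, a direct computation gives $\ex{e^{\eps X_i}} = (1-p_i) + p_i e^{\eps} = 1 + p_i(e^{\eps}-1)$, which is the key term to bound.

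Then I would apply the inequality $1+y \leq e^{y}$ (valid for all real $y$, and in particular for $y = p_i(e^\eps - 1) \geq 0$) coordinate-wise to obtain
\begin{align*}
\prod_{i=1}^n \bigl(1 + p_i(e^{\eps}-1)\bigr) \;\leq\; \prod_{i=1}^n \exp\bigl(p_i(e^{\eps}-1)\bigr) \;=\; \exp\Bigl((e^{\eps}-1)\sum_{i=1}^n p_i\Bigr) \;=\; e^{(e^{\eps}-1)\alpha},
\end{align*}
which is exactly the desired bound. There is no real obstacle here: the argument is purely algebraic, and the only inequality used is the textbook bound $1+y \leq e^y$. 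I would present the proof in three lines corresponding to the three displays above.
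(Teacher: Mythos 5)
Your proof is correct and reaches the same bound by essentially the same mechanism: factor the moment generating function over the independent coordinates, note $\ex{e^{\eps X_i}} = 1 + p_i(e^\eps - 1)$, and apply $1+y \leq e^y$ termwise. The paper's proof takes a small detour — it passes to logarithms and first applies Jensen's inequality to the concave map $x \mapsto \log(1-x+xe^\eps)$ before invoking the same $1+y \leq e^y$ bound — but that Jensen step is unnecessary, since applying the exponential inequality directly to each factor (as you do) already yields the same result; your version is the more streamlined one.
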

\begin{proof}
	The proof holds by the following calculation
	\begin{align*}
		\log(\ex{e^{\eps X}})
		&= \log(\prod_{i=1}^n \paren{1  - p_i + p_i e^{\eps}})
		= \sum_{i=1}^n \log(1 - p_i + p_i e^{\eps})\\
		&\leq n \cdot \log\paren{1 - \frac{\sum_{i=1}^n p_i}{n} + \frac{\sum_{i=1}^n p_i}{n} e^{\eps}}\\
		&\leq n \cdot \log\paren{e^{\paren{e^{\eps} - 1} \frac{\sum_{i=1}^n p_i}{n}}}\\
		&= (e^{\eps}-1) \alpha.
	\end{align*}
	The first inequality holds by Jensen's inequality since the function $x \mapsto \log(1-x+x e^{\eps})$ is concave. The second inequality holds since $1 - x + x e^{\eps} = 1 + (e^{\eps} - 1) x \leq e^{(e^{\eps} - 1) x}$ for every $x$.
\end{proof}

\begin{claim}\label{claim:basic_exp_2}
	Let $X = X_1 + \ldots + X_n$, where the $X_i$'s are independent, and each $X_i$ is distributed according to $\Bern(p_i)$, and let $\alpha = \sum_{i=1}^n p_i$. Then for all $\eps > 0$ it holds that\\$\ex{X \cdot e^{\eps X}} \leq \alpha\cdot e^{\eps + (e^{\eps}-1) \alpha}$.
\end{claim}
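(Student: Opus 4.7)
My plan is to exploit linearity and independence to reduce the claim to a per-coordinate computation, much as in the proof of \cref{claim:basic_exp_1}. Writing $X = \sum_{i=1}^n X_i$, I would first use linearity to get
\[
\ex{X\, e^{\eps X}} \;=\; \sum_{i=1}^n \ex{X_i\, e^{\eps X}} \;=\; \sum_{i=1}^n \ex{X_i\, e^{\eps X_i}} \cdot \prod_{j\neq i} \ex{e^{\eps X_j}},
\]
where the second equality uses that the $X_j$'s are independent (so $X_i e^{\eps X_i}$ is independent of $\prod_{j\neq i} e^{\eps X_j}$).

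Next, I would compute each factor explicitly. Since $X_i \sim \Bern(p_i)$, the contribution of the $X_i=0$ outcome is zero, so $\ex{X_i e^{\eps X_i}} = p_i\cdot e^{\eps}$. For the remaining factors, the same computation used in \cref{claim:basic_exp_1} gives $\ex{e^{\eps X_j}} = 1 + p_j(e^{\eps}-1) \leq e^{p_j(e^{\eps}-1)}$, using the elementary inequality $1+x\leq e^x$.

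Plugging these bounds back and summing, I get
\[
\ex{X\, e^{\eps X}} \;\leq\; \sum_{i=1}^n p_i\, e^{\eps} \cdot \prod_{j\neq i} e^{p_j(e^{\eps}-1)} \;=\; e^{\eps}\cdot e^{(e^{\eps}-1)\alpha} \cdot \sum_{i=1}^n p_i\, e^{-p_i(e^{\eps}-1)} \;\leq\; \alpha \cdot e^{\eps + (e^{\eps}-1)\alpha},
\]
where the last inequality uses $e^{-p_i(e^{\eps}-1)}\leq 1$ (valid since $\eps>0$ and $p_i\geq 0$). This yields the desired bound. There is no real obstacle here; the only point that requires a bit of care is handling the ``missing'' factor $e^{\eps X_i}$ inside the product (which is what gives the extra $e^{\eps}$ relative to the bound of \cref{claim:basic_exp_1}), and this is cleanly handled by computing $\ex{X_i e^{\eps X_i}} = p_i e^{\eps}$ directly.
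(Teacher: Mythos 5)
Your proof is correct and follows essentially the same route as the paper: decompose $\ex{X e^{\eps X}} = \sum_i \ex{X_i e^{\eps X}}$, factor by independence, compute $\ex{X_i e^{\eps X_i}} = p_i e^\eps$, and bound the remaining product by $e^{(e^\eps-1)\alpha}$. The only cosmetic difference is that the paper invokes \cref{claim:basic_exp_1} as a black box for $\ex{e^{\eps(X-X_i)}}$, whereas you re-derive the per-factor bound $\ex{e^{\eps X_j}} \leq e^{p_j(e^\eps-1)}$ directly and then discard the harmless $e^{-p_i(e^\eps-1)} \leq 1$ factor; both are fine.
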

\begin{proof}
	Compute
	\begin{align*}
		\ex{X\cdot e^{\eps X}}
		&= \sum_{i=1}^n \ex{X_i \cdot e^{\eps X}}
		= \sum_{i=1}^n \ex{X_i \cdot e^{\eps X_i}} \cdot \ex{e^{\eps (X-X_i)}}\\
		&\leq \sum_{i=1}^n p_i e^{\eps} \cdot e^{(e^{\eps}-1) \alpha}
		= \alpha \cdot e^{\eps + (e^{\eps}-1) \alpha},
	\end{align*}
	where the inequality holds by \cref{claim:basic_exp_1}.
\end{proof}

%
%
%
%

\end{document}